\theoremstyle{plain}
\newtheorem{theorem}{Theorem}[section]
\newtheorem{proposition}[theorem]{Proposition}
\newtheorem{lemma}[theorem]{Lemma}
\newtheorem{corollary}[theorem]{Corollary}
\theoremstyle{definition}
\newtheorem{definition}[theorem]{Definition}
\theoremstyle{remark}
\newcommand{\norm}[1]{\left\lVert#1\right\rVert}
\icmltitlerunning{Decomposition-Based Modular Conformal Prediction for Two-Stage Modeling}
\begin{document}

\twocolumn[
  \icmltitle{Decomposition-Based Modular Conformal Prediction for Two-Stage Modeling}



  \icmlsetsymbol{equal}{*}

  \begin{icmlauthorlist}
    \icmlauthor{William Zhang}{orc}
    \icmlauthor{Saurabh Amin}{orc,lids}
    \icmlauthor{Georgia Perakis}{orc,sloan}
  \end{icmlauthorlist}

  \icmlaffiliation{orc}{Operations Research Center, Massachusetts Institute of Technology, Cambridge, MA, USA}
  \icmlaffiliation{lids}{Laboratory for Information and Decision Systems, Massachusetts Institute of Technology, Cambridge, MA, US}
  \icmlaffiliation{sloan}{Sloan School of Management, Massachusetts Institute of Technology, Cambridge, MA, USA}

  \icmlcorrespondingauthor{William Zhang}{wzhang64@mit.edu}

  \icmlkeywords{Machine Learning, ICML}

  \vskip 0.3in
]



\printAffiliationsAndNotice{}  

\begin{abstract}
Conformal prediction offers finite-sample coverage guarantees under minimal assumptions. However, existing methods treat the entire modeling process as a black box, overlooking opportunities to exploit and understand modular 
structure. We introduce a conformal prediction framework for two-stage sequential models, where an upstream predictor generates intermediate representations for a downstream model. By decomposing the overall prediction residual into stage-specific components, our method enables practitioners to attribute uncertainty to specific pipeline stages. We develop a risk-controlled parameter selection procedure using family-wise error rate (FWER) control to calibrate stage-wise scaling parameters, and introduce an adaptive extension for non-stationary settings. Experiments on synthetic distribution shifts, as well as real-world supply chain and stock market data, demonstrate that our approach improves coverage under structural, stage-wise shifts compared to standard conformal methods, while identifying stage-wise error contribution. This framework offers diagnostic advantages and robust coverage that standard conformal methods lack.
\end{abstract}

\section{Introduction}
Modern machine learning systems increasingly rely on modular pipelines, where upstream predictors generate intermediate representations for downstream models. Such pipelines appear across diverse applications: macroeconomic forecasting, where supply chain indicators inform market predictions, and medical diagnosis, where imaging features guide treatment decisions \citep{crane1967two, soybilgen2021nowcasting}. In these settings, uncertainty quantification is essential and should reflect how error propagates across stages. Conformal prediction \citep{vovk2005algorithmic} provides principled prediction intervals with finite-sample coverage guarantees under exchangeability. However, existing conformal prediction methods treat these multi-stage systems as monolithic black boxes, overlooking their modular structure and missing opportunities for targeted error attribution, i.e., identification of the major source of error. While conformal prediction has seen substantial recent development in adaptive and reweighting methods, existing methods generally treat models as single-stage or black boxes rather than taking a modular and sequential perspective (see \Cref{related} for a detailed review). 

This perspective is especially pertinent under distribution shift, which often affects stages asymmetrically—e.g., upstream sensors may drift while downstream mappings remain stable. Standard conformal methods cannot disentangle such effects, which may force practitioners to retrain the entire pipeline when targeted interventions would suffice. To address this gap, we propose a stage-wise decomposition that isolates these effects, embedding stage-wise uncertainty into conformal prediction, yielding robust intervals under distribution shift that characterize the error produced at each stage.

By decomposing residuals into stage-wise components, our method constructs prediction intervals through a linear combination of stage-specific quantiles, weighted by scaling parameters selected via family-wise error rate (FWER) control over a calibration set. This yields valid intervals without requiring internal model access—only structural knowledge of the pipeline. Our approach offers two key advantages over existing conformal methods: (i) it provides diagnostic transparency, identifying which stages contribute most to uncertainty, and (ii) it improves coverage under structural distribution shifts affecting stage-wise components where standard approaches degrade. 

For intuition, we briefly introduce the two-stage setting with triplets \((w, x, y)\) and latent structure \(x = \mu_1(w) + \varepsilon_1\), \(y = \mu_2(x) + \varepsilon_2\), where $\varepsilon_1, \varepsilon_2$ denote additive noise, and the model learns estimators \(\hat{\mu}_1\), \(\hat{\mu}_2\). For example, in automobile supply chains, \(w\) could denote semiconductor prices, with \(\hat{\mu}_1(w)\) estimating new vehicle demand (\(x\)), and \(\hat{\mu}_2(x)\) predicting used vehicle prices (\(y\)). Our method decomposes the residual \(R = |y - \hat{\mu}_2(\hat{\mu}_1(w))|\) into upstream ($\hat{\mu}_1$) error component \(\Delta R_1 \) and downstream ($\hat{\mu}_2$) residual \(R_2\), capturing the uncertainty of each stage.

We also extend our decomposition framework to adaptive settings for which we update scaling parameters based on component-wise empirical coverage, improving responsiveness to (i) upstream, (ii) downstream, and (iii) end-to-end distribution shifts. We illustrate our method on synthetic shifts and real-world supply chain and financial data, demonstrating the ability to explicitly adapt to stage-wise shifts, and showing improved robustness over adaptive conformal baselines \citep{gibbs2021adaptive, gibbs2024conformal,angelopoulos2023conformal, angelopoulos2024online}. For simplicity, we focus on two-stage models, but our methodology can be extended to multi-stage models (Appendix~\ref{extensions}).

\paragraph{Contributions:}
\begin{itemize}
    \item We propose a residual decomposition framework for sequential multi-stage models that partitions prediction error into distinct upstream and downstream components, enabling stage-wise uncertainty attribution (\Cref{basic_method}).
    \item We develop a risk-controlled parameter selection procedure using FWER-based hypothesis testing to construct valid prediction intervals from decomposed residuals with coverage guarantees (\Cref{riskcontrolsec}).
    \item We introduce an adaptive extension that dynamically adjusts scaling parameters based on component-wise coverage feedback, preserving long-run coverage guarantees while providing stage-sensitive diagnostics for distribution shifts (\Cref{adaptivealgo}).
    \item We demonstrate the framework's effectiveness on synthetic shifts and real-world economic forecasting, showing improved coverage over cutting-edge conformal methods that degrade under structural shifts, with diagnostic capabilities that enable targeted model interventions (\Cref{experiments}).
\end{itemize}

\textbf{Paper Outline.} In Sections~\ref{problemsetting} and \ref{basic_method}, we formalize the two-stage prediction setting and introduce our residual decomposition. \Cref{constructingmethods} describes our method for constructing stage-aware prediction intervals, followed by the FWER calibration procedure in \Cref{riskcontrolsec}. We extend our method to an adaptive version in \Cref{adaptivealgo}, and evaluate empirical performance under distribution shifts in \Cref{experiments}.
\section{Related Work}
\label{related}
\paragraph{Conformal Prediction.} 
Conformal prediction \citep{vovk2005algorithmic} constructs prediction sets with finite-sample marginal coverage under exchangeability, with popular split/inductive variants \citep{papadopoulos2002inductive} that calibrate on a held-out set; see \citet{angelopoulos2023gentle} for a survey. A complementary line of work generalizes conformal prediction beyond coverage: risk-control \citep{bates2021distribution} and learn-then-test \citep{angelopoulos2021learn} produce sets which certify user-chosen losses with high probability, with the latter doing so via multiple hypothesis testing over a parameter family. We build on this framework in Section~\ref{riskcontrolsec}. 
\paragraph{Extensions of Conformal Prediction.}
Recent work expands the flexibility of conformal prediction through reweighting and adaptive calibration \citep{barber2023conformal, angelopoulos2023conformal, oliveira2024split, farinhas2023non,angelopoulos2024online,gibbs2021adaptive, gibbs2024conformal}, model-aware adaptations \citep{zargarbashi2023conformal, zhang2025conformal, wu2025error}. These methods consider single-stage/black-box models, rather than considering how uncertainty propagates between stages.
\paragraph{Stage-wise Uncertainty Propagation in Pipelines.} Stage-wise uncertainty propagation has been studied in medical imaging pipelines \citep{mehta2021propagating, feiner2023propagation, ozdemir2017propagating} via Monte Carlo sampling over internal model components, whereas our approach is black-box and targets valid prediction intervals. Most closely related, \citet{gong2023confidence} apply a stage-wise decomposition to model error, but use a conservative upper-bound that provides neither stage-wise attribution nor an adaptive extension. Our framework uses FWER-controlled selection over scaling parameters and an adaptive update rule responsive to stage-specific shifts; we make a comparison to this method in Appendix~\ref{cccomp}.
\section{Problem Setting}
\label{problemsetting}
We consider a sequential two-stage prediction problem: each data point is characterized by a triplet \( z = (w, x, y) \) where \( w \in \mathcal{W} \) is the input to the first-stage model (upstream features), \( x \in \mathcal{X} \) is an intermediate representation, and \( y \in \mathcal{Y} \) is the final prediction target. We learn predictors \( \hat{\mu}_1: \mathcal{W} \to \mathcal{X} \) and \( \hat{\mu}_2: \mathcal{X} \to \mathcal{Y} \) which are composed to form end-to-end predictor \( \hat{\mu}_2(\hat{\mu}_1(w)) = \hat{\mu}_2(\hat{x}) \), where \( \hat{x} = \hat{\mu}_1(w) \). 

To fit these models and perform conformal prediction, we assume access to three disjoint subsets:
(i) a \emph{training} set \( S^{\text{train}} = \{(w_i,x_i,y_i)\}_{i=1}^n\) used to fit both stages of the model via $(w,x)$ and $(x,y)$ pairs for each stage respectively;
(ii) a \emph{conformal} set \( S^{\text{conf}} = \{(w_i,x_i,y_i)\}_{i=n+1}^{n+m}\) for computing nonconformity scores; and
(iii) a \emph{calibration} set \( S^{\text{cal}} = \{(w_i,x_i,y_i)\}_{i=n+m+1}^{n+m+l} \) for parameter selection. At prediction time, only the upstream input $w_{\text{test}}$ is observed, while the intermediate value $x_{\text{test}}$ and target $y_{\text{test}}$ are unobserved and must be predicted.

We list some assumptions that we consider at different points throughout the paper. (i) \textbf{Exchangeability.} Unless stated otherwise, for theoretical guarantees, we assume the data $(w,x,y)$ in $S^{\text{train}},S^{\text{conf}}, S^{\text{cal}}$ to be exchangeable, as well as the test point $z_{\text{test}} = (w_{\text{test}}, x_{\text{test}}, y_{\text{test}})$. However, we make an extension to $\phi$-mixing (Appendix ~\ref{mixingextension}) and experimentally validate performance on non-exchangeable data. (ii) \textbf{Learning algorithms.} We assume that the algorithms that learn $\hat{\mu}_1, \hat{\mu}_2$ are deterministic, i.e., given the same training data, they produce identical predictors; and we also assume they are symmetric, i.e., invariant to permutations of the data. We also assume that the intermediate variable \( x \) is observable for the given datasets. We clarify that $x$ is observable at the pipeline-level, not an internal component: it is the upstream output read by the downstream model, naturally recorded at the stage interface.

\paragraph{Objective.} Given test upstream input $w_{\text{test}}$, the goal is to construct 
prediction interval \( \hat{C}_\alpha(w_{\text{test}}) \) such that $
\mathbb{P}(y_{\text{test}} \in \hat{C}_\alpha(w_{\text{test}})) \geq 1 - \alpha
$, 
where $\alpha \in (0,1)$ is the target miscoverage level. Under exchangeability, this is the standard conformal prediction objective. However, under distribution shifts, we seek to maintain robust coverage near the nominal level, while identifying the impact of specific pipeline stages on prediction uncertainty.

The key goals in this setting are: (i) \textbf{Attribution}: Understanding which stage contributes more to prediction uncertainty; (ii) \textbf{Adaptivity}: Improving coverage under shifts affecting different stages; (iii) \textbf{Transparency}: Providing actionable insights for model improvement or retraining decisions. Our approach addresses these challenges by decomposing the end-to-end prediction error into stage-specific components for targeted uncertainty quantification and robust interval construction. While we define these concepts for two-stage models, we discuss extensions to auxiliary inputs, multiple upstream models, and deeper sequential pipelines in Appendix~\ref{extensions}. We also provide a notation table in Appendix~\ref{notation}.


\section{Two-stage Residual Decomposition for Conformal Prediction}
\label{basic_method}
To address the aforementioned challenge of attribution, we partition the total prediction residual \( R(w,y) = |y - \hat{\mu}_2(\hat{\mu}_1(w))| \) into upstream and downstream components. This decomposition enables stage-wise attribution of error, in contrast to standard black-box conformal methods. We provide a visualization in \Cref{fig:diagram}.
\begin{figure}
    \centering
    \includegraphics[width=1\linewidth]{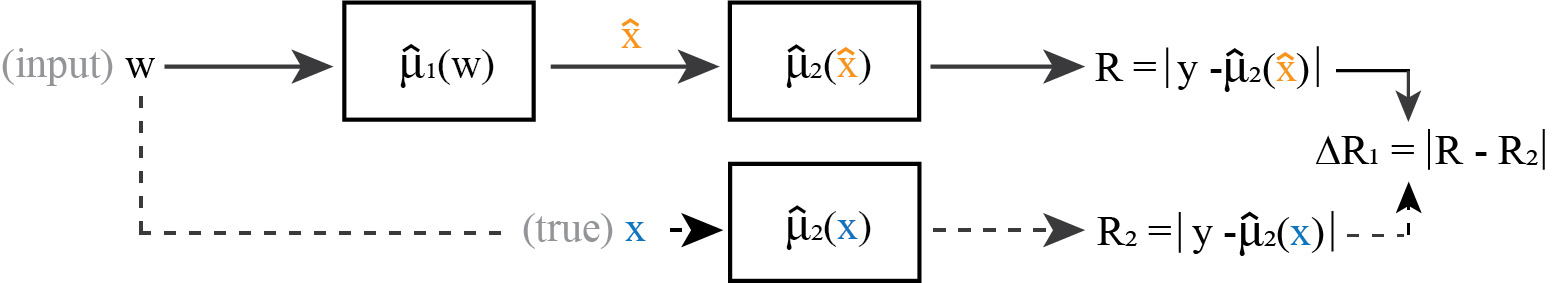}
    \caption{Visualization of a sequential two-stage model with residual components $R,\Delta R_1, R_2$.}
    \label{fig:diagram}
\end{figure}

\begin{definition}[Second-stage residual]
\label{def:r2}
Given a point \( (w, x, y) \) and downstream predictor \( \hat{\mu}_2: \mathcal{X} \to \mathcal{Y} \), the second-stage residual is
\[
R_2(x, y) = |y - \hat{\mu}_2(x)|.
\]
This captures downstream prediction error given the true intermediate input \( x \).
\end{definition}

\begin{definition}[First-stage delta]
Let \( \hat{x} = \hat{\mu}_1(w) \) be the output of the first-stage predictor. The first-stage delta is defined as
\[
\Delta R_1(w, x, y) = \left|\, |y - \hat{\mu}_2(x)| - |y - \hat{\mu}_2(\hat{x})|\, \right|.
\]
This quantifies the change in downstream prediction error induced by replacing the true intermediate \( x \) with its prediction \( \hat{x} \).
\end{definition}

For notational purposes, we drop the dependence on $(w,x,y)$, and refer to these terms as $R,\Delta R_1$, and $R_2$.
Intuitively, \( \Delta R_1 \) reflects the error of the upstream predictor, while \( R_2 \) isolates downstream error without upstream influence. 
This decomposition satisfies a fundamental upper-bound property. 
\begin{proposition}
\label{prop:upper_bound}
For any point $(w, x, y)$, the total residual satisfies:
$$R = |y - \hat{\mu}_2(\hat{\mu}_1(w))| \leq \Delta R_1 + 
R_2.$$
\end{proposition}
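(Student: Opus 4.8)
The plan is to prove the inequality $R \leq \Delta R_1 + R_2$ directly from the definitions, using the triangle inequality as the central tool. The key observation is that $R$, $R_2$, and $\Delta R_1$ are all built from absolute values of the quantities $y - \hat{\mu}_2(x)$ and $y - \hat{\mu}_2(\hat{x})$, so the statement should reduce to an elementary inequality about real numbers $a, b$ where $a = y - \hat{\mu}_2(\hat{x})$ and $b = y - \hat{\mu}_2(x)$.

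First I would rewrite each term in this notation. Note that $R = |y - \hat{\mu}_2(\hat{\mu}_1(w))| = |y - \hat{\mu}_2(\hat{x})| = |a|$, since $\hat{x} = \hat{\mu}_1(w)$. Similarly $R_2 = |y - \hat{\mu}_2(x)| = |b|$, and by definition $\Delta R_1 = \big|\,|y - \hat{\mu}_2(x)| - |y - \hat{\mu}_2(\hat{x})|\,\big| = \big|\,|b| - |a|\,\big|$. With these substitutions, the claim becomes
\[
|a| \leq \big|\,|b| - |a|\,\big| + |b|.
\]

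Next I would verify this abstract inequality, which holds for all reals $a,b$. The cleanest route is to apply the reverse triangle inequality observation that $|a| - |b| \leq \big|\,|a| - |b|\,\big| = \big|\,|b| - |a|\,\big|$. Rearranging gives $|a| \leq \big|\,|b| - |a|\,\big| + |b|$ immediately, which is exactly the desired bound. Substituting back the definitions of $a$ and $b$ recovers $R \leq \Delta R_1 + R_2$, completing the argument.

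I do not anticipate a genuine obstacle here; the result is a one-line consequence of the triangle inequality once the terms are expressed in a common notation. The only point requiring mild care is bookkeeping: ensuring that the nested absolute values in the definition of $\Delta R_1$ are handled correctly and that the direction of the reverse triangle inequality is applied so as to bound $|a|$ (i.e. $R$) from above rather than below. Everything else is routine substitution.
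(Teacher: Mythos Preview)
Your proposal is correct and essentially identical to the paper's own argument: the paper sets $A = |y - \hat{\mu}_2(x)| = R_2$ and $B = |y - \hat{\mu}_2(\hat{x})| = R$ and then notes $B \leq A + |A - B| = R_2 + \Delta R_1$, which is exactly your inequality $|a| \leq \big||b| - |a|\big| + |b|$ after taking $A = |b|$, $B = |a|$. Both invoke the reverse triangle inequality (really the trivial fact $x \leq |x|$ applied to $|a| - |b|$) and are one-line arguments.
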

By definition, $\Delta R_1 = ||y - \hat{\mu}_2(x)| - |y - \hat{\mu}
_2(\hat{x})||$. Let $A = |y - \hat{\mu}_2(x)| = R_2$ and $B = |y - \hat{\mu}
_2(\hat{x})| = R$. By the triangle inequality, $B \leq A + |B - A| = 
R_2 + \Delta R_1$. This upper bound property ensures that the sum of 
components provides a conservative estimate of the total error, which is 
crucial for the coverage guarantees developed subsequently.

This decomposition provides a clean interpretation: $R_2$ measures the inherent uncertainty of the downstream model, while $\Delta R_1$ measures how upstream prediction errors affect the final prediction error magnitude. When $\Delta R_1$ is small relative to $R_2$, the upstream predictor performs well and downstream uncertainty dominates. Conversely, when $\Delta R_1$ is large relative to $R_2$, upstream errors drive prediction uncertainty.
Furthermore, these components exhibit intuitive relationships with the full residual: when $R_2$ is small, $\Delta R_1$ closely approximates $R$, while when the upstream model is accurate and $\hat{\mu}_2$ is smooth, $R_2$ closely approximates $R$ (see Appendix~\ref{auxiliary_results}). Importantly, at least one component must represent a majority of the error, ensuring meaningful stage-wise attribution. 

This attribution enables practitioners to identify which stage requires improvement and guide retraining decisions. Furthermore, these components provide insights for handling distribution shifts: under upstream shifts, $\Delta R_1$ typically increases as the first-stage predictor encounters out-of-distribution inputs, while under downstream shifts, $R_2$ increases. These changes can be experimentally visualized in Appendix \Cref{ratio}. Thus, the varied responses enable targeted adaptive strategies for different distribution shifts.
Next, we describe two complementary approaches to combining these residual components into prediction intervals.

\section{Constructing Prediction Intervals} 
\label{constructingmethods}
We describe two approaches that incorporate $\Delta R_1, R_2$, utilizing the conformal set \( S^{\text{conf}} \), to compute component-wise quantiles, with data-driven parameter selection using \( S^{\text{cal}} \) addressed in \Cref{riskcontrolsec}. Note that there exists a rich space of heuristics for combining these residual components beyond those listed below---see Appendix~\ref{additionalheuristics}. For proofs of the following results, see Appendix~\ref{main_proofs}.
\subsection{Separate Component Quantiles}

For the first approach, we compute sets of residual components on \( S^{\text{conf}} \): \( \{ \Delta R_1^{i} \}_{i=1}^{m} \) and \( \{ R_2^i \}_{i=1}^{m} \), which we abbreviate as $ \{ \Delta R_1 \}$ and $\{ R_2 \}$. Prediction intervals are constructed by summing quantiles computed separately from each set, with the quantile levels controlled by $c$ and $d$ for the respective stages.

\begin{definition}[Separate component quantiles]
\label{pred2}
Let \( c, d \in (0,1) \) be quantile levels. For test input \(w_{\text{test}} \), the prediction interval is constructed by summing separate quantiles of each component at $c,d$ levels:
\[
\hat{C}_{c,d}(w_{\text{test}}) = \hat{\mu}_2(\hat{\mu}_1(w_{\text{test}})) \pm (Q_{1-c}(\{ \Delta R_1 \}) + Q_{1-d}(\{ R_2 \})).
\]
\end{definition}

This construction inherits coverage guarantees from standard conformal prediction:
\begin{theorem}[Coverage of separate component quantiles]
\label{cdcov}
Under the assumption of exchangeability, for \( c, d \in (0,1) \), the prediction interval \( \hat{C}_{c,d}(w_{\text{test}}) \) satisfies
\[
\mathbb{P}(y_{\text{test}} \in \hat{C}_{c,d}(w_{\text{test}})) \geq 1 - c - d.
\]
\end{theorem}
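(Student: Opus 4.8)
The plan is to reduce the coverage event to a one-sided inequality on the total residual and then control each stage-wise component separately. Writing $R_{\text{test}} = |y_{\text{test}} - \hat{\mu}_2(\hat{\mu}_1(w_{\text{test}}))|$, the event $y_{\text{test}} \in \hat{C}_{c,d}(w_{\text{test}})$ holds if and only if $R_{\text{test}} \le Q_{1-c}(\{\Delta R_1\}) + Q_{1-d}(\{R_2\})$. By \Cref{prop:upper_bound} applied to the test point, $R_{\text{test}} \le \Delta R_1^{\text{test}} + R_2^{\text{test}}$, so it suffices to guarantee that the summed component scores at the test point do not exceed the summed component quantiles. This reduction is what lets the decomposition do the work: I never need to reason about $R_{\text{test}}$ directly, only about its two surrogates.

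Next I would isolate the two marginal miscoverage events $E_1 = \{\Delta R_1^{\text{test}} > Q_{1-c}(\{\Delta R_1\})\}$ and $E_2 = \{R_2^{\text{test}} > Q_{1-d}(\{R_2\})\}$. On $E_1^c \cap E_2^c$ both component scores are simultaneously bounded by their respective quantiles, so their sum is at most $Q_{1-c}(\{\Delta R_1\}) + Q_{1-d}(\{R_2\})$; combined with the upper bound above, this forces $y_{\text{test}} \in \hat{C}_{c,d}(w_{\text{test}})$. Hence coverage can fail only inside $E_1 \cup E_2$, and a union bound gives $\mathbb{P}(E_1 \cup E_2) \le \mathbb{P}(E_1) + \mathbb{P}(E_2)$, so I only need a marginal bound on each term.

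To bound each term I would invoke the standard split-conformal marginal guarantee componentwise. Since $\hat{\mu}_1$ and $\hat{\mu}_2$ are fit only on $S^{\text{train}}$, which is disjoint from $S^{\text{conf}}$ and the test point, and the learning algorithms are deterministic and symmetric, each of $\Delta R_1$ and $R_2$ is a fixed measurable function of a single data triplet. Exchangeability of the underlying triplets therefore transfers to exchangeability of the score collections $\{\Delta R_1^i\}_{i} \cup \{\Delta R_1^{\text{test}}\}$ and $\{R_2^i\}_{i} \cup \{R_2^{\text{test}}\}$. Taking $Q_{1-c}$ and $Q_{1-d}$ to be the usual conformal quantiles (the $\lceil (1-c)(m+1)\rceil$- and $\lceil(1-d)(m+1)\rceil$-th order statistics of the respective conformal-set scores), the rank of the test score is uniform, which yields $\mathbb{P}(E_1) \le c$ and $\mathbb{P}(E_2) \le d$. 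Combining with the previous step gives $\mathbb{P}(y_{\text{test}} \notin \hat{C}_{c,d}(w_{\text{test}})) \le c + d$, which is the claim.

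The main obstacle is bookkeeping rather than conceptual. I would be careful to verify that exchangeability genuinely survives passing the data through the frozen predictors (this is where the disjoint-training and determinism assumptions are used) and to fix the quantile convention so that the per-component marginal guarantee is exactly $c$ and $d$. It is also worth flagging that the $c + d$ budget is precisely the cost of the union bound: it ignores the dependence between $\Delta R_1$ and $R_2$ and is therefore typically conservative, which motivates the later calibration of $c$ and $d$ rather than naively splitting $\alpha$.
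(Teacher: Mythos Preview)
Your proposal is correct and follows essentially the same route as the paper: use \Cref{prop:upper_bound} to reduce miscoverage to the union of the two componentwise exceedance events, then bound each via the standard conformal rank argument under exchangeability (which survives because $\hat{\mu}_1,\hat{\mu}_2$ are deterministic, symmetric, and fit on a disjoint set). The paper's proof likewise notes the conservatism from dropping the intersection term in the union bound.
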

\subsection{Scaled Component Quantiles} 
\label{scalingweighting}
Our second approach fixes a quantile level \( \alpha \in (0,1) \) for both components and selects scaling parameters \( a, b \in [0,1] \) to weight their respective contributions. This provides an intuitive explanation of stage-wise attribution of uncertainty.
\begin{definition}[Scaled component quantiles]
\label{pred1}
For a fixed quantile \( \alpha \in (0,1) \) and scaling coefficients \( a, b \in [0,1] \), the prediction interval is
\begin{align*}
\hat{C}_{\alpha, a, b}(w_{\text{test}}) &= \hat{\mu}_2(\hat{\mu}_1(w_{\text{test}})) \\&~~~~\pm (a \cdot Q_{1-\alpha}(\{ \Delta R_1 \}) + b \cdot Q_{1-\alpha}(\{ R_2 \}))
\end{align*}
where the quantiles are computed over $S^{\text{conf}}$.
\end{definition}
The choice of scaling parameters \( a \) and \( b \) provides clear control: setting \( a = 0 \) ignores upstream uncertainty while setting $b=0$ ignores downstream uncertainty, focusing only on upstream effects. However, coverage guarantees for arbitrary choices of $(a,b)$ require careful analysis. For fixed weights, we have the following:
\begin{corollary}[Coverage with \( a = b = 1 \)]
\label{a=b=1}
Under exchangeability, for \( a = b = 1 \) and \( \alpha \in (0,1) \), the interval \( \hat{C}_{\alpha, a, b}(w_{\text{test}}) \) satisfies
\[
\mathbb{P}(y_{\text{test}} \in \hat{C}_{\alpha, a, b}(w_{\text{test}})) \geq 1 - 2\alpha.
\]
\end{corollary}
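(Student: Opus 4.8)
The plan is to recognize that this statement is an immediate specialization of Theorem~\ref{cdcov}, so almost all of the work reduces to matching the two interval constructions. Setting $a = b = 1$ in Definition~\ref{pred1} produces the half-width $Q_{1-\alpha}(\{\Delta R_1\}) + Q_{1-\alpha}(\{R_2\})$, which is literally the half-width appearing in the separate-component interval of Definition~\ref{pred2} under the choice $c = d = \alpha$. Since both intervals are centered at the same point $\hat{\mu}_2(\hat{\mu}_1(w_{\text{test}}))$, we have the exact identity $\hat{C}_{\alpha, 1, 1}(w_{\text{test}}) = \hat{C}_{\alpha, \alpha}(w_{\text{test}})$. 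First I would state this identity explicitly, then apply Theorem~\ref{cdcov} with $c = d = \alpha$: because $\alpha \in (0,1)$ the pair $(c,d) = (\alpha,\alpha)$ is admissible, and the theorem yields $\mathbb{P}(y_{\text{test}} \in \hat{C}_{\alpha,1,1}(w_{\text{test}})) = \mathbb{P}(y_{\text{test}} \in \hat{C}_{\alpha,\alpha}(w_{\text{test}})) \geq 1 - \alpha - \alpha = 1 - 2\alpha$, which is exactly the claim (and is vacuously true when $\alpha > 1/2$, where the bound is nonpositive).

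For completeness, and to illuminate why the factor of $2\alpha$ arises, I would also sketch a self-contained argument that does not merely cite the theorem statement. By Proposition~\ref{prop:upper_bound}, the test residual satisfies $R^{\text{test}} \leq \Delta R_1^{\text{test}} + R_2^{\text{test}}$, so the test point is covered whenever both $\Delta R_1^{\text{test}} \leq Q_{1-\alpha}(\{\Delta R_1\})$ and $R_2^{\text{test}} \leq Q_{1-\alpha}(\{R_2\})$ hold, since these two inequalities force $R^{\text{test}} \leq Q_{1-\alpha}(\{\Delta R_1\}) + Q_{1-\alpha}(\{R_2\})$. Exchangeability of the conformal and test points gives the two marginal quantile guarantees $\mathbb{P}(\Delta R_1^{\text{test}} > Q_{1-\alpha}(\{\Delta R_1\})) \leq \alpha$ and $\mathbb{P}(R_2^{\text{test}} > Q_{1-\alpha}(\{R_2\})) \leq \alpha$, and a union bound over these two failure events bounds the total miscoverage by $2\alpha$.

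I do not expect a genuine obstacle here; the corollary follows essentially for free once the identification with Theorem~\ref{cdcov} is made. The only point demanding any care is the quantile convention: one must use a finite-sample-valid definition of $Q_{1-\alpha}$ (for instance the $\lceil (1-\alpha)(m+1)\rceil$-th order statistic over the $m$ conformal scores) so that each marginal tail bound holds exactly, and confirm that each of the two score families $\{\Delta R_1\}$ and $\{R_2\}$ is exchangeable with its respective test score under the stated assumptions. Both facts are routine, so the main value of writing out the proof is to make the reduction and the union-bound accounting transparent.
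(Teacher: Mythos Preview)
Your proposal is correct and matches the paper's own proof exactly: the paper simply states that the result follows directly from Theorem~\ref{cdcov} with $c = d = \alpha$. Your additional self-contained union-bound sketch is a nice elaboration but goes beyond what the paper provides.
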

\begin{proof}
    This follows directly from \Cref{cdcov} with $c=d= \alpha$.
\end{proof}
For appropriately chosen quantiles and scaling weights, we observe that both methods can yield similar intervals. To provide maximum flexibility for both theoretical analysis and implementation, we can combine both approaches into a general framework:
\begin{align*}
    \hat{C}_{\alpha, a,b,c,d}(w_{\text{test}}) &= \hat{\mu}_2(\hat{\mu}_1(w_{\text{test}})) \\&~~~~\pm (a \cdot Q_{1-c}(\{\Delta R_1\}) + b \cdot Q_{1-d}(\{ R_2 \})),
\end{align*}
This unified form allows independent control of both quantile levels and scaling parameters, enabling fine-tuned balance between coverage guarantees and stage-wise attribution of error. We provide guidance on default parameter choices in Appendix~\ref{hyperparamselection} and discuss the selection process of $(a,b)$ in the following sections.
\subsection{Coverage for Scaled Parameters}
\label{existence_scaling}
While coverage for intervals of the form \( \hat{C}_{c,d} \) (Definition~\ref{pred2}) follows from standard conformal analysis, establishing similar guarantees for intervals with scaled residuals \( \hat{C}_{\alpha,a,b} \) (Definition~\ref{pred1}) with arbitrary weights $(a,b) \in [0,1]$ presents significant challenges. The scaling parameters \( (a, b) \) have no direct mapping to quantile levels, making it difficult to derive explicit guarantees. Despite this, we can establish that valid scaling parameters \textit{exist}: Under mild regularity conditions, there always exist optimal scaling parameters \( (a^*, b^*) \) that yield exact marginal coverage.
\begin{proposition}[Existence of ideal scaling parameters]
\label{existabstar}
For desired coverage level \( 1- \alpha \), $\exists a^*, b^* \in [0,1]$ such that the interval with those scaling parameters satisfies the marginal coverage guarantee
\[
\mathbb{P}\left(y_{\text{test}} \in \hat{C}_{\alpha/2, a^*, b^*}(w_{\text{test}})\right) = 1 - \alpha,
\]
provided the distribution of the residual $R$ has no point masses.
\end{proposition}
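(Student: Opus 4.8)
The plan is to collapse the two-dimensional search over $(a,b)$ into a one-dimensional intermediate value argument. First I would condition on the conformal set $S^{\text{conf}}$ so that the plug-in quantiles $q_1 := Q_{1-\alpha}(\{\Delta R_1\})$ and $q_2 := Q_{1-\alpha}(\{R_2\})$ are fixed nonnegative constants. Since $y_{\text{test}} \in \hat{C}_{\alpha,a,b}(w_{\text{test}})$ holds precisely when $R = |y_{\text{test}} - \hat{\mu}_2(\hat{\mu}_1(w_{\text{test}}))| \le a q_1 + b q_2$, the coverage is the deterministic function $g(a,b) = F_R(a q_1 + b q_2)$, where $F_R$ denotes the (conditional) CDF of the test residual $R$. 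This rewriting is the key simplification: it shows coverage depends on $(a,b)$ only through the scalar threshold $t = a q_1 + b q_2$, which sweeps out $[0, q_1 + q_2]$ as $(a,b)$ ranges over $[0,1]^2$.

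Next I would record the analytic structure of $F_R$. It is nondecreasing, and because $R$ has no point masses it is continuous; since $R \ge 0$ with no atom at the origin, $F_R(0) = 0$. Restricting to the diagonal $s \mapsto (s,s)$ for $s \in [0,1]$ gives $h(s) := g(s,s) = F_R\!\big(s(q_1+q_2)\big)$, a continuous nondecreasing map with $h(0) = 0$ and $h(1) = F_R(q_1+q_2)$. Provided the target level satisfies $1-\alpha \le h(1)$, the intermediate value theorem produces some $s^* \in (0,1]$ with $h(s^*) = 1-\alpha$, and taking $a^* = b^* = s^*$ yields the claim. The no-point-mass hypothesis is exactly what promotes the usual conformal ``$\ge$'' to the exact ``$=$'' in the statement, because a continuous $F_R$ attains every value in its range.

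The main obstacle is the feasibility condition $h(1) = F_R(q_1+q_2) \ge 1-\alpha$, i.e. that the widest admissible interval (the corner $a=b=1$) already over-covers at the target. The pointwise bound $R \le \Delta R_1 + R_2$ of \Cref{prop:upper_bound}, together with a union bound over the marginal validity of $q_1$ and $q_2$, only delivers $F_R(q_1+q_2) \ge 1-2\alpha$ (this is \Cref{a=b=1}), which need not reach $1-\alpha$. I would therefore fold this inclusion into the ``mild regularity'' hypotheses: in practice $R$ is dominated by a single component, so $q_1+q_2$ comfortably exceeds $Q_{1-\alpha}(R)$ and feasibility holds with margin; when it fails, the identical argument goes through once the admissible box is enlarged from $[0,1]^2$ to $[0,\bar{a}] \times [0,\bar{b}]$ with $\bar{a} q_1 + \bar{b} q_2 \ge Q_{1-\alpha}(R)$. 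Two final remarks: the solution is generally non-unique, since the level set $\{(a,b) : a q_1 + b q_2 = Q_{1-\alpha}(R)\}$ is a line segment and the diagonal choice is merely convenient; and the dependence on $S^{\text{conf}}$ is harmless because the scaling parameters are permitted to be data-selected, so a realization-wise $(a^*,b^*)$ suffices and integrates up to the stated marginal guarantee.
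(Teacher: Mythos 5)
Your argument is essentially the paper's: both reduce to continuity of the coverage map in $(a,b)$ (supplied by the no-point-mass assumption on $R$) and an intermediate value argument between the degenerate corner $a=b=0$, where coverage is zero, and the corner $a=b=1$. The one substantive point of divergence is that you explicitly flag the feasibility condition $F_R\bigl(Q_{1-\alpha}(\{\Delta R_1\})+Q_{1-\alpha}(\{R_2\})\bigr)\ge 1-\alpha$: the paper's proof invokes Corollary~\ref{a=b=1} to get coverage at least $1-2\alpha$ at $a=b=1$ and then asserts that the intermediate value theorem yields the level $1-\alpha$, but since $1-2\alpha<1-\alpha$ that endpoint bound alone does not place $1-\alpha$ in the range of the coverage function. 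Your patch --- folding feasibility into the ``mild regularity'' hypothesis, or enlarging the admissible box until the threshold reaches $Q_{1-\alpha}(R)$ --- is exactly what is needed to close this gap, and is an improvement on the proof as written.
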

Thus for some desired coverage level $\alpha$, for residual component pieces taken at quantile level $1-\alpha/2$, there exist (possibly many) scaling weights that provide exact coverage. 
However, finding optimal scaling parameters \( (a^*, b^*) \) requires knowledge of the residual distribution, which is unavailable in practice. 
This creates a fundamental trade-off: scaled intervals offer direct control of stage-wise error contribution but lack accessible guarantees, while separate quantile intervals provide coverage under minimal assumptions but offer less direct explanation of stage-wise error. 

To resolve this trade-off, we adopt a conservative risk-controlled approach~\citep{bates2021distribution, angelopoulos2022conformal, angelopoulos2021learn} that selects parameters $(a,b)$ with demonstrable coverage $\geq 1-\alpha$ over recent points, using $S^{\text{cal}}$, providing both theoretical guarantees as well as empirical robustness to the distribution shifts that motivate our framework. \Cref{riskcontrolsec} formalizes this through a risk-controlled calibration procedure, and \Cref{adaptivealgo} extends it to adaptive parameter updates that respond to component-wise coverage feedback.
\section{Risk-Controlling Approach with Residual Components}
\label{riskcontrolsec}
Since finding an exact $(a^*,b^*)$ is impossible in practice, we reframe the problem as filtering for $(a,b)$ coefficient pairs that satisfy the nominal coverage level $1-\alpha$. We select scaling parameters through multiple hypothesis testing with FWER control. For each $\lambda = (a,b) \in \Lambda$, we test whether the miscoverage rate exceeds $\alpha$; FWER ensures we rarely accept poor parameters. This conservative approach provides three benefits: (i) robustness buffer under moderate shift, (ii) diagnostic abstention $(\Lambda_{val} = \emptyset)$ to signal when retraining is needed, and (iii) flexible search over stage-wise attribution of error. 

\subsection{Testing Miscoverage via Empirical Risk}
We aim to identify scaling parameters \( \lambda = (a, b) \in [0,1]^2 \) for which the resulting prediction interval \( \hat{C}_{\alpha, \lambda}(w) \) achieves coverage at least \( 1 - \alpha \). Since theoretical guarantees for arbitrary \( \lambda \) are unavailable, we perform a hypothesis test for whether its miscoverage rate exceeds \( \alpha \). We fix a finite candidate set \( \Lambda \subseteq [0,1]^2 \) of scaling pairs and define the corresponding prediction interval for each \( \lambda = (a, b) \in \Lambda \):
\begin{align*}
\left\{ y : \left| y - \hat{\mu}_2(\hat{\mu}_1(w)) \right| \leq a Q_{1-c}(\{ \Delta R_1 \}) + b  Q_{1-d}(\{ R_2 \}) \right\},
\end{align*} which we denote as $\hat{C}_{\alpha, \lambda}(w)$,
using the general framework introduced earlier that combines Definition~\ref{pred1} and Definition~\ref{pred2}. Here, the quantile parameters \( c \) and \( d \) are fixed in advance with quantiles taken over $S^{\text{conf}}$, while the miscoverage testing is performed using \( S^{\text{cal}} \) to identify suitable $\lambda$.
\subsection{Computing \texorpdfstring{$p$}{p}-values from Calibration Data}
Let $l = |S^{\text{cal}}|$ denote the size of the calibration set. For each choice of scaling parameters $\lambda \in \Lambda$, we define the empirical miscoverage rate $\hat{\mathcal{R}}(\lambda) = \frac{1}{l}\sum_{i=1}^l \mathbbm{1}_{\{y_i \notin \hat{C}_{\lambda}(w_i)\}}$. Under the stronger assumption that the calibration points are IID, and that the true miscoverage rate for a given $\lambda$ is constant, the number of missed points follows a Binomial distribution with parameters $l$ and the underlying miscoverage probability. For each $\lambda$, define the null hypothesis \[
 \mathcal{H}_0(\lambda): \mathbb{P}(y \notin \hat{C}_{\alpha, \lambda}(w)) > \alpha.
 \] 
Thus, for each $\lambda$ we define a $p$-value $p_\lambda = \mathbb{P}\left( \text{Bin}(l, \alpha) \leq l \hat{\mathcal{R}}(\lambda) \right)$. The $p$-values $p_\lambda$ are super-uniform under $\mathcal{H}_0$ (Appendix~\ref{superuniformcite}); that is, for any $u \in [0,1]$, we have $ \mathbb{P}(p_\lambda \leq u) \leq u$, which is crucial for FWER-controlling guarantees \citep{bates2021distribution}. 
Thus, we apply FWER-controlling multiple testing algorithms (Appendix~\ref{fweralgo}) to the collection of $p_\lambda$ to obtain the set of valid scaling parameters $\Lambda_{\text{val}} \subseteq \Lambda$. This ensures that, with probability at least $1 - \delta$ for some $\delta>0$, no $\lambda$ with a miscoverage rate exceeding $\alpha$ is accepted. Note that the $\lambda \in \Lambda_{\text{val}}$ are more conservative as they require evidence of coverage of \textit{at least} $1 - \alpha$. In contrast, other prediction interval methods yield coverage that is \textit{merely close} to $1-\alpha$. This conservatism may result in unnecessarily wide intervals, particularly when coverage is less critical than efficiency, such as in IID settings. To remedy this, we can include tolerance parameter $\tau$ and calculate $p_{\lambda}$ with $\text{Bin}(l, \alpha + \tau)$, trading some guarantees for practical performance. As demonstrated in our experiments (\Cref{fig:nonadaptive_cov_width}), even small values of $\tau$ significantly improve efficiency while improving empirical coverage under non-IID settings.

\subsection{Coverage Guarantees via Risk-Controlled Scaling}
Thus, given the \(p\)-values \(p_\lambda\) and a FWER-controlling multiple testing algorithm (Appendix ~\ref{fweralgo}), we identify the set of validated scaling parameters \(\Lambda_{\text{val}} \subseteq \Lambda\). The following result states any intervals associated with \(\Lambda_{\text{val}}\) achieve coverage with high probability:
\begin{theorem}[Risk control via FWER calibration]
\label{FWERcoverage}
Let \(\Lambda_{\text{val}} \subseteq \Lambda\) be the set selected by a FWER-controlling algorithm at level \(\delta\), based on \(p\)-values computed over the IID calibration set \(S^{\text{cal}}\) with tolerance $\tau>0$. Then, for any \(\hat{\lambda} \in \Lambda_{\text{val}}\), we have
\[
\mathbb{P}\left( \mathbb{P}\left( y_{\text{test}} \in \hat{C}_{\alpha, \hat{\lambda}}(w_{\text{test}}) \,\big|\, S^{\text{cal}} \right) \geq 1 - \alpha -\tau \right) \geq 1 - \delta,
\]
where the outer probability is over the randomness of \(S^{\text{cal}}\), and the inner probability is over the test point \((w_{\text{test}}, x_{\text{test}}, y_{\text{test}})\).
\end{theorem}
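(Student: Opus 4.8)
The plan is to establish this as a direct consequence of the super-uniformity of the $p$-values combined with the FWER-controlling property of the multiple testing procedure. First I would set up the framework precisely: fix any $\lambda \in \Lambda$ and recall that conditional on $S^{\text{conf}}$ (which determines the quantiles $Q_{1-c}(\{\Delta R_1\})$ and $Q_{1-d}(\{R_2\})$), the interval $\hat{C}_\lambda$ is a fixed set, so the calibration points are IID draws whose miscoverage events are Bernoulli with some unknown probability $r(\lambda) = \mathbb{P}(y \notin \hat{C}_\lambda(w))$. The goal is to control the event that the selected $\hat\lambda$ has true miscoverage exceeding $\alpha + \tau$; I would denote the \emph{bad} set $\Lambda_{\text{bad}} = \{\lambda \in \Lambda : r(\lambda) > \alpha + \tau\}$ and the null hypotheses $\mathcal{H}_0(\lambda)$ as true precisely on this set (using the tolerance-adjusted threshold $\alpha + \tau$).

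The key steps, in order, would be: (1) verify that for each $\lambda \in \Lambda_{\text{bad}}$, the $p$-value $p_\lambda = \mathbb{P}(\mathrm{Bin}(l,\alpha+\tau) \le l\hat{\mathcal{R}}(\lambda))$ is super-uniform, i.e., $\mathbb{P}(p_\lambda \le u) \le u$ for all $u \in [0,1]$. This follows because $l\hat{\mathcal{R}}(\lambda)$ stochastically dominates a $\mathrm{Bin}(l, \alpha+\tau)$ variable when the true miscoverage $r(\lambda) > \alpha + \tau$, making the CDF evaluated at the observed count super-uniform (I would cite the stochastic-dominance argument deferred to Appendix~\ref{superuniformcite}). (2) Invoke the FWER-controlling guarantee: the multiple testing algorithm applied to $\{p_\lambda\}_{\lambda \in \Lambda}$ at level $\delta$ rejects (i.e., includes in $\Lambda_{\text{val}}$) only those $\lambda$ whose nulls are deemed false, and guarantees $\mathbb{P}(\exists \lambda \in \Lambda_{\text{bad}} \cap \Lambda_{\text{val}}) \le \delta$ — that is, the probability of falsely validating \emph{any} truly-bad parameter is at most $\delta$. (3) Conclude: on the complementary event of probability at least $1-\delta$, every $\hat\lambda \in \Lambda_{\text{val}}$ satisfies $\hat\lambda \notin \Lambda_{\text{bad}}$, hence $r(\hat\lambda) \le \alpha + \tau$, which rearranges to $\mathbb{P}(y_{\text{test}} \in \hat{C}_{\hat\lambda}(w_{\text{test}}) \mid S^{\text{cal}}) \ge 1 - \alpha - \tau$. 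Since this holds simultaneously for all selected $\hat\lambda$ on that high-probability event, it holds for any fixed $\hat\lambda \in \Lambda_{\text{val}}$, giving the stated bound.

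I anticipate the main obstacle to be the careful handling of the conditioning and the source of randomness, since the statement nests an inner probability over the test point inside an outer probability over $S^{\text{cal}}$. The subtlety is that the quantiles are computed on $S^{\text{conf}}$, not $S^{\text{cal}}$, so I would need to argue that conditioning on $S^{\text{cal}}$ alone suffices for the inner coverage statement — or more cleanly, treat $S^{\text{conf}}$ as fixed throughout (folding it into the definition of $r(\lambda)$) and take the outer randomness to be over $S^{\text{cal}}$, which drives the selection of $\Lambda_{\text{val}}$. A second delicate point is justifying the IID/Binomial modeling of the calibration miscoverage counts, which requires the stronger IID assumption on $S^{\text{cal}}$ (noted in the preceding subsection) rather than mere exchangeability; I would flag this explicitly. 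The remaining steps are essentially a repackaging of the standard Learn-then-Test risk-control guarantee of \citet{angelopoulos2021learn} and the distribution-free FWER machinery of \citet{bates2021distribution}, so once the reduction to super-uniform $p$-values and FWER control is made precise, the conclusion is immediate.
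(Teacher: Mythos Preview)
Your proposal is correct and follows essentially the same approach as the paper: establish super-uniformity of the $p$-values under the null (deferring to the appendix lemma), then invoke Theorem~1 of \citet{angelopoulos2021learn} together with the FWER-controlling property at level $\delta$ to conclude. The paper's own proof is in fact a two-line sketch of exactly these two steps; your version is more explicit about the conditioning on $S^{\text{conf}}$ versus $S^{\text{cal}}$ and the role of the IID assumption, which is a welcome clarification rather than a departure.
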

This guarantee follows from the FWER-controlling property: by reducing the probability of false positives, i.e., selecting scaling parameters with poor coverage, we ensure that all selected parameters satisfy the coverage requirement with high probability. Thus, with probability at least $1 - \delta$ over $S^{\text{cal}}$, any selected interval \(\hat{C}_{\alpha, \hat{\lambda}}\) has coverage rate at least \(1 - \alpha - \tau\). Note that while $l$ does not explicitly appear in the guarantee, larger calibration sets lead to more precise $p$-value estimates, typically resulting in a larger $\Lambda_{\text{val}}$ and less conservative interval selection. In practice, any pair \((a,b) \in \Lambda_{\text{val}}\) can be used to construct prediction intervals. Although the quantile levels \(c,d\) from Definition~\ref{pred2} could also be tuned jointly with \((a,b)\), we fix \(c, d \) for simplicity, which is typically sufficient in our experiments.

We note that \(\Lambda_{\text{val}}\) can be empty, producing no interval. This is intentional, as it can indicate that the shift is too large and retraining is needed, rather than producing an unreliable interval. In this setting, fixed \(c,d\) offer a clear interpretation as stage-wise sensitivity to shifts. While the coverage guarantees assume IID \(S^{\text{cal}}\), the selective FWER process creates a robustness buffer, which we extend to targeted adaptation of the selected weights when distribution shifts affect pipeline stages asymmetrically, which we define in \Cref{adaptivealgo} and demonstrate experimentally in \Cref{experiments}. Lastly, Appendix~\ref{mixingextension} shows how the IID assumption on \(S^{\text{cal}}\) can be relaxed to allow stationary \(\phi\)-mixing sequences as an initial extension of the theoretical coverage guarantees to the non-exchangeable/non-IID settings considered in \Cref{experiments}.

\section{Adaptive Risk Control with Residual Decomposition}
\label{adaptivealgo}
We consider an adaptive variant of our method for nonstationary data by updating the prediction intervals over time. The sets \( S^{\text{conf}} \) and \( S^{\text{cal}} \) are defined using a sliding window over the most recent observations, with a user-specified window-length $k$. We construct intervals combining Definition~\ref{pred1} and Definition~\ref{pred2}:
\[
\hat{C}_{\alpha, \hat{\lambda}}(w) = \hat{\mu}_2(\hat{\mu}_1(w)) \pm (a \, Q_{1-c}(\{\Delta R_1\}) + b \, Q_{1-d}(\{R_2\})),
\]
parameterized by scaling coefficients \( a, b \), quantile levels \( c, d \), and target coverage level \( \alpha \), where quantiles are taken over \( S^{\text{conf}} \), and $\Lambda_{\text{val}}^t$ is recalculated at each time step. With each new point, we dynamically update the $(a_t,b_t,c_t,d_t,\alpha_t)$ using adaptive rules based on recent performance: when coverage drops below target, we decrease $\alpha_t$ and vice versa using stepsize $\gamma$; when specific components show persistent errors, we adjust their corresponding scaling parameters; and when $\Lambda_{\text{val}}^t$ is too restrictive, we adjust the quantile levels with stepsize $\eta$ to expand future options. The algorithmic details are provided in Appendix~\ref{pseudocode}.

Crucially, the $\lambda$ selection algorithm (\textsc{SelectLambda}) implements the adaptive adjustments to $(a_t,b_t,c_t,d_t)$: it first identifies which residual component constitutes more of the error by comparing recent averages $\bar{\Delta R_1} = 
\frac{1}{k}\sum_{i=t-k}^{t-1} \Delta R_1^{(i)}$ and $\bar{R_2} = 
\frac{1}{k}\sum_{i=t-k}^{t-1} R_2^{(i)}$ over the sliding window. If coverage fails and upstream errors dominate ($\Delta \bar{R}_1 > \bar{R}_2$), it 
seeks to increase $a_t$ within the validated
set $\Lambda_{\text{val}}^t$. Conversely, when downstream errors 
dominate ($\bar{R}_2 > \Delta\bar{ R}_1$), it prioritizes $b_t$. 
If the desired scaling adjustment is unavailable in $\Lambda_{\text{val}}^t$, the algorithm returns signals $\Delta c_t, \Delta d_t \in \{-1, 0,
1\}$ to adjust $c_{t+1}, d_{t+1}$, effectively 
``adjusting" the constraints to allow more suitable scaling options. 

Even with additional parameters, this approach preserves the long-run coverage guarantee from prior work: $\lim_{T \to \infty} \frac{1}{T} \sum_{t=1}^{T} \text{cov}_t \xrightarrow{\text{a.s.}} 1 - \alpha$, where $\text{cov}_t$ denotes coverage at time $t$. This is because the core $\alpha_t$ update rule remains identical to the adaptive conformal method of \citet{gibbs2021adaptive}, resulting in the convergence guarantee formally stated in Appendix~\ref{longrunasymp}. 

\section{Experiments}
\label{experiments}
We evaluate our method on synthetic and real-world forecasting tasks to assess its ability to (i) preserve coverage, (ii) attribute predictive error to specific model stages, and (iii) adapt to distribution shifts in modular pipelines. We validate the shift robustness mechanisms outlined previously: conservative calibration via FWER provides a safety margin, while stage-wise parameters $(a,b,c,d)$ adapt to distribution shifts that affect upstream and downstream components differently. Our experiments are designed to isolate upstream, downstream, and full-pipeline shifts, highlighting how stage-aware intervals improve transparency and robustness over existing conformal baselines. Additional experiments and details are in Appendix~\ref{additionalexp} where we provide hyperparameter ablation studies, practical guidelines for hyperparameter selection, experiments on covariate shift, a real-world stocks dataset, single-stage baseline comparisons, and visualizations of how the chosen weights $(a,b)$ adapt to distribution shifts. 

Code for reproducing our experiments is available at: https://github.com/william-zhang64/Split-Residual-Conformal-Prediction.

\begin{figure*}[t]
    \centering
    \begin{subfigure}[b]{0.33\textwidth}
        \centering
        \includegraphics[width=\linewidth]{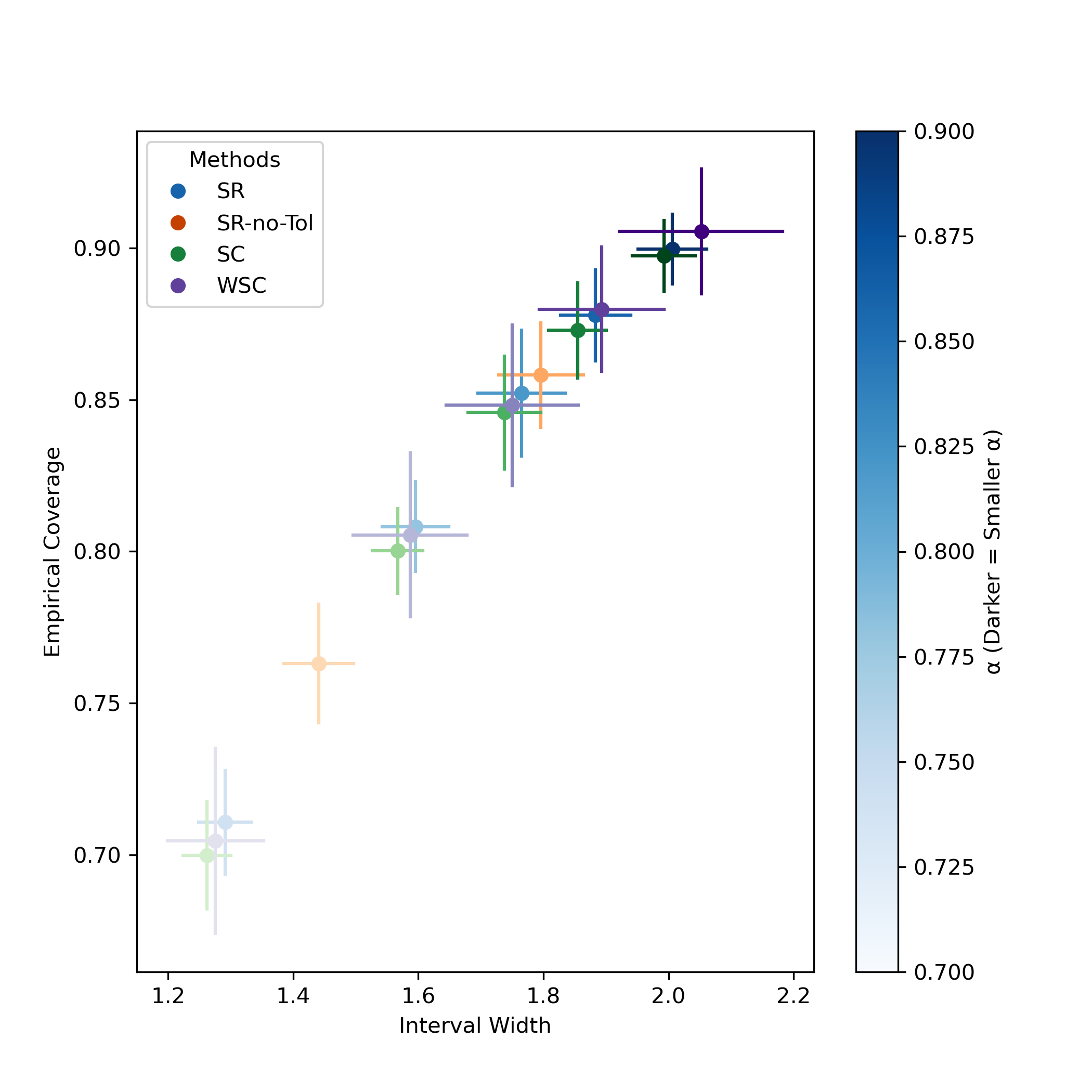}
        \caption{Average coverage vs. width in the IID setting}
        \label{fig:iidnonadapt}
    \end{subfigure}
    \begin{subfigure}[b]{0.33\textwidth}
        \centering
        \includegraphics[width=\linewidth]{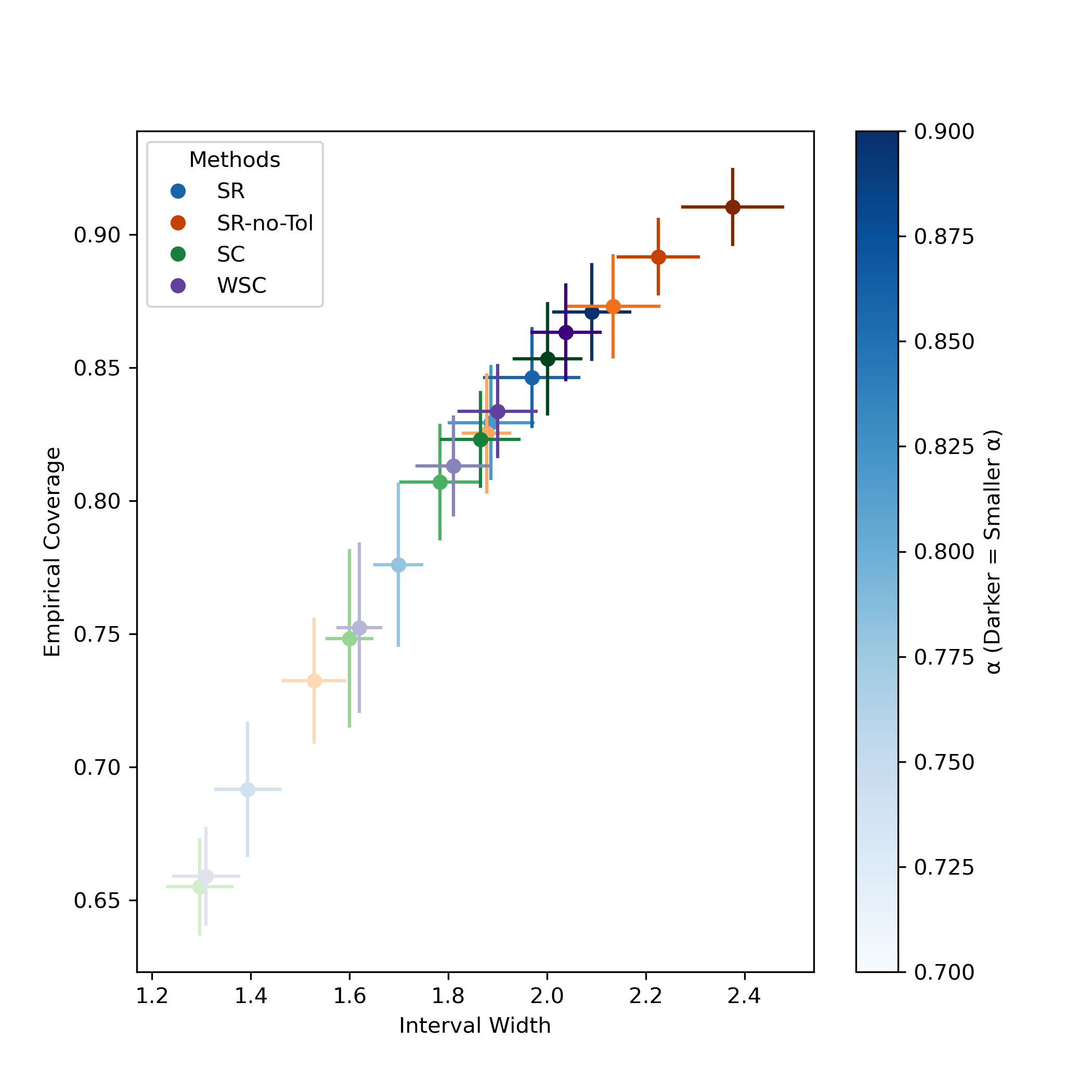}
        \caption{Average coverage vs. width under gradual downstream shifts}
        \label{fig:gradnonadapt}
    \end{subfigure}
    \begin{subfigure}[b]{0.33\textwidth}
        \centering
        \includegraphics[width=\linewidth]{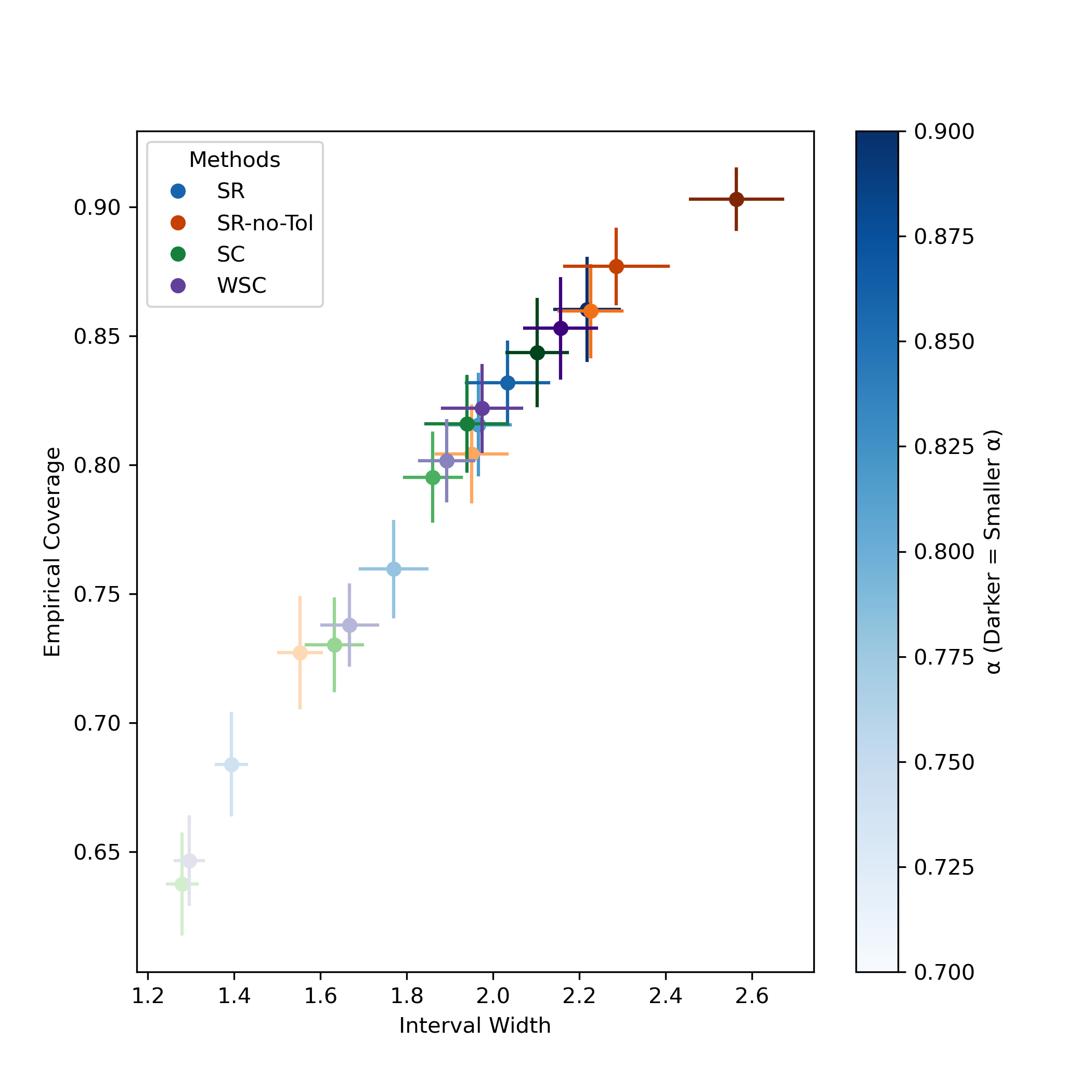}
        \caption{Average coverage vs. width under rapid downstream shifts}
        \label{fig:rapnonadapt}
    \end{subfigure}
    \caption{Average coverage vs. width for $\alpha\in\{0.1,0.13,0.15,0.2,0.3\}$ for various settings with non-adaptive methods. Desired coverage is represented by opacity, thus the main comparison is between the same opacity points.}
    \label{fig:nonadaptive_cov_width}
\end{figure*}

\subsection{Non-Adaptive Method Experiments}
\label{nonadaptiveexp}
We evaluate our non-adaptive method (\Cref{constructingmethods}), using the FWER-based procedure for our unified interval which we denote as $\text{SR}_{a,b}$ to generate a validated set \(\Lambda_{\text{val}}\) of $(a,b)$, using fixed quantile levels $c,d$. We then select the pair \((a, b)\) from this set that yields coverage closest to the nominal level \(\alpha\) on \( S^{\text{conf}} \). We compare against two main non-adaptive methods: standard split conformal prediction (SC, \citet{vovk2005algorithmic}) which has been recently noted to maintain coverage under certain non-exchangeable settings (\citet{oliveira2024split}) and weighted split conformal prediction (WSC, \citet{barber2023conformal}), which explicitly accounts for non-exchangeability via weighting. Lastly, the cascaded confidence method (CC, \citet{gong2023confidence}) consistently over-covers across our settings; we report the comparison in Appendix~\ref{cccomp}.

We begin with a synthetic dataset generated by the structural equations \( x = 3w + \nu_1 \), \( y = 4x + \nu_2 \), where \( w, \nu_1, \nu_2 \sim \mathcal{N}(0, 0.1) \). After an initial stationary period, the data undergoes rapid or gradual distribution shifts in either the upstream or downstream via increasing variance in the corresponding noise terms.

We visualize mean width and coverage across $\alpha\in\{0.1,0.13,0.15,0.2,0.3\}$ for each method, including our method with and without tolerance $\tau = 0.05$. We show results under IID settings, gradual downstream shift, and rapid downstream shift in \Cref{fig:nonadaptive_cov_width}. Point opacity encodes $1-\alpha$, color denotes the method, and error bars indicate standard deviations in coverage and width. 

In the IID setting, setting $\tau = 0.05$ (blue) increases efficiency greatly over $\tau=0$ (orange), is comparable to WSC and SC and outperforms WSC at low $\alpha$. Under gradual and rapid distribution shifts, $\tau=0.05$ displays strong relative improvements in coverage to WSC and SC, particularly at higher $\alpha$, while remaining less conservative than $\tau= 0$. Notably, our method with $\tau= 0$ is well calibrated under rapid shifts, but produces overly wide intervals under other settings, making $\tau =0.05$ a balanced middle ground. This illustrates how the $\tau$ parameter allows for fine-grained interpolation between efficiency and conservativeness. Thus, our framework gives practitioners explicit knobs to navigate the tradeoff between robustness, interval width via $\tau$, and also abstention sensitivity via $c,d$ and $\delta$, and we provide a parameter flowchart in Appendix~\ref{hyperparamselection}.

Corresponding figures for upstream distribution shift are in Appendix~\ref{upstreamshiftfig}, where similar conclusions hold, but with greater drops in coverage for all methods. A table of additional results for upstream, downstream, gradual, and rapid shifts are reported in Appendix~\ref{covundershift}, particularly with varying $c,d$ parameters. An important intuition is that $c,d$ parameters act as stage-wise abstention sensitivity, with higher $c$ (resp. $d$) causing the model to abstain more under upstream (resp. downstream) shifts, allowing the method to detect rapid shifts and abstain rather than provide a faulty interval.

\subsection{Adaptive Methods Experiments}
\label{adaptiveexp}
\begin{figure*}[t]
    \centering
    \begin{subfigure}[b]{0.45\textwidth}
        \centering
        \includegraphics[width=\linewidth]{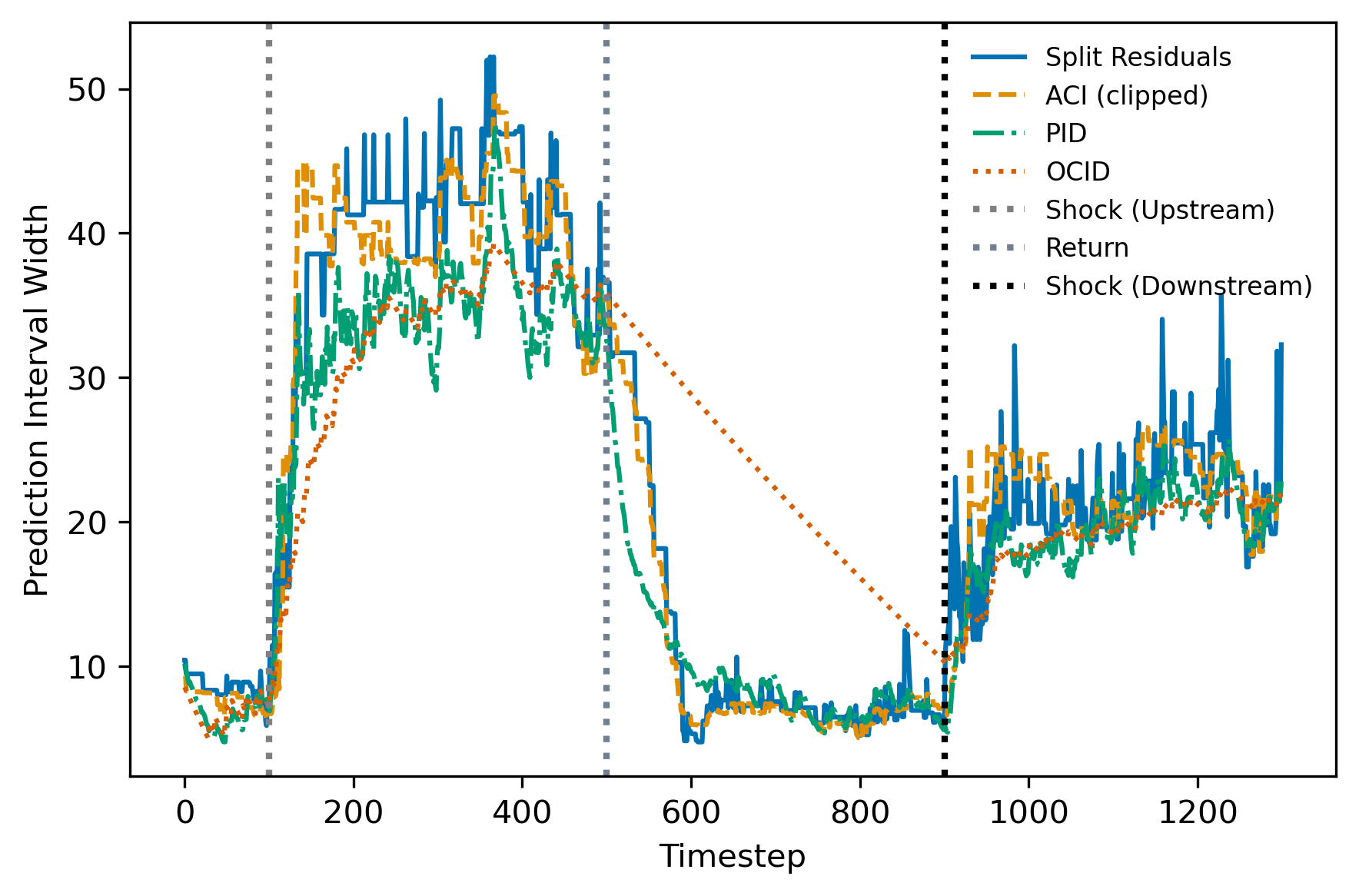}
        \caption{Widths of the four intervals}
        \label{fig:widthsynth}
    \end{subfigure}
    \hfill
    \begin{subfigure}[b]{0.45\textwidth}
        \centering
        \includegraphics[width=\linewidth]{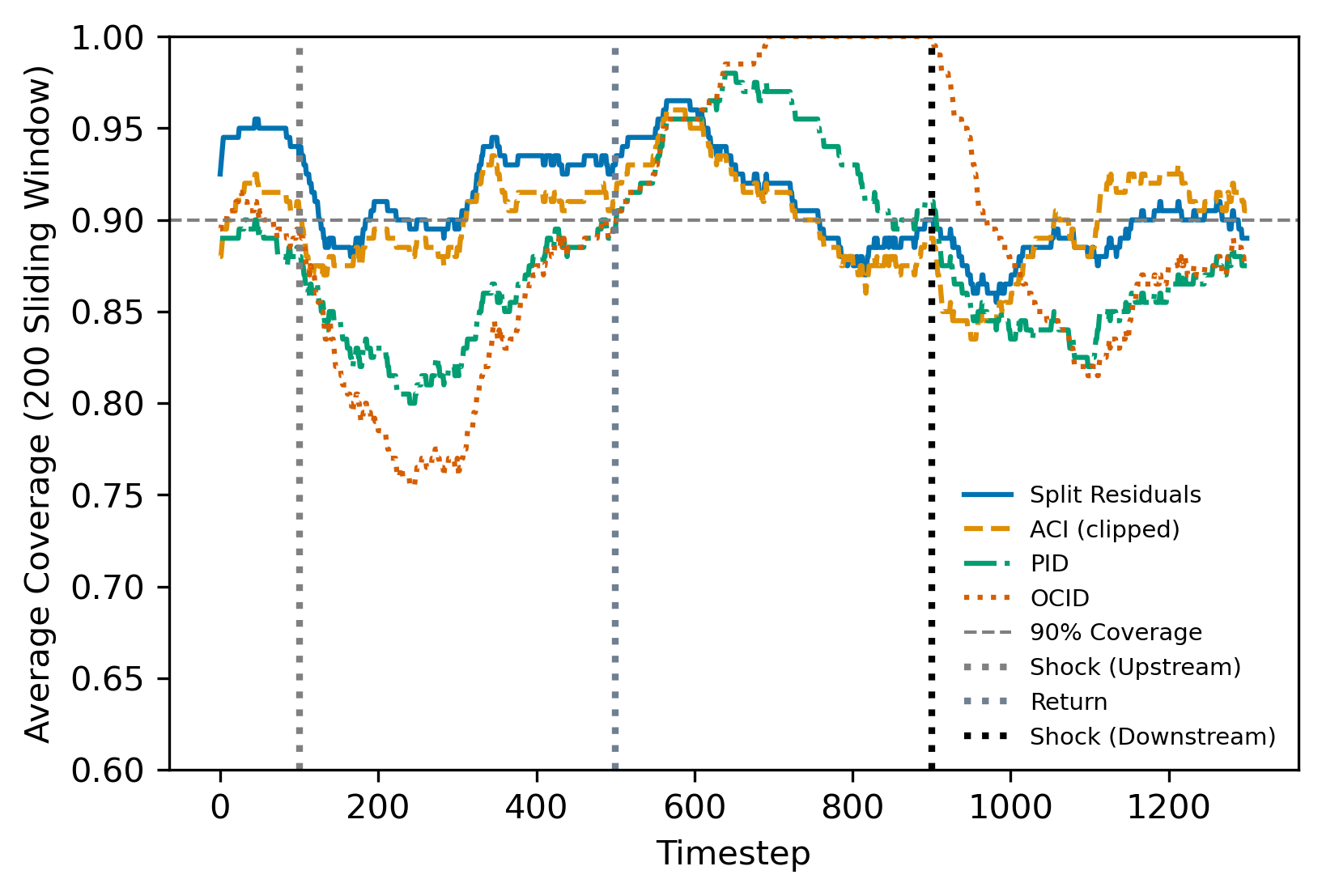}
        \caption{Coverage averaged over sliding window of 200 points}
        \label{fig:robustsynth}
    \end{subfigure}
    \caption{Comparison of width and coverage under structural distribution shifts for $\alpha = 0.1$, $\delta = 0.1$, $\gamma = 0.01$, $\eta = 0.01$, $k=100$}
    \label{fig:synth_adaptive}
\end{figure*}
We evaluate our adaptive method (\Cref{adaptivealgo}) in settings with drastic distribution shifts, comparing against recent adaptive baselines: adaptive conformal inference (ACI, \citet{gibbs2021adaptive}), conformal PID control (PID, \citet{angelopoulos2023conformal}), online conformal with decaying step sizes (OCID, \citet{angelopoulos2024online}). We also consider error quantified conformal inference (ECI, \citet{wu2025error}), and dynamically tuned adaptive conformal inference (DtACI, \citet{gibbs2024conformal}), an extension of ACI, and report their results in Appendix~\ref{dtaci} due to weaker performance. We also consider covariate shifts, hyperparameter ablation, and a real-world stock price dataset, with full experimental details in Appendix~\ref{additionalexp}.

\paragraph{Controlled two-stage regression with structured shifts.}We simulate a two-stage regression pipeline, with controlled stochastic relationships between upstream input \( w \), intermediate representation \( x \), and downstream output \( y \).
The initial data follows an i.i.d. structure with $x = \mu_1(w) = 3w + \nu_1, \quad y = \mu_2(x) = 4x + \nu_2$ where \( \nu_1, \nu_2 \sim \mathcal{N}(0, 1) \). Both stages are modeled using ordinary least squares. At test time, we simulate a temporal sequence of three phases: (i) \textbf{Upstream distribution shift:} \( \mu_1 \) becomes \( \mu_1^{\text{shift}}(w) = 8w + 1 + \nu_1 \); (ii) \textbf{Reversion:} The upstream returns to \(\mu_1(w)\); (iii) \textbf{Downstream distribution shift:} \( \mu_2 \) becomes \( \mu_2^{\text{shift}}(x) = 7x + 5 + \nu_2 \). We report coverage and widths in \Cref{fig:synth_adaptive}, capturing the robustness of our methods to stage-specific distribution shifts without overcompensating on width, particularly for upstream shifts.

\begin{figure*}[t]
    \centering
    \begin{subfigure}[b]{0.45\textwidth}
        \centering
        \includegraphics[width=\linewidth]{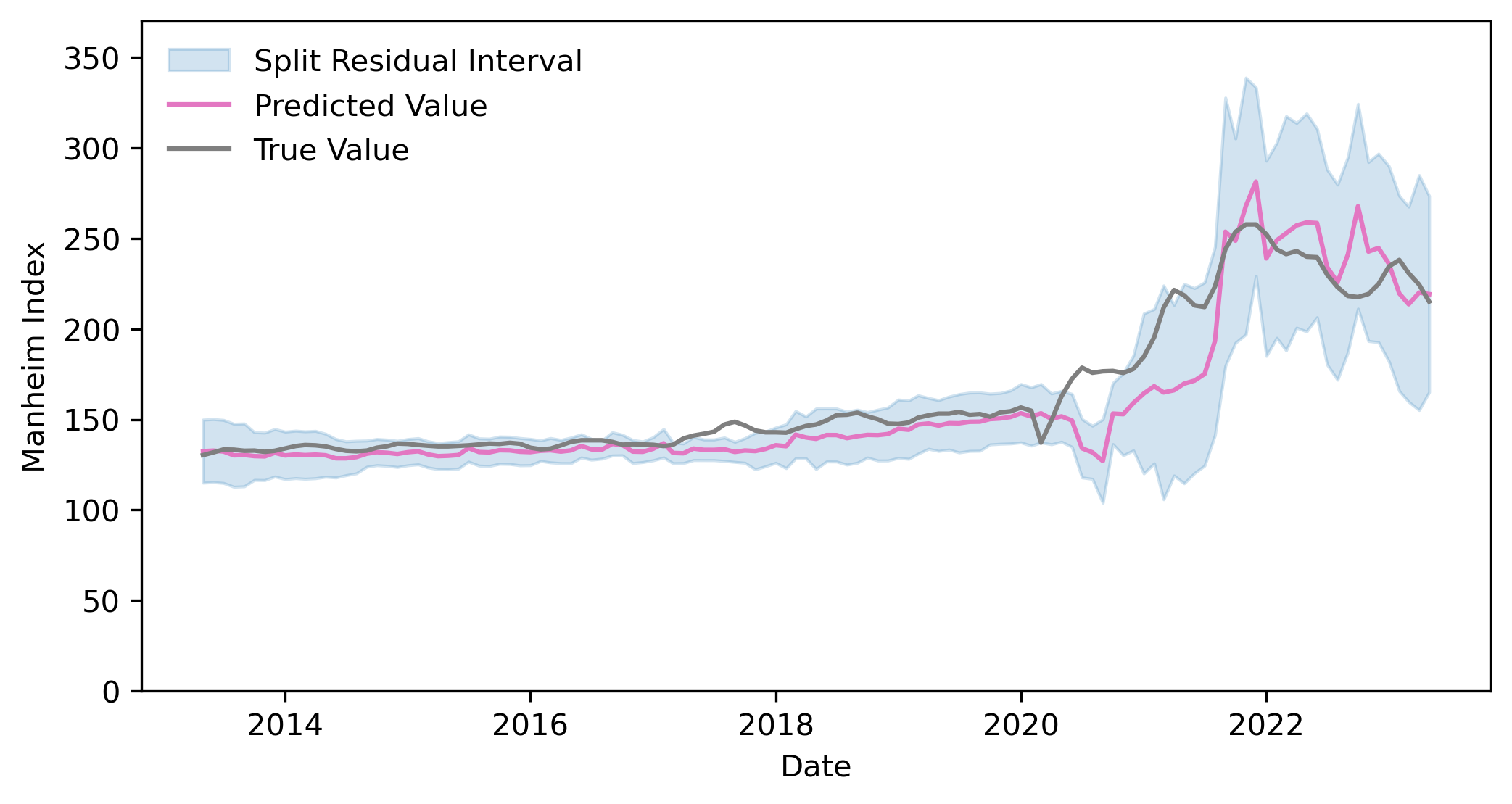}
        \caption{SR prediction intervals from 2013-2023}
        \label{fig:us_lm}
    \end{subfigure}
    \hfill
    \begin{subfigure}[b]{0.45\textwidth}
        \centering
        \includegraphics[width=\linewidth]{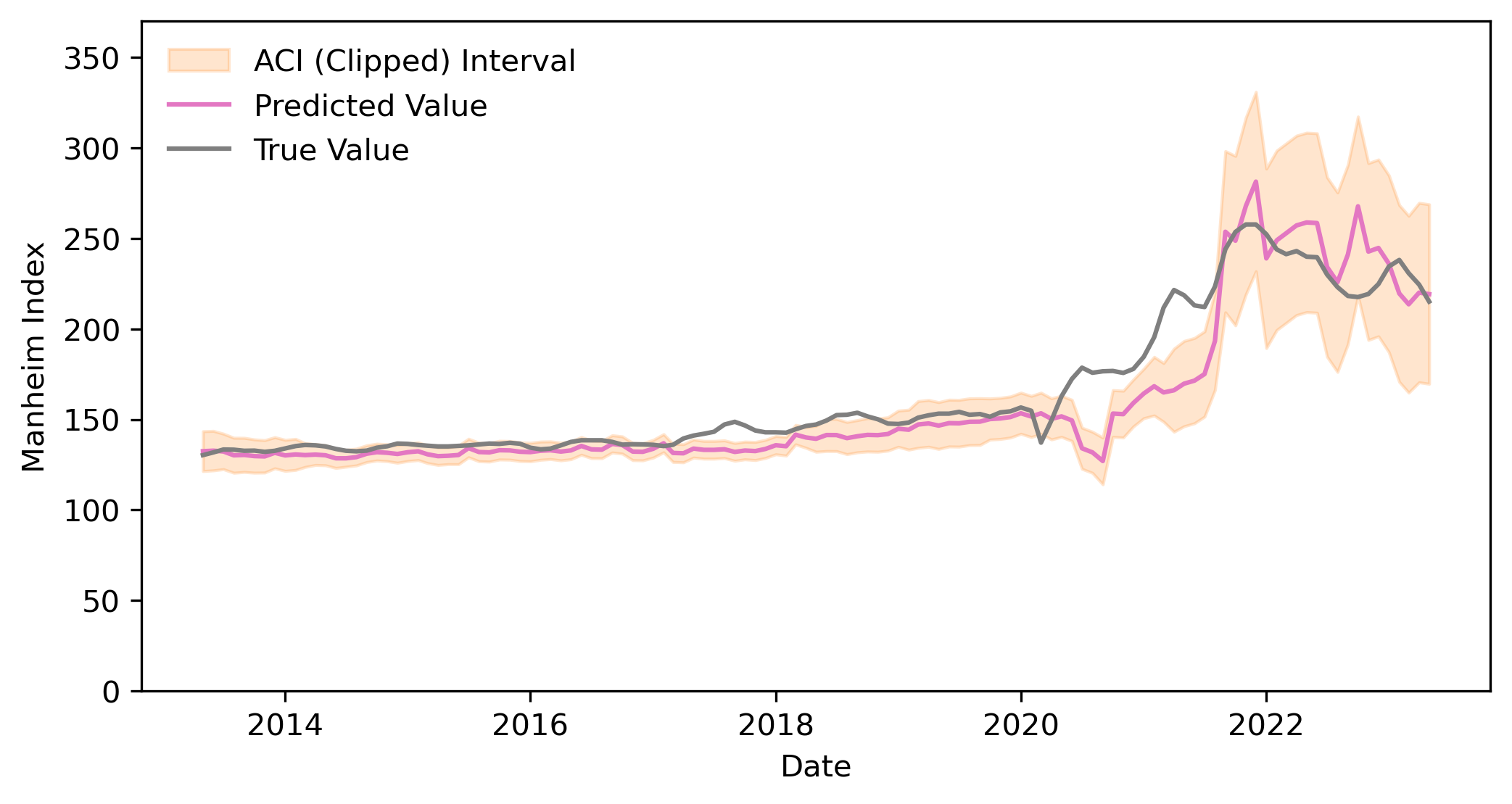}
        \caption{ACI prediction intervals from 2013-2023}
        \label{fig:aci_lm}
    \end{subfigure}
    \begin{subfigure}[b]{0.45\textwidth}
        \centering
        \includegraphics[width=\linewidth]{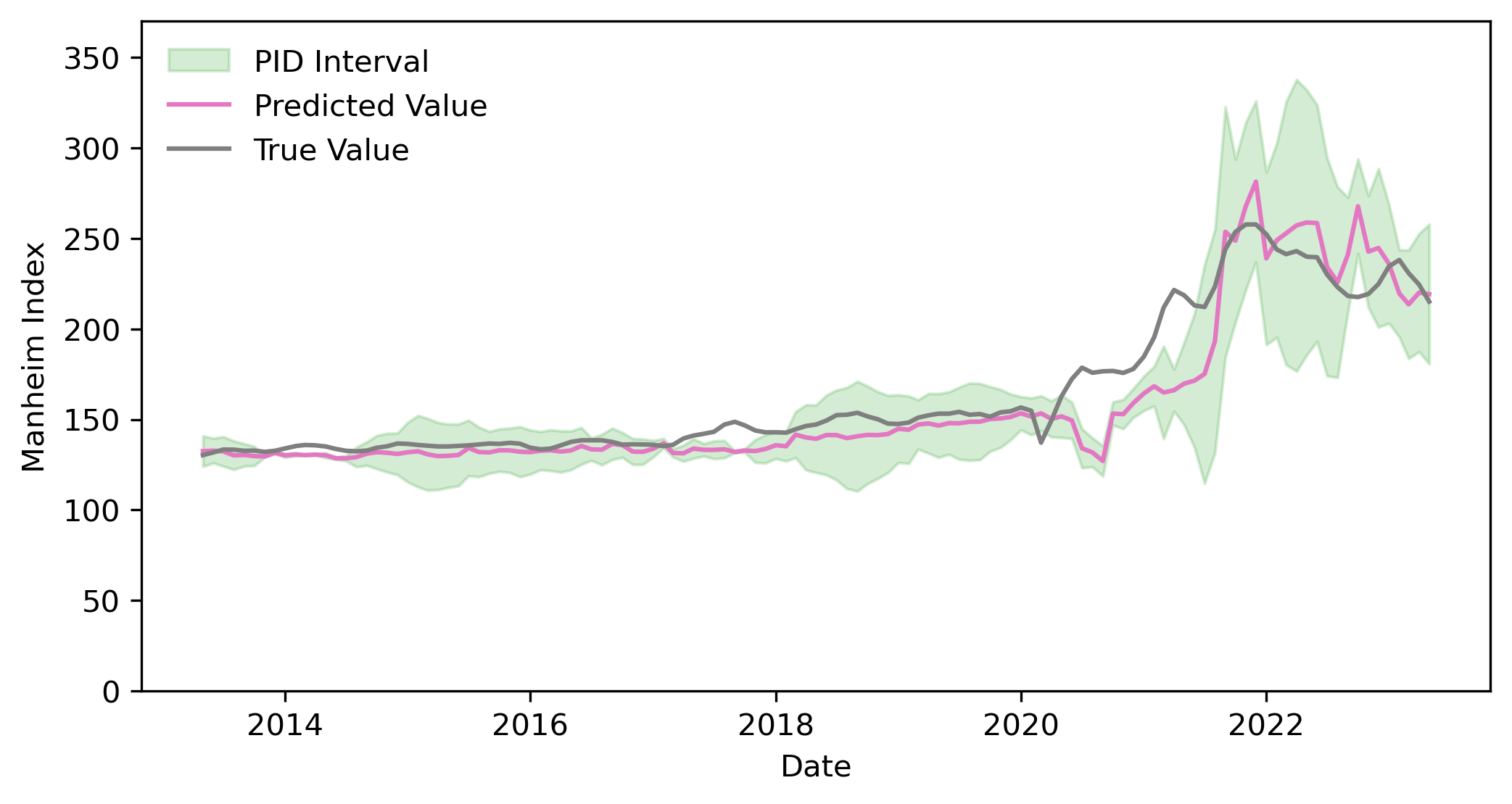}
        \caption{PID prediction intervals from 2013-2023}
        \label{fig:pid_lm}
    \end{subfigure}
    \hfill
    \begin{subfigure}[b]{0.45\textwidth}
        \centering
        \includegraphics[width=\linewidth]{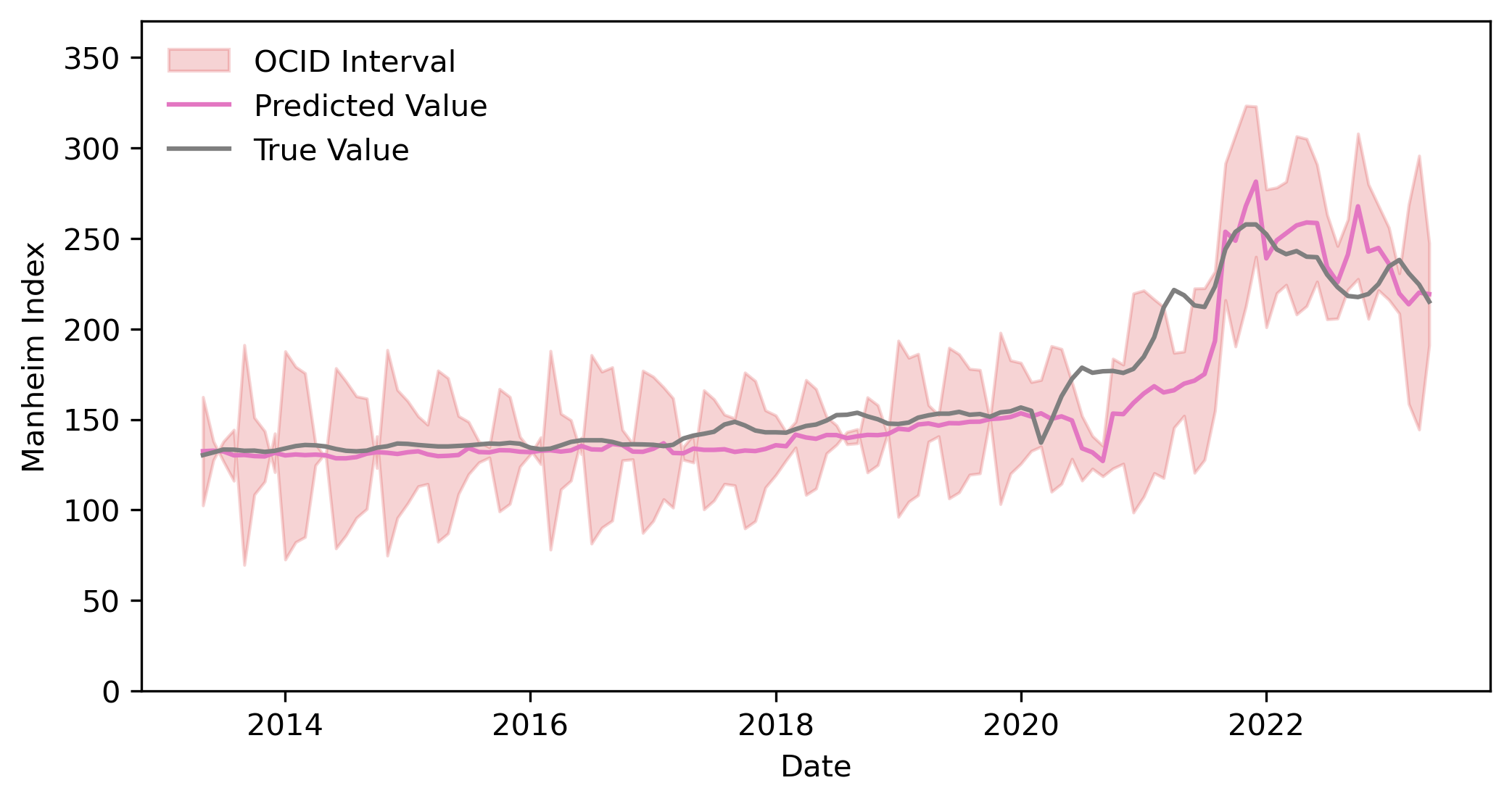}
        \caption{OCID prediction intervals from 2013-2023}
        \label{fig:decay_lm}
    \end{subfigure}
    \caption{Coverage of prediction intervals for $\alpha = 0.2$, $\delta = 0.3$, $\gamma = 0.01$, $\eta = 0.01$, $k=40$}
    \label{lmres}
\end{figure*}

\paragraph{Automobile indicators dataset.} We evaluate our method on a real-world dataset consisting of monthly economic indicators for the U.S. automobile supply chain and used vehicle valuations. Upstream features $(w)$ are sourced from \citet{FRED:2014}, including metrics such as price indices, inventory levels, and manufacturing prices, which are then forecasted $(x)$ at a 6-month horizon. Downstream outputs $(y)$, aggregated used car prices, are obtained from \citet{manheim}. This setting fits the two-stage modeling framework: upstream economic conditions influence downstream pricing, and training a separate upstream predictor leverages more widely available historical supply chain data.

A notable feature of this dataset is the sharp distribution shift in 2020 due to the COVID-19 pandemic, which disrupted global supply chains, leading to production halts, resulting in a steep and persistent rise in used vehicle prices~\citep{mullin2022supply}. The nature of the distribution shift may also affect the model asymmetrically, leading to exacerbated upstream inaccuracy, which we observe via the scaling parameter changes in Appendix ~\ref{abchanges}. For the upstream forecasting task, we use an ensemble of VAR and ARIMA models to predict supply-chain indicators. The model is trained on a rolling-window basis, with residual scores computed at each time step. A 40-month sliding window forms the held-out set from which we derive \( S^{\text{conf}} \) and \( S^{\text{cal}} \).

We present the resulting prediction intervals for our method, ACI, PID, and OCID in \Cref{lmres}. Our method effectively responds to the distribution shift in 2020, maintaining coverage relative to other methods. Notably, our approach maintains a comparable mean width (18.957 vs 15.696 (ACI), 18.469 (PID), 15.667 (OCID)). This demonstrates our method can quickly adjust to distribution shifts by considering more conservative values of $\lambda$ in $\Lambda_{\text{val}}$, while remaining efficient during the stationary period (2014-2020). This suggests that our method is well-suited for real-world forecasting and shines under asymmetric (stage-wise), structural distribution shifts. Contrast with OCID, which only partially covers the shift in 2020 due to the high variance fluctuations in width, even during the stationary period. For covariate shifts (Appendix~\ref{covariate}) that affect the entire pipeline, the benefits of our method are less significant and all methods perform similarly in width and coverage.

\section{Conclusion}
We proposed a conformal prediction framework for sequential models that decomposes prediction residuals into distinct stage-specific components. Our method combines this decomposition with risk-controlled parameter selection to construct prediction intervals that provide coverage guarantees and uncertainty attribution. The approach allows practitioners to identify which pipeline stage contributes to prediction uncertainty and provides diagnostic tools for targeted model improvement. Empirical evaluation on synthetic distribution shifts and real-world 
economic forecasting shows that our method's coverage degrades less severely under shifts compared to standard conformal approaches. While this robustness comes at the cost of wider intervals and occasional abstention when shifts 
are severe, these trade-offs reflect the method's conservative design that prioritizes reliable coverage and attribution of error over optimistic predictions. The stage-wise decomposition proves particularly valuable for asymmetric shifts affecting different pipeline components, enabling targeted adaptive responses that standard methods cannot provide. Lastly, we note that the concept of stage-wise residual decomposition and reweighting can be extended to incorporate other conformal methods for potential further improvements.

\section*{Impact Statement}
This paper presents work whose goal is to advance the field
of Machine Learning. There are many potential societal
consequences of our work, none of which we feel must be
specifically highlighted here.

\clearpage
\bibliography{references}
\bibliographystyle{icml2026}

\newpage
\appendix
\onecolumn

\section{Appendix}
\subsection{Main Results/Proofs}
\label{main_proofs}
\paragraph{Proof of Theorem~\ref{cdcov}}
\label{proofthm1}
\begin{proof}
    We consider the event 
\begin{align*}
    y_{test}\notin \hat{C}_{c,d}(w_{test})
\end{align*}
where $\hat{C}_{c,d}$ is the prediction interval defined by $\hat{\mu}_2(\hat{\mu}_1(w_{test})) \pm Q_{1-c}(\{\Delta R_1\}) + Q_{1-d}(\{R_2\})$ with the quantiles being taken over the residual components calculated on held-out conformal set $S^{\text{conf}}$.
This event is when the coverage of the interval fails. See that 
\begin{align*}
    y_{test}\notin \hat{C}_{c,d} \implies |y_{test} - \hat{\mu}_2(\hat{\mu}_1(w_{test}))| \geq Q_{1-c}(\{\Delta R_1\}) + Q_{1-d}(\{R_2\})
\end{align*}
But then by definition
\begin{align*}
    |y_{test} - \hat{\mu}_2(\hat{\mu}_1(w_{test}))| \geq Q_{1-c}(\{\Delta R_1\}) + Q_{1-d}(\{R_2\}) \implies \\||y_{test} - \hat{\mu}_2(x_{test})| - |y_{test} - \hat{\mu}_2(\hat{\mu}_1(w_{test}))|| +  |y_{test} - \hat{\mu}_2(x_{test})| \geq Q_{1-c}(\{\Delta R_1\}) + Q_{1-d}(\{R_2\})
\end{align*}
Thus, the split residual on the $test$ point also falls outside the region. However, this implies that at least one of the following events occur: \[A_1: |y_{test} - \hat{\mu}_2(x_{test})|  \geq  Q_{1-d}(\{R_2\})\] or \[A_2:||y_{test} - \hat{\mu}_2(x_{test})| - |y_{test} - \hat{\mu}_2(\hat{\mu}_1(w_{test}))|| \geq Q_{1-c}(\{\Delta R_1\})\]
Then each of these pieces holds due to exchangeable properties as the probability of the new residual piece being higher in rank than the $1-c$ quantile (resp. $1-d$) is $c$ (resp. $d$). For $R_2$, the exchangeability is clear as it is directly the prediction interval for $\mu_2$ on data $(x,y)$. For the latter, it is still clearly exchangeable as well: the functions $\hat{\mu}_1, \hat{\mu}_2$ are symmetric and deterministic and are pretrained, thus being fixed on the held-out set. Thus $\Delta R_1$ can be interpreted as the result of some deterministic function $\psi$ on each point $(w_i,x_i,y_i)$, which implies that 
\begin{align*}
    P(\Delta R_1^1 &= r_1,\dots, \Delta R_1^m = r_m ) \\&=  P((w_1,x_1,y_1) \in \psi^{-1}(r_1),\dots, (w_m,x_m,y_m) \in \psi^{-1}(r_m)) \\&= P((w_{\pi(1)},x_{\pi(1)},y_{\pi(1)}) \in \psi^{-1}(r_{\pi(1)}),\dots, (w_{\pi(m)},x_{\pi(m)},y_{\pi(m)}) \in \psi^{-1}(r_{\pi(m)})) \\&= P(\Delta R_1^{\pi(1)} = r_{\pi(1)},\dots, \Delta R_1^{\pi(m)} = r_{\pi(m)} )
\end{align*} where $\Delta R_1^i$ denotes the first residual component of the $i$-th point of $S^{\text{conf}}$. Then we have exchangeability of $\Delta R_1$, thus the probability of each of the noncontainment events for each residual component are bounded by $c$ and $d$ respectively.

Therefore \[P( y_{test}\notin \hat{C}_{c,d}) \leq c + d - P(A_1 \cap A_2) \leq c+d.\]
In fact, as can be seen in the above equation, the stated result is looser than necessary. 
\end{proof}
\paragraph{Proof of Proposition~\ref{existabstar}}
\label{proofprop1}
\begin{proof}Since the traditional residual \( R \) is assumed to be continuous, the function that maps the scaling parameters \( (a, b) \) to the marginal coverage probability of the interval \( \hat{C}_{\alpha/2,a,b} \) is itself continuous. When \( a = b = 0 \), the interval degenerates to a single point and thus has zero coverage (assuming the problem is nontrivial). When \( a = b = 1 \), the interval is wide enough to ensure coverage of at least \( 1 - \alpha \) by Corollary \ref{a=b=1}. By the intermediate value theorem, there must exist a pair \( (a^*, b^*) \in [0,1]^2 \) such that the interval achieves exactly \( 1 - \alpha \) coverage.
\end{proof}

As an aside, one can further view the problem as an optimization problem
\begin{align*}
    &\min_{a,b}  a Q_{1-\alpha}(\{ \Delta R_1 \}) + b  Q_{1-\alpha}(\{ R_2 \})
    \\&s.t. \quad P(R \geq a Q_{1-\alpha}(\{ \Delta R_1 \}) + b  Q_{1-\alpha}(\{ R_2 \})) \leq \alpha 
\end{align*}
which by KKT conditions implies that any $a^*, b^*$ must satisfy
\begin{align*}
    \frac{1}{ Q_{1-\alpha}(\{ \Delta R_1 \})} \frac{\partial P(R \geq a Q_{1-\alpha}(\{ \Delta R_1 \}) + b  Q_{1-\alpha}(\{ R_2 \})) )}{\partial a} \\= \frac{1}{ Q_{1-\alpha}(\{  R_2 \})} \frac{\partial P(R\geq aQ_{1-\alpha}(\{ \Delta R_1 \}) + b  Q_{1-\alpha}(\{ R_2 \})) )}{\partial b}
\end{align*}
which provides that at any optimal solution $a^*, b^*$ there must be the same balance between the magnitude of components and the impact (derivative) of the scaling parameter on true coverage probability for each parameter. Thus large magnitude implies large impact, confirming intuitive understanding.

\paragraph{Proof of Theorem~\ref{FWERcoverage}}
\label{thm2proof}
\begin{proof}
    By Lemma~\ref{superuniformcite}, the $p$-values are super-uniform under the null hypothesis. Then we may apply Theorem 1 of \citet{angelopoulos2021learn} (\Cref{ltt}), given a FWER algorithm $\mathcal{A}$ with parameter $\delta>0$ to obtain the result. We note that the size of $S^{\text{cal}}$ does not explicitly appear in the bound, however, it directly affects the $p$-values which can be seen via Hoeffding's inequality:
    \[P(\alpha - \text{Bin}(l,\alpha)  \geq \alpha- l\hat{\mathcal{R}}(\lambda) ) \leq e^{\left(-2l(\alpha- \hat{\mathcal{R}}(\lambda))^2\right)}\]
    where $\hat{\mathcal{R}}$ is the empirical error rate on $S^{\text{cal}}$
\end{proof}
\subsubsection{Long-Run Coverage}
\label{longrunasymp}
\begin{proposition}[Long-Run Coverage of Adaptive Method]
\label{longrunasympprop}
Let $\gamma > 0$ be the step size, $\alpha \in (0,1)$ the nominal coverage level, and let $\text{cov}_t \in \{0,1\}$ denote the coverage indicator at time $t$. Then the adaptive algorithm satisfies:
\[
\lim_{T \to \infty} \frac{1}{T} \sum_{t=1}^{T} \text{cov}_t \xrightarrow{\text{a.s.}} 1 - \alpha.
\]
This result ensures that in the long run, the algorithm achieves the desired coverage level \(1 - \alpha\), without requiring distributional assumptions on the data.
\end{proposition}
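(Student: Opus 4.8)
The plan is to reduce the statement to the long-run coverage guarantee of adaptive conformal inference (ACI) from \citet{gibbs2021adaptive}, exploiting the fact that our augmentation leaves the update applied to the miscoverage parameter $\alpha_t$ unchanged. Write $\text{err}_t = 1 - \text{cov}_t = \mathbbm{1}\{y_t \notin \hat{C}_{\alpha_t,a_t,b_t,c_t,d_t}(w_t)\}$ for the realized miscoverage indicator at step $t$. The core update is
\[
\alpha_{t+1} = \alpha_t + \gamma(\alpha - \text{err}_t),
\]
which depends on the auxiliary parameters $(a_t,b_t,c_t,d_t)$ and on the data only through the scalar $\text{err}_t \in \{0,1\}$. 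Summing this recursion telescopes to the exact identity
\[
\frac{1}{T}\sum_{t=1}^{T} \text{err}_t = \alpha + \frac{\alpha_1 - \alpha_{T+1}}{\gamma T}.
\]
Since $\text{cov}_t = 1 - \text{err}_t$, it then suffices to show the remainder term vanishes, i.e. that $\alpha_{T+1}/T \to 0$; equivalently, that the sequence $\{\alpha_t\}$ stays bounded, in which case the remainder is $O(1/T)$.

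The key step is therefore to establish boundedness of $\alpha_t$, and this is where I expect the main obstacle to lie. In vanilla ACI, boundedness rests on a \emph{monotone-extreme} property: whenever $\alpha_t \le 0$ the prediction region is all of $\mathcal{Y}$ (forcing $\text{err}_t = 0$) and whenever $\alpha_t \ge 1$ it is empty (forcing $\text{err}_t = 1$). This produces a self-correcting drift back toward $[0,1]$, and since $|\alpha - \text{err}_t| \le 1$ each step moves $\alpha_t$ by at most $\gamma$, so $\alpha_t$ is confined to $[-\gamma,\,1+\gamma]$. The subtlety in our setting is that interval width is also modulated by $(a_t,b_t,c_t,d_t)$: for instance $a_t = b_t = 0$ collapses the interval to a point, so the monotone-extreme property is no longer automatic from $\alpha_t$ alone. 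The crux is thus to verify that our construction and the \textsc{SelectLambda} update rules preserve this property \emph{uniformly} over the auxiliary parameters — concretely, that whenever coverage fails the scaling parameters are driven to widen the interval and $\Lambda_{\text{val}}^t$ retains a sufficiently conservative option (or, failing that, that $\alpha_t$ is clipped to $[0,1]$), so that the two extreme regimes still force $\text{err}_t = 0$ and $\text{err}_t = 1$ respectively.

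Once this uniform monotone-extreme property is in hand, boundedness of $\alpha_t$ and hence the telescoping remainder bound follow exactly as in \citet{gibbs2021adaptive}, and the empirical miscoverage average converges to $\alpha$. I would close by noting that the telescoping identity above is a \emph{sure} algebraic fact and the boundedness bound holds for every realization of the process, so the convergence $\frac{1}{T}\sum_{t=1}^{T}\text{cov}_t \to 1-\alpha$ in fact holds surely; this in particular yields the stated almost-sure convergence, and no distributional assumptions on the data are required since the argument never invokes exchangeability of the inputs.
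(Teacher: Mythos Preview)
Your approach is essentially the paper's: telescope the $\alpha_t$ recursion and reduce the claim to boundedness of $\{\alpha_t\}$, exactly as in \citet{gibbs2021adaptive}. The one place where your proposal is slightly off is the mechanism that delivers the monotone-extreme property. You conjecture that either \textsc{SelectLambda} always retains a sufficiently conservative $(a,b)$, or that $\alpha_t$ is clipped. Neither is what the paper uses. The actual mechanism is the abstention convention: when $\alpha_t < 0$, the risk-control hypothesis test (which asks for evidence that miscoverage is at most $\alpha_t$) is vacuous, so $\Lambda_{\text{val}}^t = \emptyset$; the algorithm abstains, and abstention is recorded as $\text{cov}_t = 1$. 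This immediately gives $\alpha_{t+1} = \alpha_t + \gamma\alpha > \alpha_t$, and the symmetric argument at the upper end yields $\alpha_t \in [-\gamma, 1+\gamma]$, after which your telescoping bound finishes the proof. So your concern about $a_t=b_t=0$ collapsing the interval is moot: the boundedness of $\alpha_t$ is enforced not through the geometry of $\hat{C}_{\alpha_t,a_t,b_t,c_t,d_t}$ but through the FWER step returning an empty validated set.
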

The additional parameters $(a_t, b_t, c_t, d_t)$ 
are updated as deterministic functions of the algorithm's history, ensuring they do not interfere with the update steps that underlie the 
coverage guarantee. 

\paragraph{Proof of Proposition~\ref{longrunasympprop}}
\label{proofprop3}
Please refer to \Cref{alg:algopseudo} for notation. We first show a preliminary lemma.
\begin{lemma}[Boundedness of \( c_t \) and \( d_t \)]  
    Let \( \eta > 0 \) be a constant. Then, for all \( t \in \mathbb{N} \), the sequences \( c_t \) and \( d_t \) are bounded within the interval \( [-\eta, 1 + \eta] \). That is, 
    \[
    \forall t \in \mathbb{N}, \quad -\eta \leq c_t \leq 1 + \eta \quad \text{and} \quad -\eta \leq d_t \leq 1 + \eta.
    \]
\end{lemma}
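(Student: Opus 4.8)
The plan is to prove the bound by induction on $t$, exploiting two structural facts about the update rule for $c_t$; the argument for $d_t$ is identical by symmetry. First I would recall from the pseudocode in \Cref{alg:algopseudo} that the quantile level is updated additively as $c_{t+1} = c_t + \eta\,\Delta c_t$, where the signal $\Delta c_t \in \{-1,0,1\}$ is returned by \textsc{SelectLambda} and $\eta > 0$ is the fixed step size (the same $\eta$ used in the experiments). This immediately yields the bounded-increment property $|c_{t+1} - c_t| = \eta\,|\Delta c_t| \leq \eta$, which is the workhorse for the interior case.

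For the base case I would use that $c_1$ is initialized inside $[0,1]$, so certainly $c_1 \in [-\eta, 1+\eta]$. For the inductive step, assume $c_t \in [-\eta, 1+\eta]$ and split into two regimes. In the interior regime $c_t \in [0,1]$, the bounded-increment property gives $c_{t+1} \in [c_t - \eta,\, c_t + \eta] \subseteq [-\eta, 1+\eta]$ directly. In the boundary regime $c_t \notin [0,1]$, that is $c_t \in [-\eta, 0)$ or $c_t \in (1, 1+\eta]$, I would invoke a self-correcting property of the signal: \textsc{SelectLambda} never issues a signal that drives $c_t$ further outside $[0,1]$, so $\Delta c_t \geq 0$ whenever $c_t < 0$ and $\Delta c_t \leq 0$ whenever $c_t > 1$. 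In the first subcase this gives $c_{t+1} \geq c_t \geq -\eta$ and $c_{t+1} \leq c_t + \eta < \eta \leq 1 + \eta$; in the second, $c_{t+1} \leq c_t \leq 1+\eta$ and $c_{t+1} \geq c_t - \eta > 1 - \eta \geq -\eta$. Either way $c_{t+1} \in [-\eta, 1+\eta]$, closing the induction.

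The main obstacle will be rigorously establishing the self-correcting property in the boundary regime, since it is the only step that depends on the internal logic of \textsc{SelectLambda} rather than on elementary arithmetic. Concretely, I would verify from the pseudocode that once $c_t$ reaches the top of its range the level $1-c_t$ makes the upstream quantile $Q_{1-c_t}(\{\Delta R_1\})$ degenerate, so the coverage-feedback logic responds by requesting a decrease (or no change) rather than a further increase, and symmetrically at the bottom of the range. Once this monotone guard is confirmed, the remaining arithmetic is routine, and the identical argument transfers verbatim to $d_t$ with the downstream quantile $Q_{1-d_t}(\{R_2\})$.
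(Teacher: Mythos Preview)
Your proposal is correct and follows essentially the same approach as the paper: both arguments hinge on the self-correcting property that when $c_t$ exits $[0,1]$ the quantile $Q_{1-c_t}(\{\Delta R_1\})$ degenerates, forcing \textsc{SelectLambda} to return $\Delta c_t$ with the sign that pushes $c_{t+1}$ back toward the interior, so $c_t$ can overshoot by at most one step $\eta$. Your explicit induction with an interior/boundary case split is a bit more carefully structured than the paper's sketch, but the substantive idea is identical.
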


\begin{proof}  
    To establish the lower bound for \( c_t \), observe that the argument for \( d_t \) follows similarly, and the upper-bound can be treated identically. 

    Suppose that \( c_t < 0 \) for some time step \( t \). In this case, by the definition of the residual component \( \Delta R_1 \), we have:
    \[
    \Delta R_1 = \infty \implies \text{cov}_t = 1,
    \]
    where \( \text{cov}_t \) denotes the coverage of the model at time \( t \). Additionally, since \( \text{cov}_t^{\Delta R_1} = 0 \) by the construction of the residual, this implies that in the output of the algorithm \textsc{SelectLambda}, the update step \( \Delta c_t \geq 0 \), meaning that:
    \[
    c_{t+1} \geq c_t.
    \]
    Thus, if \( c_t \) were negative at some time step, the next value is non-decreasing. Consequently, the sequence \( c_t \) cannot decrease below \( -\eta \). Therefore, the minimum value attainable by \( c_t \) is \( -\eta \).

    A similar argument holds for the upper-bound.

    Hence, we conclude that:
    \[
    -\eta \leq c_t \leq 1 + \eta \quad \text{and} \quad -\eta \leq d_t \leq 1 + \eta \quad \forall t \in \mathbb{N}.
    \]
    This holds similarly for $d_t$.
\end{proof}
Next, we show the long-run coverage result, \Cref{longrunasymp} via the same arguments as Proposition 4.1 of \citet{gibbs2021adaptive}.
\begin{proof}
Although our algorithm includes additional parameters $(a_t, b_t, c_t, d_t)$, the update rule for $\alpha_t$ remains unchanged from \citet{gibbs2021adaptive}:
\[
\alpha_t = \alpha_{t-1} - \gamma((1 - \mathrm{cov}_{t-1}) - \alpha),
\]
where $\mathrm{cov}_t$ is the coverage indicator at time $t$ and $\alpha$ is the target coverage level.

As in \citet{gibbs2021adaptive}, we show that $\alpha_t \in [-\gamma, 1 + \gamma]$ with probability 1. For the lower bound (the upper-bound follows symmetrically), suppose $\alpha_t < 0$ at some time $t$. Then the candidate set $\Lambda_{\mathrm{val}} = \emptyset$, so $\mathrm{cov}_t = 1$, and the update becomes:
\[
\alpha_{t+1} = \alpha_t + \gamma \alpha > \alpha_t.
\]
Hence, $\alpha_t$ is pushed upward, preventing it from decreasing without bound. Therefore, $\alpha_t \geq -\gamma$ for all $t$.

Next, unfolding the recursion gives:
\[
\alpha_T = \alpha_1 - \sum_{t=1}^{T-1} \gamma((1 - \mathrm{cov}_{t}) - \alpha).
\]
Rearranging terms and using the bound, we obtain:
\[
\left| \frac{1}{T} \sum_{t=1}^{T} (1 - \mathrm{cov}_t) - \alpha \right| \leq \frac{\max(\alpha_1, 1 - \alpha_1) + \gamma}{\gamma T}.
\]
Equivalently:
\[
\left| \frac{1}{T} \sum_{t=1}^{T} \mathrm{cov}_t - (1 - \alpha) \right| \leq \frac{\max(\alpha_1, 1 - \alpha_1) + \gamma}{\gamma T},
\]
which converges to zero as $T \to \infty$, yielding the desired result.
\end{proof}
\subsection{Auxiliary Results}
\label{auxiliary_results}
\begin{proposition}
\label{relationships}
    For a given point $(w,x,y)$, if $R_2 \leq \epsilon<R$ for some small $\epsilon > 0$ i.e., $\mu_2(x)$ is close to the true value $y$, then $|\Delta R_1 - R| \leq \epsilon$
\end{proposition}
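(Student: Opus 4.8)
The plan is to reduce the claim to the definitions and then exploit the hypothesis $R_2 \le \epsilon < R$ to resolve the outer absolute value appearing in $\Delta R_1$. Recalling the definition of the first-stage delta, together with Definition~\ref{def:r2} and the definition of the total residual, I would first record the identity
\[
\Delta R_1 = \big|\, |y - \hat{\mu}_2(x)| - |y - \hat{\mu}_2(\hat{x})| \,\big| = |R_2 - R|,
\]
which holds for \emph{every} point $(w,x,y)$ with no assumptions, since $R_2 = |y - \hat{\mu}_2(x)|$ and $R = |y - \hat{\mu}_2(\hat{x})|$ by definition (here $\hat{x} = \hat{\mu}_1(w)$).

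Next I would invoke the hypothesis. The assumption $R_2 \le \epsilon < R$ implies in particular the strict inequality $R_2 < R$, so $R_2 - R < 0$ and the outer modulus collapses to
\[
\Delta R_1 = |R_2 - R| = R - R_2.
\]
The conclusion then follows by a single substitution:
\[
|\Delta R_1 - R| = |(R - R_2) - R| = |{-R_2}| = R_2 \le \epsilon,
\]
which is exactly the claimed bound.

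There is no substantive obstacle in this argument; the only point requiring care is the \emph{sign} of $R_2 - R$, i.e. confirming that $R_2 < R$ so that the outer absolute value resolves as $R - R_2$ rather than $R_2 - R$. The strict inequality $\epsilon < R$ in the hypothesis is precisely what rules out the reverse case, so no edge case arises. As an aside, the bound is tight, attained whenever $R_2 = \epsilon$, which matches the stated intuition that when the downstream residual $R_2$ is negligible, $\Delta R_1$ accounts for essentially all of the total residual $R$.
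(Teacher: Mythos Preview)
Your proof is correct and follows essentially the same route as the paper: both identify $\Delta R_1 = |R_2 - R|$ and then compute $|\Delta R_1 - R| = \big||R_2 - R| - R\big| = R_2 \le \epsilon$. You are simply more explicit than the paper about using the hypothesis $R_2 < R$ to resolve the outer absolute value, which the paper leaves implicit.
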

\begin{proof}
    This follows immediately by definition of $\Delta R_1$.
    \begin{align*}
        |\Delta R_1 - R| &= ||R_2 - R| - R|\\
         &\leq \epsilon
    \end{align*}
\end{proof}
Next, we show the equivalent for $R_2$ under additional assumptions
\begin{proposition}
\label{relationshipsb}
    For a given point $(w,x,y)$, $|R_2 - R|<\epsilon$ for small $\epsilon > 0$ under certain assumptions, such as if $\hat{\mu}_2$ is a Lipschitz continuous function for some parameter $L$, and the first stage has small prediction error, $|x-\hat{\mu}_1(w)| < \delta$. Then $|R_2 - R| \leq L\delta$ 
\end{proposition}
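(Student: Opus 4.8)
The plan is to bound $|R_2 - R|$ directly by exploiting the fact that both residuals share the same target $y$ and differ only in the argument passed to $\hat{\mu}_2$. First I would write out the two quantities explicitly: $R_2 = |y - \hat{\mu}_2(x)|$ and $R = |y - \hat{\mu}_2(\hat{x})|$, where $\hat{x} = \hat{\mu}_1(w)$. The essential observation is that $R_2$ and $R$ are distances from the same point $y$ to two different downstream predictions, $\hat{\mu}_2(x)$ and $\hat{\mu}_2(\hat{x})$, so their difference should be controllable purely through the gap between those two predictions.

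The key step is to apply the reverse triangle inequality, which yields
\[
|R_2 - R| = \left| |y - \hat{\mu}_2(x)| - |y - \hat{\mu}_2(\hat{x})| \right| \leq |\hat{\mu}_2(x) - \hat{\mu}_2(\hat{x})|.
\]
This collapses the dependence on $y$ entirely, reducing the problem to controlling the distance between the two downstream outputs. Then I would invoke the Lipschitz assumption on $\hat{\mu}_2$ to bound $|\hat{\mu}_2(x) - \hat{\mu}_2(\hat{x})| \leq L\,|x - \hat{x}|$, and finally substitute the first-stage accuracy hypothesis $|x - \hat{\mu}_1(w)| < \delta$ to conclude $|R_2 - R| \leq L\delta$, as claimed.

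There is no genuine obstacle in this argument: it is a short chain of two standard inequalities. The only point requiring minor care is recognizing that the reverse triangle inequality is the right tool, since it is precisely what eliminates $y$ and transfers the problem onto the arguments of $\hat{\mu}_2$, where the smoothness and upstream-accuracy hypotheses can be brought to bear. This result is the natural counterpart to Proposition~\ref{relationships}: whereas that statement shows $\Delta R_1$ approximates $R$ when $R_2$ is small, this one shows $R_2$ approximates $R$ when the upstream predictor is accurate and $\hat{\mu}_2$ is smooth, together confirming that at least one residual component tracks the total error.
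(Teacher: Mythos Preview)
Your proposal is correct and matches the paper's proof essentially line for line: reverse triangle inequality to get $|R_2 - R| \leq |\hat{\mu}_2(x) - \hat{\mu}_2(\hat{\mu}_1(w))|$, then Lipschitz continuity of $\hat{\mu}_2$, then the upstream accuracy bound $|x - \hat{\mu}_1(w)| < \delta$. There is nothing to add.
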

\begin{proof}
    \begin{align*}
            |R_2 - R| &\leq |\hat{\mu}_2(x) 
     - \hat{\mu}_2(\hat{\mu}_1(w)) |\\
     &\leq L|x - \hat{\mu}_1(w)|\\
     &\leq L\delta
    \end{align*}
\end{proof}
The main takeaway of these two results is that they confirm the intuition for when $\Delta R_1$ and $R_2$ are roughly similar to $R$, particularly when one model is accurate (and in the case of $R_2$ when the learned function is smooth). This also implies that both $\Delta R_1, R_2$ being small is impossible, thus one must represent a majority of the error. Note that \( \Delta R_1 \) and \( R_2 \) can also serve as rough lower bounds on \( R \), though each can individually exceed \( R \) under adversarial noise or model miscalibration. 

\begin{lemma}
\label{superuniformcite}
The $p_{\lambda}$ defined as  $p_\lambda = \mathbb{P}\left( \text{Bin}(l, \alpha) \leq l \hat{\mathcal{R}}(\lambda) \right)$ are super-uniform. \begin{proof}
See Appendix D.1 of \citet{quach2023conformal}.
\end{proof}
\end{lemma}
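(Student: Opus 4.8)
The plan is to prove super-uniformity from first principles rather than merely citing the reference, by combining a stochastic-monotonicity reduction with a discrete probability integral transform. Write $p^* = \mathbb{P}(y \notin \hat{C}_\lambda(w))$ for the true miscoverage probability of the fixed interval $\hat{C}_\lambda$, and let $N = n\hat{\mathcal{R}}(\lambda) = \sum_{i=1}^n \mathbbm{1}_{\{y_i \notin \hat{C}_\lambda(w_i)\}}$ be the observed number of miscovered calibration points. Since the calibration points are IID and $\hat{C}_\lambda$ is fixed (its quantiles are computed on the disjoint set $S^{\text{conf}}$), the summands are IID Bernoulli$(p^*)$, so $N \sim \text{Bin}(n, p^*)$. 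Under $\mathcal{H}_0(\lambda)$ we have $p^* > \alpha$, and I would treat the boundary $p^* = \alpha$ as the worst case. Writing $F_\alpha(k) = \mathbb{P}(\text{Bin}(n,\alpha) \leq k)$ for the Binomial$(n,\alpha)$ CDF, the $p$-value is exactly $p_\lambda = F_\alpha(N)$, a non-decreasing function of $N$.

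First I would reduce to the boundary case. The family $\text{Bin}(n,p)$ is stochastically increasing in $p$: coupling via IID uniforms $U_i$ and setting each Bernoulli to $\mathbbm{1}_{\{U_i \leq p\}}$ shows that raising $p$ can only increase each indicator, hence $N$. Because $F_\alpha$ is non-decreasing, $p_\lambda = F_\alpha(N)$ is then stochastically larger when $p^* \geq \alpha$ than when $p^* = \alpha$; consequently $\mathbb{P}(p_\lambda \leq u)$ is maximized at $p^* = \alpha$, and it suffices to establish $\mathbb{P}(F_\alpha(N) \leq u) \leq u$ for all $u \in [0,1]$ in the case $N \sim \text{Bin}(n,\alpha)$.

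The remaining step is the discrete probability integral transform applied to $X = N$ with its own CDF $F = F_\alpha$. The clean way to handle the discreteness is to observe that the sublevel set $S = \{k : F(k) \leq u\}$ is a down-set (since $F$ is non-decreasing), so $S$ equals $(-\infty, q]$ or $(-\infty, q)$ for $q = \sup S$. In the closed case $F(q) \leq u$ gives $\mathbb{P}(F(X) \leq u) = \mathbb{P}(X \leq q) = F(q) \leq u$; in the open case right-continuity of $F$ yields $\mathbb{P}(F(X) \le u) = \mathbb{P}(X < q) = F(q^-) \leq u$. Either way $\mathbb{P}(F(X) \leq u) \leq u$, which is exactly super-uniformity. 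The main obstacle is precisely this discreteness: unlike the continuous case, $F_\alpha(N)$ is not uniform and can jump past the value $u$, so one cannot claim the equality $\mathbb{P}(p_\lambda \leq u) = u$ and must instead argue the one-sided bound through the down-set and right-continuity structure. Everything else---the IID-to-Binomial identification and the coupling for stochastic dominance---is routine.
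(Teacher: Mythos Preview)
Your argument is correct. The paper's own proof of this lemma is simply a pointer to Appendix~D.1 of \citet{quach2023conformal}, with no self-contained argument; you have instead written out the standard two-step proof (stochastic monotonicity in $p$ to reduce to the boundary $p^*=\alpha$, followed by the discrete probability integral transform $\mathbb{P}(F(X)\le u)\le u$). This is precisely the content one would find when unpacking the cited reference, so there is no substantive methodological difference---you have filled in what the paper outsources. It is worth noting that the paper does sketch essentially the same stochastic-dominance-plus-CDF argument later, in Appendix~\ref{mixingextension}, when verifying super-uniformity in the $\phi$-mixing extension; your proposal matches that reasoning as well.
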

\begin{theorem}[Theorem 1 of \citet{angelopoulos2021learn}]
\label{ltt}
    Suppose $p_{\lambda}$ has a distribution stochastically dominating the uniform distribution. Let $\mathcal{A}$ be a FWER controlling algorithm at level $\delta>0$. Let $\mathcal{R}(\lambda)$ be the expected risk for the choice of $\lambda$ and $\alpha>0$ be the maximal error rate. Then $\Lambda_{\text{val}} =\mathcal{A}(\{p_{\lambda}\}_{\lambda\in\Lambda})$ satisfies the following
    \begin{align*}
\mathbbm{P}\left(\sup_{\lambda\in\Lambda_{\text{val}}}\{\mathcal{R}(\lambda)\}\geq \alpha\right)\leq \delta
    \end{align*}
\end{theorem}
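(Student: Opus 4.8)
The plan is to reduce the risk-control claim to a statement about Type~I errors in multiple hypothesis testing, so that the bound follows directly from the defining property of a FWER-controlling procedure. First I would fix notation for the true (but unknown) null set. For each $\lambda \in \Lambda$, associate the null hypothesis $\mathcal{H}_0(\lambda): \mathcal{R}(\lambda) \geq \alpha$, and define the deterministic index set $\Lambda_{\text{null}} = \{\lambda \in \Lambda : \mathcal{R}(\lambda) \geq \alpha\}$ of genuinely violating parameters. The hypothesis that $p_\lambda$ stochastically dominates the uniform means precisely that, for every true null $\lambda \in \Lambda_{\text{null}}$ and every $u \in [0,1]$, $\mathbb{P}(p_\lambda \leq u) \leq u$; these are therefore valid super-uniform $p$-values, which is exactly the input condition that the algorithm $\mathcal{A}$ requires.

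Next I would establish the key event equivalence. With the convention $\sup \emptyset = -\infty$, the adverse event in the theorem satisfies
\[
\left\{\sup_{\lambda \in \Lambda_{\text{val}}} \mathcal{R}(\lambda) \geq \alpha\right\} = \left\{\Lambda_{\text{val}} \cap \Lambda_{\text{null}} \neq \emptyset\right\}.
\]
Indeed, the supremum of $\mathcal{R}$ over the selected set reaches $\alpha$ exactly when $\Lambda_{\text{val}}$ contains at least one parameter whose true risk is at least $\alpha$, i.e. at least one element of $\Lambda_{\text{null}}$; if $\Lambda_{\text{val}}$ is empty the left event fails, consistent with no rejection having occurred. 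Since $\Lambda_{\text{val}} = \mathcal{A}(\{p_\lambda\}_{\lambda\in\Lambda})$ is by construction the set of hypotheses rejected by $\mathcal{A}$, the right-hand event ``at least one true null is rejected'' is precisely the occurrence of a family-wise error.

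Finally I would invoke the FWER guarantee. By definition, $\mathcal{A}$ controlling the family-wise error rate at level $\delta$ on valid $p$-values means $\mathbb{P}(\Lambda_{\text{val}} \cap \Lambda_{\text{null}} \neq \emptyset) \leq \delta$. Chaining this with the event equivalence yields
\[
\mathbb{P}\left(\sup_{\lambda \in \Lambda_{\text{val}}} \mathcal{R}(\lambda) \geq \alpha\right) = \mathbb{P}(\Lambda_{\text{val}} \cap \Lambda_{\text{null}} \neq \emptyset) \leq \delta,
\]
which is the desired conclusion.

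The main obstacle I anticipate is not heavy computation but making the reduction airtight: the super-uniformity must hold under each individual true null $\mathcal{R}(\lambda) \geq \alpha$ rather than merely marginally, and the FWER-controlling property of $\mathcal{A}$ must be valid for the dependence structure actually present among the $\{p_\lambda\}$. Procedures such as Bonferroni or fixed-sequence testing are dependence-agnostic, but any procedure exploiting a specific dependence pattern would require separately verifying that assumption. Some care is also needed at the boundary $\mathcal{R}(\lambda) = \alpha$ to ensure it is consistently classified as a null, so that the event equivalence—and hence the final inequality—is exact.
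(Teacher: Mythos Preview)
The paper does not supply its own proof of this theorem; it is quoted as Theorem~1 of \cite{angelopoulos2021learn} in the auxiliary results and is only invoked (e.g.\ in the proof of \Cref{FWERcoverage}). Your proposal is correct and is precisely the standard argument behind the cited result: identify the deterministic null set $\Lambda_{\text{null}}=\{\lambda:\mathcal{R}(\lambda)\geq\alpha\}$, observe that the adverse event $\{\sup_{\lambda\in\Lambda_{\text{val}}}\mathcal{R}(\lambda)\geq\alpha\}$ coincides with $\{\Lambda_{\text{val}}\cap\Lambda_{\text{null}}\neq\emptyset\}$, and then read off the bound from the definition of FWER control applied to super-uniform $p$-values. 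Your caveats about dependence structure and boundary handling are apt but do not affect the argument under the paper's assumptions (Bonferroni and fixed-sequence testing are both dependence-agnostic).
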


\subsection{Extensions of Theorem 1}
We can also state \Cref{cdcov} for the setting with auxiliary data (\Cref{auxdata}).
\begin{corollary}
    For $c,d \in (0,1)$ and new point $(w_{\text{test}}, x_{\text{test}},x_{\text{test}}', y_{\text{test}})$, and $\hat{C}_{c,d}$ interval defined as in \Cref{auxdata} to include auxiliary data, we have
    \begin{align*}
        P(y_{\text{test}} \in \hat{C}_{c,d}(w_{\text{test}})) \geq 1- c - d.
    \end{align*}
\end{corollary}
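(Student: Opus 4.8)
The plan is to mirror the proof of \Cref{cdcov} essentially verbatim, treating the auxiliary coordinate $x'$ as an additional observed argument of the downstream predictor that simply rides along as a constant. The entire argument of \Cref{cdcov} rests on three ingredients: (i) the upper-bound decomposition $R \le \Delta R_1 + R_2$ from \Cref{prop:upper_bound}; (ii) a union bound reducing the noncoverage event to two component events; and (iii) exchangeability of each residual-component sequence. None of these is sensitive to whether $\hat{\mu}_2$ consumes one or two observed inputs, provided $x'_{\text{test}}$ is observed at prediction time and the augmented tuples $(w_i, x_i, x'_i, y_i)$ are exchangeable, as guaranteed by the setting of \Cref{auxdata}.

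First I would re-establish the upper-bound property in the auxiliary setting. With $R_2 = |y - \hat{\mu}_2(x, x')|$ and $\Delta R_1 = \bigl|\,|y - \hat{\mu}_2(x,x')| - |y - \hat{\mu}_2(\hat{x}, x')|\,\bigr|$, where $\hat{x} = \hat{\mu}_1(w)$, the reverse triangle inequality gives $|y - \hat{\mu}_2(\hat{x},x')| \le R_2 + \Delta R_1$ exactly as in \Cref{prop:upper_bound}, since $x'$ is held fixed in both terms. Next, I would decompose the noncoverage event: $y_{\text{test}} \notin \hat{C}_{c,d}(w_{\text{test}})$ forces $R \ge Q_{1-c}(\{\Delta R_1\}) + Q_{1-d}(\{R_2\})$, which by the upper bound implies at least one of $A_1:\; R_2 \ge Q_{1-d}(\{R_2\})$ or $A_2:\; \Delta R_1 \ge Q_{1-c}(\{\Delta R_1\})$. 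A union bound then yields $P(y_{\text{test}} \notin \hat{C}_{c,d}) \le P(A_1) + P(A_2)$.

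The main obstacle, as in \Cref{cdcov}, is re-verifying exchangeability of the two component sequences now that each data point carries the extra coordinate $x'$. I would argue that each component is the image of a fixed deterministic map applied to the full tuple: since $\hat{\mu}_1, \hat{\mu}_2$ are deterministic, symmetric, and trained on a disjoint set (hence fixed on $S^{\text{conf}}$), we may write $\Delta R_1^i = \psi(w_i, x_i, x'_i, y_i)$ for a fixed $\psi$, and likewise for $R_2^i$. The exchangeability of the augmented $4$-tuples then transfers to the component sequences by the same permutation-invariance computation as in the proof of \Cref{cdcov}. The one point requiring care is that $x'_{\text{test}}$ must be genuinely available at test time so that the test-point components can be formed and ranked against the conformal values; given this, each test component has uniform rank among the conformal values, so $P(A_1) \le d$ and $P(A_2) \le c$. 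Combining with the union bound gives $P(y_{\text{test}} \in \hat{C}_{c,d}(w_{\text{test}})) \ge 1 - c - d$, as claimed.
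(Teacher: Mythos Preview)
Your proposal is correct and matches the paper's own argument: the paper simply states that the result ``follows by the same proof as above; auxiliary data does not change the proof (assuming that with auxiliary features, the data is still exchangeable),'' and you have faithfully spelled out that same proof with the auxiliary coordinate $x'$ carried along. Your additional remark that $x'_{\text{test}}$ must be observed at prediction time is a worthwhile clarification the paper leaves implicit.
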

which follows by the same proof as above; auxiliary data does not change the proof (assuming that with auxiliary features, the data is still exchangeable).
Lastly, we note that the prior results can easily be adapted to nonexchangable data settings using weighted quantiles and a weighting scheme as mentioned in \citet{barber2023conformal}. Thus we have 
\begin{corollary}
    Let $|S^{\text{conf}}| = m$ and $p_1\dots,p_m$ be weights for each point. For $c,d \in (0,1)$ and new point $(w_{\text{test}}, x_{\text{test}}, y_{\text{test}})$, and $\hat{C}_{c,d}$ interval defined as in Definition~\ref{pred2}, we have
    \begin{align*}
        P(y_{\text{test}} \in \hat{C}_{c,d}) \geq 1- c - d - \psi_1-\psi_2.
    \end{align*}
    where $\psi_1$ and $\psi_2$ are coverage penalties from nonexchangability.
\end{corollary}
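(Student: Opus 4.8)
The plan is to reuse the miscoverage decomposition from the proof of \Cref{cdcov} and swap out its exchangeability-based per-component bound for the weighted-quantile coverage guarantee of \citet{barber2023conformal}. Because the reverse triangle inequality of \Cref{prop:upper_bound} gives the pointwise bound $R \le \Delta R_1 + R_2$, the miscoverage event $\{y_{\text{test}} \notin \hat{C}_{c,d}\}$ is still contained in the union of the two component exceedance events
\[
A_1:\ R_2(x_{\text{test}},y_{\text{test}}) \ge Q_{1-d}(\{R_2\}), \qquad A_2:\ \Delta R_1 \ge Q_{1-c}(\{\Delta R_1\}),
\]
except that the quantiles are now the \emph{weighted} empirical quantiles induced by the normalized weights $p_1,\dots,p_m$ over $S^{\text{conf}}$. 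A union bound then reduces the claim to bounding $P(A_1)$ and $P(A_2)$ separately.

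For each component I would apply the weighted split-conformal coverage theorem of \citet{barber2023conformal} to the corresponding nonconformity score. As in the proof of \Cref{cdcov}, the determinism and symmetry of $\hat{\mu}_1,\hat{\mu}_2$ (which are fixed on the held-out set) let me write $\Delta R_1$ as a fixed function $\psi(w,x,y)$ and $R_2$ as a fixed function of $(x,y)$; hence each defines a legitimate score sequence to which the weighted guarantee applies directly. Barber et al.'s result yields, for a score with target level $\alpha_0 \in \{c,d\}$, a coverage gap equal to a weight-normalized sum of total-variation distances between the joint law of the augmented data and its index-swapped copies. Denoting these component-specific gaps by $\psi_2$ (for $R_2$ at level $d$) and $\psi_1$ (for $\Delta R_1$ at level $c$), I obtain $P(A_1) \le d + \psi_2$ and $P(A_2) \le c + \psi_1$. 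Summing and complementing gives $P(y_{\text{test}} \in \hat{C}_{c,d}) \ge 1 - c - d - \psi_1 - \psi_2$, as claimed.

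The main subtlety is that $\psi_1$ and $\psi_2$ are genuinely distinct: the deviation from exchangeability is measured through the distribution of each score, and since $\Delta R_1$ and $R_2$ are different transformations of $(w,x,y)$, the induced total-variation terms (and hence the penalties) differ in general. The verification I expect to be most delicate is confirming that the weighted guarantee applies to the $\Delta R_1$ score under the same weighting scheme used for $R_2$ — i.e., that the swap-invariance argument underlying Theorem~1's proof survives reweighting — and ensuring the union bound keeps the two penalties additive rather than coupled. Provided the weights $p_i$ are chosen independently of the scores (a standing requirement for the weighted conformal guarantee), both points go through, and the only cost relative to the exchangeable case is the additive appearance of $\psi_1 + \psi_2$.
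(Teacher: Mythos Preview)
Your proposal is correct and follows essentially the same route as the paper: both arguments rerun the miscoverage decomposition from the proof of \Cref{cdcov}, replace the empirical quantiles by weighted quantiles, and invoke Theorem~2 of \citet{barber2023conformal} separately on the $\Delta R_1$ and $R_2$ scores to pick up the additive penalties $\psi_1,\psi_2$ before applying the union bound. The paper records the penalties in the specific form $\sum_{i=1}^{m}\tilde{p}_i\, d_{TV}(\vec{R},\vec{R}^i)$ with $\tilde{p}_i = p_i/(1+\sum_j p_j)$, which you may want to cite explicitly, but otherwise there is no substantive difference.
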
 
\begin{proof}
    The result follows analogously to the proof of \Cref{proofthm1}, with modifications to account for the use of weighted quantiles and associated penalty terms. Specifically, we let $\psi_1$ and $\psi_2$ denote penalties applied to the empirical weighted quantiles, as introduced in Theorem 2 of \citet{barber2023conformal}. These penalties are of the form $\sum_{i=1}^{m}\tilde{p}_i d_{TV}(\vec{R}, \vec{R}^i))$ where $\tilde{p}_i = \frac{p_i}{1 + \sum_{i=1}^{m}p_i}$, $\vec{R}$ represents the vector of full residuals and $\vec{R}^i$ is the same vector with the $i$-th point swapped with $(w_{\text{test}}, y_{\text{test}}, z_{\text{test}})$.

    When applying weighted conformal prediction, the standard quantile threshold used in the unweighted case is replaced with a weighted quantile. By Theorem 2 of \citet{barber2023conformal}, these penalties $\psi_1,\psi_2$ ensure that the resulting prediction set retains valid marginal coverage.
    
    Therefore, by adapting the same steps as in \Cref{proofthm1}—but replacing the empirical quantiles with weighted quantiles and applying Theorem 2 of \citet{barber2023conformal}, we obtain the stated result.
\end{proof}


We also introduce a result for coverage for fixed scaling values of $a,b$ under some assumptions.
\begin{corollary}
    Assume that the CDF of the residual for an out-of-sample point is Lipschitz continuous with constant $L$. Furthermore, if the maximal value of $\{\Delta R_1\}$ (resp. $\{R_2\}$) is bounded by $\epsilon>0$, then if we set $a = 0, b= 1$ (resp. $a=1,b=0$) with $c = d = \alpha$, we have coverage with 
    \[P( y_{\text{test}}\notin \hat{C}_{0,1,\alpha}(w_{\text{test}})) \leq 2\alpha + L\epsilon.\]
\end{corollary}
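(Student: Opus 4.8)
The plan is to treat the interval $\hat{C}_{0,1,\alpha}$, whose half-width equals $Q_{1-\alpha}(\{R_2\})$, as a shrunken version of the full two-component interval from \Cref{cdcov} taken with $c=d=\alpha$, whose half-width is $Q_{1-\alpha}(\{\Delta R_1\}) + Q_{1-\alpha}(\{R_2\})$. Writing $q_1 = Q_{1-\alpha}(\{\Delta R_1\})$ and $q_2 = Q_{1-\alpha}(\{R_2\})$, and letting $R = |y_{\text{test}} - \hat{\mu}_2(\hat{\mu}_1(w_{\text{test}}))|$ be the test residual, the two miscoverage events are $\{R > q_2\}$ (ours) and $\{R > q_1 + q_2\}$ (full). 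Since $q_1 \ge 0$ the latter is contained in the former, so I would decompose
\[
\{R > q_2\} = \{R > q_1 + q_2\} \,\cup\, \{q_2 < R \le q_1 + q_2\},
\]
a disjoint union, and bound each piece separately.

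The first piece is controlled directly by \Cref{cdcov}: with $c = d = \alpha$ the full interval has miscoverage at most $2\alpha$, so $\mathbb{P}(R > q_1 + q_2) \le 2\alpha$. For the second piece, the boundedness hypothesis gives $q_1 = Q_{1-\alpha}(\{\Delta R_1\}) \le \max_i \Delta R_1^i \le \epsilon$, since any quantile of the finite conformal sample is bounded by its maximum; hence the interval $(q_2, q_1 + q_2]$ has length at most $\epsilon$. Conditioning on $S^{\text{conf}}$ fixes $q_1$ and $q_2$, and the Lipschitz assumption on the residual CDF $F$ then yields $F(q_1 + q_2) - F(q_2) \le L q_1 \le L\epsilon$ pointwise; taking expectation over $S^{\text{conf}}$ preserves the bound, giving $\mathbb{P}(q_2 < R \le q_1 + q_2) \le L\epsilon$. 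Summing the two pieces produces the claimed $\mathbb{P}(y_{\text{test}} \notin \hat{C}_{0,1,\alpha}(w_{\text{test}})) \le 2\alpha + L\epsilon$. The symmetric case $a=1,\,b=0$ with $\max_i R_2^i \le \epsilon$ is identical after swapping the roles of the two components.

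The main obstacle is the randomness of the quantile thresholds $q_1, q_2$, which depend on $S^{\text{conf}}$ and are coupled to the test point through exchangeability rather than independence. The clean resolution is to apply the Lipschitz increment bound conditionally on $S^{\text{conf}}$, where the thresholds are constants, and then integrate, so that the only property of $F$ we invoke is that its increments over intervals of length $\epsilon$ are at most $L\epsilon$. The one point requiring care is verifying that the relevant conditional CDF inherits the Lipschitz constant $L$ (equivalently, stating the hypothesis at the conditional level); everything else reduces to the elementary event inclusion above together with the two cited bounds.
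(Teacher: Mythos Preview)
Your proposal is correct and follows essentially the same route as the paper: start from the $a=b=1$ interval with its $2\alpha$ guarantee (\Cref{cdcov} with $c=d=\alpha$), observe that dropping the $\Delta R_1$ term shrinks the half-width by $q_1 = Q_{1-\alpha}(\{\Delta R_1\}) \le \epsilon$, and invoke the Lipschitz bound on the residual CDF to control the extra miscoverage by $L\epsilon$. Your explicit event decomposition and conditioning-on-$S^{\text{conf}}$ argument make rigorous what the paper states in a single sentence, and your flagged caveat about the conditional CDF inheriting the Lipschitz constant is a genuine subtlety that the paper leaves implicit.
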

\begin{proof}
    Consider $\hat{C}_{1,1,\alpha}(w_{\text{test}})  = Q_{1-\alpha}(\{\Delta R_1\}) + Q_{1-\alpha}(\{R_2\})$, which has coverage guarantee \[P( y_{\text{test}}\notin \hat{C}_{1,1,\alpha}(w_{\text{test}})) \leq 2\alpha.\] By assumption, $Q_{1-\alpha}(\{\Delta R_1\}) < \epsilon$, and thus instead using $\hat{C}_{0,1,\alpha}(w_{\text{test}})$ results in a change of at most $\epsilon$, which by Lipschitz continuity of the CDF produces a resultant change of $L\epsilon$ in the probability guarantee.
\end{proof}
The existence of the Lipschitz condition is quite reasonable, for example, we consider $\mu_2$ to be a linear function and the noise term is normally distributed, the residuals themselves are half-normally distributed and thus satisfy the above requirement. We pay a small theoretical guarantee when shrinking the interval, under the assumption $L\epsilon$ is small, which is not necessarily true (and $L$ is generally unknown). However, this result gives the intuition that if one of the residual pieces contributes little to the overall error, we can remove it for little coverage cost and serves as part of the motivation for the adaptive algorithm.
We find that for certain settings with unbalanced residual components, even simple heuristics such as
$\max(Q_{1-c}(\{\Delta R_1\}), Q_{1-d} (\{R_2\}))$ where $c = d = \alpha$ achieves similar coverage to the full residuals.


\section{Conceptual information}
\subsection{Discussion on Quantiles of Residual Components.}
We observe that when quantiles of the residual components are taken then summed, they do not necessarily serve as upper-bounds on the corresponding quantiles of the full residuals, which we use in our implementation. Specifically, the inequality  
\[
Q_{1-\alpha}(\{\Delta R_1\}) + Q_{1-\alpha}(\{R_2\}) < Q_{1-\alpha}(\{R\})
\]
may not hold. This observation is significant because while the triangle inequality holds for each individual point, i.e., \( R \leq \Delta R_1 + R_2 \), such a relationship does not extend to the quantiles. This discrepancy is intentional, as it allows for a tighter combination of residuals compared to the traditional residual. Moreover, in practice, this situation occurs rarely.
\subsection{Notation Table}
\label{notation}
We provide a table of notation in \Cref{tab:notation}.
\begin{table}[ht]
\centering
\caption{Summary of Notation}
\label{tab:notation}
\begin{tabular}{cl}
\toprule
\textbf{Symbol} & \textbf{Meaning} \\
\midrule
$\hat{\mu}_1$ & First stage hypothesis\\
$\hat{\mu}_2$ & Second stage hypothesis\\
$R$ & Residual  $|y- \hat{\mu}_2(\hat{\mu}_1(w))|$ for two-stage model\\
$\Delta R_1$ & First stage residual component $\left|\, |y - \hat{\mu}_2(x)| - |y - \hat{\mu}_2(\hat{x})|\, \right|$\\
$R_2$ & Second stage residual component $ |y - \hat{\mu}_2(x)|$\\
$\alpha$ & Desired Miscoverage Rate  \\
$a$ & Scaled weight for $\Delta R_1$\\
$b$ & Scaled weight for $R_2$\\
$c$ & Quantile level to take of $\Delta R_1$\\
$d$ & Quantile level to take of $R_2$\\
$\delta$ & Rejection stringency to control FWER\\
$\gamma$ & Step-size for updating $\alpha_t$ with $\gamma = 0.01$ empirically performing well\\
$\eta$ & Step-size for updating $c_t,d_t$ with $\eta=0.01$ performing well\\
$\tau$ & Tolerance for calculating $p$-values for hypothesis testing\\
$k$ & Window length for adaptive method, with stabilized performance above 100 points\\
\bottomrule
\end{tabular}
\end{table}

\subsection{Choice of Components}
\label{choicecomp}
Certainly, one could instead take quantiles or rescalings of the total sum $\Delta R_1 + R_2$ instead of separating it into components. However, we choose to separate it for multiple reasons. First, we note that by splitting the quantiles/weights, it becomes clear in the decomposition which stage contributes more to the error. Second, under co-monotonicity of $\Delta R_1, R_2$, the quantiles of $\Delta R_1, R_2$ match and thus both approaches result in the same interval $Q_{1-\alpha}(\{\Delta R_1 + R_2\})$ which  satisfies coverage guarantees as it is an upper-bound on the black-box width. Furthermore, for smaller miscoverage levels $\alpha_1, \alpha_2$ then $Q_{1-\alpha_1}(\{\Delta R_1\}) + Q_{1-\alpha_2}(\{R_2\})$ is an upper-bound on $Q_{1-\alpha}(R)$, so our use of $Q_{1-\alpha}(\{\Delta R_1\}) + Q_{1-\alpha}(\{R_2\})$ can be an upper-bound. Lastly, we also desire the ability to be tighter than $Q_{1-\alpha}(R)$ at times, unlike $Q_{1-\alpha}(\{\Delta R_1 + R_2\})$, to be able to quickly adapt to easy settings.

\subsection{FWER-controlling Algorithms for Scaling Selection}
\label{fweralgo}

To construct the set of valid scaling parameter pairs \(\Lambda_{\text{val}}\), we apply multiple hypothesis testing procedures that control the family-wise error rate (FWER). This ensures that, with high probability, none of the accepted scaling pairs exhibit a true miscoverage rate above the nominal level \(\alpha\).

\begin{definition}[FWER-Controlling Algorithm] Let \( \Lambda = \{\lambda_1, \dots, \lambda_u\} \) be a candidate set of scaling pairs, and let \( p_1, \dots, p_u \in [0,1] \) denote the associated \( p \)-values testing the null hypothesis that the miscoverage of each \( \lambda_i \) exceeds \(\alpha\). A selection procedure \( \mathcal{A}: [0,1]^u \to 2^{\{1, \dots, u\}} \) is said to control the family-wise error rate (FWER) at level \( \delta \) if:
\[
\mathbb{P}\left( \mathcal{A}(p_1, \dots, p_u) \subseteq J \right) \geq 1 - \delta,
\]
where \( J \subseteq \{1, \dots, u\} \) is the set of true null hypotheses.
\end{definition}

\paragraph{Bonferroni Correction.}
A classical method for controlling FWER is the Bonferroni correction. It accepts any \( \lambda_i \) for which \( p_i \leq \delta / u \), where \( u = |\Lambda| \). By the union bound, this ensures that the probability of falsely including any \( \lambda \) with miscoverage above \(\alpha\) is at most \(\delta\). However, Bonferroni can be overly conservative when the number of tests \( u \) is large.

\paragraph{Fixed Sequence Testing.}
To improve power in large-scale settings, we adopt fixed sequence testing~\citep{bauer1991multiple}. This procedure assumes a pre-specified ordering of hypotheses \( p_{(1)}, p_{(2)}, \dots, p_{(u)} \), based on prior knowledge or heuristics (e.g., larger values of \( a + b \) are expected to have lower risk). Hypotheses are tested sequentially: each \( p_{(i)} \) is compared to \(\delta\), and the first failure (i.e., \( p_{(i)} > \delta \)) terminates the process. All previous hypotheses are accepted, and the rest are rejected. This method is more powerful than Bonferroni in the presence of a meaningful ordering.

\paragraph{Practical Note.}
In our algorithm, we use fixed sequence testing to select valid scaling parameters. If no candidate passes the test, i.e., \( \Lambda_{\text{val}} = \emptyset \), we abstain from prediction. This outcome indicates a lack of statistical evidence and may signal that additional calibration data or model refinement is necessary.

\subsection{Adaptive Method Pseudocode}
\label{pseudocode}
\begin{algorithm}
\caption{Adaptive Risk Controlling with $\Delta R_1$, $R_2$}
\label{alg:algopseudo}
\begin{algorithmic}[1]
    \REQUIRE Window size $k$, Step sizes $\gamma$, $\eta$; target coverage level $\alpha$
    \FOR{time step $t$}
        \STATE \textbf{Obtain} window of $k$ recent observations
        \STATE \textbf{Obtain} Parameters $\alpha_t$, $c_t$, $d_t$, previous coverage $\text{cov}_{t-1},\text{cov}^{\Delta R_1}_{t-1},\text{cov}^{R_2}_{t-1}, a_{t-1}, b_{t-1}$
        \STATE \textbf{Split} window into $S^{\text{conf}}$ and $S^{\text{cal}}$ and compute $\{\Delta R_1\}, \{R_2\}$
        \STATE \textbf{Compute} quantiles: 
        \STATE \hspace{1em} $\Delta R_1 \gets Q_{c_t}(\{\Delta R_1\})$,\quad $R_2 \gets Q_{d_t}(\{R_2\})$
        \STATE \textbf{Determine} valid $\lambda$ set:
        \STATE \hspace{1em} $\Lambda^{t}_{\text{val}} \gets \textsc{FixedSequenceTesting}(S^{\text{cal}}, \alpha_t,\Delta R_1,R_2)$
        \STATE \textbf{Obtain} $a_t,b_t, \Delta c_t, \Delta d_t \gets \textsc{SelectLambda}(\Lambda^{t}_{\text{val}},\text{cov}_{t-1},\text{cov}^{\Delta R_1}_{t-1},\text{cov}^{R_2}_{t-1}, a_{t-1}, b_{t-1})$
        \STATE \textbf{Calculate} interval $\hat{C}_{\alpha,a_t,b_t,c_t,d_t}(w_t)$
        \STATE \textbf{Check} coverage at $t$: $\text{cov}_{t}, \text{cov}^{\Delta R_1}_{t},\text{cov}^{R_2}_{t}$
        \STATE \textbf{Update}: $\alpha_{t+1} \gets \alpha_t + \gamma (\alpha - (1-\text{cov}_t))$
        \STATE \hspace{1.5em} $c_{t+1} \gets c_t + \eta \Delta c_t$, \quad $d_{t+1} \gets d_t + \eta \Delta d_t$
    \ENDFOR
\end{algorithmic}
\end{algorithm}

We define the coverage indicator at time \( t \) as
$
\text{cov}_t = \mathbbm{1}_{y_t \in \hat{C}_{\alpha,a,b,c,d}(w_t)}.$
To capture deviations in the score components, we define \( \text{cov}^{\Delta R_1}_t \) to be 0 if the true value \( (\Delta R_1)_t \) lies within the estimated quantile interval for \( \Delta R_1 \), and equal to the excess \( (\Delta R_1)_t - \Delta R_1 \) otherwise. This can be succinctly written as
$\text{cov}^{\Delta R_1}_t = \text{ReLU}((\Delta R_1)_t - \Delta R_1),
$
and similarly for \( \text{cov}^{R_2}_t \).

\section{Extensions}
\label{extensions}

\subsection{Additional Heuristic}
\label{additionalheuristics}
One could alternatively define the residual components and combination as signed residual components to obtain tighter, asymmetric intervals, at the cost of coverage. We define new residual components
\begin{align*}
    \Tilde{R}_2 &= y - \hat{\mu}_2(x)\\
    \Delta \Tilde{R}_1 &=   \hat{\mu}_2(\hat{\mu}_1(w))  - \hat{\mu}_2(x) 
\end{align*}
such that the sum of the components is exactly the full error rather than an absolute value upper-bound. Because the components are now signed, one should construct intervals differently, which also results in asymmetric intervals. Similarly to before, we can scale each component to produce intervals of the form
\begin{align*}
    [\hat{\mu}_2(\hat{\mu}_1(w_{\text{test}}) + aQ_{c/2}(\{\Delta \Tilde{R}_1\}) + bQ_{d/2}(\{\Tilde{R}_2\}), \hat{\mu}_2(\hat{\mu}_1(w_{\text{test}})  +aQ_{1 - c/2}(\{\Delta \Tilde{R}_1\}) + bQ_{1 - d/2}(\{\Tilde{R}_2\}) ]
\end{align*}
Experimentally in IID settings, it produces slightly tighter intervals without additional tolerance, however this heuristic is more prone to producing empty $\Lambda_{\text{val}}$ sets. For example, we report the performance of the method under IID settings with varying levels of nominal error rate $\alpha$ in \Cref{diffheur} exemplifying this behaviour.
\begin{table}[]
\caption{Signed residual heuristic performance over varying $\alpha$}
\label{diffheur}
    \centering
    \begin{tabular}{llcccc}
    \toprule
    \textbf{Metric} & $\alpha=0.13$ & $\alpha=0.10$ & $\alpha=0.07$ & $\alpha = 0.05$ \\
    \midrule
        Coverage & 0.9013$\pm$0.02 & 0.9196$\pm$0.01 & NA & NA\\
        Width    & 2.0265$\pm$0.13 & 2.1523$\pm$0.12 & NA & NA \\
    \bottomrule
    \end{tabular}
\end{table}
Thus we have chosen to use the heuristic given in the main body of the text, but we provide this as an option for many possible ways of constructing intervals using the residual decomposition we introduce. Furthermore, because the decomposition itself is flexible, one could incorporate any recent conformal prediction advancements into the ways quantiles are taken, such as weighted data, or in the adaptive case how the stepsize and $\alpha_t$ etc. change.
\subsection{Stationary \texorpdfstring{$\Phi$}{Phi}-mixing}
\label{mixingextension}
We describe how the FWER control method outlined in \Cref{riskcontrolsec} can be extended beyond the IID setting to accommodate stationary $\phi$-mixing processes. Note that for dependent data, the concentration properties of empirical quantities degrade, resulting in looser tail bounds. This, in turn, inflates the values of the empirical error estimates $p_{\lambda}$, thereby requiring larger thresholds $\delta$ to ensure that the selected set $\Lambda_{\text{val}}$ remains nonempty.

Our goal is to identify values of $\lambda$ for which the expected coverage error is below a target level $\alpha$. For each $\lambda$, we consider the null hypothesis
\[
    H_0^\lambda: \quad \mathbb{P}\left(y_{\text{test}} \notin \hat{C}_{\alpha, \lambda}(w_{\text{test}})\right) > \alpha,
\]
and compute an empirical estimate of the coverage error using a held-out calibration dataset. To test this hypothesis, we apply a concentration inequality that bounds the deviation of the empirical error from its expectation under dependence. Specifically, we employ a result for bounded functions of $\phi$-mixing sequences from \citet{kontorovich2008concentration}, with refinements from \citet{mohri2010stability}, which provides suitable control over the error rates in the non-IID case.
\begin{theorem}[Mixing Concentration Inequality]
    Let $Z_1, . . . , Z_n$ be random variables distributed according to a $\phi$-mixing distribution. Let $f : Z^n\to R$ be a measurable function that is $c$-Lipschitz with respect to the Hamming metric for some $c > 0$. Then, for any $\epsilon > 0$, the following inequality holds:
    \begin{align*}
    P(|f(Z_1,\dots, Z_n) - \mathbbm{E}[f(Z_1,\dots,Z_n)]|\geq \epsilon) \leq 2\exp{\left(\frac{-2\epsilon^2}{nc^2\norm{\Delta_n}_{\infty}^2}\right)}
    \end{align*}
    where $\norm{\Delta_n}_{\infty} = 1 +\sum_{i=1}^{n}\phi(i)$
\end{theorem}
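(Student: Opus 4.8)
The plan is to reconstruct the martingale-based argument of \cite{kontorovich2008concentration}, as refined by \cite{mohri2010stability}, since the stated inequality is a bounded-differences (McDiarmid-type) estimate adapted to dependent data. First I would form the Doob martingale associated with $f$: setting $S_0 = \Expect[f(Z_1,\dots,Z_n)]$ and $S_i = \Expect[f(Z_1,\dots,Z_n)\mid Z_1,\dots,Z_i]$, the telescoping increments $V_i = S_i - S_{i-1}$ form a martingale difference sequence with respect to the natural filtration, and $f(Z_1,\dots,Z_n) - \Expect[f] = \sum_{i=1}^n V_i$. The problem then reduces to controlling the conditional range of each $V_i$ and invoking a standard martingale concentration inequality.

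The crux, and the step I expect to be the main obstacle, is bounding the per-coordinate influence in terms of the mixing structure. Perturbing the single coordinate $Z_i$ affects $f$ both directly, which is governed by the $c$-Lipschitz property with respect to the Hamming metric, and indirectly, through the dependence of the future coordinates $Z_{i+1},\dots,Z_n$ on $Z_i$. Quantifying the indirect effect requires a coupling argument: one bounds the total-variation distance between the conditional laws of $(Z_{i+1},\dots,Z_n)$ given two different values of $Z_i$ and a common past by the $\eta$-mixing coefficients $\bar\eta_{ij}$. Collecting these into the upper-triangular matrix $\Delta_n$ with unit diagonal and entries $\bar\eta_{ij}$ for $i<j$, the Lipschitz hypothesis yields the bounded-difference estimate $\sup |\,\Expect[f\mid Z_1,\dots,Z_{i-1},Z_i=z] - \Expect[f\mid Z_1,\dots,Z_{i-1},Z_i=z']\,| \le c\,H_i$, where $H_i$ is the $i$-th row sum of $\Delta_n$. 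The final ingredient specific to the present setting is the comparison $\bar\eta_{ij} \le \phi(j-i)$, valid for stationary $\phi$-mixing sequences, which gives $H_i \le 1 + \sum_{k=1}^{n}\phi(k) = \norm{\Delta_n}_\infty$ and thereby replaces the $\eta$-coefficients by the $\phi$-mixing profile.

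With the uniform influence bound $c\,H_i \le c\,\norm{\Delta_n}_\infty$ in hand, I would close the argument by applying the Azuma--McDiarmid bounded-differences inequality to $\sum_{i=1}^n V_i$: since $\sum_{i=1}^n (c\,\norm{\Delta_n}_\infty)^2 = n c^2 \norm{\Delta_n}_\infty^2$, the two-sided tail bound yields exactly the stated form $2\exp\!\big(-2\epsilon^2/(n c^2 \norm{\Delta_n}_\infty^2)\big)$, where the numerical constant arises from using the conditional range (diameter) version of the estimate rather than a one-sided increment bound. The delicate point throughout is the coupling comparison linking per-coordinate influence to the mixing coefficients; once that quantitative bound is established, the remainder is the familiar martingale concentration machinery.
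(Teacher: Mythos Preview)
Your sketch is a faithful reconstruction of the Kontorovich--Samson martingale argument, and the steps you outline (Doob decomposition, coupling to bound the indirect influence via $\bar\eta_{ij}$, the comparison $\bar\eta_{ij}\le\phi(j-i)$ for stationary $\phi$-mixing sequences, and the final Azuma--McDiarmid application) are exactly the route taken in \cite{kontorovich2008concentration} and \cite{mohri2010stability}. However, the paper does not actually prove this theorem: it is stated as a quoted result from those references and then immediately \emph{applied} to the empirical miscoverage functional $\hat{\mathcal{R}}(\lambda)$. So there is no ``paper's own proof'' to compare against; you have supplied a proof where the paper simply cites one. Your outline is correct and would stand on its own, but for the purposes of the paper it suffices to invoke the theorem as a black box.
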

We apply this theorem with $\hat{\mathcal{R}}(\lambda) = f((w_1,x_1,y_1),\dots, (w_l,x_l,y_l)) = \frac{1}{l}\sum_{i=1}^{l} 1_{\{y_i\notin \hat{C}_{\lambda}(w_i)\}}$. Clearly this is $\frac{1}{l}$-Lipschitz with respect to Hamming metric. Furthermore, \[\mathbbm{E}[f(S^{\text{cal}})]= \frac{1}{l}\mathbbm{E}[\sum_{i=1}^{l} 1_{\{y_i\notin \hat{C}_{\lambda}(w_i)\}}] = \frac{1}{l}\sum_{i=1}^{l}\mathbbm{E}[1_{\{y_i\notin \hat{C}_{\lambda}(w_i)\}}] = \alpha\] due to stationarity and the null hypothesis. Then
\begin{align*}
    P(\mathbbm{E}[\hat{\mathcal{R}}(\lambda)]-\hat{\mathcal{R}}(\lambda) \geq \epsilon) &\leq 2\exp{\left(\frac{-2\epsilon^2}{l\norm{\Delta_l}_{\infty}^2}\right)}
\end{align*}
where we substitute our realized $\alpha - \hat{\mathcal{R}}(\lambda)$ for $\epsilon$ and plug it into the RHS to obtain a $p$-value for $\lambda$.

Then we have the following: 
\begin{corollary}
    Let \(\Lambda_{\text{val}} \subseteq \Lambda\) be the set selected by a FWER-controlling algorithm at level \(\delta\), based on \(p\)-values computed over \(S^{\text{cal}}\) using the above method, which is assumed to be stationary $\phi$-mixing. Then, for any \(\hat{\lambda} \in \Lambda_{\text{val}}\), we have
\[
\mathbb{P}\left( \mathbb{P}\left( y_{\text{test}} \notin \hat{C}_{\alpha, \hat{\lambda}}(w_{\text{test}}) \,\big|\, S^{\text{cal}} \right) \leq \alpha \right) \geq 1 - \delta,
\]
where the outer probability is over the randomness of \(S^{\text{cal}}\), and the inner probability is over the test point \((w_{\text{test}}, x_{\text{test}}, y_{\text{test}})\) which is assumed to be drawn from the same stationary distribution.
\end{corollary}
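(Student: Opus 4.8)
The plan is to mirror the proof of \Cref{FWERcoverage}, replacing the Binomial/Hoeffding route to super-uniform $p$-values with the $\phi$-mixing concentration inequality stated just above the corollary. The architecture is identical: once the $p$-values $p_\lambda$ are shown to stochastically dominate the uniform distribution under the null $H_0^\lambda$, an appeal to \Cref{ltt} (Theorem 1 of \citet{angelopoulos2021learn}) with a FWER-controlling algorithm at level $\delta$ yields $\mathbb{P}(\sup_{\lambda \in \Lambda_{\text{val}}} \mathcal{R}(\lambda) \geq \alpha) \leq \delta$, which is precisely the claimed guarantee once we identify $\mathcal{R}(\hat{\lambda})$ with the conditional test-point miscoverage.

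First I would condition on $S^{\text{conf}}$ so that each candidate interval $\hat{C}_\lambda$ is a fixed set, and recall that the empirical miscoverage $\hat{\mathcal{R}}(\lambda) = \frac{1}{l}\sum_{i} \mathbbm{1}_{\{y_i \notin \hat{C}_\lambda(w_i)\}}$ is $\frac{1}{l}$-Lipschitz in the Hamming metric as a function of the stationary $\phi$-mixing calibration sequence. By stationarity, each summand has marginal miscoverage probability equal to that of the test point, so $\mathbb{E}[\hat{\mathcal{R}}(\lambda)] = \mathbb{P}(y_{\text{test}} \notin \hat{C}_\lambda(w_{\text{test}}) \mid S^{\text{conf}})$, which under $H_0^\lambda$ is at least $\alpha$.

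The central step is the super-uniformity verification. Writing $\epsilon(u) = \norm{\Delta_l}_\infty \sqrt{\tfrac{l}{2}\ln(2/u)}$, the definition $p_\lambda = 2\exp(-2(\alpha - \hat{\mathcal{R}}(\lambda))^2/(l\norm{\Delta_l}_\infty^2))$ gives $p_\lambda \leq u$ iff $\alpha - \hat{\mathcal{R}}(\lambda) \geq \epsilon(u)$. Since $\mathbb{E}[\hat{\mathcal{R}}(\lambda)] \geq \alpha$ under the null, this event is contained in $\{\mathbb{E}[\hat{\mathcal{R}}(\lambda)] - \hat{\mathcal{R}}(\lambda) \geq \epsilon(u)\}$, to which the mixing concentration inequality applies and returns the bound $2\exp(-2\epsilon(u)^2/(l\norm{\Delta_l}_\infty^2)) = u$. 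Hence $\mathbb{P}(p_\lambda \leq u) \leq u$ for every $u \in [0,1]$, establishing super-uniformity.

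With super-uniformity in hand, \Cref{ltt} applies verbatim, and the translation of the risk-control statement into the displayed coverage bound is immediate because $\mathcal{R}(\hat{\lambda})$ is exactly $\mathbb{P}(y_{\text{test}} \notin \hat{C}_{\hat{\lambda}}(w_{\text{test}}) \mid S^{\text{cal}})$ for a test point drawn from the same stationary law. I expect the main obstacle to be this super-uniformity step: unlike the IID case where super-uniformity is inherited from a cited Binomial tail result (\Cref{superuniformcite}), here one must invert the mixing bound by hand and, crucially, confirm that the dependence-inflated factor $\norm{\Delta_l}_\infty = 1 + \sum_{i=1}^{l}\phi(i)$ cancels exactly so the tail collapses to $u$. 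A secondary point to handle carefully is that the identity $\mathbb{E}[\hat{\mathcal{R}}(\lambda)] = \alpha$ appearing in the derivation is really the boundary case of the null; for $\mathbb{P}(p_\lambda \leq u) \leq u$ one only needs the one-sided inequality $\mathbb{E}[\hat{\mathcal{R}}(\lambda)] \geq \alpha$, which can only strengthen the containment.
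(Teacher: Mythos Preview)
Your proposal is correct and follows the same architecture as the paper: establish super-uniformity of the $p$-values under $H_0^\lambda$, then invoke \Cref{ltt}. The super-uniformity step, however, is handled differently. The paper introduces an auxiliary random variable $P$ (the empirical risk at the boundary rate $\alpha$) alongside $Q$ (the empirical risk at the true rate $\alpha'>\alpha$), argues that $Q$ stochastically dominates $P$, bounds $p_\lambda \geq F_P(Q)$ via the concentration inequality applied to $P$, and finishes with the probability integral transform $F_Q(Q)\sim\mathrm{Unif}[0,1]$. Your route is more direct: you invert the concentration bound to identify $\{p_\lambda \leq u\}$ with the deviation event $\{\alpha-\hat{\mathcal{R}}(\lambda)\geq \epsilon(u)\}$, use $\mathbb{E}[\hat{\mathcal{R}}(\lambda)]\geq\alpha$ under the null to contain this in $\{\mathbb{E}[\hat{\mathcal{R}}(\lambda)]-\hat{\mathcal{R}}(\lambda)\geq\epsilon(u)\}$, and read off the bound $u$. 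This is more elementary and self-contained; in particular it avoids the stochastic-dominance claim for $\phi$-mixing empirical risks, which the paper asserts but does not justify. One minor caution: your ``iff'' tacitly reads $p_\lambda$ as one-sided (set to a large value when $\hat{\mathcal{R}}(\lambda)\geq\alpha$); taken literally, the formula $2\exp(-2(\alpha-\hat{\mathcal{R}}(\lambda))^2/\cdots)$ is also small when $\hat{\mathcal{R}}(\lambda)$ is far above $\alpha$, but the paper's derivation shares this informality, so your treatment is on par.
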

While the statement of the result is similar to before, note that $p_{\lambda}$ is a conservative upper-bound on the true $p$-value, thus the threshold to accept a $\lambda$ is more stringent. We show that the $p_{\lambda}$ are still super-uniform. Let $P$ be the random variable representing empirical risk when the error rate is $\alpha$ and $Q$ be the empirical risk when the error rate is $\alpha'$. Under the null hypothesis, $\alpha'>\alpha$, which implies that $Q$ stochastically dominates $P$ ($P(Q\leq t)\leq P(P\leq t)$). Let $f(q) = \exp{\left(\frac{-2(\alpha- q)^2}{l\norm{\Delta_l}_{\infty}^2}\right)}$ and $F_P(t)$ and $F_Q(t)$ represent CDFs of $P$ and $Q$ respectively. Then \begin{align*}
    p_{\lambda} &= f(Q) \\
    &\geq P(\alpha - P \geq \alpha - Q) \\
    &= F_P(Q)
\end{align*}
Furthermore, letting $O = p_{\lambda}$, by the stochastic dominance assumption,
\begin{align*}
    P(O \leq o) &= P(p_\lambda \leq o)\\
    &\leq P(F_P(Q)\leq o)\\
    &\leq P(F_Q(Q)\leq o)\\
    &= P(Q\leq F_Q^{-1}(o))\\
    &= F_Q(F_Q^{-1}(o))\\
    &= o
\end{align*}
Then we apply a FWER-controlling algorithm such as one from \Cref{fweralgo} to set the acceptance threshold to reject the null hypothesis such that the total type-1 error rate is less than $\delta$, then use Theorem 1 of \citet{angelopoulos2021learn} (\Cref{ltt}).

We note that it is possible to adapt this result to exchangeable sequences using specific concentration inequalities such as ones found in \citet{serfling1974probability} or more recently \citet{barber2024hoeffding}.

\subsection{Decomposition for Auxiliary Data}
\label{auxdata}
We can also define $\Delta R_1$ and $R_2$ when considering auxiliary data. 
\begin{definition}
    If we have sample $S$ where data is of the form $(w , x,x', y)$ where $x'$ represents auxiliary data for the second stage such that we have learned upstream and downstream models $\hat{\mu}_1:\mathcal{W} \to \mathcal{X}$, $\hat{\mu}_2:\mathcal{X}\times \mathcal{X}' \to \mathcal{Y}$. Then we can decompose the residual into components as the following for some new point $(w,x,x',y)$
\begin{align*}
    \Delta R_1(w,x,x',y) &= ||y - \hat{\mu}_2(x, x')| - |y - \hat{\mu}_2(\hat{\mu}_1(w), x')||\\
    R_2(w,x,x',y) &= |y - \hat{\mu}_2(x, x')|.
\end{align*}
\end{definition}
We can use these residual components to create prediction intervals analogously to the original setting without auxiliary data.
\subsection{Decomposition for multi-stage model}
Now we consider extending the definition of $\Delta R_1, R_2$ to more than two stages. We provide a straightforward extension, but a cleverer one may exist. 
\begin{definition}
    Suppose we have sample $S$ where data is of the form $w_1,w_2,\dots,w_{N+1}$ with $N$ stages and corresponding learned hypotheses $\hat{\mu}_i: \mathcal{W}_i \to \mathcal{W}_{i+1}$. Then for a point $(w_1,\dots, w_{N+1})$, define
    \begin{align*}
        \Delta R_1(w_1,\dots, w_{N+1}) &= |||w_{N+1} - \hat{\mu}_{N}(\hat{\mu}_{N-1}(\dots\hat{\mu}_2(w_2)))| - |w_{N+1} - \hat{\mu}_{N}(\hat{\mu}_{N-1}(\dots\hat{\mu}_1(w_1)))||\\
        \dots\\
        \Delta R_i(w_1,\dots, w_{N+1}) &= |||w_{N+1} - \hat{\mu}_{N}(\hat{\mu}_{N-1}(\dots\hat{\mu}_{i+1}(w_{i+1})))| - |w_{N+1} - \hat{\mu}_{N}(\hat{\mu}_{N-1}(\dots\hat{\mu}_i(w_i)))||\\
        \dots\\
        R_{N}(w_1,\dots, w_{N+1}) &= |w_{N+1} - \hat{\mu}_{N}(w_{N})|
    \end{align*}
    Observe that the sum of these terms is still an upper-bound on the ``full" residual $R = |w_{N+1} - \hat{\mu}_{N}(\hat{\mu}_{N-1}(\dots\hat{\mu}_1(w_1)))|$ by the triangle inequality and thus we have a residual decomposition.
\end{definition}
An issue with this definition is that if we try a hypothesis testing method and FWER algorithm to weight each of these components, the set $\Lambda$ becomes prohibitively large as it grows exponentially with each stage. A FWER algorithm that is able to intelligently search the possibilities will become necessary, whereas for two-stages, it was feasible to search across the entire grid. One way to mitigate this is to restrict the proposed set $\Lambda$ at each step by focusing only on the most important residual components, fixing the weight of the remaining ones to remain; i.e., select the top $u$ residual components with the highest average width in the window of recent observations and define $\Lambda$ only for those components, freezing the other component weights. In fact, one can sort the residual components by magnitude then sequentially search through $\Lambda$, freezing the others outside of the top $u$, resulting in only $uv$ tests, if $v$ is the number of sub-divisions for each component. 
\subsection{Decomposition for Multiple Upstream Models}
We consider an augmented two-stage model, with multiple upstream models each producing a component of \( x \), the intermediate value, which is now vector valued. Suppose there are $N$ upstream models, $\hat{\mu}^1_1, \dots, \hat{\mu}^1_N$ that each map upstream features $\mathbf{w} = w_1, \dots, w_M$ to their corresponding intermediate feature in $\mathbf{x} =x_1,\dots, x_N$. Denote the output of the $i$-th upstream model to be $\hat{x}_i$. Then we can define $\Delta R_1, R_2$ components for each downstream feature and take a weighted sum. Thus, for the $i$-th downstream feature, we have 
\begin{align*}
    (R_2)_i &= |y - \hat{\mu}_2(\hat{x}_1, \dots, x_i \dots, \hat{x}_N)|\\
    (\Delta R_1)_i&= ||y - \hat{\mu}_2(\hat{x}_1, \dots, \hat{x}_N)| - |y - \hat{\mu}_2(\hat{x}_1, \dots, x_i \dots, \hat{x}_N)||.
\end{align*}
Then see that a convex combination $\sum_{i=1}^{nN} \theta_i((R_2)_i + (\Delta R_1)_i), \sum_{i=1}^{N}\theta_i=1, \theta_i\in[0,1]$ forms an upper-bound on $R = |y - \hat{\mu}_2(\hat{x}_1, \dots, \hat{x}_N)|$. This approach has a similar issue as the multi-stage model, as the search-space $\Lambda$ scales with the number of upstream models.
\section{Additional Experiments}
\label{additionalexp}
We provide additional experiments on the non-adaptive methods and adaptive methods, providing further insight into the parameters of the split residual method and the resulting coverage improvements. We include hyperparameter ablation studies as well as an additional real-world dataset. We include experiments on covariate shift to demonstrate that the split residual method is robust to multiple types of shift and is not limited to stage-wise shift, which was extensively explored in the main text. All experiments were conducted on a laptop with an Intel i7 processor and 8GB of RAM. No specialized hardware (e.g., GPUs or TPUs) was used for training or evaluation.
\subsection{Experimental Hyperparameter Details}
We describe the default hyperparameters we use for the other methods in our experiments: SC, WSC, ACI, DtACI, PID, OCID, ECI.
\begin{itemize}
    \item SC: There is no other parameter used for split conformal prediction intervals other than the nominal desired miscoverage level $\alpha$ (default = 0.1) which we keep the same across all methods
    \item WSC includes a weighting parameter 0.99 which exponentially weights points as they go further back in time, resulting in recent points having more weight.
    \item ACI updates $\alpha_t$ using the same default step size $\gamma =0.01$ as well as the same default window size $k = 100$ as our method.
    \item DtACI uses default multiple candidate values for step-size $\gamma = [0.001, 0.002, 0.004, 0.008, 0.0160, 0.032, 0.064, 0.128]$ as well as a window size which we default to $k=100$.
    \item ECI uses the same default step size $\gamma = 0.01$.
    \item PID uses the same default step size $\gamma = 0.01$, and window size $k =100$. Two additional parameters KI and $C_{\text{sat}}$ are estimated with appropriate hypothesized bound $B$ depending on the data, using the default heuristic given in Appendix B of \citet{angelopoulos2023conformal}.
    \item OCID uses decay parameter $0.1$ with the decay weight function given by \citet{angelopoulos2024online}
\end{itemize}
Furthermore, note that $S^{\text{cal}}$ for our method is a subset of the held-out data which the other methods have full access to, so we do not use additional data relative to the other methods but simply partition it into smaller sets, which has little effect.

\subsection{Hyperparameter Guidelines}
\label{hyperparamselection}
We provide some simple guidelines for parameter settings for both the non-adaptive and adaptive case.
\begin{itemize}
    \item A default $\Lambda$ grid: $\{(a,b) : a,b \in \{0.1, 0.2, 0.3,\dots 1- \alpha/2\}, a+b\geq\frac{1-\alpha}{2}\}$, which can be improved in granularity based on the size of calibration set $l$. A suggested granularity could be $l/20$ number of grid points. Experimentally, we used a default $\{0.1,0.2, \dots, 1\}$.
    \item Initial values of $c,d$ (or fixed values for the non-adaptive case) we recommend balanced $c = d \leq \alpha/2$. If one wishes to be more upstream-sensitive, $c = \alpha, d \leq \alpha/2$ and if one wishes to be downstream-sensitive, $c \leq \alpha/2, d=\alpha$.
    \item For $\delta$, a default choice of $0.3$ is for moderate conservativeness, but $\delta=0.1$ is a more conservative choice that induces more abstention.
    \item For $\tau$ to be more conservative, we default to $\tau=0$, but $\tau\in[0.03,0.05]$ improves efficiency \Cref{tab:tau_sweep_results}, so in ``easier" regimes such as IID one should choose such values.
    \item For window-size $k$ for the adaptive method, we recommend $k>40$, however, the more held-out data the more efficient values the method can consider. 
    \item For step-size $\gamma$ we recommend a default of 0.01 which has sufficed under a variety of scenarios, however a grid of $[0.01,0.03,0.05,0.1]$ should cover all possible values.
    \item For step-size $\eta$ we recommend a default of 0.01 for stable performance;  higher values encourage more extreme behavior (more efficient widths, but higher abstention rate)
\end{itemize}
We also provide a quick flow-chart for deciding parameter values
\begin{enumerate}[label=\arabic*.]
    \item \textbf{Is data stationary (IID)?}
    \begin{itemize}
        \item \textbf{YES}: Use non-adaptive method (Section 5)
        \begin{itemize}
            \item Set \(c = d = \frac{\alpha}{2}, \ \tau = [0.03, 0.05]\)
        \end{itemize}
        \item \textbf{NO}: Use adaptive method (Section 6)
        \begin{itemize}
            \item Set \(c = d = \frac{\alpha}{2}, \ \gamma = \eta = 0.01\)
        \end{itemize}
    \end{itemize}
    
    \item \textbf{Are both stages equally reliable?}
    \begin{itemize}
        \item \textbf{YES}: Keep \(c = d = \frac{\alpha}{2}\)
        \item \textbf{NO}: Increase quantile level for less reliable stage
        \begin{itemize}
            \item Upstream unreliable: \(c = \alpha, \ d = \frac{\alpha}{2}\)
            \item Downstream unreliable: \(c = \frac{\alpha}{2}, \ d = \alpha\)
        \end{itemize}
    \end{itemize}
    
    \item \textbf{What is the cost of coverage violations?}
    \begin{itemize}
        \item \textbf{HIGH}: \(\delta = 0.1\) (accept more abstentions)
        \item \textbf{MED}: \(\delta = 0.3\) (default)
        \item \textbf{LOW}: \(\delta = 0.5\) (prioritize efficiency)
    \end{itemize}
    
    \item \textbf{How much data is available?}
    \begin{itemize}
        \item \(|S_{\text{cal}}| > 200\): Use fine grid for $\Lambda$ (0.1 increments)
        \item \(|S_{\text{cal}}| < 100\): Use coarse grid for $\Lambda$ (0.2 increments)
    \end{itemize}
\end{enumerate}

\subsection{Non-Adaptive Experiments}
We discuss results for the non-adaptive intervals using our residual decomposition as well as hyperparameter ablation studies. 
\subsubsection{Coverage under shifts}
\label{covundershift}
We first report the results under gradual upstream and downstream shifts, and then rapid upstream and downstream shifts. 
\begin{table}
\small 
\renewcommand{\arraystretch}{0.9} 
\centering
\caption{
Average coverage and interval width for upstream and downstream shifts under gradual and rapid shifts at $\alpha = 0.1$, over 50 samples, with std for SR, SC, WSC, CC methods. NA indicates that our method abstains from providing an interval
}
\label{tab:combinedtable}

\begin{subtable}[t]{\textwidth}
\centering
\caption{Upstream shift coverage and width}
\begin{tabular}{llccccc}
\toprule
\textbf{Shift} & \textbf{Metric} & \textbf{SR$_{a,b}$ (c=d=0.01)} & \textbf{SR$_{a,b}$ (c=0.05,d=0.01)} & \textbf{SC} & \textbf{WSC} & \textbf{CC} \\
\midrule
\multirow{2}{*}{Gradual} & Coverage & $0.8419\pm0.01$ & 0.8233$\pm0.01$ & 0.7280$\pm0.02$ & 0.7488$\pm0.02$ & 0.9230$\pm$ 0.01 \\
                         & Width    & 2.9884$\pm0.10$ & 2.4272$\pm0.10$ & 2.3726$\pm0.10$ & 2.4849$\pm0.09$ & 3.9121$\pm$0.20 \\
\multirow{2}{*}{Rapid}   & Coverage & 0.8041$\pm0.01$ & NA     & 0.6910$\pm0.01$ & 0.7218$\pm0.01$ &0.9187$\pm$0.02\\
                         & Width  & 3.9703$\pm0.17$ & NA     & 2.9544$\pm0.11$ & 3.1969$\pm0.15$ & 5.9236 $\pm$ 0.39\\
\bottomrule
\end{tabular}
\end{subtable}
\vspace{0.75em} 

\begin{subtable}[t]{\textwidth}
\centering
\caption{Downstream shift coverage and width}
\begin{tabular}{llccccc}
\toprule
\textbf{Shift} & \textbf{Metric} & \textbf{SR$_{a,b}$ (c=d=0.01)} & \textbf{SR$_{a,b}$ (c=0.01,d=0.05)} & \textbf{SC} & \textbf{WSC} &\textbf{CC} \\
\midrule
\multirow{2}{*}{Gradual} & Coverage & 0.9309$\pm0.02$ & 0.9133$\pm0.02$ & 0.8695$\pm0.03$ & 0.8748$\pm0.02$ & 0.9776$\pm$ 0.01 \\
                         & Width    & 2.4274$\pm0.12$ & 2.3738$\pm0.12$ & 2.0143$\pm0.06$ & 2.0472$\pm0.06$ & 3.1639$\pm$ 0.14 \\
\multirow{2}{*}{Rapid}   & Coverage & 0.8981$\pm0.01$ & NA     & 0.8419$\pm0.02$ & 0.8506$\pm0.02$ &0.9760$\pm$ 0.01\\
                         & Width    & 2.4878$\pm0.11$ & NA     & 2.1035$\pm0.08$ & 2.1498$\pm0.08$ & 4.3712$\pm$0.43 \\
\bottomrule
\end{tabular}
\end{subtable}

\end{table}
Our method achieves higher coverage than the other methods (besides CC), which suffer degradation under distributional shifts, particularly under upstream. For example, our method drops from 0.84 to 0.80 coverage vs. SC dropping from 0.73 to 0.69 for gradual vs. rapid upstream shifts. These coverage improvements are at the cost of wider intervals and occasional abstention; this trade-off is worthwhile when attribution and robustness are prioritized. We note that selecting larger $c$ (resp. $d$) increases the sensitivity of the method to upstream (resp. downstream shift), resulting in abstention under rapid shifts. This is intentional, with abstention signaling that retraining may be necessary for the given stage; black-box methods lack this ability, as they cannot isolate which part of the model fails, resulting in unnecessary retraining.

Thus, $c$ and $d$ control sensitivity: by selecting larger or smaller $c$ and $d$, practitioners can balance robustness and abstention at each stage, offering flexibility beyond traditional conformal approaches. 
We note that while CC meets the coverage level here, it severely over-covers in all cases, particularly under downstream shifts and simpler settings (\Cref{easysettings}). Below we provide a few more comparisons with CC.

\subsubsection{Comparison with Cascaded Confidence}
\label{cccomp}
We compare against the non-adaptive method of \citet{gong2023confidence}, which we denote CC. CC uses a similar error decomposition but applies a direct upper-bound, with an optional K-means clustering step to select the interval width; we compare against this K-means variant. CC has two notable drawbacks: (i) the K-means clustering step forfeits coverage guarantees, and (ii) it tends to over-cover with wider intervals. We include CC throughout the tables and figures in the Appendix, but we provide equivalent figures for \Cref{fig:nonadaptive_cov_width} with CC here (\Cref{fig:fig2remake}). CC severely over-covers across settings. Under rapid shifts it attains higher coverage than the other methods (especially upstream; see the following section), but this reflects uniformly wide intervals rather than calibration, as its widths remain large even in easy regimes, with no tolerance parameter allowing for tightening, unlike our method.
\begin{figure}
    \centering
    \begin{subfigure}[b]{0.31\textwidth}
        \centering
        \includegraphics[width=\linewidth]{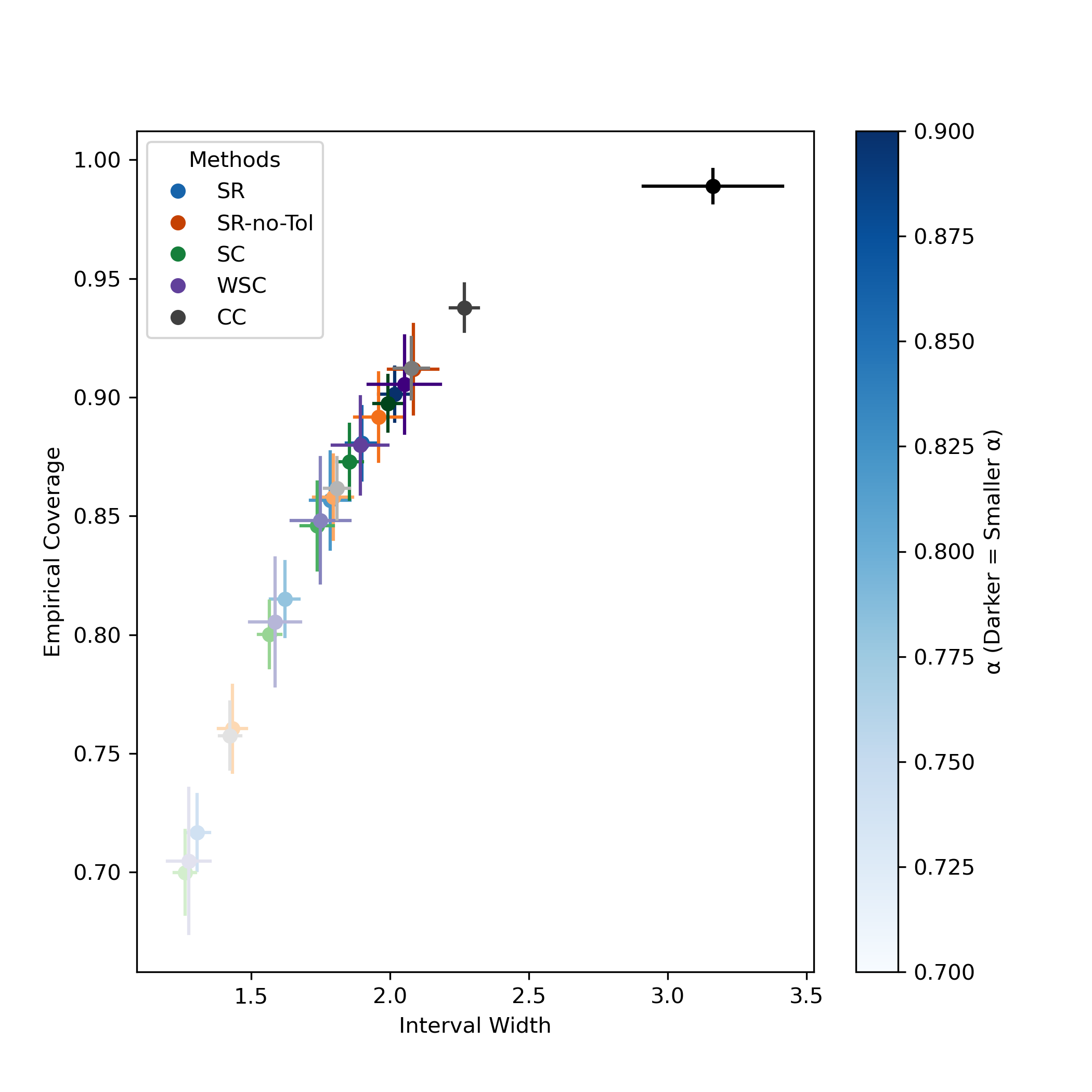}
        \caption{Average coverage vs. width under iid data}
        \label{fig:cciid}
    \end{subfigure}
    \hfill
    \begin{subfigure}[b]{0.31\textwidth}
        \centering
        \includegraphics[width=\linewidth]{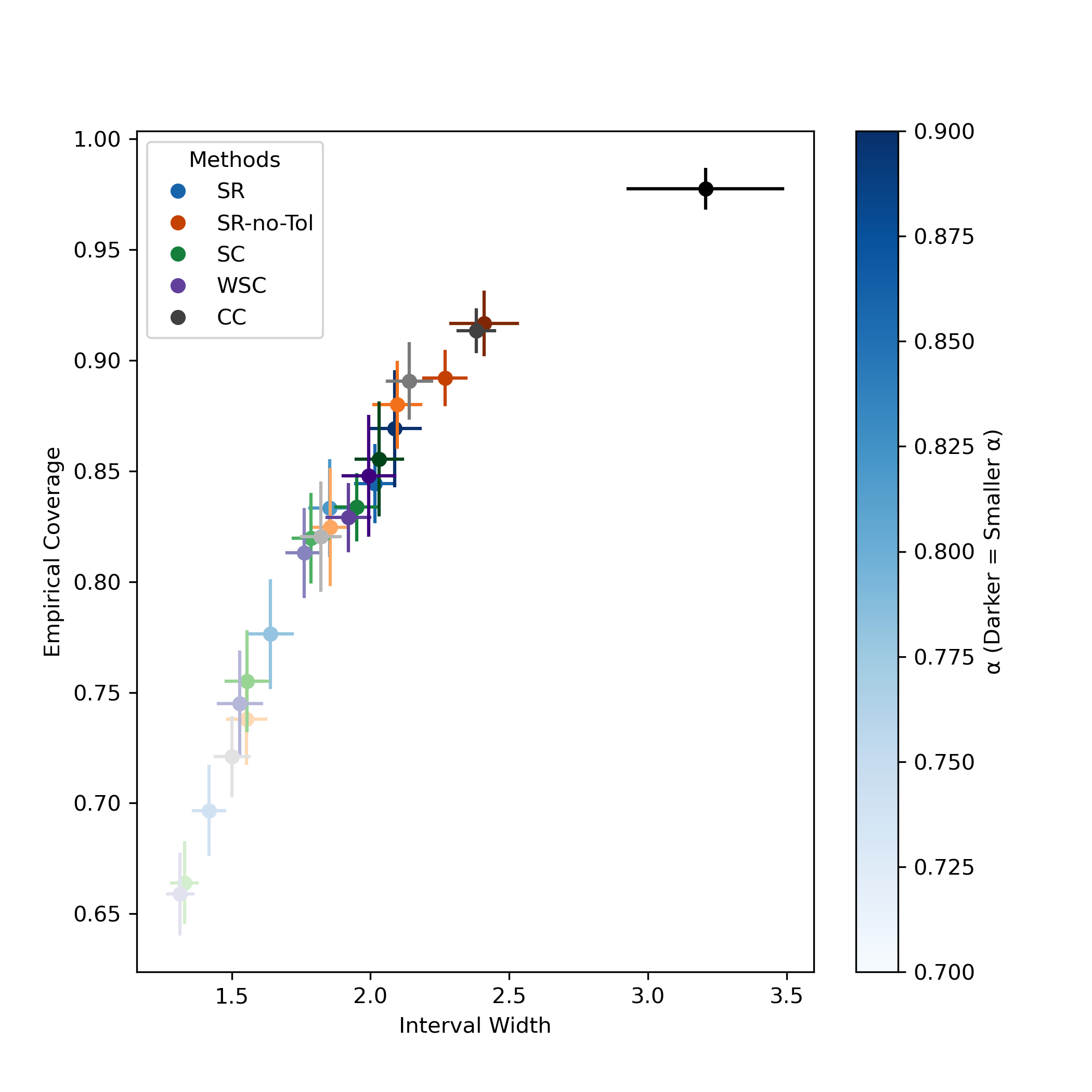}
        \caption{Average coverage vs. width under gradual downstream shifts}
        \label{fig:ccgrad}
    \end{subfigure}
    \hfill
    \begin{subfigure}[b]{0.31\textwidth}
        \centering
        \includegraphics[width=\linewidth]{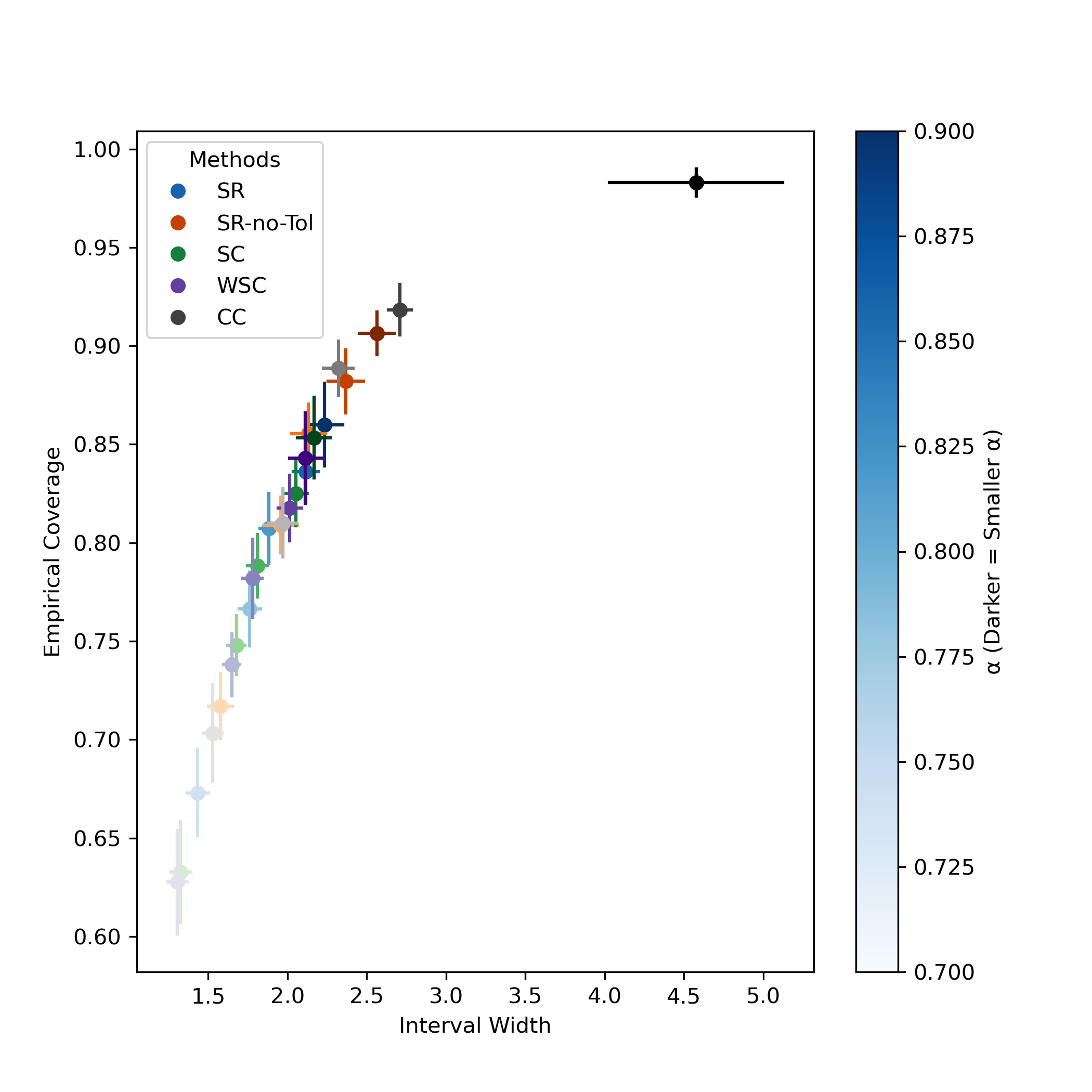}
        \caption{Average coverage vs. width under rapid downstream shifts}
        \label{fig:ccrapid}
    \end{subfigure}
    \caption{Average coverage vs. width for $\alpha\in\{0.1,0.13,0.15,0.2,0.3\}$ for iid and downstream shift settings with non-adaptive method}
    \label{fig:fig2remake}
\end{figure}
\subsubsection{Upstream Shift Figures}
\label{upstreamshiftfig}
We provide upstream gradual and rapid shift figures analogous to \Cref{fig:nonadaptive_cov_width}. We see similar behavior for which our methods with $\tau=0.05$ and $\tau = 0$ perform better in coverage relative to SC and WSC and similarly to CC at larger $\alpha$, but with a larger drop in coverage compared to the downstream shifts. This is quite intuitive as an upstream shift compounds through two models and has a larger impact on performance. Importantly, for gradual shifts it appears $\tau=0.05$ is sufficient for moderate coverage performance, while being quite efficient, if being slightly below nominal is acceptable. We note that under rapid shifts for lower $\alpha$, we see WSC does begin to outperform $\tau=0.05$, but not $\tau = 0$. On the other hand $\tau = 0$ is able to maintain coverage at larger values of $\alpha$ even under severe shifts at the cost of additional width. This demonstrates the ability of $\tau$ to customize the approach's conservativeness. CC performs well under upstream shift but, as elsewhere, greatly over-covers at lower $\alpha$, which is a consistent over-coverage pattern rather than genuine calibration.
\begin{figure}
    \centering
    \begin{subfigure}[b]{0.45\textwidth}
        \centering
        \includegraphics[width=\linewidth]{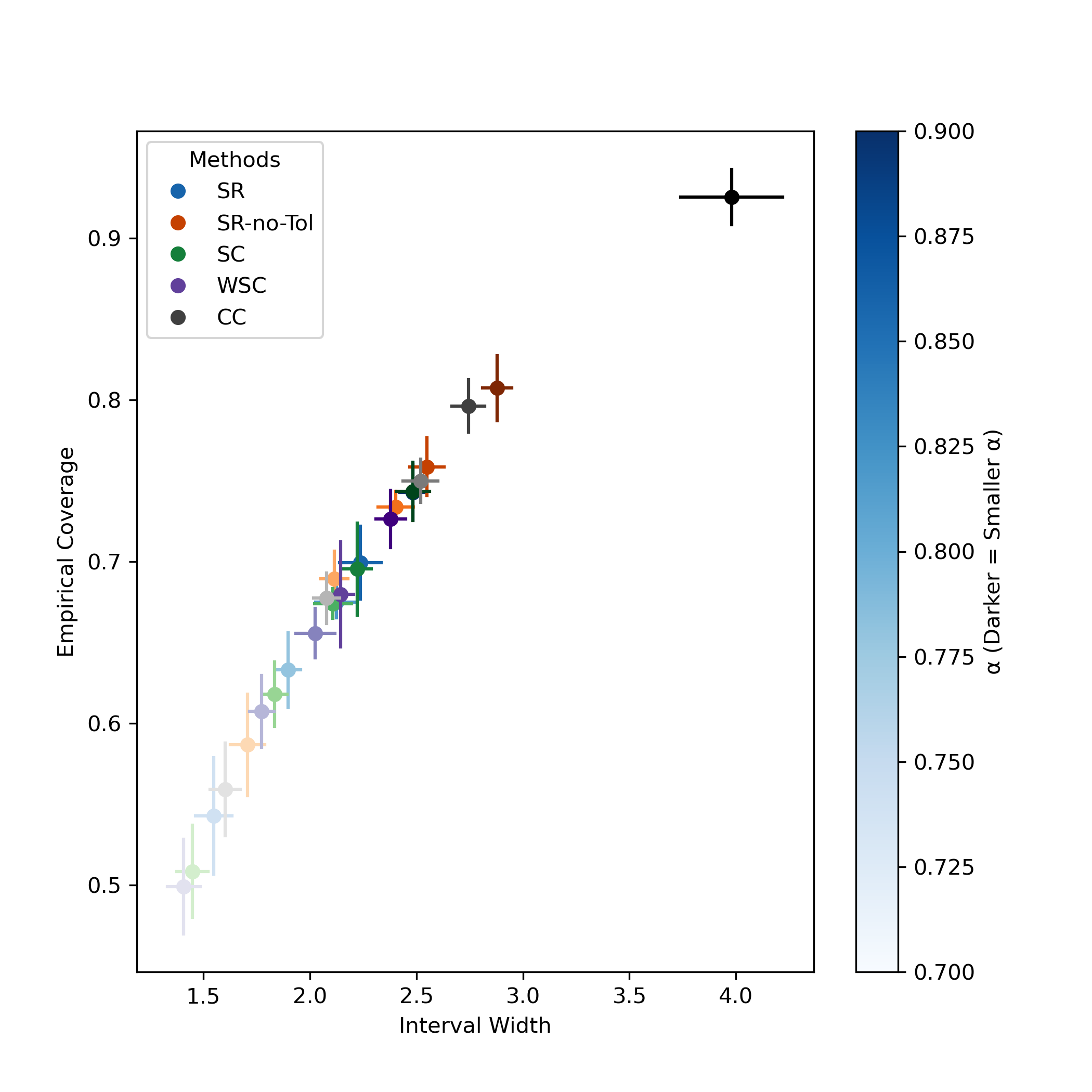}
        \caption{Average coverage vs. width under gradual upstream shifts}
        \label{fig:nonadaptupgrad}
    \end{subfigure}
    \hfill
    \begin{subfigure}[b]{0.45\textwidth}
        \centering
        \includegraphics[width=\linewidth]{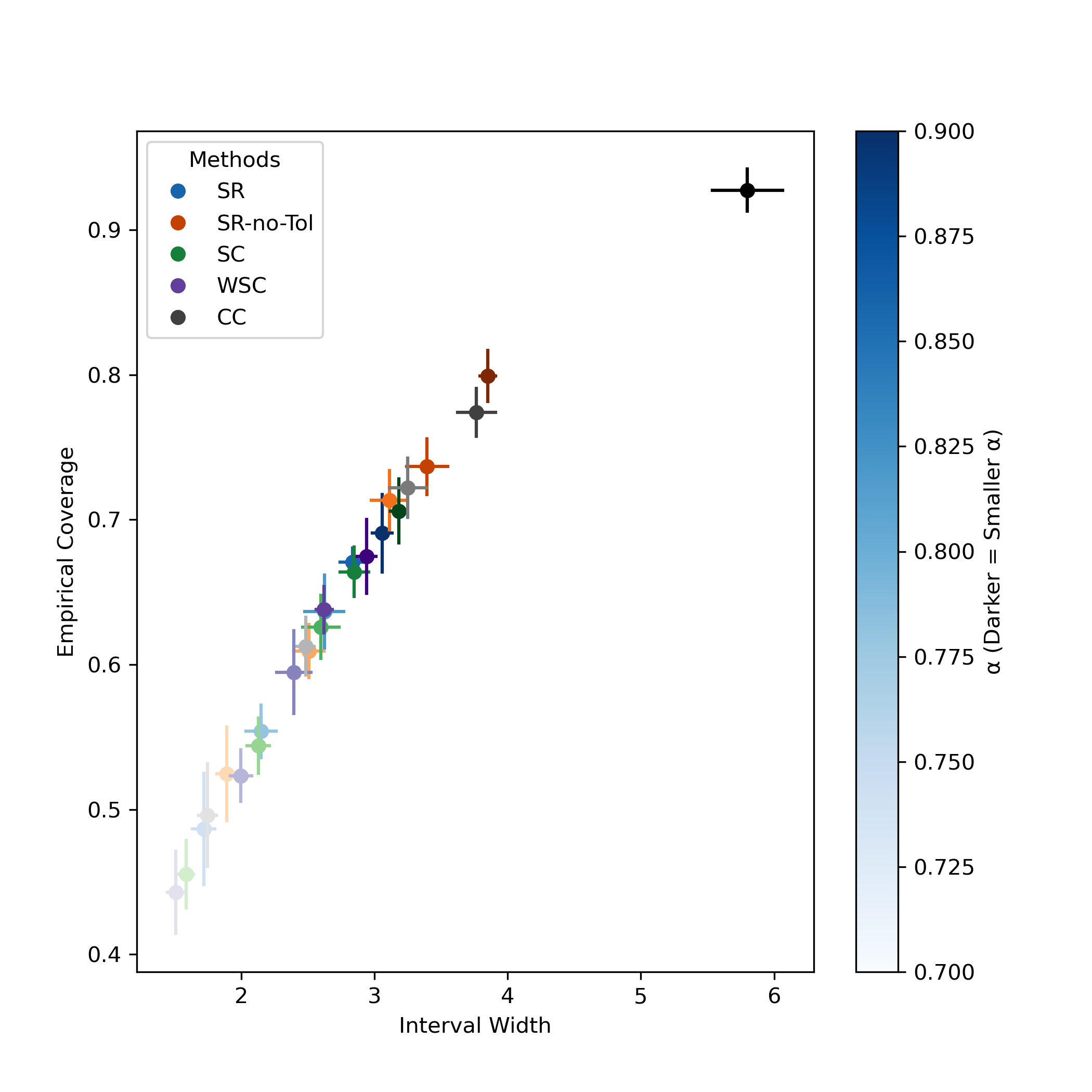}
        \caption{Average coverage vs. width under rapid upstream shifts}
        \label{fig:nonadaptuprap}
    \end{subfigure}
    \caption{Average coverage vs. width for $\alpha\in\{0.1,0.13,0.15,0.2,0.3\}$ for upstream shift settings with non-adaptive method}
    \label{upstreamfig}
\end{figure}
 
\subsubsection{Coverage under easy settings}
\label{easysettings}
We implement both Definition~\ref{pred1} and Definition~\ref{pred2} separately and test them on simple IID data with linear relationships between $w,x,y$. We also include results for IID synthetic data featuring a nonlinear relationship in which the upstream task is a binary classification task of diagnosing a patient for a disease given their health (using logistic regression), and the downstream model predicts the cost of treatment given the upstream probabilistic prediction (using XGBoost). Additionally, we implement these prediction interval methods in a \( \phi \)-mixing setting which we provide guarantees for in \Cref{mixingextension} where the data is generated by a stationary AR(1) process with bounded uniform noise (known to be a \( \phi \)-mixing process \citep{athreya1986note}) and linear relationships between \( w, x, y \). We report the width and coverage of the methods in \Cref{tab:combinedtablenonadaptive}.

\begin{table}[ht]
\label{tab:combinedtablenonadaptive}
\centering
\caption{Average coverage and width with standard deviation across 50 runs for the scaling components method with fixed $c = d= \alpha  =0.1$ and FWER as in Definition~\ref{pred1} (0 tolerance), which we denote as $\text{SR}_{a,b}$, the separate quantiles method with $c = d= 0.05$ as in Definition~\ref{pred2}, denoted as $\text{SR}_{c,d}$, SC, and WSC. We examine these under linear and non-linear relationships and non-IID data}
\centering

\begin{tabular}{llccccc}
\toprule
\textbf{Shift} & \textbf{Metric} & $\textbf{SR}_{a,b}$ & $\textbf{SR}_{c,d}$ & \textbf{SC} & \textbf{WSC} &\textbf{CC}\\
\midrule
\multirow{2}{*}{Linear} & Coverage & 0.9439$\pm0.02$
 & 0.9492$\pm0.01$
 & 0.8971$\pm0.01$  & 0.9062$\pm0.02$ & 0.9889$\pm$ 0.01\\
                         & Width    &  2.3555$\pm0.13$
 & 2.3930$\pm0.07$ & 1.9924$\pm0.06$ & 2.0584$\pm 0.13$ &3.1635$\pm$0.25\\
\multirow{2}{*}{Non-linear}   & Coverage & 0.9416$\pm 0.01$ & 0.9794$\pm0.01$     & 0.8969$\pm0.01$ & 0.9300$\pm0.02$ & 0.9998$\pm$0.01\\
                         & Width    & 4.6343$\pm0.14$ & 5.2007$\pm0.06$     & 4.2179$\pm0.08$&4.5089$\pm 0.16$ & 7.1755 $\pm$ 0.42\\
\multirow{2}{*}{Non-IID (linear)}   & Coverage & 0.9561$\pm0.02$ & 0.9886$\pm0.01$     & 0.8997$\pm0.02$ & 0.9067$\pm0.03$ & 0.9335 $\pm$ 0.01\\
                         & Width    & 3.1299$\pm0.20$ & 3.6106$\pm0.12$     & 2.6749$\pm0.08$&2.7304$\pm0.15$ & 2.9042$\pm$ 0.09
\\
\bottomrule 
\end{tabular}
\end{table}
We observe that both $\text{SR}_{a,b}, \text{SR}_{c,d}$ are conservative for these settings, producing wider intervals but attaining the desired \( \alpha \) coverage guarantee, while CC greatly over-covers in all cases by a significant margin. Thus, our methods tend to over-cover in easier settings such as IID data and stationary \( \phi \)-mixing conditions because FWER hypothesis testing requires an error rate below $\alpha$ whereas for other methods simply being close is sufficient. Note that $\text{SR}_{a,b}$ is without tolerance. Thus, as we see in the ablation studies below, over-coverage can be easily remedied if desired by including a tolerance for the hypothesis test itself, which CC is unable to do.

\subsubsection{Hyperparameter Study}
\label{nonadaptiveablation}
We vary the tolerance parameter $\tau$ which alters the hypothesis test, obtaining $\text{Bin}(l, \alpha + \tau)$ to demonstrate that when comparing our method with some tolerance, the width and coverage even under the IID settings are comparable and do not over-cover.
\label{ablation_non_adaptive}
\begin{table}[ht]
\centering
\caption{Coverage and average width ($\pm$ std) across 50 runs for varying $\tau \in \{0.01, 0.03, 0.05\}$ for $\alpha=0.1$. Only $\text{SR}_{a,b}$ depends on $\tau$, thus the other methods do not report values for each $\tau$}
\label{tab:tau_sweep_results}
\begin{tabular}{llccc}
\toprule
\textbf{Method} & \textbf{Metric} & $\tau=0.01$ & $\tau=0.03$ & $\tau=0.05$ \\
\midrule
\multirow{2}{*}{$\text{SR}_{a,b}$} 
    & Coverage & 0.9240$\pm$0.01 & 0.9150$\pm$0.02 & 0.9019$\pm$0.02 \\
    & Width    & 2.1750$\pm$0.09 & 2.0922$\pm$0.08 & 2.0192$\pm$0.11 \\
\multirow{2}{*}{$\text{SR}_{c,d}$} 
    & Coverage & 0.9516$\pm$0.01 & -- & -- \\
    & Width    & 2.4032$\pm$0.05 & -- & -- \\
\multirow{2}{*}{SC} 
    & Coverage & 0.8981$\pm$0.01 & -- & -- \\
    & Width    & 2.0051$\pm$0.04 & -- & -- \\
\multirow{2}{*}{WSC} 
    & Coverage & 0.9015$\pm$0.02 & -- & -- \\
    & Width    & 2.0320$\pm$0.13 & -- & -- \\
\multirow{2}{*}{CC} 
    & Coverage & 0.9889$\pm$ 0.01 & -- & -- \\
    & Width    & 3.1635$\pm$0.25 & -- & -- \\
\bottomrule
\end{tabular}
\end{table}
We observe that even low values of the tolerance parameter, such as $\tau = 0.03$, are sufficient to make $\text{SR}_{a,b}$ similar to the black-box methods in the IID setting. This suggests that $\text{SR}_{a,b}$ is not simply over-covering, but rather enforcing stricter criteria for valid prediction intervals than the black-box approaches. These stricter requirements likely contribute to its improved performance under distributional shifts. Therefore, if one seeks tighter but potentially less robust intervals, adjusting the tolerance parameter $\tau$ provides a principled way to trade-off between coverage robustness and interval sharpness, however experimentally, unless stated otherwise, we use $\tau = 0$.

We also vary the $\delta$ parameter, which controls the FWER rejection threshold, which for smaller values encourages more sensitivity of the method overall to abstain from producing an interval; in a sense it has a similar effect to $c, d$ parameters but collectively for both stages at once rather than a stage-wise sensitivity.
\begin{table}[ht]
\centering
\caption{Coverage and average width ($\pm$ std) across 50 runs for varying $\delta \in \{0.1, 0.3, 0.5, 0.7, 0.9\}$. Only $\text{SR}_{a,b}$ depends on $\delta$}
\label{tab:delta_sweep_results}
\begin{tabular}{llccccc}
\toprule
\textbf{Method} & \textbf{Metric} & $\delta=0.1$ & $\delta=0.3$ & $\delta=0.5$ & $\delta=0.7$ & $\delta=0.9$ \\
\midrule
\multirow{2}{*}{$\text{SR}_{a,b}$} 
    & Coverage & NA & NA & 0.8969$\pm$0.08 & 0.9294$\pm$0.02 & 0.9268$\pm$0.02 \\
    & Width    & NA & NA & 2.2231$\pm$0.09 & 2.2195$\pm$0.10 & 2.1867$\pm$0.11 \\
\multirow{2}{*}{$\text{SR}_{c,d}$} 
    & Coverage & 0.9504$\pm$0.01 & -- & -- & -- & -- \\
    & Width    & 2.3919$\pm$0.05 & -- & -- & -- & -- \\
\multirow{2}{*}{SC} 
    & Coverage & 0.8992$\pm$0.01 & -- & -- & -- & -- \\
    & Width    & 1.9987$\pm$0.04 & -- & -- & -- & -- \\
\multirow{2}{*}{WSC} 
    & Coverage & 0.9082$\pm$0.02 & -- & -- & -- & -- \\
    & Width    & 2.0603$\pm$0.14 & -- & -- & -- & -- \\
\multirow{2}{*}{CC} 
    & Coverage & 0.9889$\pm$ 0.01 & -- & --& -- & -- \\
    & Width    &3.1635$\pm$0.25 & -- & -- & -- & -- \\
\bottomrule
\end{tabular}
\end{table}

\subsection{Adaptive Experiments}
We include additional experiments using additional synthetic experiments with covariate shift and real-world stock data to demonstrate the flexibility of our method. Furthermore, we provide visualizations of the scaling parameters to demonstrate the identification of the decomposition. In addition, we perform hyperparameter sweeps over the synthetic dataset described in \Cref{experiments}. We also include comparisons to additional cutting-edge baselines DtACI and ECI, although both methods perform poorly where the former tends to be inefficient and the latter tends to undercover.
\subsubsection{Stocks Dataset}
We consider a dataset consisting of daily closing values of three S$\&$P 500 stocks, AAPL, AMZN, MSFT with the goal of forecasting the AAPL closing value. We do so via a two-stage approach in which the first-stage is an N-BEATS forecasting model that forecasts the three stocks at a 6-day horizon, then the second-stage is ridge regression that predicts AAPL given the forecasted values to correct for error from the first stage. We rescale the data and compare against ACI, Conformal PID, and OCID methods to showcase the robustness of our method. The coverage of these methods is presented in \Cref{stockres}.
\begin{figure}
    \centering
    \begin{subfigure}[b]{0.45\textwidth}
        \centering
        \includegraphics[width=\linewidth]{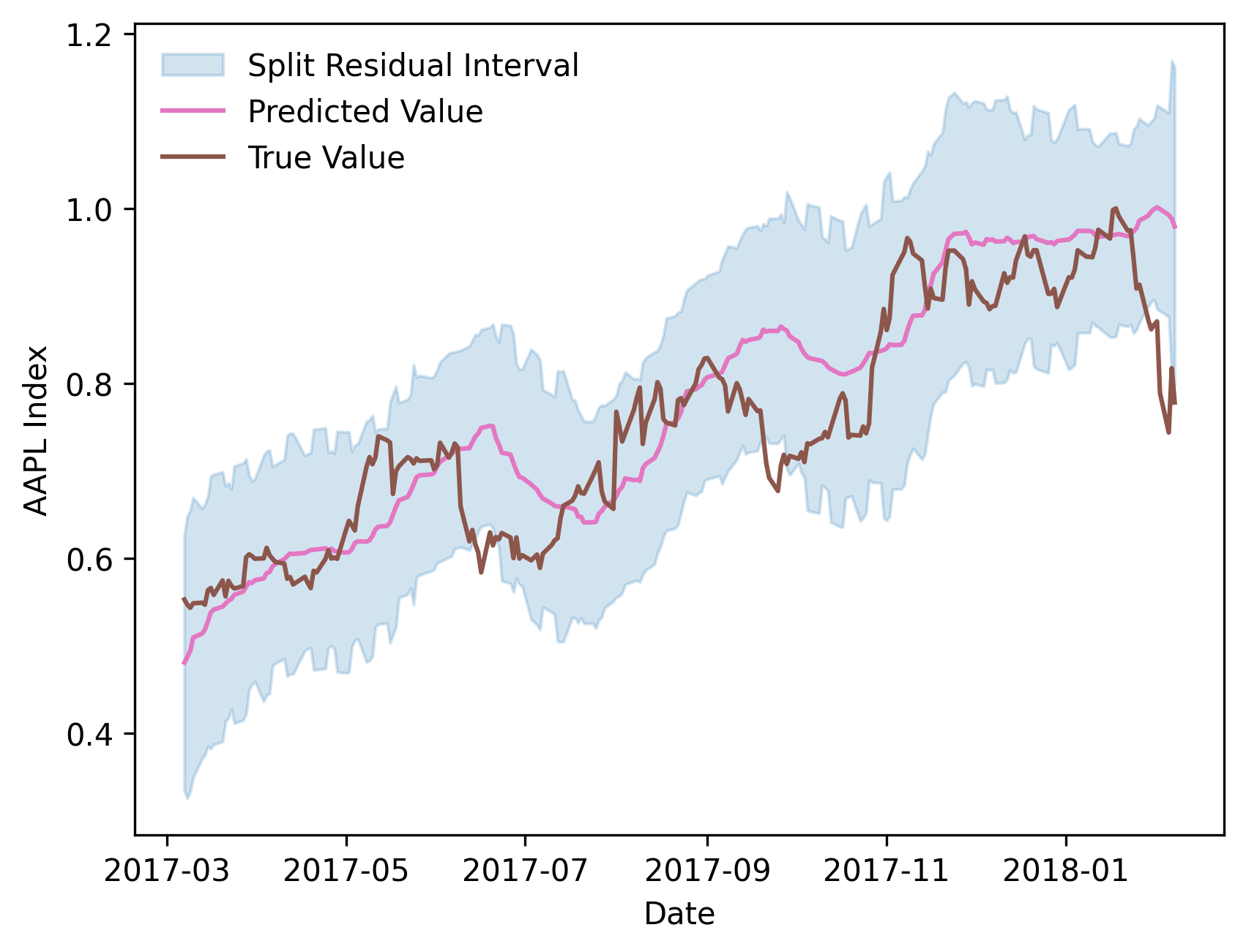}
        \caption{SR prediction intervals from 2017-2018}
        \label{fig:us_stock}
    \end{subfigure}
    \hfill
    \begin{subfigure}[b]{0.45\textwidth}
        \centering
        \includegraphics[width=\linewidth]{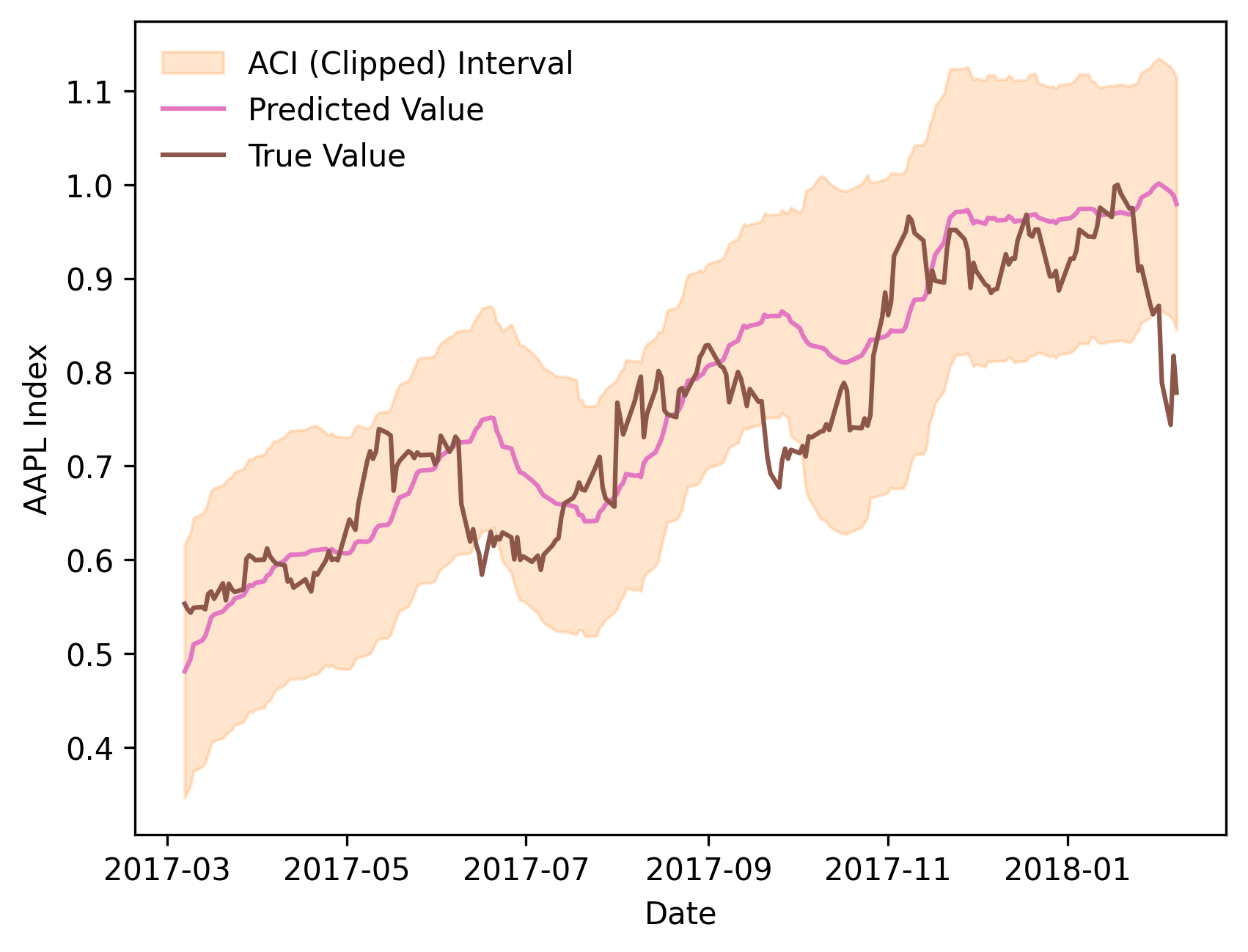}
        \caption{ACI prediction intervals from 2017-2018}
        \label{fig:aci_stock}
    \end{subfigure}
    \begin{subfigure}[b]{0.45\textwidth}
        \centering
        \includegraphics[width=\linewidth]{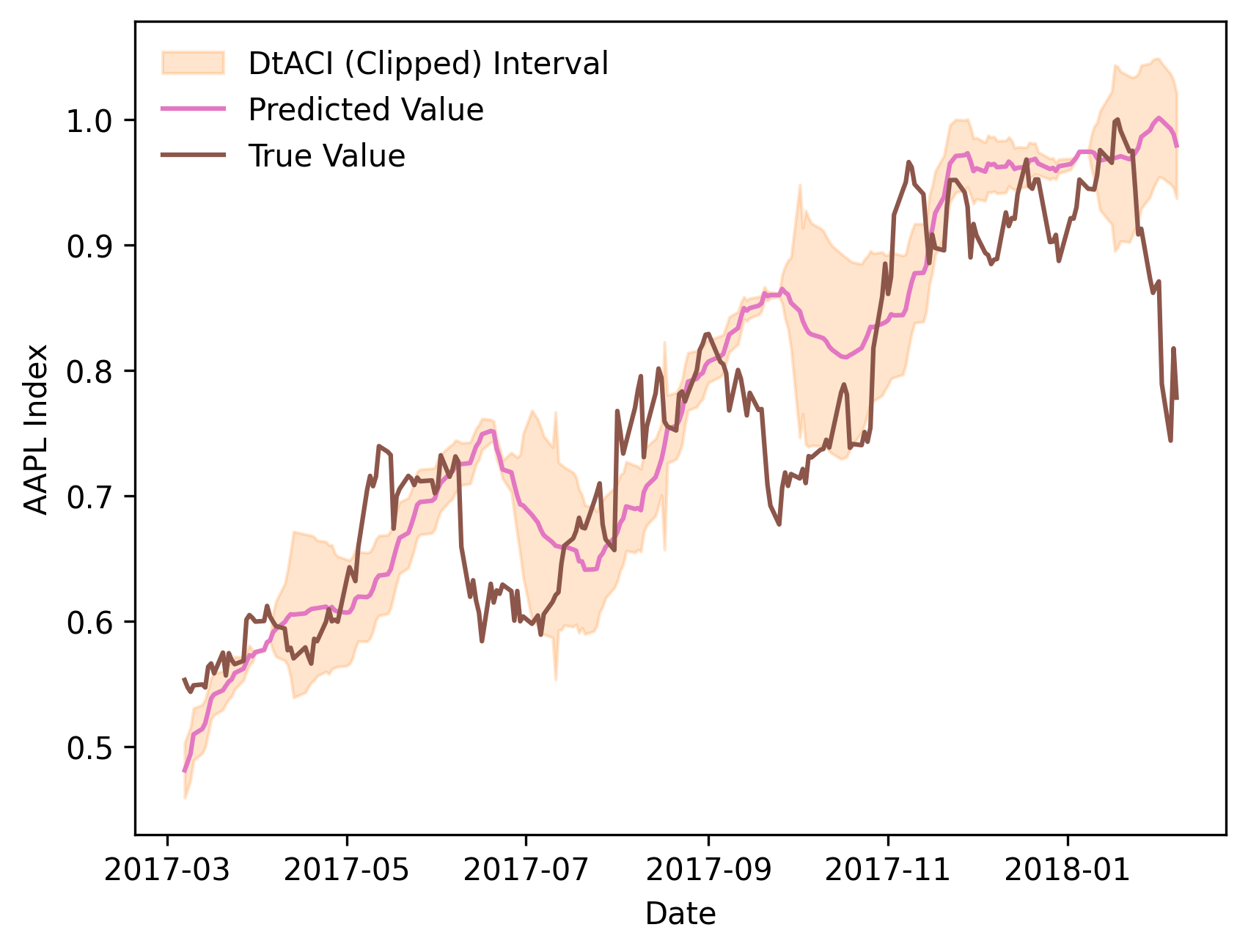}
        \caption{DtACI prediction intervals from 2017-2018}
        \label{fig:dtaci_stock}
    \end{subfigure}
    \hfill
    \begin{subfigure}[b]{0.45\textwidth}
        \centering
        \includegraphics[width=\linewidth]{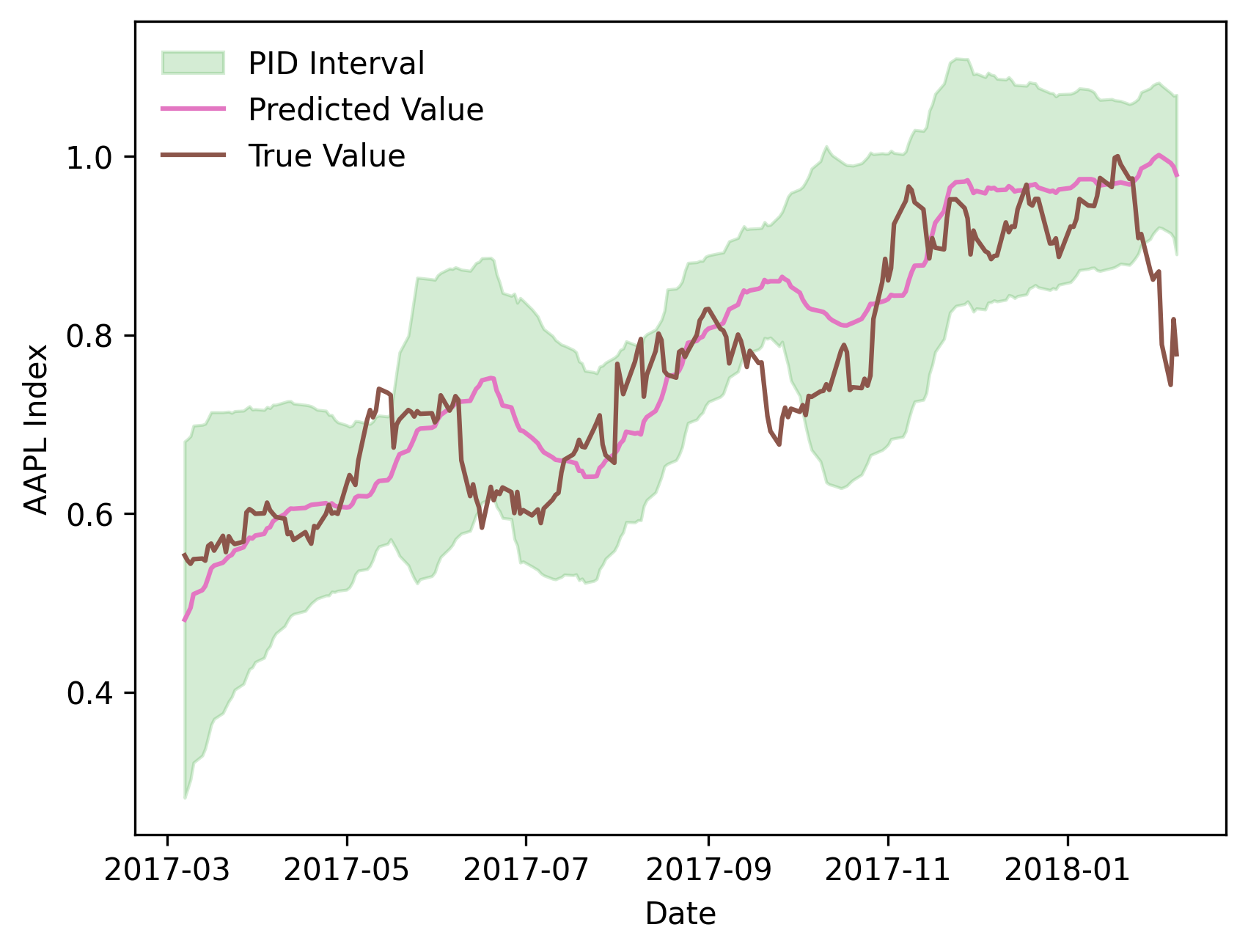}
        \caption{PID prediction intervals from 2017-2018}
        \label{fig:pid_stock}
    \end{subfigure}
    \begin{subfigure}[b]{0.45\textwidth}
        \centering
        \includegraphics[width=\linewidth]{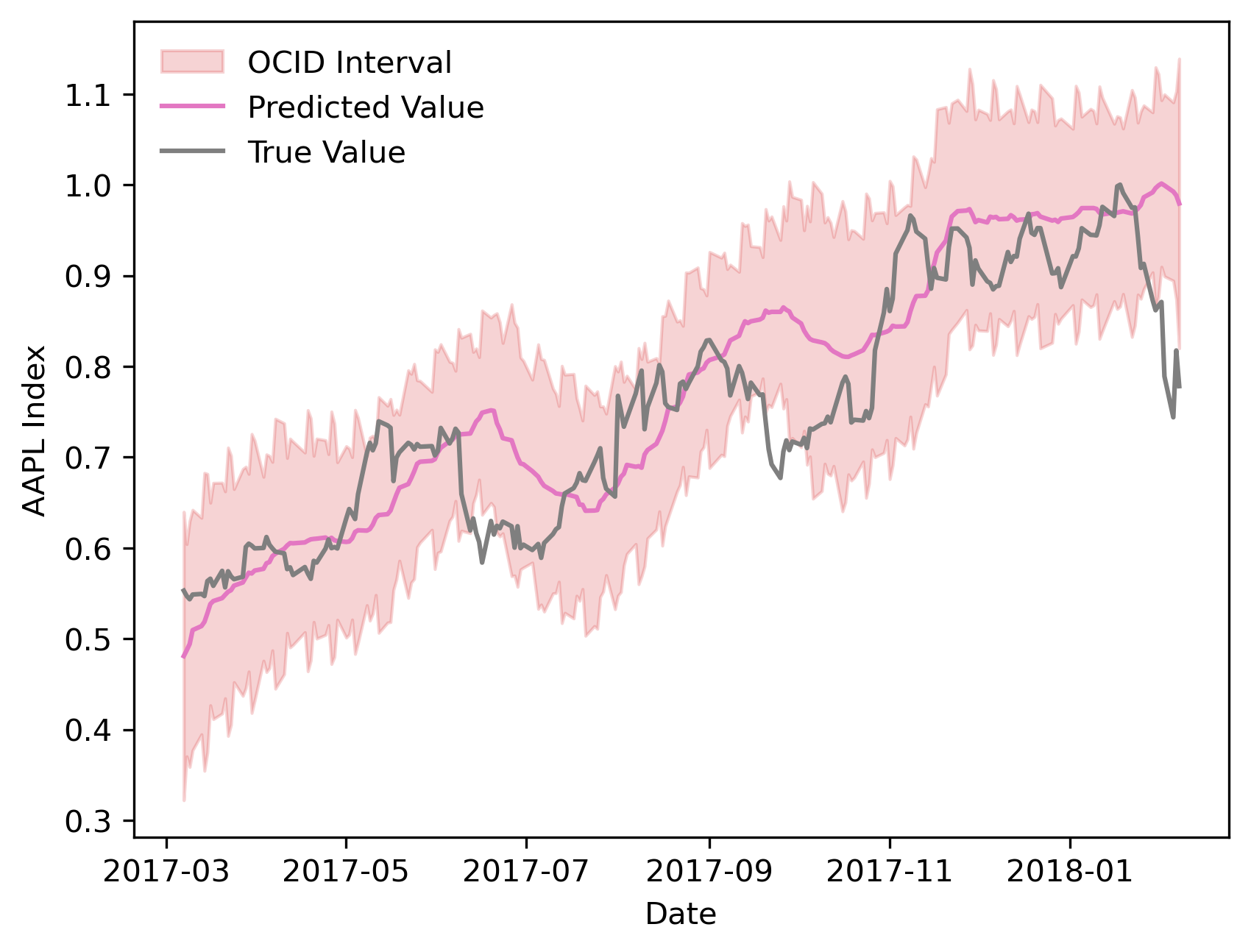}
        \caption{OCID prediction intervals from 2017-2018}
        \label{fig:decay_stock}
    \end{subfigure}
    \hfill
    \begin{subfigure}[b]{0.45\textwidth}
        \centering
        \includegraphics[width=\linewidth]{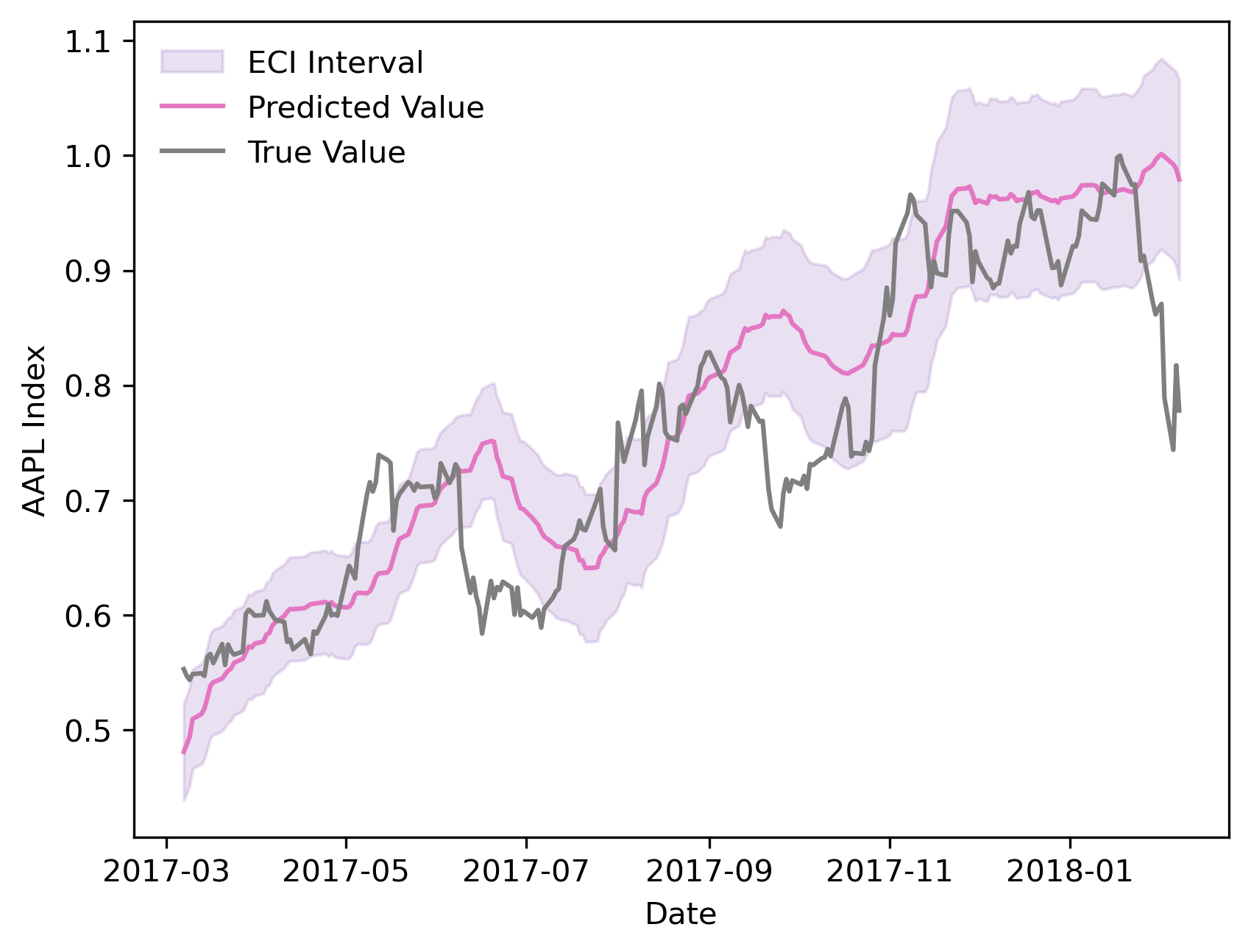}
        \caption{ECI prediction intervals from 2017-2018}
        \label{fig:eci_stock}
    \end{subfigure}
    \caption{Coverage of prediction intervals for $\alpha = 0.1$, $\delta = 0.3$, $\gamma = 0.01$, $\eta = 0.01$, $k=24$ on stock data}
    \label{stockres}
\end{figure}

We observe that our adaptive method provides better coverage than the other baseline methods, particularly when the forecasts perform worse in July and October, seeing a much less dramatic drop in performance while not being overly tight. Our method uses almost the same average width (0.1347) as ACI (0.1344), while the other methods have tighter intervals but significantly worse coverage. We list the results in the table below \Cref{tab:sliding_window_summary}, and visualizations in Appendix Figure~\ref{stockcov}. We see that there are two main sudden distribution shifts, and that in comparison to ACI, our method is able to capture the second distribution shift, while OCID is also able to but has volatile intervals during the stable period prior to the distribution shifts.
\begin{table}[ht]
\centering
\caption{Average and standard deviation of width over time, and minimum average coverage over a sliding window of the last 100 observations for each method on the stocks dataset}
\label{tab:sliding_window_summary}
\begin{tabular}{lccc}
\toprule
\textbf{Method} & \textbf{Avg. Width} & \textbf{Std. Width} & \textbf{Min Coverage} \\
\midrule
SR(adaptive)   & 0.1347 & 0.0216 & 0.89 \\
ACI  & 0.1344 & 0.0193 & 0.86 \\
PID  & 0.1183 & 0.0226 & 0.85 \\
OCID & 0.1240 & 0.0348 & 0.83 \\
ECI & 0.0645 & 0.0166 & 0.52 \\
\bottomrule
\end{tabular}
\end{table}%

\begin{figure}
    \centering
    \begin{subfigure}[b]{0.49\textwidth}
        \centering
        \includegraphics[width=\linewidth]{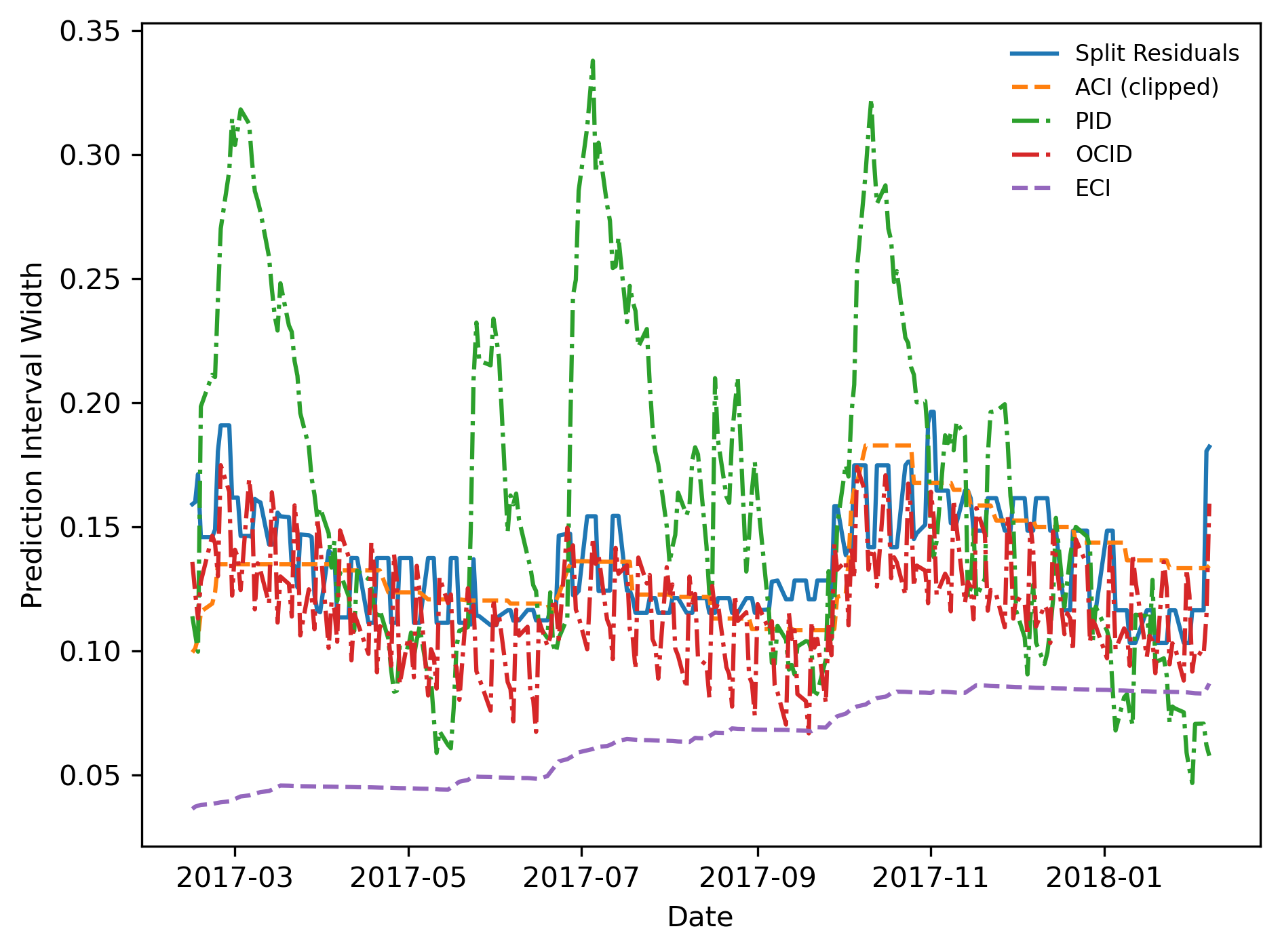}
        \caption{Width of SR, ACI, PID, OCID, ECI }
    \end{subfigure}
    \begin{subfigure}[b]{0.49\textwidth}
        \centering
        \includegraphics[width=\linewidth]{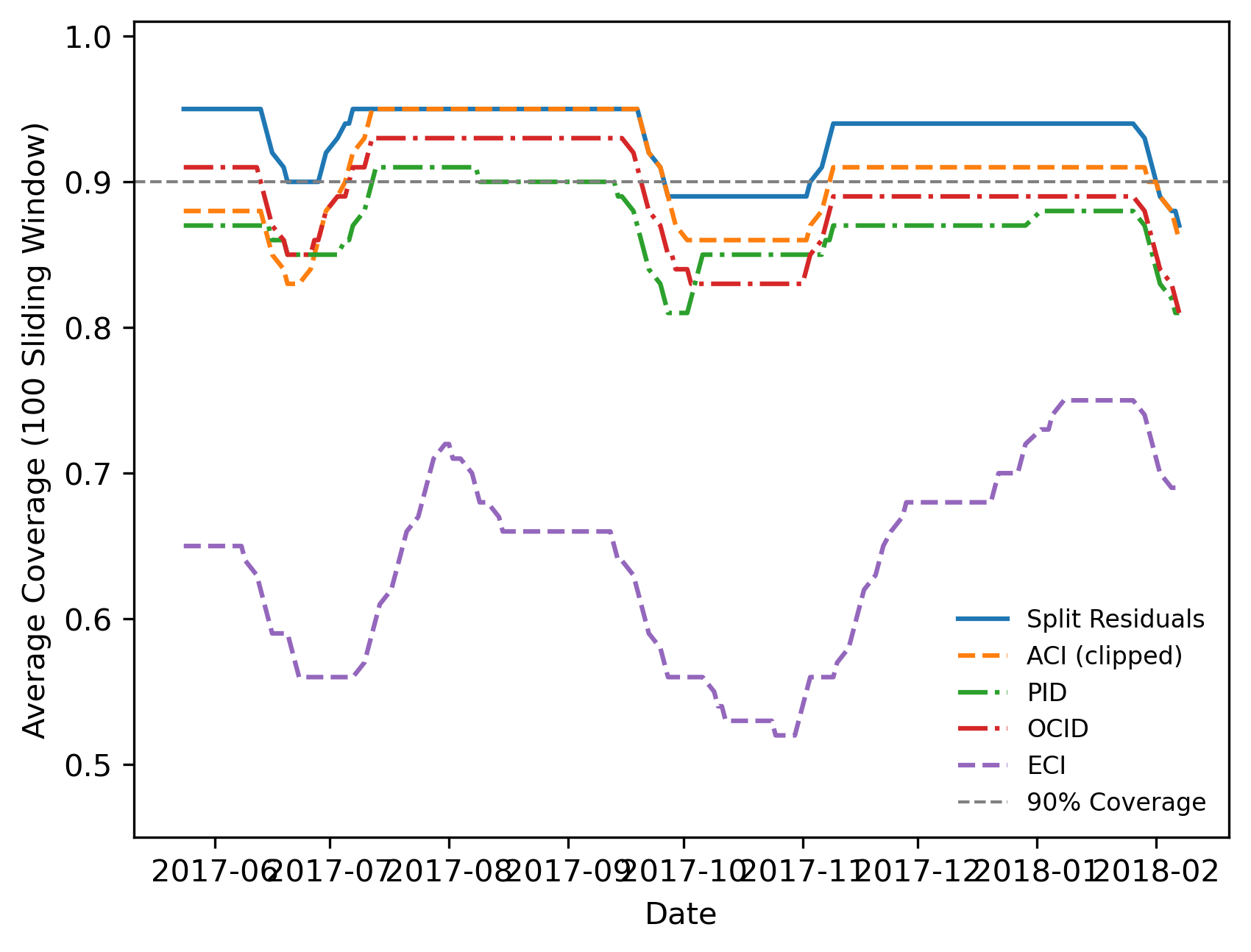}
        \caption{Sliding window coverage of SR, ACI, PID, OCID, ECI}
    \end{subfigure}
    \caption{Performance of various intervals on the stocks dataset for $\alpha = 0.1$, $\delta = 0.3$, $\gamma = 0.01$, $\eta = 0.01$, $k=24$.}
    \label{stockcov}
\end{figure}

\subsubsection{DtACI and ECI}
\label{dtaci}
We also include results on additional baselines, ECI and DtACI. While DtACI is designed for online adaptation, it may be less effective in fixed-horizon forecasting tasks where the lag between prediction and feedback complicates adaptation. Further, ECI focuses on efficiency but with severe shifts may be too optimistic.

First, we discuss results for DtACI, using the method in comparison to the others for the same synthetic dataset with two drastic distribution shifts used in \Cref{adaptiveexp}, with the plots shown in \Cref{fig:widthsynthdatci}. 
\begin{figure}
    \centering
    \begin{subfigure}[b]{0.49\textwidth}
        \centering
        \includegraphics[width=\linewidth]{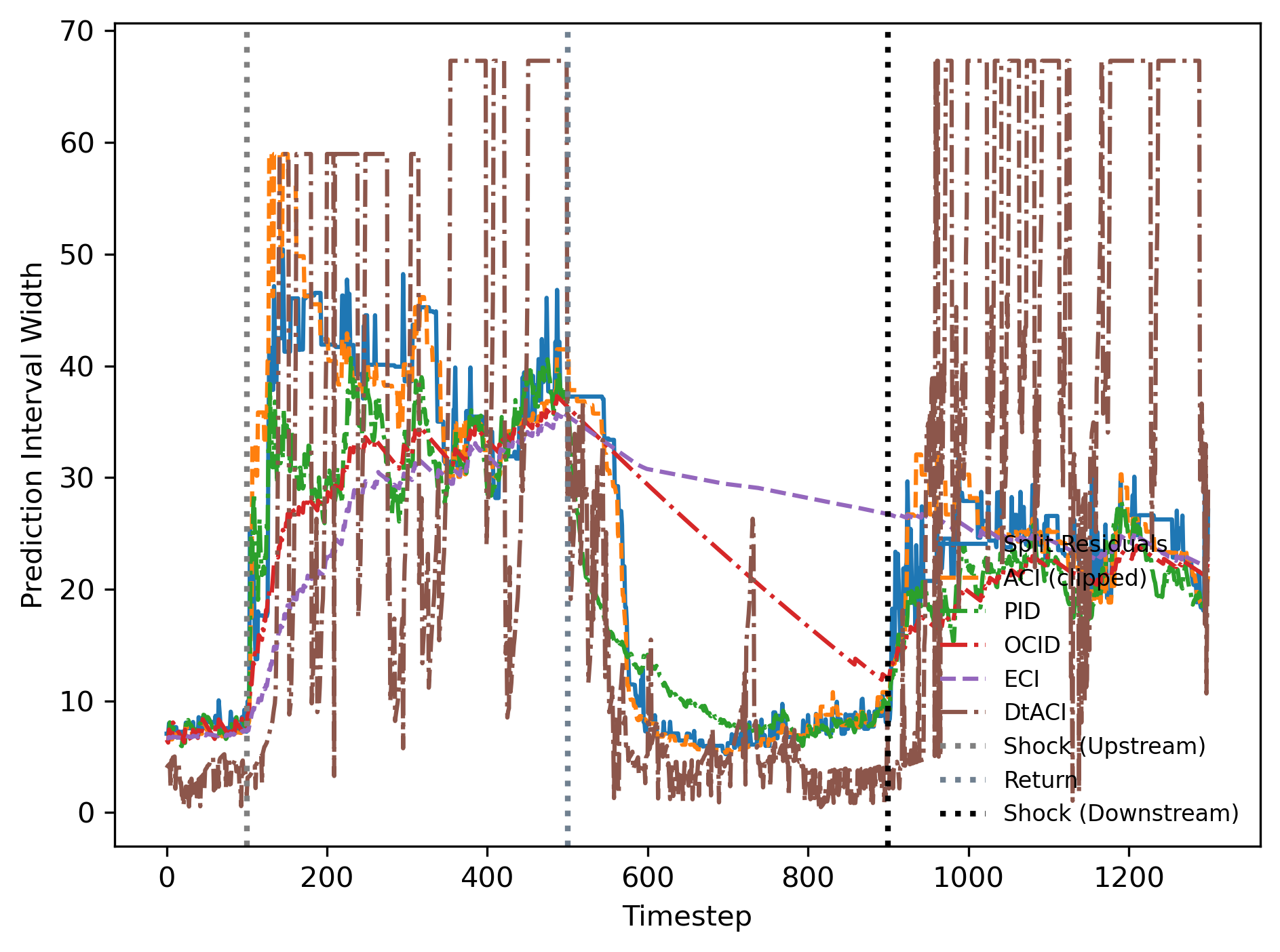}
        \caption{Widths of the intervals for $\alpha = 0.1$, $\delta = 0.1$, $\gamma = 0.01$, $\eta = 0.01$, $k=100$}
        \label{fig:widthsynthdatci}
    \end{subfigure}
    \hfill
    \begin{subfigure}[b]{0.49\textwidth}
        \centering
        \includegraphics[width=\linewidth]{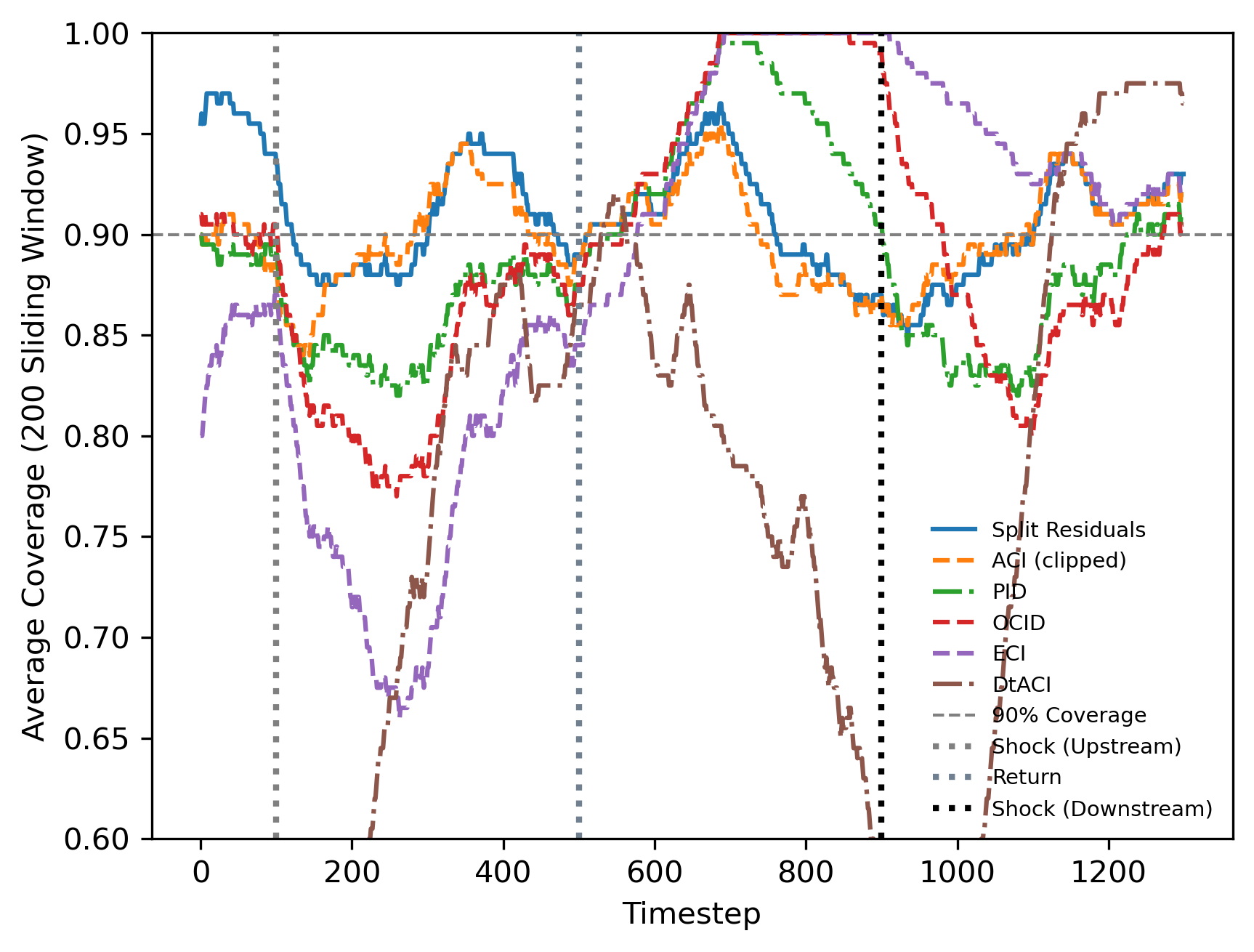}
        \caption{Coverage over the last 200 points. Our method remains more robust, particularly to the upstream distribution shift}
        \label{fig:robustsynthdatci}
    \end{subfigure}
    \caption{Comparison of interval width and coverage robustness under synthetic distribution shifts with DtACI and ECI}
    \label{fig:synth_adaptivedatci}
\end{figure}
We observe that DtACI outputs values of $\alpha_t>1$ which we clip to 1, resulting in very conservative intervals and is extremely unstable. This is most likely due to the fact that it is not designed for forecasting with a horizon and suffers in performance because of that. We see similar behavior for the automobile indicators dataset (\Cref{fig:widthsynthdatcilm}), in which DtACI achieves good coverage but is wider (compared to our method) because it often outputs $\alpha_t= 1$. Interestingly, this does not hold for the stocks dataset, where DtACI produces the tightest intervals but has the worst coverage.
\begin{figure}
    \centering
    \includegraphics[width=0.5\linewidth]{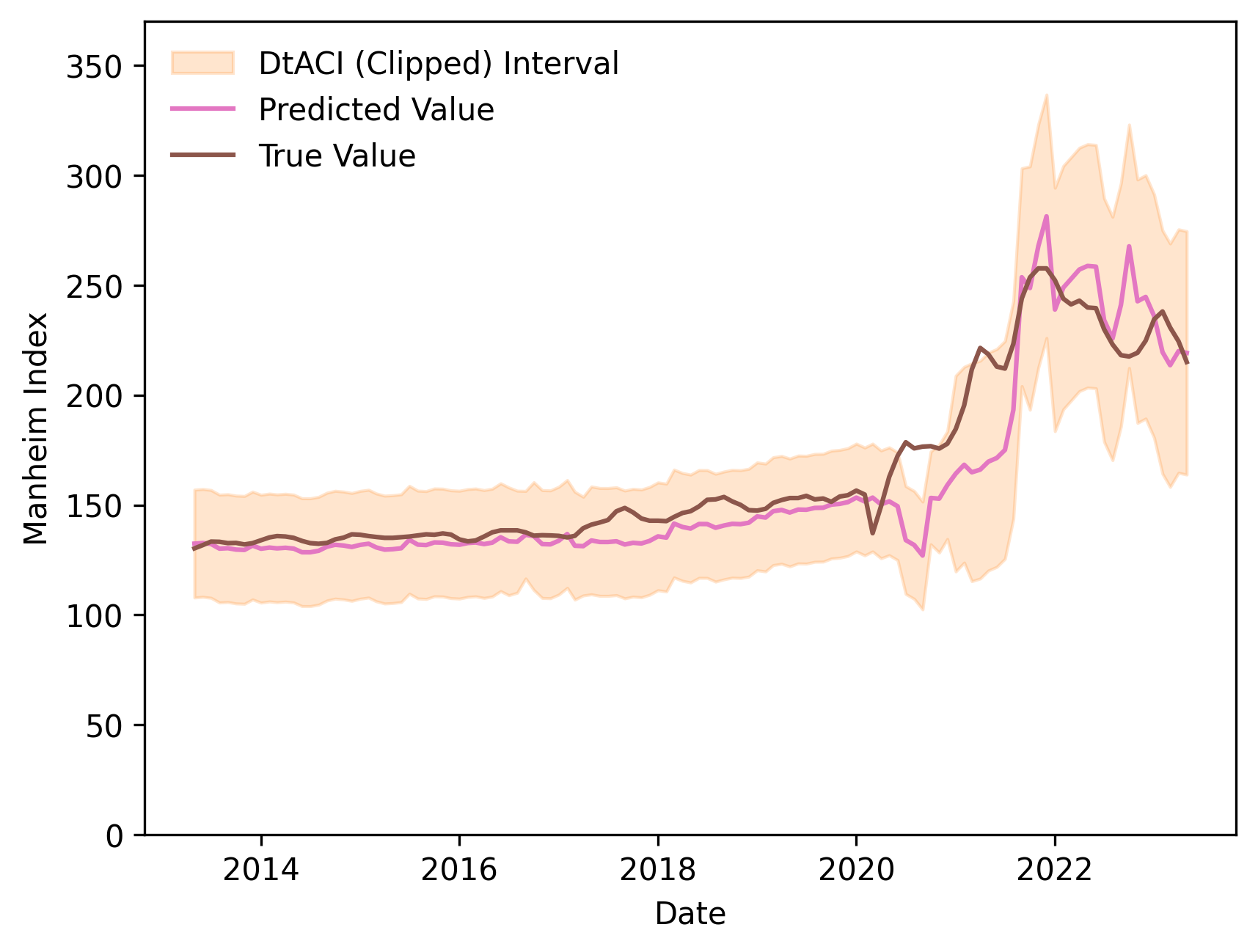}
    \caption{DtACI interval for automobile indicators dataset. It captures the jump in 2020 at the cost of wide intervals throughout}
    \label{fig:widthsynthdatcilm}
\end{figure}
We obtain similar conclusions for ECI, however this method is designed for better efficiency and tends to produce tighter intervals, resulting in consistently being too optimistic and producing much tighter intervals than the other methods at a significant cost to coverage (\Cref{fig:widthsyntheci}).
\begin{figure}
    \centering
    \includegraphics[width=0.5\linewidth]{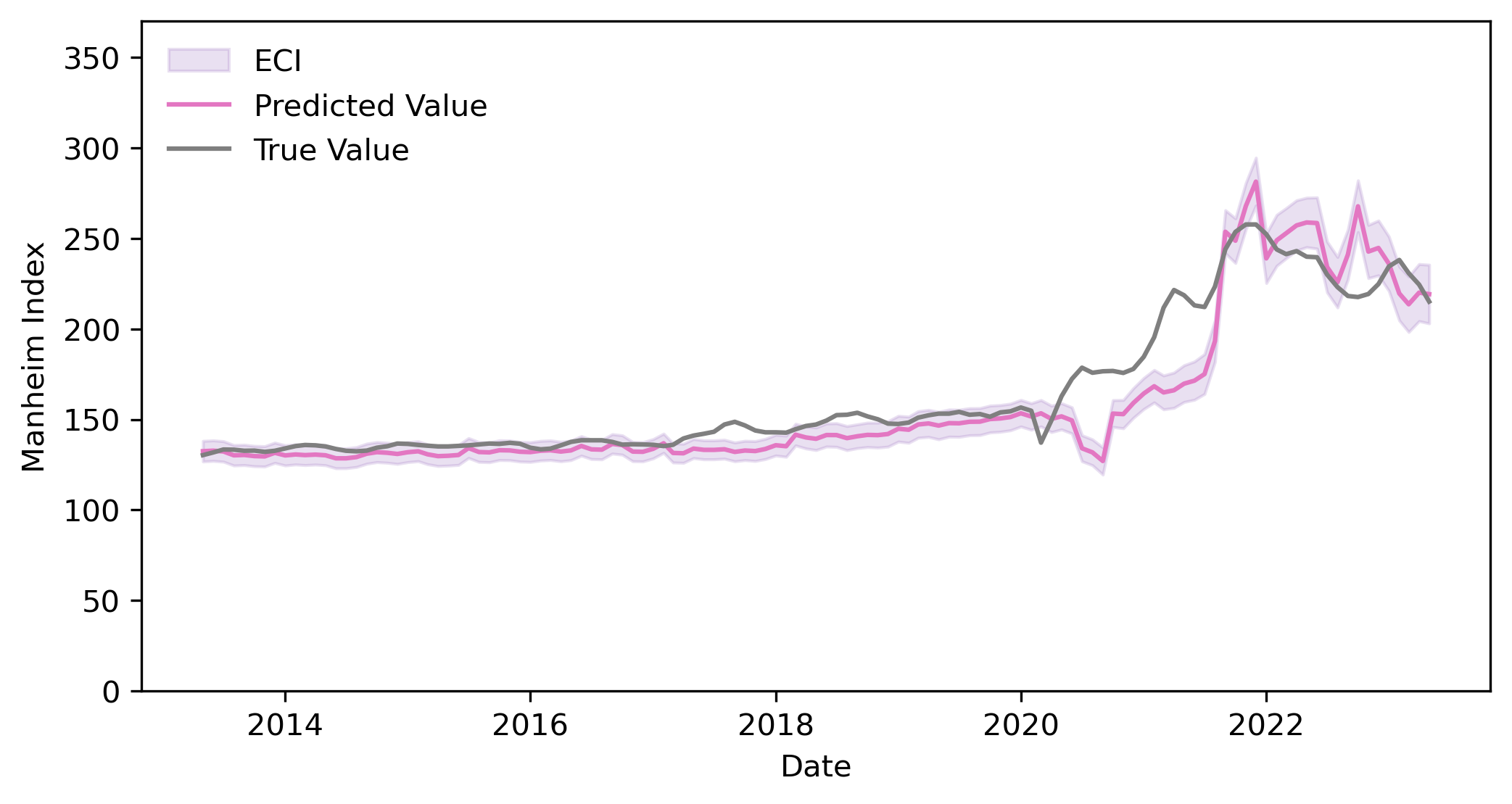}
    \caption{ECI interval for automobile indicators dataset. It is unable to capture the jump, but is by far the tightest of the baselines}
    \label{fig:widthsyntheci}
\end{figure}
\subsubsection{Automobile Visualizations}
\label{extraauto}
We also include additional visualizations of width and coverage for the automobile dataset in \Cref{extraauto_vis}. We note that our average width over the entire range is strong in efficiency (18.957), only behind PID (18.469), ACI (15.696), OCID (15.667), and ECI (7.734) which have poor coverage, whereas DtACI is significantly wider (31.356). We note that OCID's efficiency here also does not appear to be due to good calibration but due to high variance in interval width.
\begin{figure}
    \centering
    \begin{subfigure}[b]{0.45\textwidth}
        \centering
        \includegraphics[width=\linewidth]{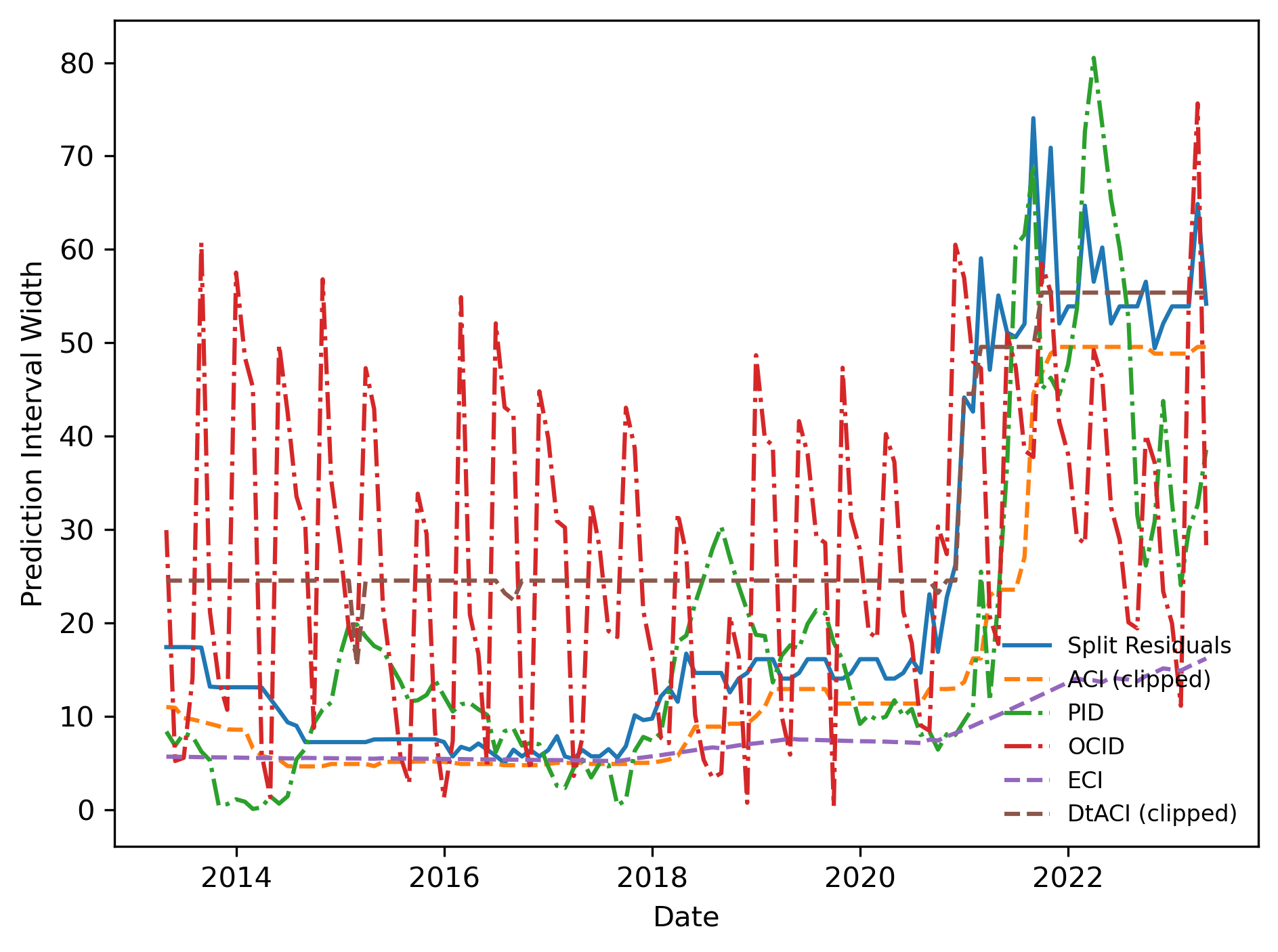}
        \caption{Widths of intervals for $\alpha = 0.2$, $\delta = 0.1$, $\gamma = 0.01$, $\eta = 0.01$, $k=40$}
        \label{fig:width_lm}
    \end{subfigure}
    \hfill
    \begin{subfigure}[b]{0.45\textwidth}
        \centering
        \includegraphics[width=\linewidth]{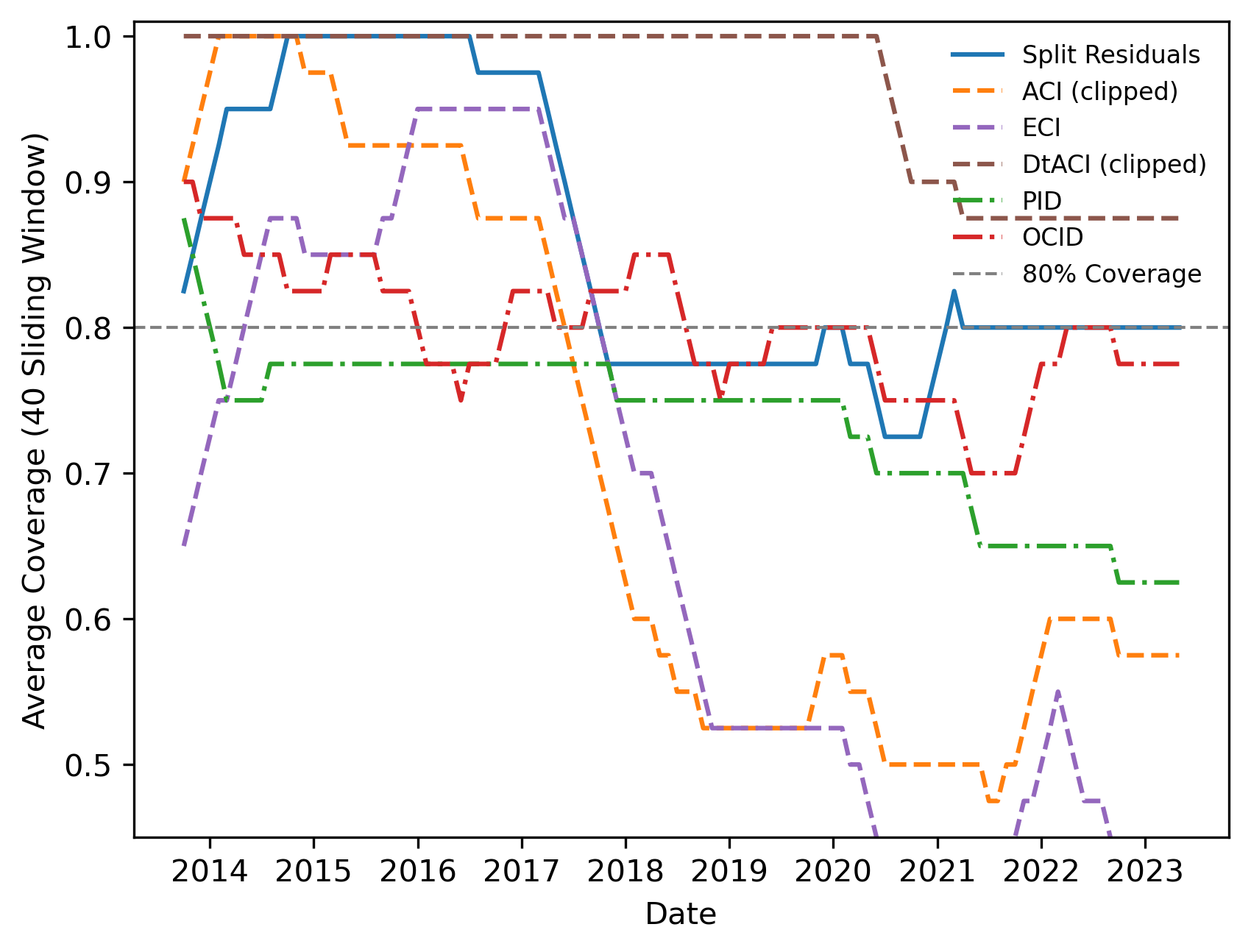}
        \caption{Coverage averaged over sliding window of 40 months}
        \label{fig:cov_lm}
    \end{subfigure}
    \caption{Comparison of interval width and coverage for automobile indicator dataset}
    \label{extraauto_vis}
\end{figure}
We also include comparisons of empirical coverage and width of our method compared against ACI, PID, and OCID across various values of $\alpha \in [0.1,0.5]$ in \Cref{auto_cal}. These values are averaged over the last four years of the automobile indicator data to specifically capture the COVID-19 spike. We see that our coverage is strong across multiple values of $\alpha$ while remaining relatively well calibrated, particularly around $1-\alpha=0.8$ and being tighter than OCID at low (0.1) and high (0.9) nominal coverage levels. Note that coverage dips at $1-\alpha=0.9$ due to abstention which we do not count. Furthermore, while OCID is well-calibrated visually, it very quickly alternates between large and small widths even under no shifts (2014-2019), resulting in non-informative interval widths that are not truly calibrated for the shifts.
\begin{figure}
    \centering
    \begin{subfigure}[b]{0.45\textwidth}
        \centering
        \includegraphics[width=\linewidth]{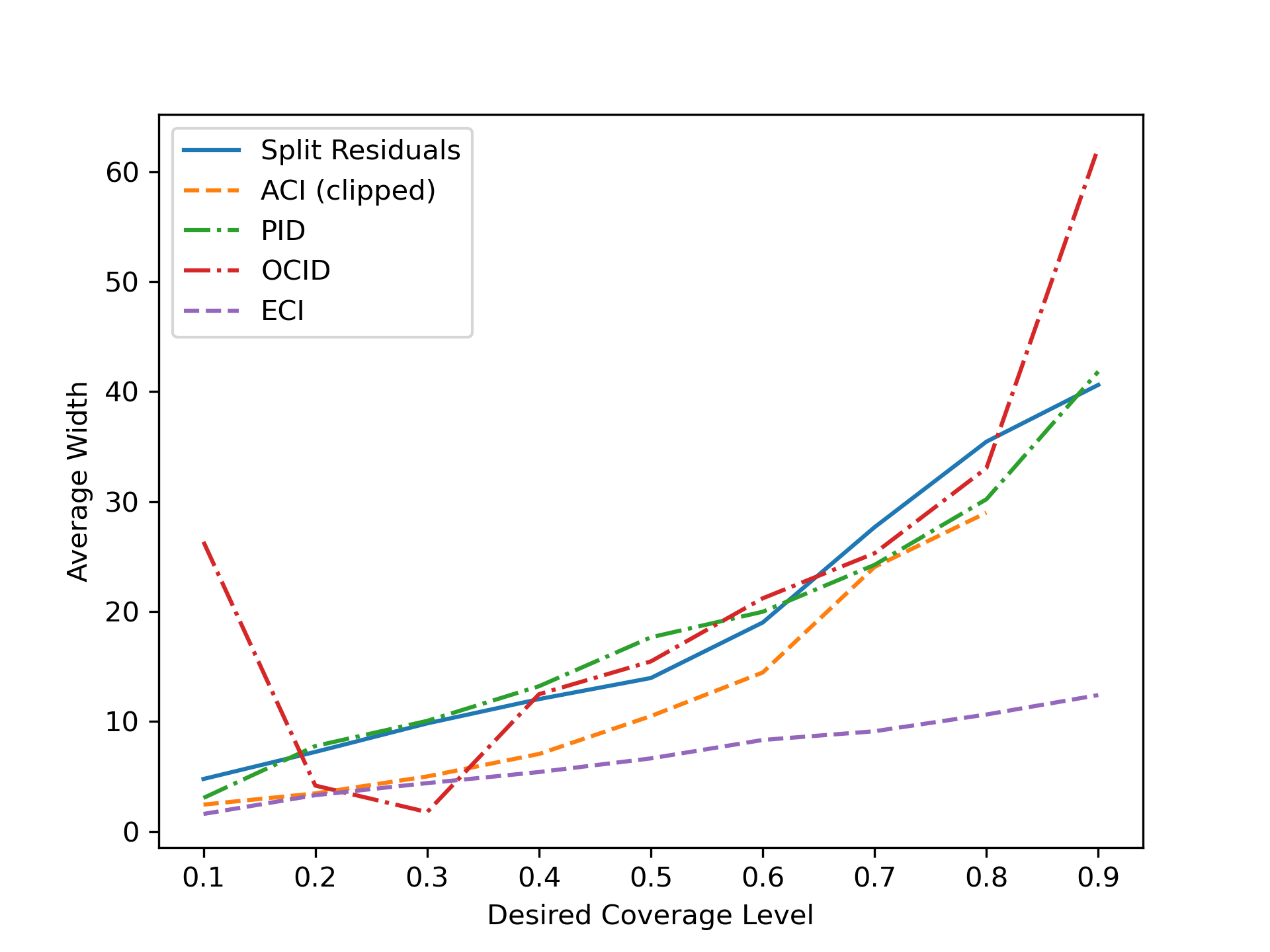}
        \caption{x-axis: desired coverage level, y-axis: average widths of intervals for $\alpha \in [0.1,0.5]$}
        \label{fig:width_cov}
    \end{subfigure}
    \hfill
    \begin{subfigure}[b]{0.45\textwidth}
        \centering
        \includegraphics[width=\linewidth]{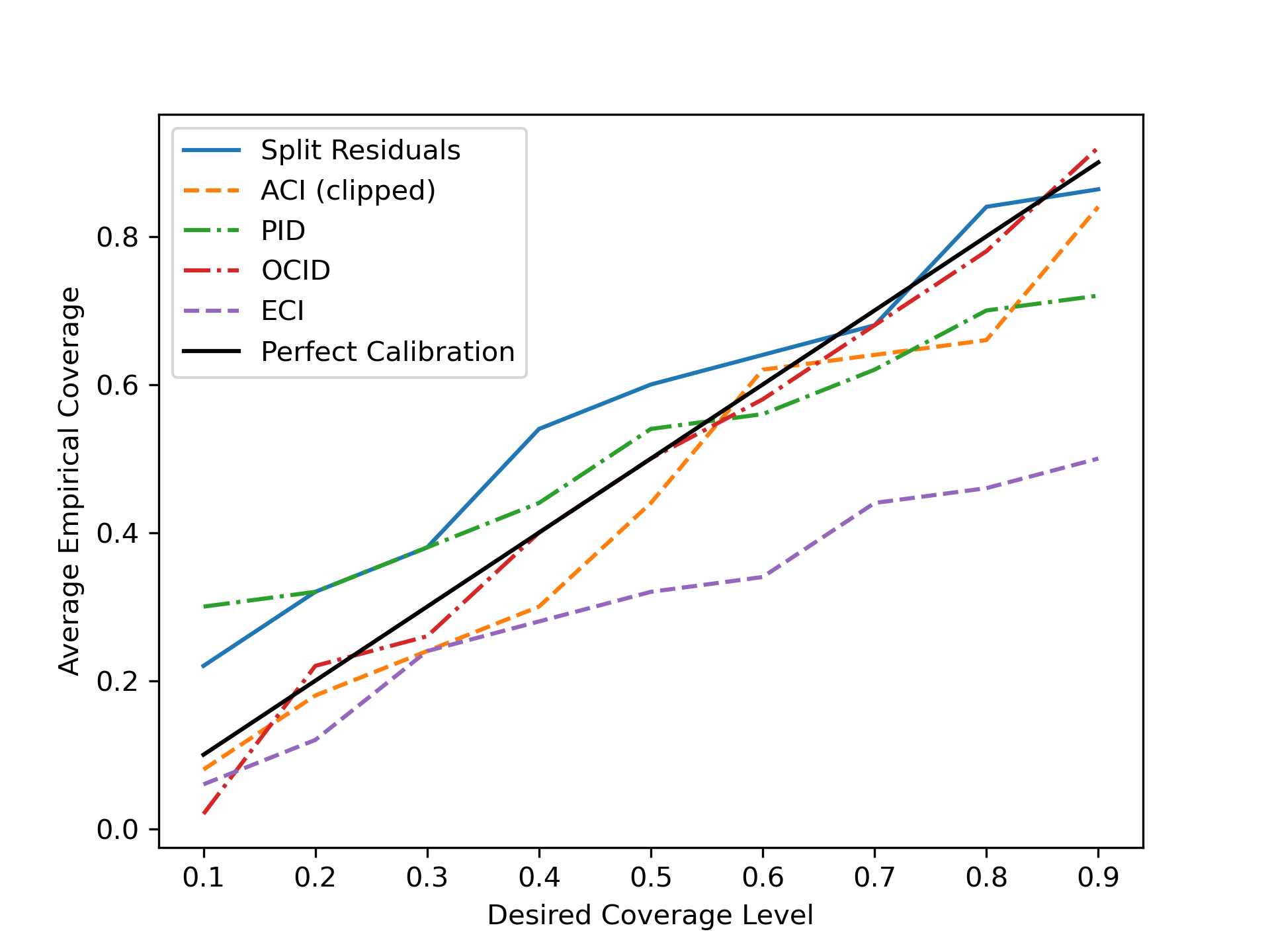}
        \caption{x-axis: desired coverage level, y-axis: average coverage of intervals}
        \label{fig:cov_cov}
    \end{subfigure}
    \caption{Comparison of interval width and coverage for automobile indicator dataset, $\delta = 0.3$, $\gamma = 0.01$, $\eta = 0.01$, $k=40$. Averaged over last 50 months of automobile dataset}
    \label{auto_cal}
\end{figure}

\subsubsection{Hyperparameter Study}
\label{ablation_adaptive}
First, we visualize the effect of various $\gamma$ values, which affects the rate at which $\alpha_t$ changes. We find that our method excels at lower values of $\gamma = 0.01$ and performance decays at higher values such as $\gamma = 0.1$, particularly because this causes $\alpha_t$ to drop to extremely low values, making the algorithm abstain from producing intervals under large shifts. We display the coverage effects for synthetic data in \Cref{fig:synthalpha} and for Manheim data in \Cref{lmalpha}, where we see a similar decay as $\gamma$ increases. 
\begin{figure}
    \centering
    \begin{subfigure}[b]{0.32\textwidth}
        \centering
        \includegraphics[width=\linewidth]{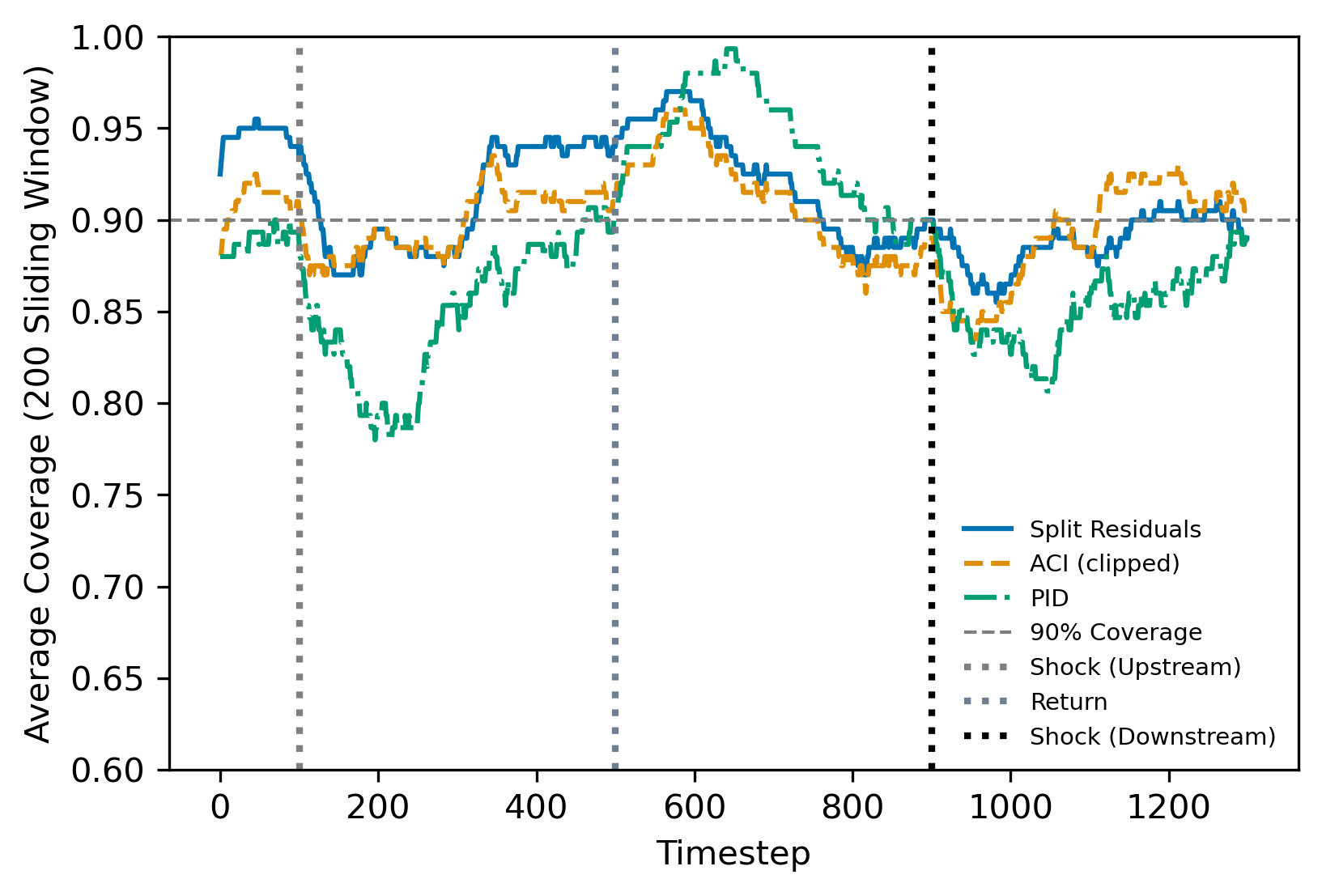}
        \caption{Coverage with $\gamma = 0.01$}
        \label{fig:smollgammasynth}
    \end{subfigure}
    \begin{subfigure}[b]{0.32\textwidth}
        \centering
        \includegraphics[width=\linewidth]{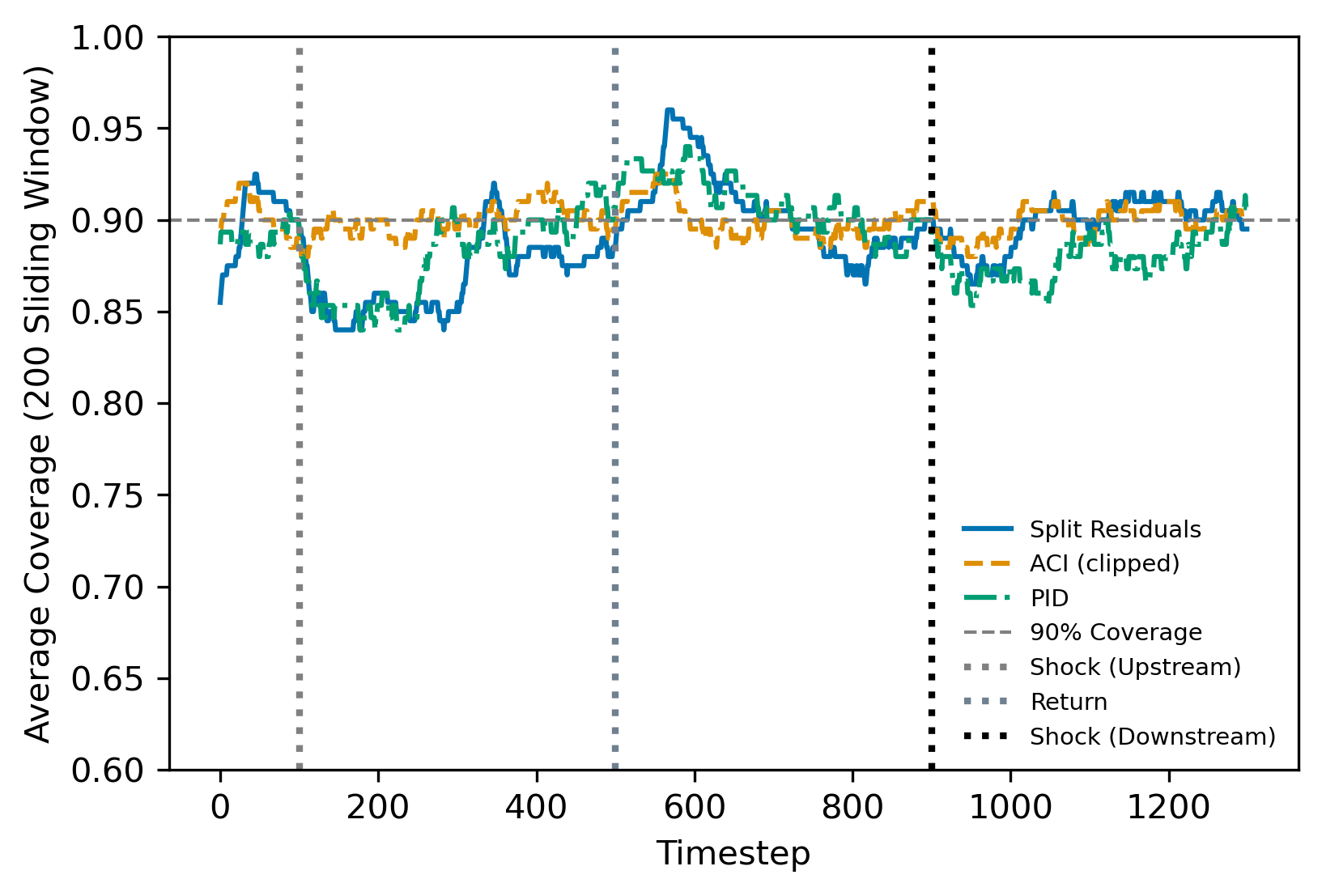}
        \caption{Coverage with $\gamma = 0.05$}
        \label{fig:midgammasynth}
    \end{subfigure}
    \hfill
    \begin{subfigure}[b]{0.32\textwidth}
        \centering
        \includegraphics[width=\linewidth]{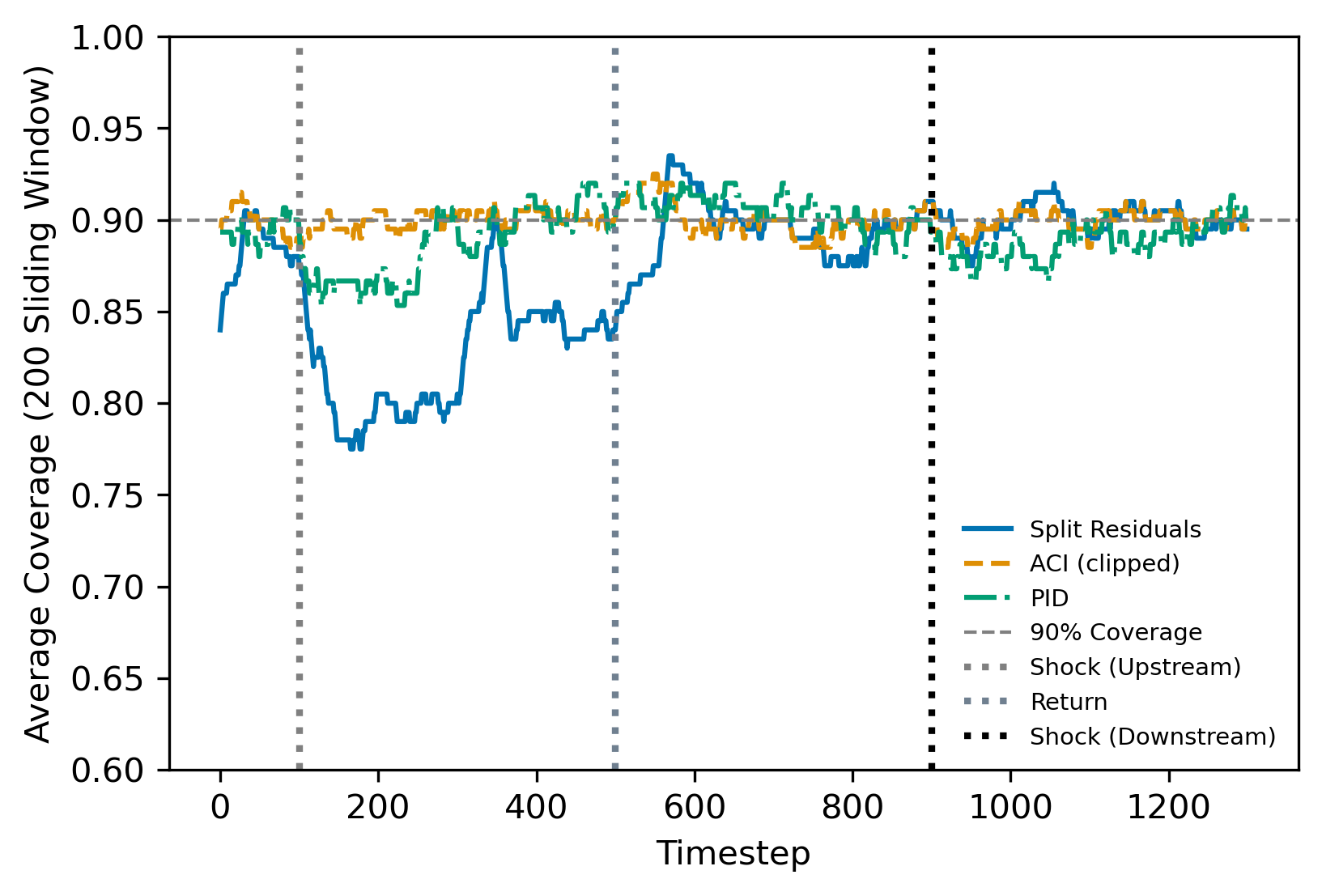}
        \caption{Coverage with $\gamma = 0.1$}
        \label{fig:biggammasynth}
    \end{subfigure}
    \caption{We use $\alpha = 0.1$, $\delta = 0.1$, $\eta = 0.01$, $k=100$}
    \label{fig:synthalpha}
\end{figure}

ACI and PID improve slightly at higher $\gamma$ on the synthetic dataset. While one might argue that this implies that one can just use higher values of $\gamma$ without the need for the split residuals method, using a lower $\gamma$ is ``safer" as it does not adjust to noisy signals as significantly. Accordingly, the widths of the other methods become extremely volatile with increased $\gamma$. Thus, our method is able to still catch significant jumps while remaining at a safer, consistent step-size due to always considering conservative scaling parameters in $\Lambda_{\text{val}}$. Furthermore, even when considering multiple values of $\gamma$, on the Manheim dataset, our method with $\gamma =0.01$ still outperforms the other methods in coverage at all values of $\gamma$, and the performance of the other methods actually decay in coverage as $\gamma$ increases. OCID does not use same step-size parameter so it remains unaffected.
\begin{figure}
    \centering
    \begin{subfigure}[b]{0.32\textwidth}
        \centering
        \includegraphics[width=\linewidth]{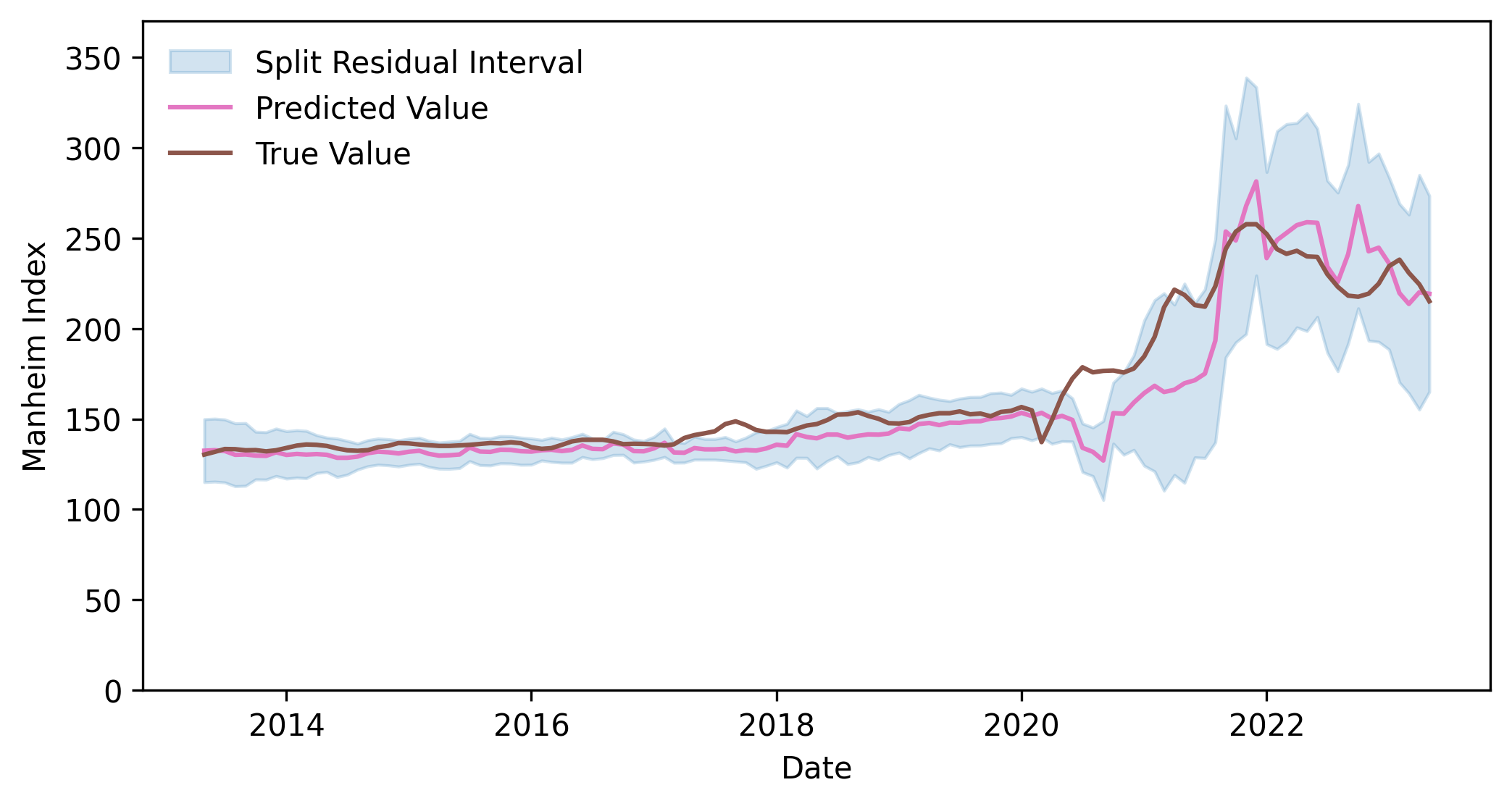}
        \caption{Coverage with $\gamma = 0.01$}
        \label{fig:smollgammalm}
    \end{subfigure}
    \begin{subfigure}[b]{0.32\textwidth}
        \centering
        \includegraphics[width=\linewidth]{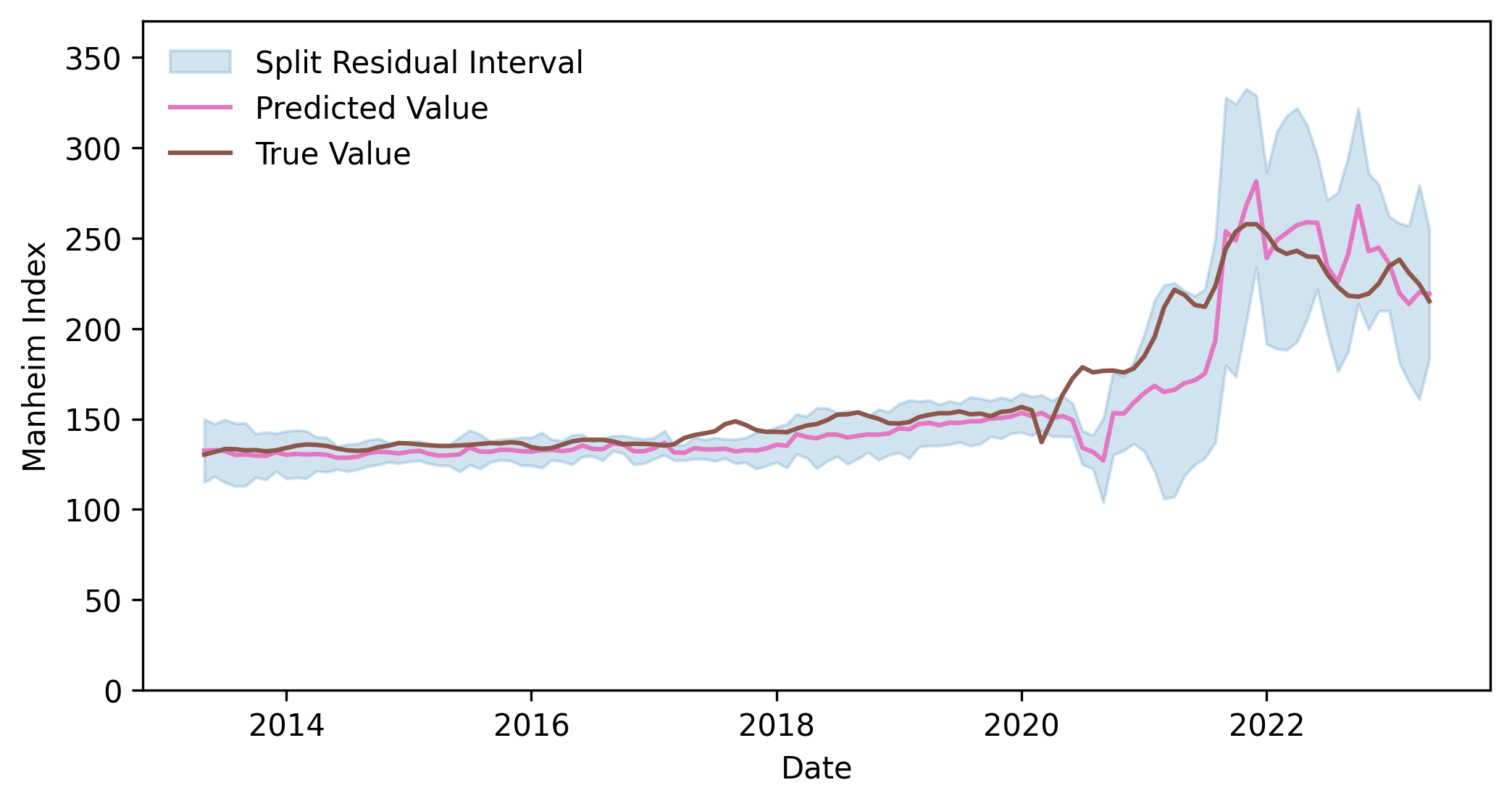}
        \caption{Coverage with $\gamma = 0.05$}
        \label{fig:midgammalm}
    \end{subfigure}
    \hfill
    \begin{subfigure}[b]{0.32\textwidth}
        \centering
        \includegraphics[width=\linewidth]{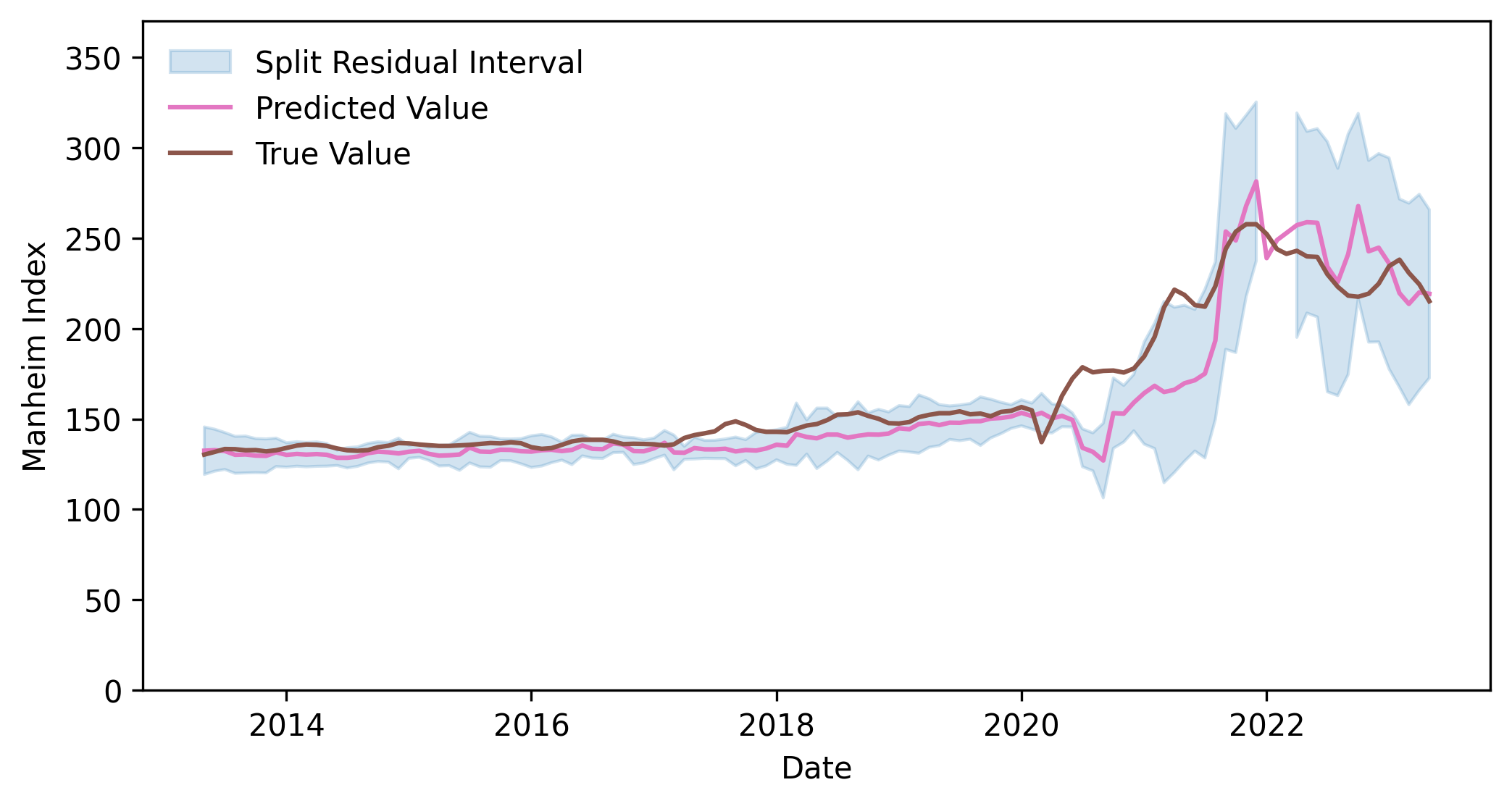}
        \caption{Coverage with $\gamma = 0.1$}
        \label{fig:biggammalm}
    \end{subfigure}
    \caption{We use $\alpha = 0.2$, $\delta = 0.3$, $\eta = 0.01$, $k=40$}
    \label{lmalpha}
\end{figure}
\begin{table}[ht]
\centering
\caption{Coverage, width, and maximum improvements in coverage vs. baselines (ACI, PID, OCID) across 50 runs for varying $\gamma \in \{0.01, 0.03, 0.05, 0.1\}$}
\label{tab:gamma_sweep_metrics}
\begin{tabular}{lcccc}
\toprule
\textbf{Metric} & $\gamma=0.01$ & $\gamma=0.03$ & $\gamma=0.05$ & $\gamma=0.1$ \\
\midrule
Width & 22.92$\pm$0.13 & 24.75$\pm$0.10 & 21.26$\pm$0.12 & 21.93$\pm$0.12 \\
Coverage & 0.835$\pm$0.02 & 0.815$\pm$0.02 & 0.595$\pm$0.02 & 0.715$\pm$0.01 \\
Max $\Delta$Coverage vs ACI & 0.075$\pm$0.02 & 0.030$\pm$0.01 & 0.025$\pm$0.01 & 0.020$\pm$0.01 \\
Max $\Delta$Coverage vs PID & 0.075$\pm$0.02 & 0.045$\pm$0.01 & 0.045$\pm$0.01 & 0.025$\pm$0.01 \\
Max $\Delta$Coverage vs OCID & 0.130$\pm$0.03 & 0.090$\pm$0.02 & 0.110$\pm$0.02 & 0.100$\pm$0.03 \\
\bottomrule
\end{tabular}
\end{table}

\begin{table}[ht]
\centering
\caption{Coverage, width, and maximum improvements in coverage vs. baselines (ACI, PID, OCID) across 50 runs for varying $\eta \in \{0.01, 0.03, 0.05, 0.1\}$.}
\label{tab:eta_sweep_metrics}
\begin{tabular}{lcccc}
\toprule
\textbf{Metric} & $\eta=0.01$ & $\eta=0.03$ & $\eta=0.05$ & $\eta=0.1$ \\
\midrule
Width & 23.50$\pm$0.12 & 24.22$\pm$0.13 & 24.16$\pm$0.14 & 23.96$\pm$0.10 \\
Coverage & 0.835$\pm$0.03 & 0.835$\pm$0.02 & 0.855$\pm$0.01 & 0.860$\pm$0.02 \\
Max $\Delta$Coverage vs ACI & 0.080$\pm$0.01 & 0.065$\pm$0.02 & 0.075$\pm$0.01 & 0.075$\pm$0.01 \\
Max $\Delta$Coverage vs PID & 0.105$\pm$0.03 & 0.090$\pm$0.01 & 0.090$\pm$0.02 & 0.075$\pm$0.01 \\
Max $\Delta$Coverage vs OCID & 0.125$\pm$0.02 & 0.135$\pm$0.01 & 0.115$\pm$0.03 & 0.115$\pm$0.02 \\
\bottomrule
\end{tabular}
\end{table}

\begin{table}[ht]
\centering
\caption{Coverage, width, and maximum improvements in coverage vs. baselines (ACI, PID, OCID) across 50 runs for varying $\delta \in \{0.05, 0.1, 0.3, 0.5, 0.7\}$}
\label{tab:delta_sweep_metrics}
\begin{tabular}{lccccc}
\toprule
\textbf{Metric} &$\delta =0.05$ & $\delta=0.1$ & $\delta=0.3$ & $\delta=0.5$ & $\delta=0.7$ \\
\midrule
Width& NA & 23.30$\pm$0.12 & 22.70$\pm$0.13 & 24.11$\pm$0.09 & 23.20$\pm$0.11 \\
Coverage& NA & 0.865$\pm$0.03 & 0.840$\pm$0.02 & 0.835$\pm$0.01 & 0.845$\pm$0.02 \\
Max $\Delta$Coverage vs ACI & NA& 0.075$\pm$0.01 & 0.065$\pm$0.02 & 0.055$\pm$0.01 & 0.065$\pm$0.02 \\
Max $\Delta$Coverage vs PID & NA& 0.080$\pm$0.01 & 0.100$\pm$0.02 & 0.080$\pm$0.01 & 0.070$\pm$0.01 \\
Max $\Delta$Coverage vs OCID & NA& 0.115$\pm$0.02 & 0.130$\pm$0.02 & 0.110$\pm$0.03 & 0.105$\pm$0.03 \\
\bottomrule
\end{tabular}
\end{table}
\begin{table}[ht]
\centering
\caption{Coverage, width, and maximum improvements in coverage vs. baselines (ACI, PID, OCID) across 50 runs for varying window length $k \in \{10,50, 100, 150\}$}
\label{tab:window_length_sweep}
\begin{tabular}{lcccc}
\toprule
\textbf{Metric}&$k=10$ & $k=50$ & $k=100$ & $k=150$ \\
\midrule
Width & NA& 22.21$\pm$0.08 & 23.24$\pm$0.07 & 24.32$\pm$0.13 \\
Coverage& NA & 0.855$\pm$0.02 & 0.856$\pm$0.03 & 0.835$\pm$0.01 \\
Max $\Delta$Coverage vs ACI& NA & 0.050$\pm$0.01 & 0.070$\pm$0.02 & 0.070$\pm$0.01 \\
Max $\Delta$Coverage vs PID & NA& 0.080$\pm$0.01 & 0.090$\pm$0.02 & 0.080$\pm$0.01 \\
Max $\Delta$Coverage vs OCID& NA & 0.115$\pm$0.01 & 0.100$\pm$0.01 & 0.120$\pm$0.01 \\
\bottomrule
\end{tabular}
\end{table}
We also provide tables for other hyperparameters $\eta, \delta, k$ in \Cref*{tab:delta_sweep_metrics,tab:eta_sweep_metrics,tab:delta_sweep_metrics,tab:window_length_sweep}, fixing them at default values of $0.01,0.01,0.3,100$ respectively when not varying them. We see that increasing $\gamma$ has a rough impact of decreasing width, while increasing $\eta$ has a weaker widening effect, $\delta$ has no particular effect on width, but low values encourage abstention, and lastly small window lengths also encourage abstention, and generally longer windows results in slightly larger intervals.

\subsubsection{Visualization of residual components}
\label{abchanges}
Here, we provide extra visualizations of the $\Delta R_1$ and $R_2$ parameters along with how the weights $a,b$ change over time, for the same synthetic setup as \Cref{adaptiveexp} below in \Cref{fig:resids,fig:absynth}.
\begin{figure}[ht]
    \centering
    \begin{subfigure}[b]{0.49\textwidth}
        \centering
        \includegraphics[width=\linewidth]{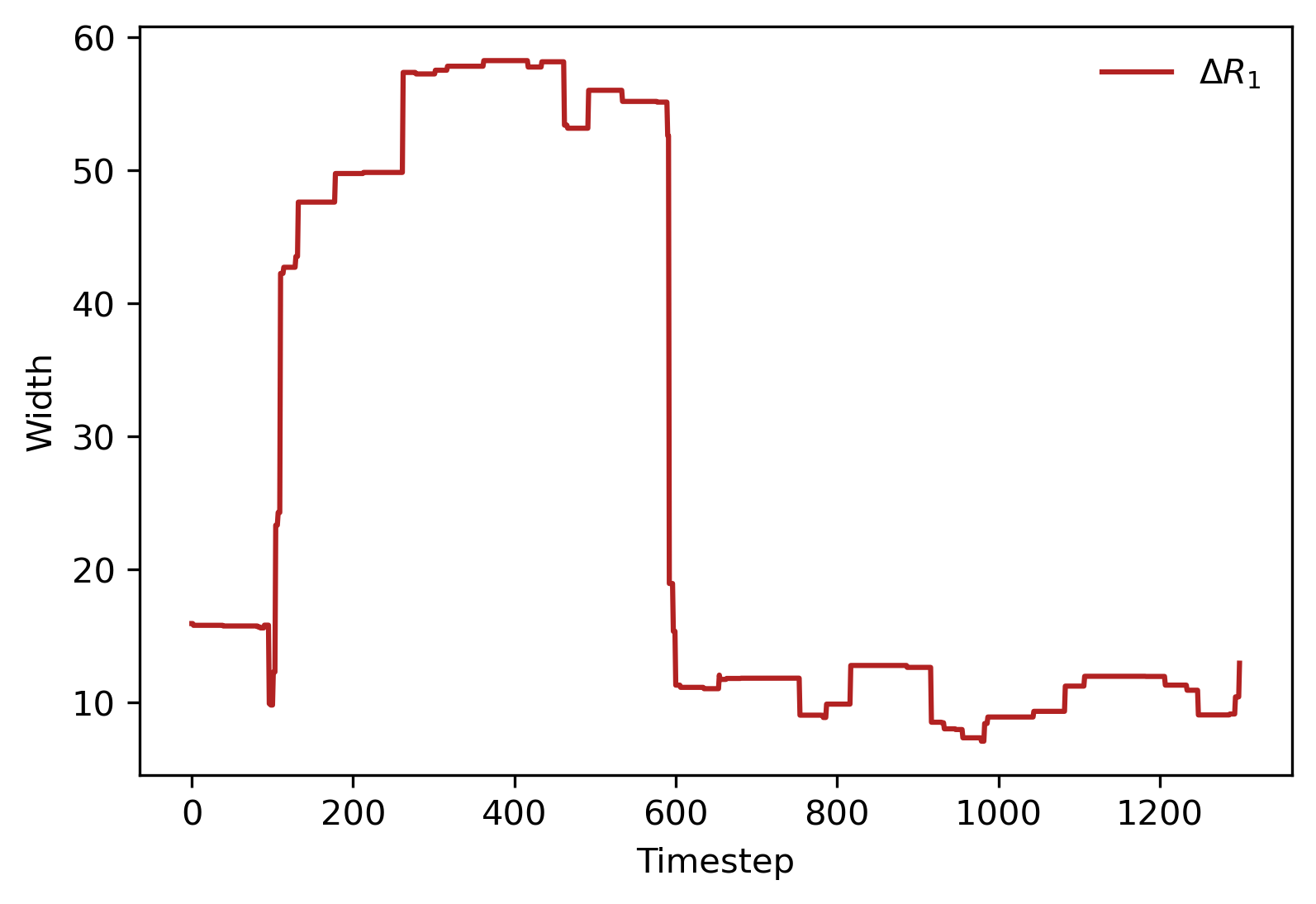}
        \caption{The value of the $\Delta R_1$ residual component}
        \label{fig:r1}
    \end{subfigure}
    \hfill
    \begin{subfigure}[b]{0.49\textwidth}
        \centering
        \includegraphics[width=\linewidth]{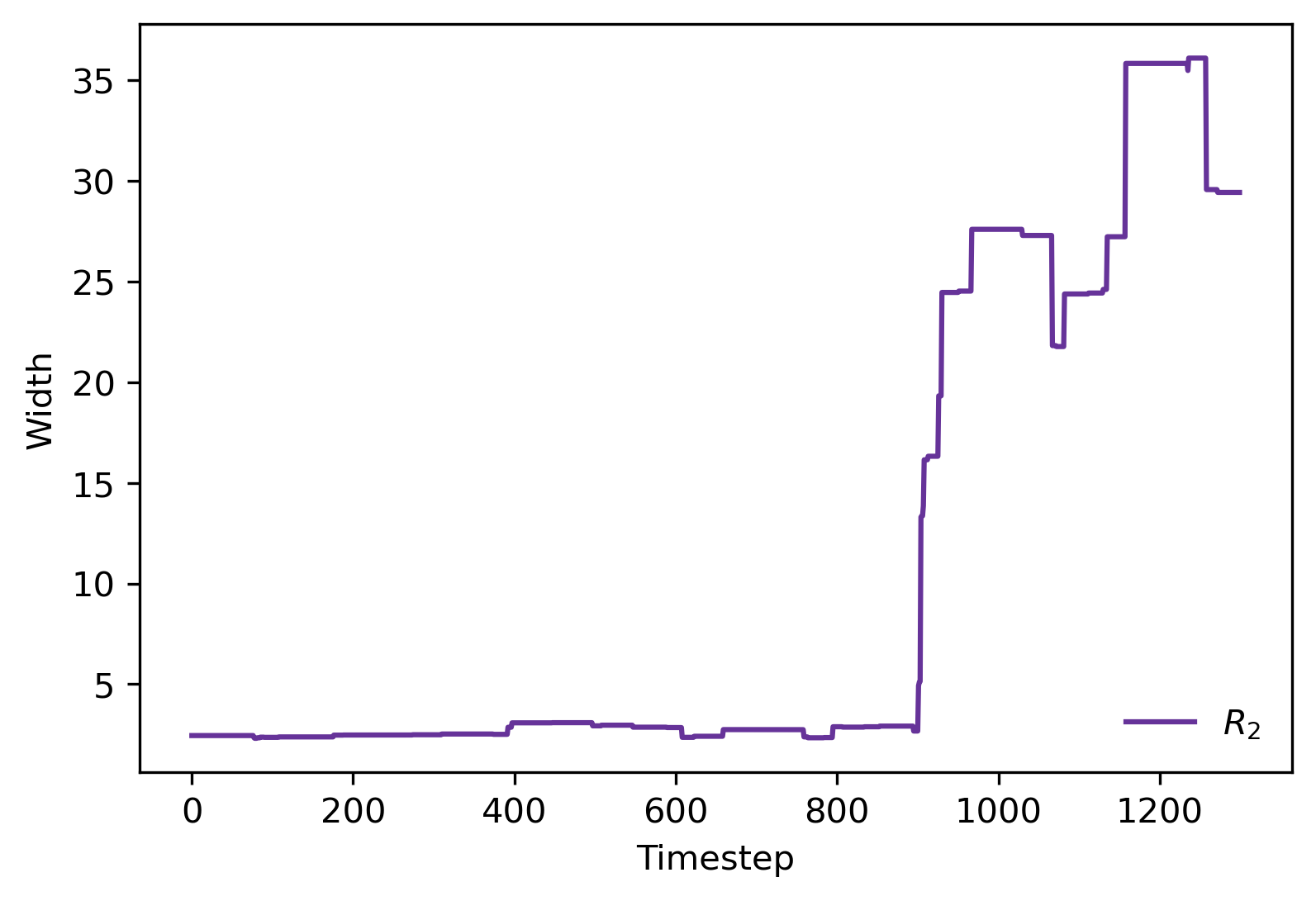}
        \caption{The value of the $R_2$ residual component}
        \label{fig:r2}
    \end{subfigure}
    \caption{A visualization of how $\Delta R_1, R_2$ change as distribution shifts occur in the data. Note that they are taken at $1-c_t, 1-d_t$ quantiles respectively. We use $\alpha = 0.1$, $\delta = 0.1$, $\gamma = 0.01$, $\eta = 0.01$, $k=100$.}
    \label{fig:resids}
\end{figure}

We observe that when the upstream distribution shift occurs, $\Delta R_1$ captures this change, then after it returns, $\Delta R_1$ correspondingly shrinks back to the baseline level. $R_2$ correspondingly captures the second downstream distribution shift. 

\begin{figure}[ht]
    \centering
    \begin{subfigure}[b]{0.49\textwidth}
        \centering
        \includegraphics[width=\linewidth]{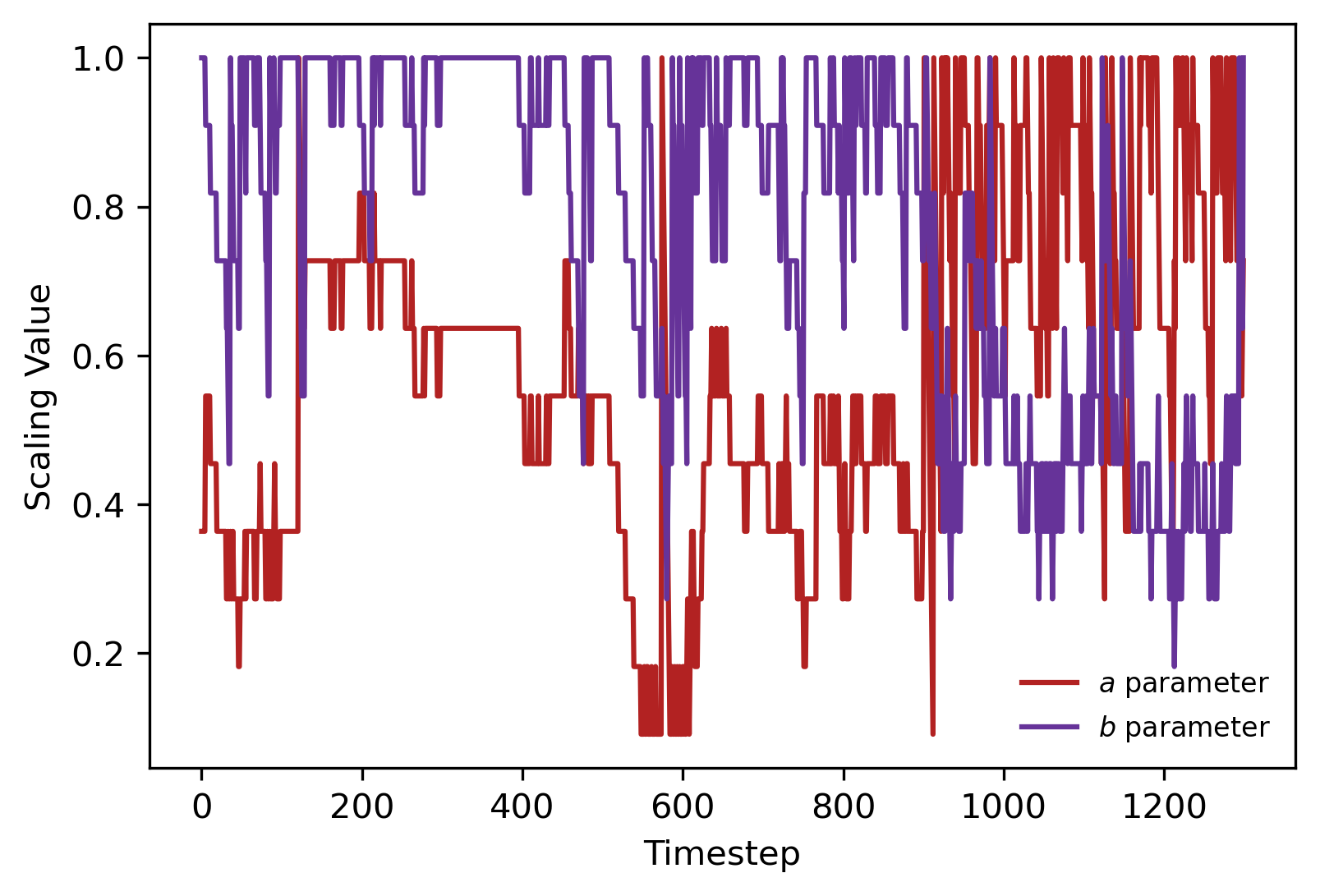}
        \caption{Visualization of scaling parameters $a,b$ over time.}
        \label{fig:absynth}
    \end{subfigure}
    \hfill
    \begin{subfigure}[b]{0.49\textwidth}
        \centering
        \includegraphics[width=\linewidth]{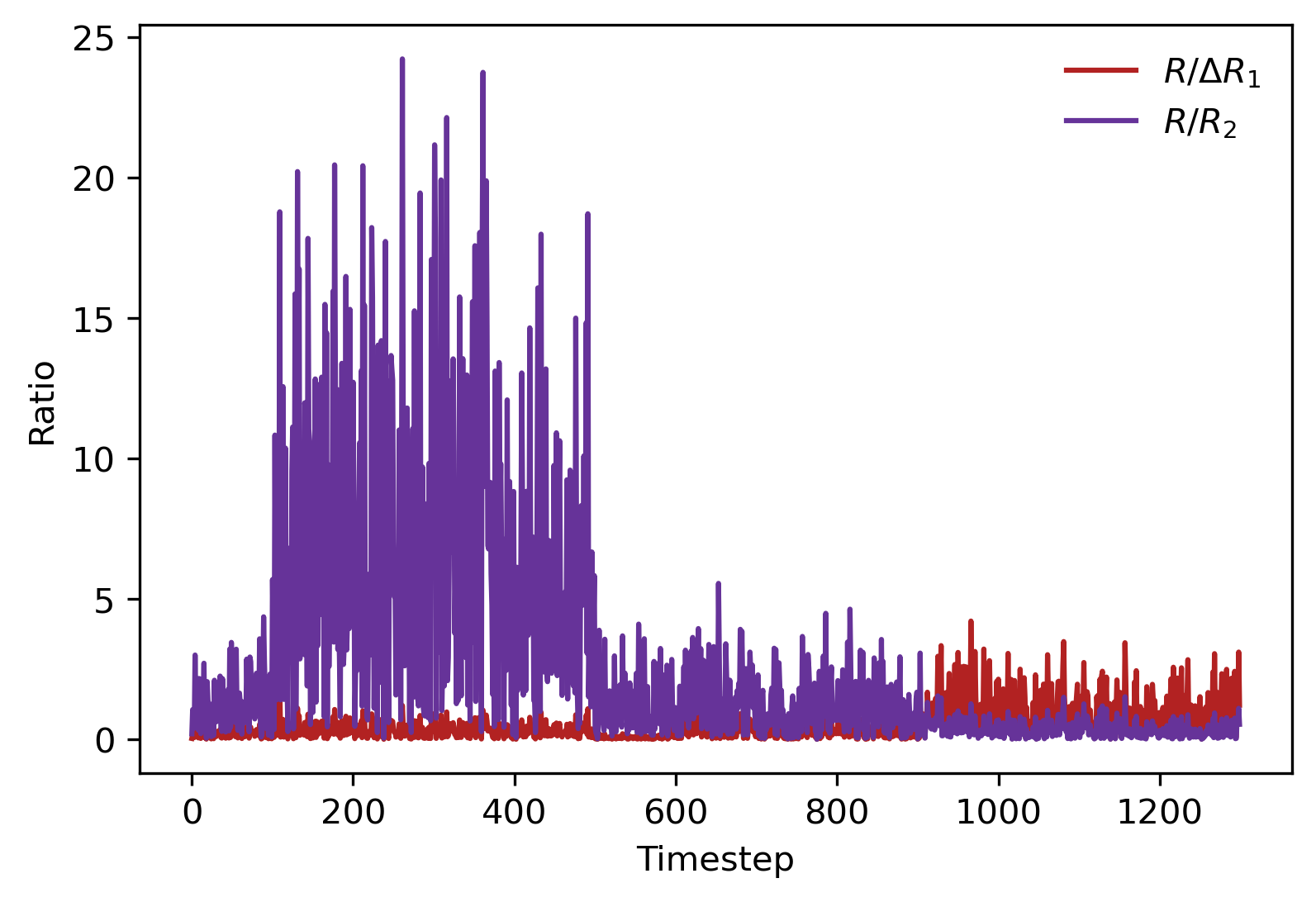}
        \caption{Ratio of $R$ to $\Delta R_1$ and $R_2$ respectively}
        \label{ratio}
    \end{subfigure}
    \caption{Plots of residual component relations over time on the synthetic dataset. We use $\alpha = 0.1$, $\delta = 0.1$, $\gamma = 0.01$, $\eta = 0.01$, $k=100$}
\end{figure}

We also see this relationship reflected in the $a,b$ scaling parameters selected by our method as we see them adapt to the upstream distribution shift, the return, then the downstream distribution shift in a similar fashion (\Cref{fig:absynth}). Lastly, we plot the ratios $R/(\Delta R_1)$ and $R/R_2$ over time in \Cref{ratio}.

We see that the shifts at $t = 100,500,900$ are visibly identifiable: initially $R/R_2$ is high during the upstream shift as the majority of error comes from $R_1$. When the upstream shift returns to baseline, we observe that the upstream remains the main source of error, and finally in the downstream shift, $R_2$ becomes the dominant source of error. Concretely, $\Delta R_1$ dominates between $t = 100$ to $t=500$ when the upstream undergoes a shift and $R_2$ dominates when the downstream undergoes a shift from $t=900$ onward. Thus we can observe which stage is responsible for the error with extreme clarity allowing diagnostic retraining. In particular, if we retrain at $t = 200$ after the first upstream shift, we can choose to either retrain just $\hat{\mu}_1$ or both $\hat{\mu}_1$ and $\hat{\mu}_2$. We observe that just retraining $\hat{\mu}_1$ results in a width decrease of $24.6439$ for the SR method, while retraining both models results in a decrease of 25.3451, demonstrating that it is sufficient for our framework to only retrain the affected upstream. 

We similarly plot the used scaling parameters $(a,b)$ over time for the Manheim used vehicles dataset, and observe that there is in fact an asymmetry in the scaling factors in relation to the shifts occurring during 2018-2020. Likewise, we also observe asymmetry in the stocks dataset, with more emphasis on the downstream. This gives us an idea of which part of the model fails and how the shift is affecting it (\Cref{fig:abslmstock}).
\begin{figure}[ht]
    \centering
    \begin{subfigure}[b]{0.49\textwidth}
        \centering
        \includegraphics[width=\linewidth]{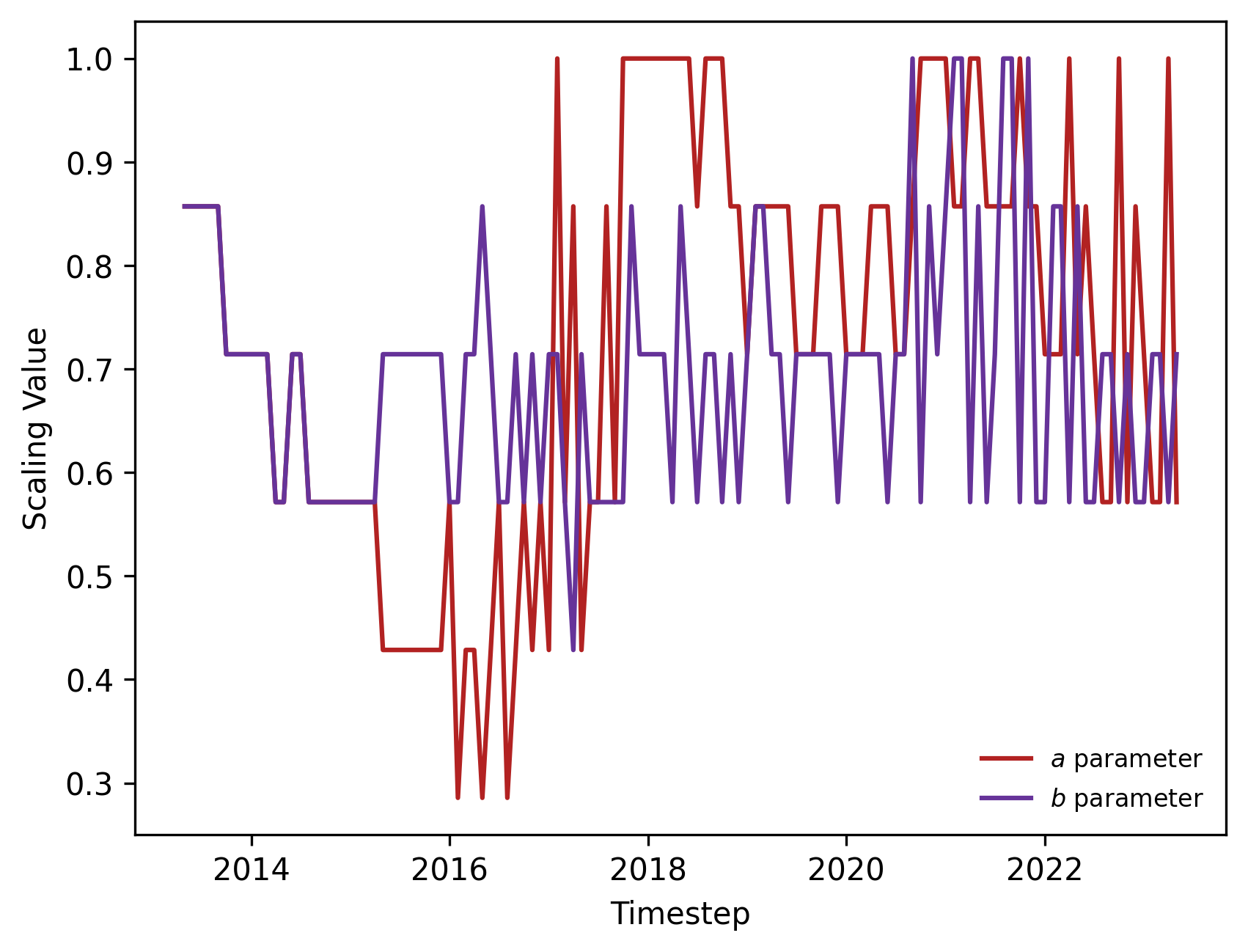}
        \caption{Values of $(a,b)$ over time for the automobile dataset}
        \label{fig:ablm}
    \end{subfigure}
    \hfill
    \begin{subfigure}[b]{0.49\textwidth}
        \centering
        \includegraphics[width=\linewidth]{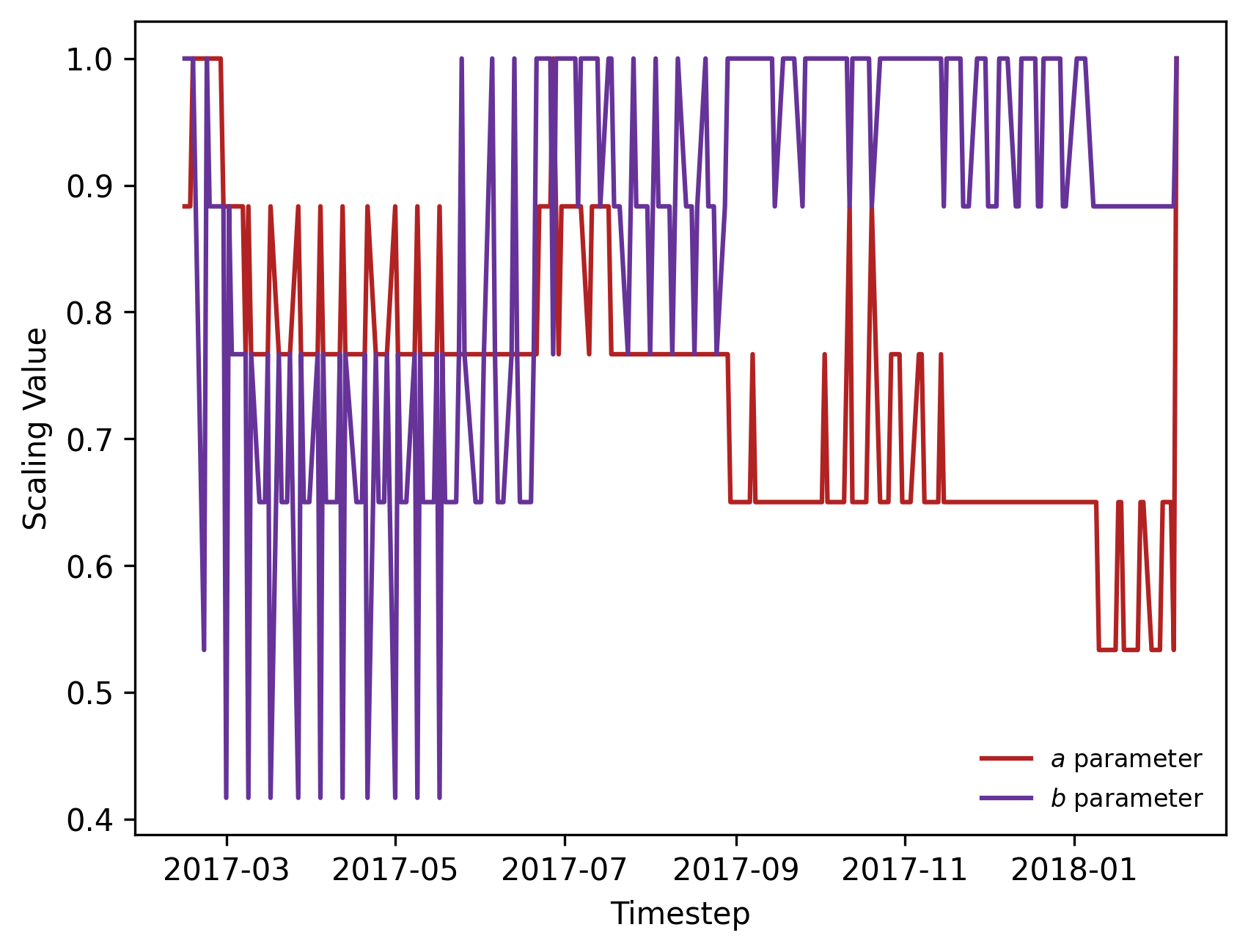}
        \caption{Values of $(a,b)$ over time for the stocks dataset}
        \label{fig:abstock}
    \end{subfigure}
    \caption{A visualization of how $(a,b)$ change as distribution shifts occur in real-world data}
    \label{fig:abslmstock}
\end{figure}

\subsubsection{Without quantile level adjustments}
\label{nocd}
In this experiment, we verify that the $c_t,d_t$ heuristic update step of the algorithm is necessary for improved performance. Using the experimental setup with upstream and downstream distribution shifts as \Cref{adaptiveexp}, we consider the case in which we fix $c_t,d_t$ to initial values 0.01 $\forall t$, in \Cref{fig:nocd}.
\begin{figure}
    \centering
    \begin{subfigure}[b]{0.49\textwidth}
        \centering
        \includegraphics[width=\linewidth]{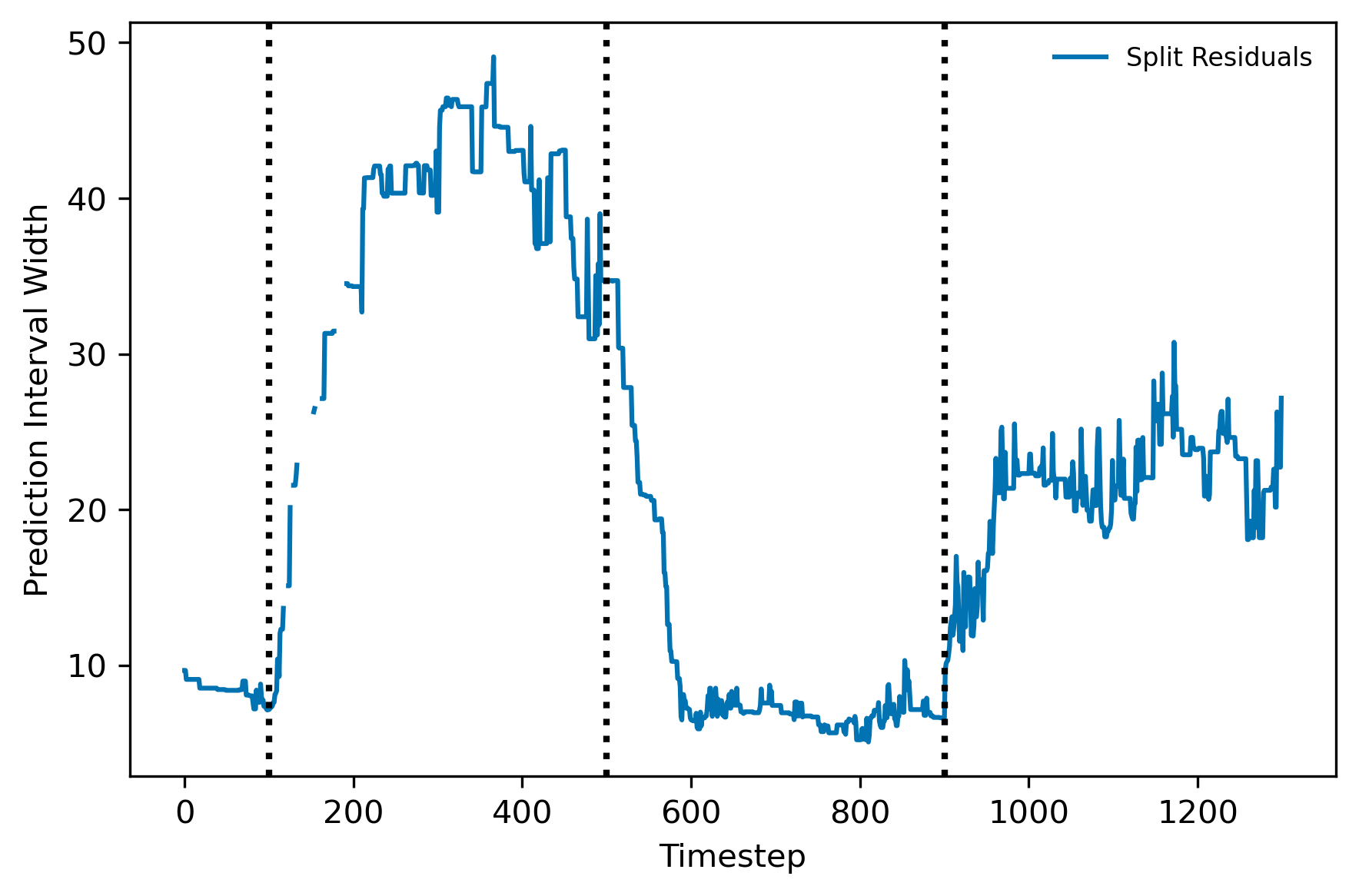}
        \caption{Width for fixed $c,d$}
        \label{fig:nocdwidth}
    \end{subfigure}
    \hfill
    \begin{subfigure}[b]{0.49\textwidth}
        \centering
        \includegraphics[width=\linewidth]{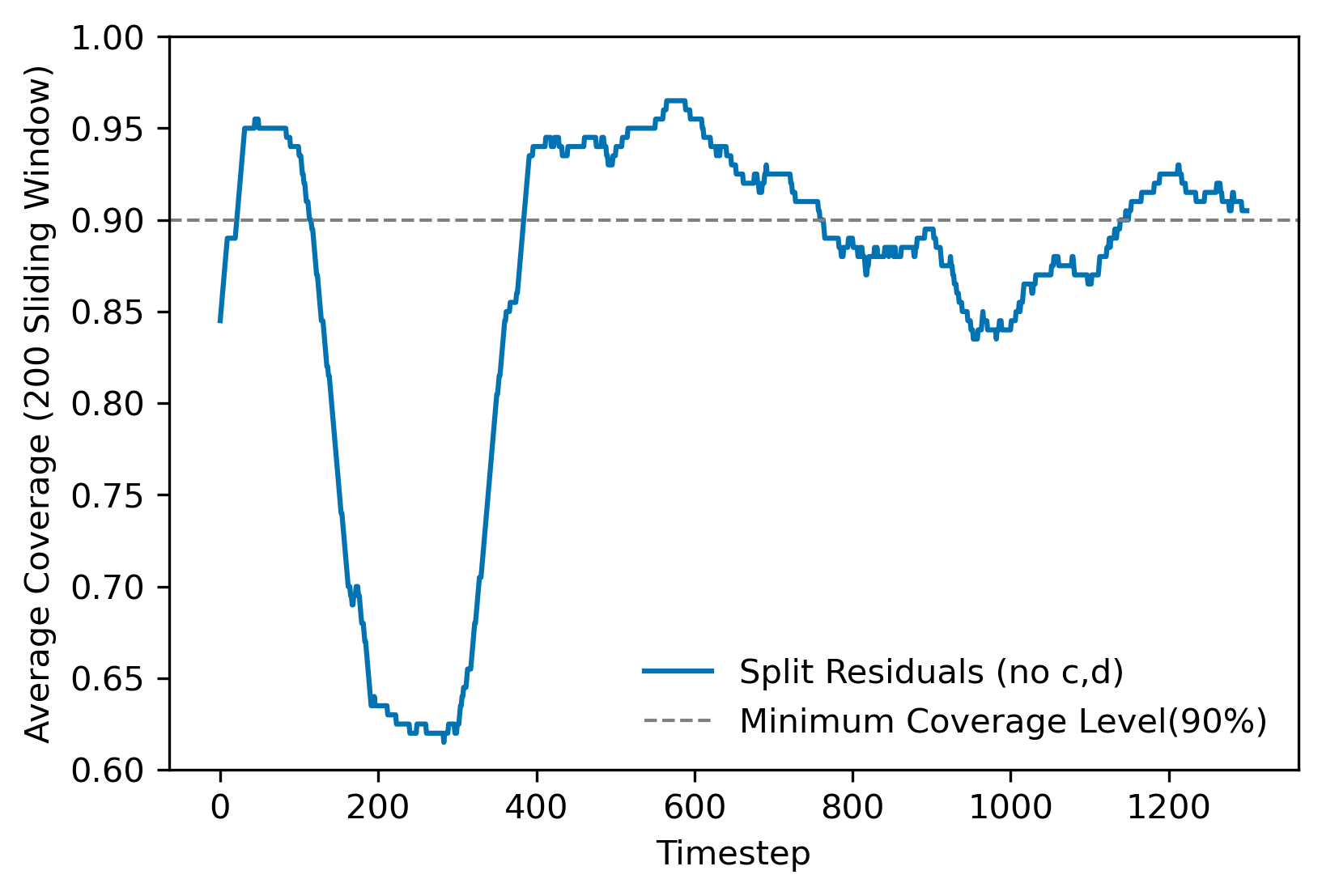}
        \caption{Coverage for fixed $c,d$}
        \label{fig:nocdcov}
    \end{subfigure}
    \caption{A visualization of how freezing $c_t,d_t$ to fixed values with no updates affects performance. We use $\alpha = 0.1$, $\delta = 0.1$, $\gamma = 0.01$, $\eta = 0.00$, $k=100$. The first distribution shift leads to abstention}
    \label{fig:nocd}
\end{figure}
We observe that fixing $c,d$ makes the algorithm unable to adapt properly to shifts as it produces $\Lambda_{\text{val}}$ sets frequently after the initial upstream distribution shift, which lowers average coverage (considering that to be a miscoverage). Thus, we see that adjusting $c_t, d_t$ allows us to mitigate abstention, however, if desiring more sensitivity to shifts, keeping them fixed at larger values could serve as a diagnostic for retraining, similar to the non-adaptive case (\Cref{tab:combinedtable}). 
\subsubsection{Covariate shift}
\label{covariate}
We consider covariate shift, in which the distribution of $w$ suddenly changes, at $t = 100,500,900$ which affects the entire pipeline. This shift does not have a specific upstream/downstream effect, but our method is still able to perform comparably well to the ACI, PID, and OCID methods \Cref{fig:covariateshift}. However, all the methods perform similarly well and it is difficult to distinguish their performance.
\begin{figure}
    \centering
    \begin{subfigure}[b]{0.49\textwidth}
        \centering
        \includegraphics[width=\linewidth]{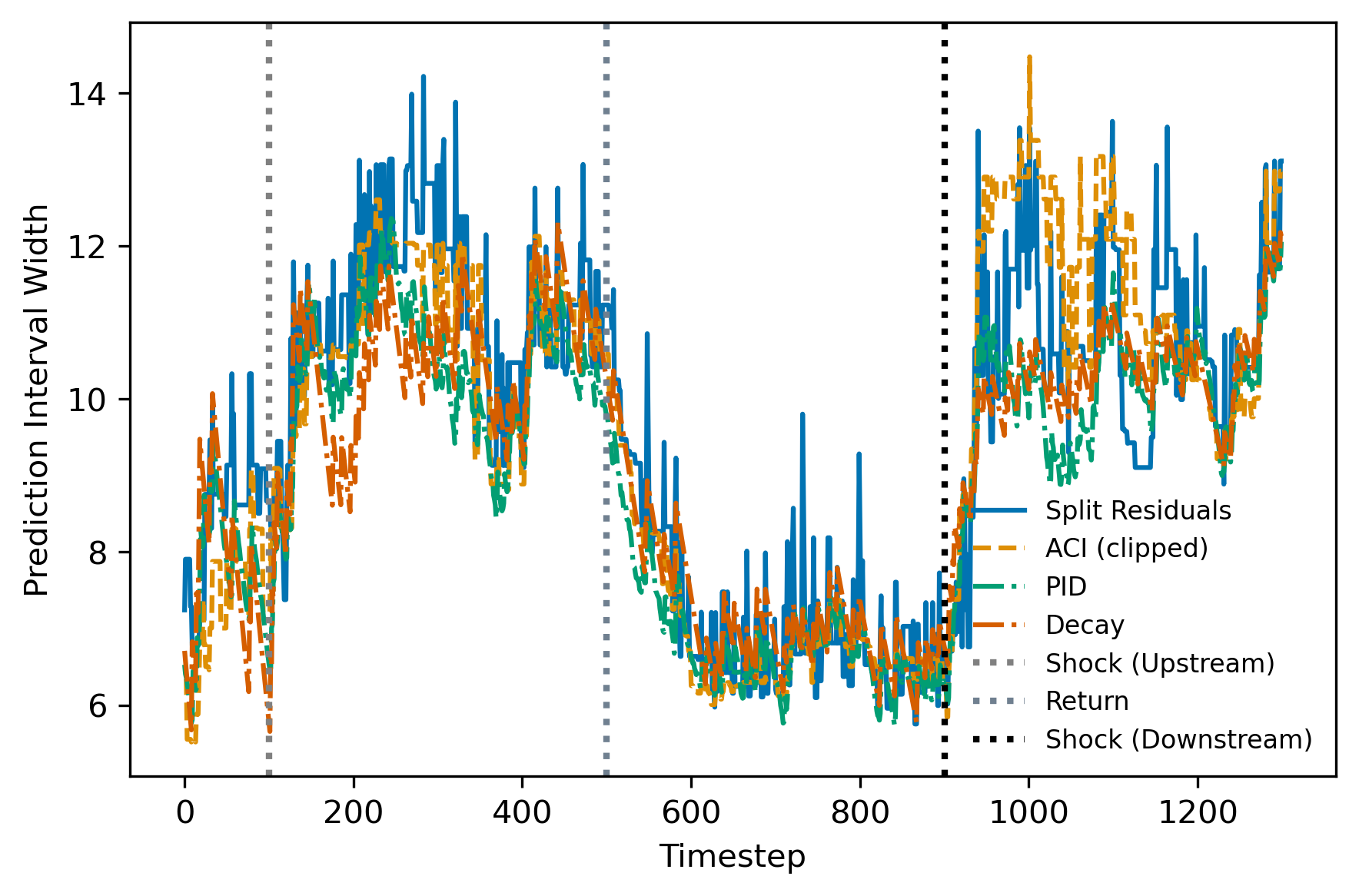}
        \caption{width for covariate shift settings}
        \label{fig:covariatewidth}
    \end{subfigure}
    \hfill
    \begin{subfigure}[b]{0.49\textwidth}
        \centering
        \includegraphics[width=\linewidth]{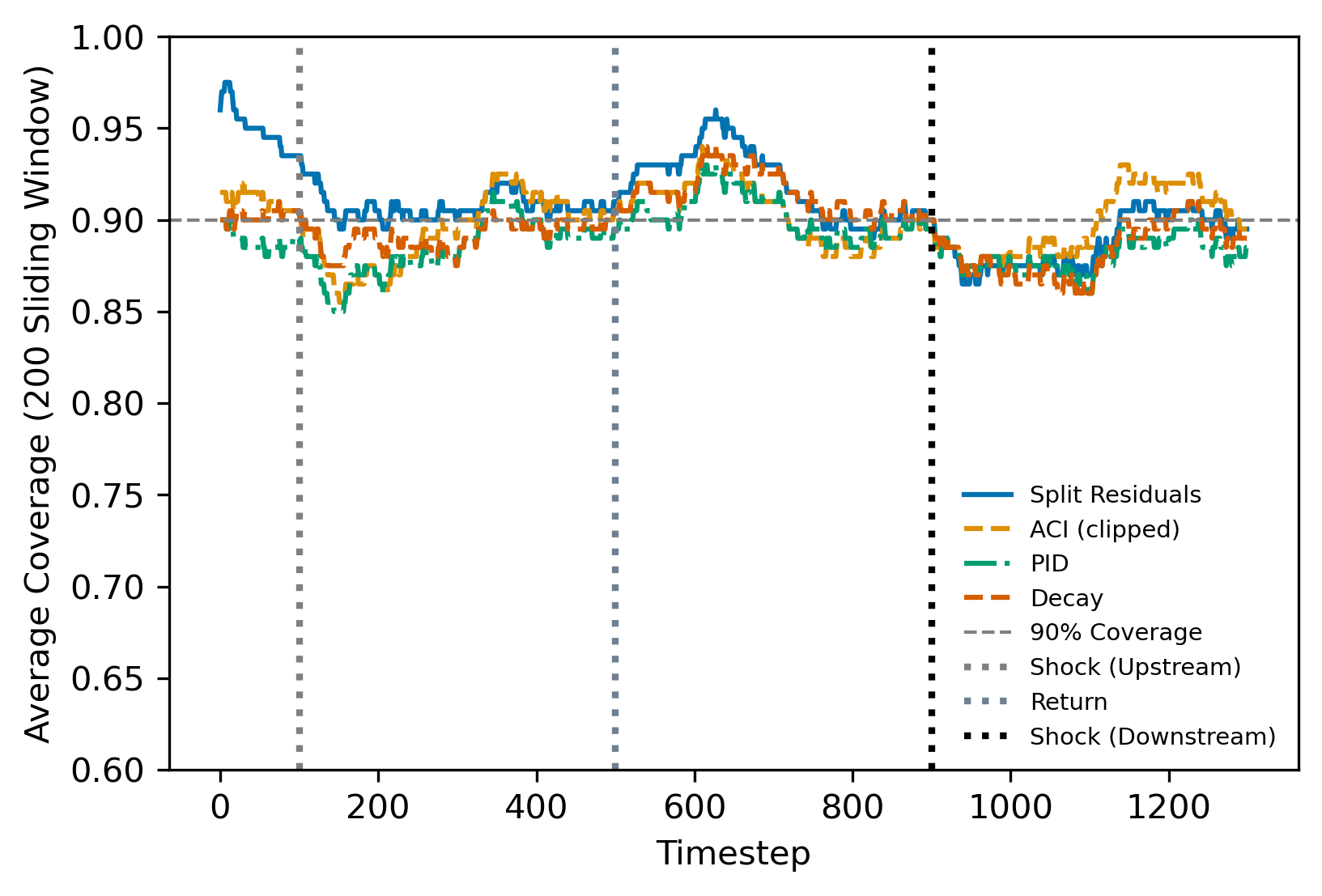}
        \caption{Coverage for covariate shift settings}
        \label{fig:covariatecov}
    \end{subfigure}
    \caption{The distribution of $w$ shifts from $\mathcal{N}(0,1)$ to  $\mathcal{N}(3,2)$, then back, then $\mathcal{N}(-3,2)$. We use $\alpha = 0.1$, $\delta = 0.1$, $\gamma = 0.01$, $\eta = 0.01$, $k=100$}
    \label{fig:covariateshift}
\end{figure}
Thus, while our method provides clear differentiation between upstream/downstream shifts, it is still able to adapt to other types of shifts without overcompensating width in comparison to other methods. However, it does not obtain any explicit advantage compared to the other methods and is still wider on average; thus we see that our method excels under asymmetric stage-wise shifts, and furthermore comparatively performs even better under upstream shifts.
\subsubsection{Single-stage baseline}
We consider a simple baseline in which the upstream stage is removed entirely, fitting directly from upstream input $w$ to downstream output $y$. We consider this baseline for both the synthetic distribution shift data and automobile indicator data. We apply the baseline adaptive methods (ACI, PID, OCID) to determine if the improvement of our method stems from the improved predictive power of the two-stage model or from exploiting the error decomposition.
\paragraph{Synthetic data with sudden shifts.} Instead of two stages of least squares, we consider a model that fits only one least squares model directly from $w$ to $y$ (\Cref{syntheticsingle}). We observe the performance of ACI, PID, OCID on this single-stage model and observe that while the widths are slightly smaller, the coverage performance is similar between single-stage and two-stage and still worse than our method, which demonstrates that the two-stage predictor does not necessarily improve coverage performance. Thus our method, which does have improved performance in the two-stage regime, is not merely benefiting from a better predictor, but leveraging the two-stage structure. Similar patterns hold for the real-world dataset as we see below.
\begin{figure}
    \centering
    \begin{subfigure}[b]{0.49\textwidth}
        \centering
        \includegraphics[width=\linewidth]{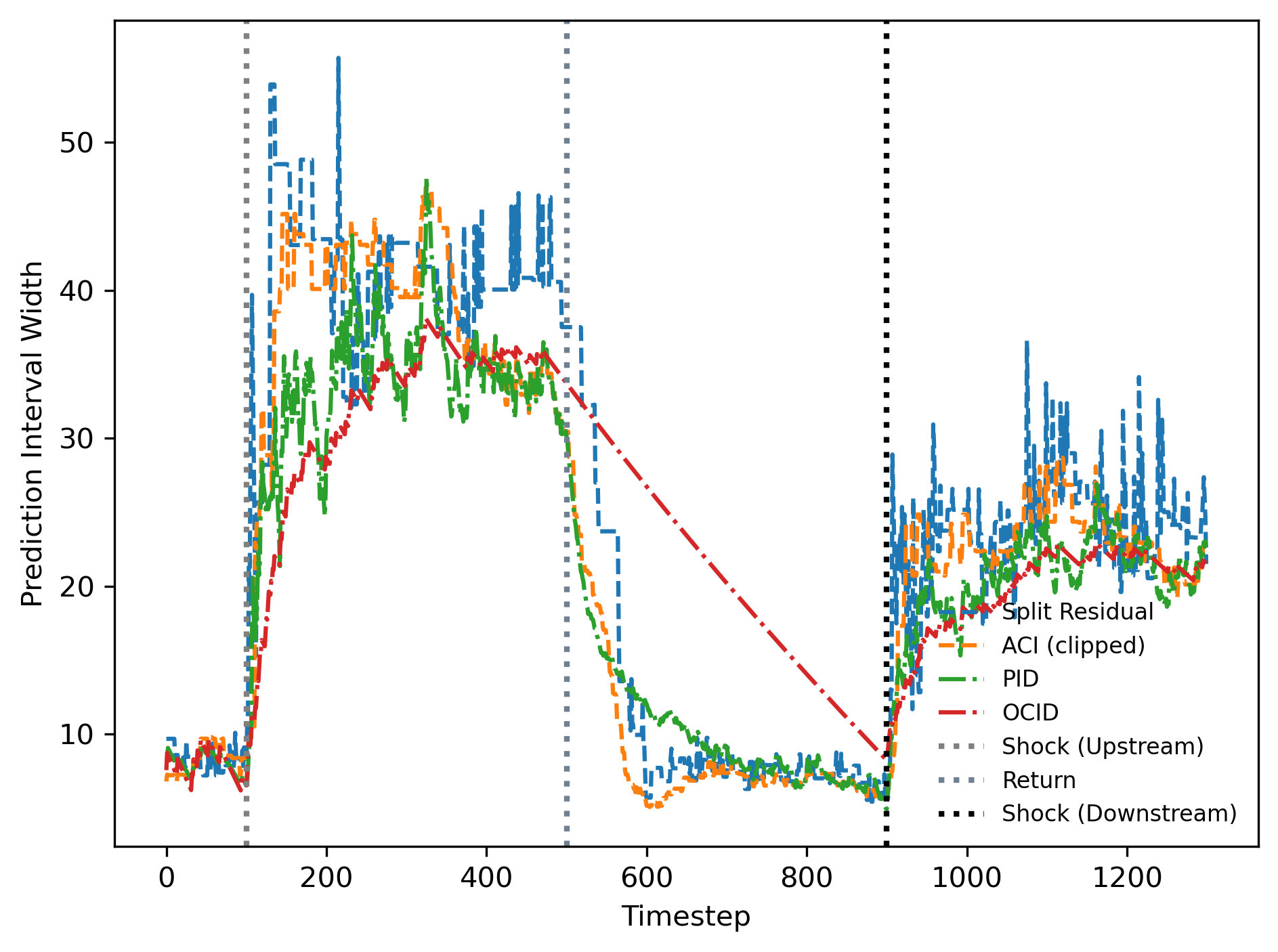}
        \caption{Width for least squares single-stage on synthetic data}
        \label{fig:olswidth}
    \end{subfigure}
    \hfill
    \begin{subfigure}[b]{0.49\textwidth}
        \centering
        \includegraphics[width=\linewidth]{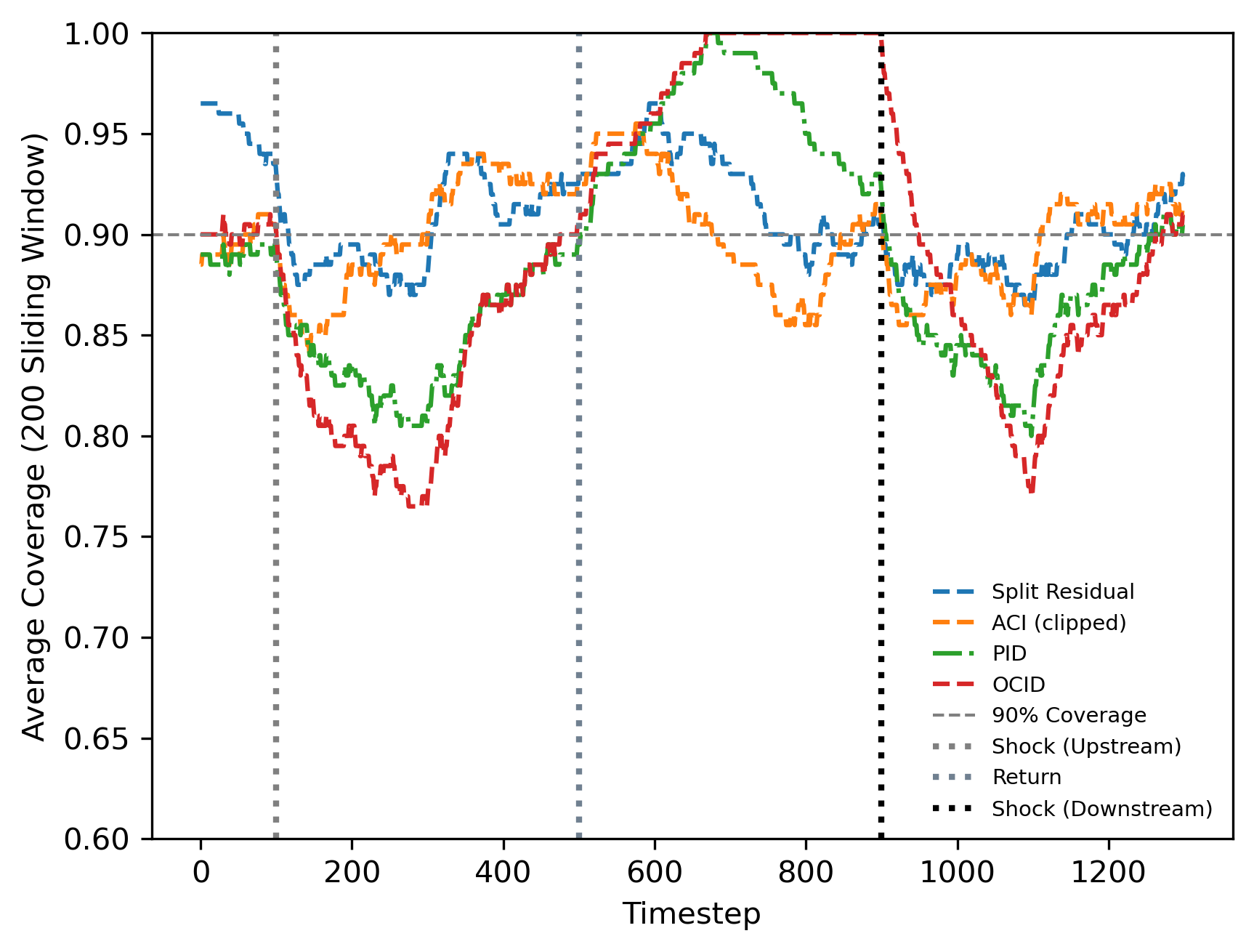}
        \caption{Coverage for least squares single-stage on synthetic data}
        \label{fig:olscov}
    \end{subfigure}
    \caption{Performance of baseline methods (ACI, PID, OCID) on single-stage least squares predictor for synthetic dataset with sudden stage-wise shifts}
    \label{syntheticsingle}
\end{figure}

\paragraph{Automobile Indicators.} Instead of using our two-stage pipeline, we directly fit a single-stage predictive model to forecast the Manheim index, and then apply each of the baseline adaptive conformal methods (ACI, PID, OCID). \Cref{fig:singlestagevar,fig:singlestagearima} report interval widths, empirical coverage, and example prediction intervals when the single-stage model is VAR(3) and ARIMA(2,0,1) respectively.

Unsurprisingly, the single-stage predictive models perform substantially worse than the two-stage model. ARIMA largely produces a lagged version of the truth, while VAR mispredicts the 2020 spike, getting both the direction and timing wrong. By contrast, our two-stage pipeline ``catches up” to the spike and produces a substantially more accurate forecast. In terms of coverage, ACI and PID still fail to capture the 2020 COVID distribution shift. Furthermore, the coverage gap remains similar between single and two-stage. For example, ACI and PID end with coverage of ~0.53,0.55 (ARIMA) or ~0.61,~0.61 (VAR) respectively, compared to ~0.58,~0.6 with the two-stage model, still well below desired 0.8. OCID fares better but is inefficient, with slightly worse performance on single-stage ~0.75 (ARIMA), ~0.71(VAR) compared to ~0.84 on the two-stage model.

However, this experiment illustrates three important points.
(i) The two-stage pipeline provides clear predictive benefits over standard single-stage time-series models on this dataset.
(ii) The baseline adaptive conformal methods struggle in this setting regardless of whether the predictor is single-stage or two-stage.
(iii) Taken together, these results show that the improved prediction accuracy of the two-stage model alone is insufficient to obtain valid or efficient adaptive coverage: all baseline conformal methods continue to suffer from instability and undercoverage even when the predictor is strengthened. The gains achieved by our method therefore stem not merely from improved forecasting, but from explicitly exploiting the structure of the two-stage pipeline.
\begin{figure}
    \centering
    \begin{subfigure}[b]{0.49\textwidth}
        \centering
        \includegraphics[width=\linewidth]{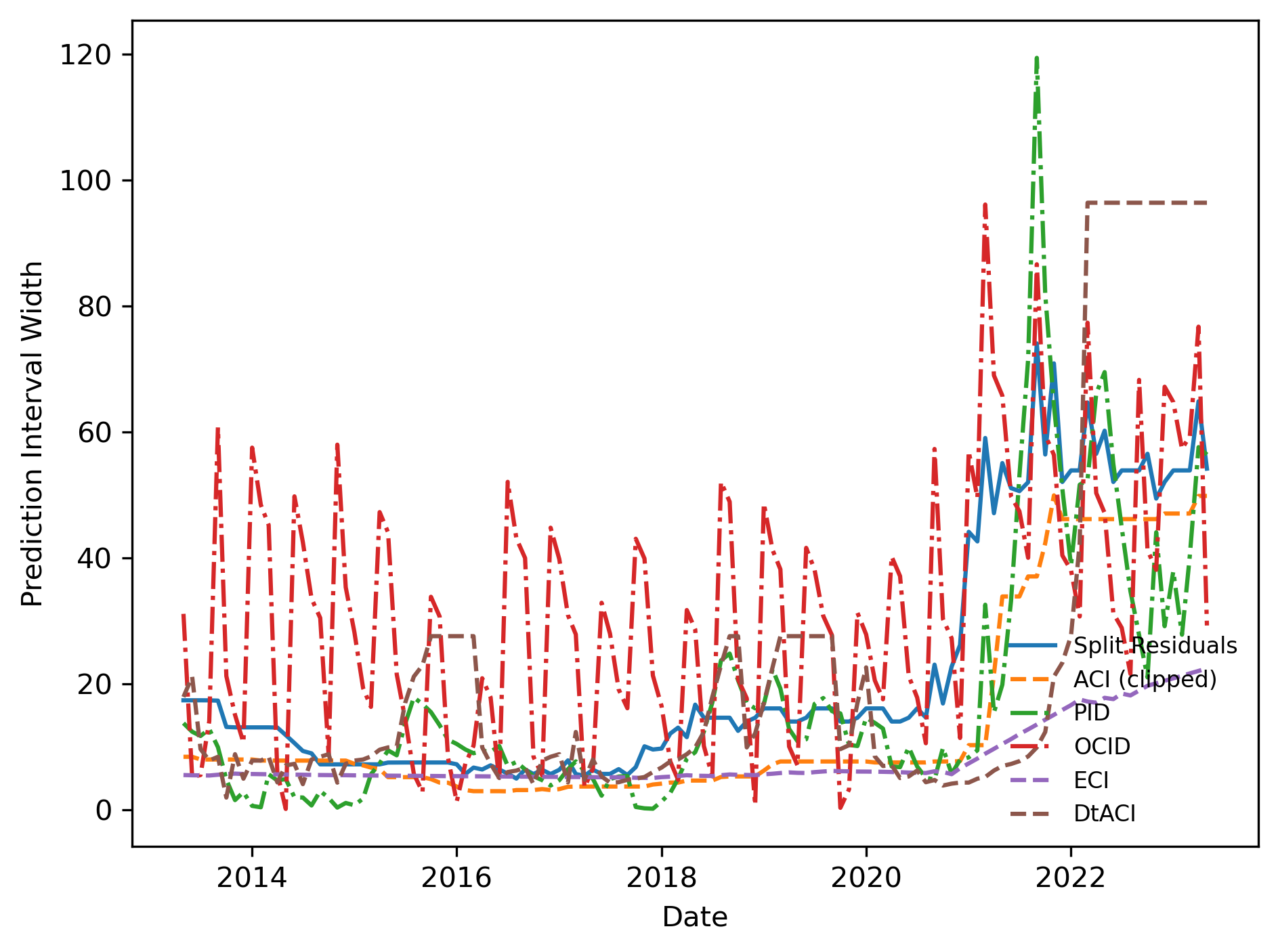}
        \caption{Width for VAR single-stage}
        \label{fig:varwidth}
    \end{subfigure}
    \hfill
    \begin{subfigure}[b]{0.49\textwidth}
        \centering
        \includegraphics[width=\linewidth]{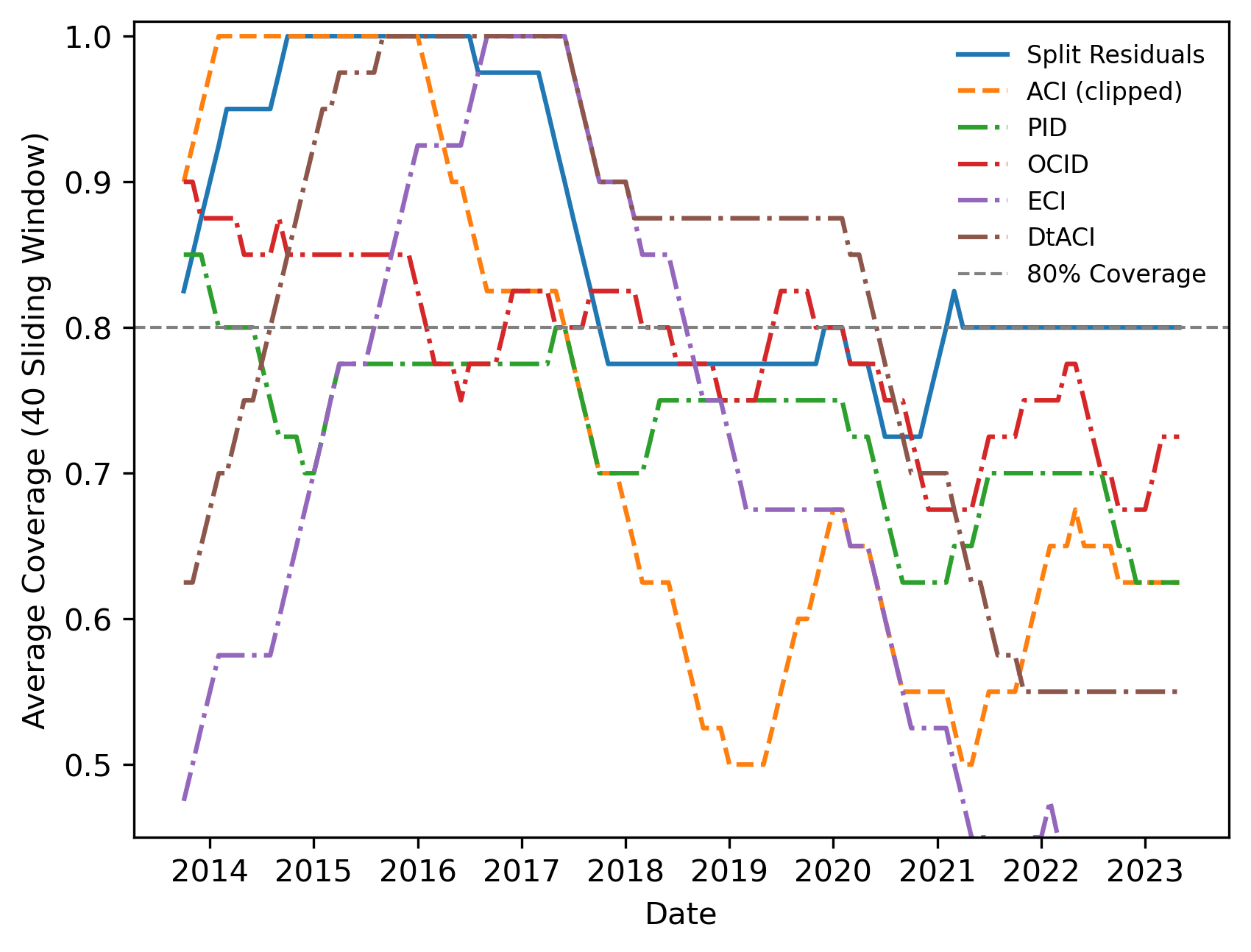}
        \caption{Coverage for VAR single-stage}
        \label{fig:varcov}
    \end{subfigure}
    \begin{subfigure}[b]{0.32\textwidth}
        \centering
        \includegraphics[width=\linewidth]{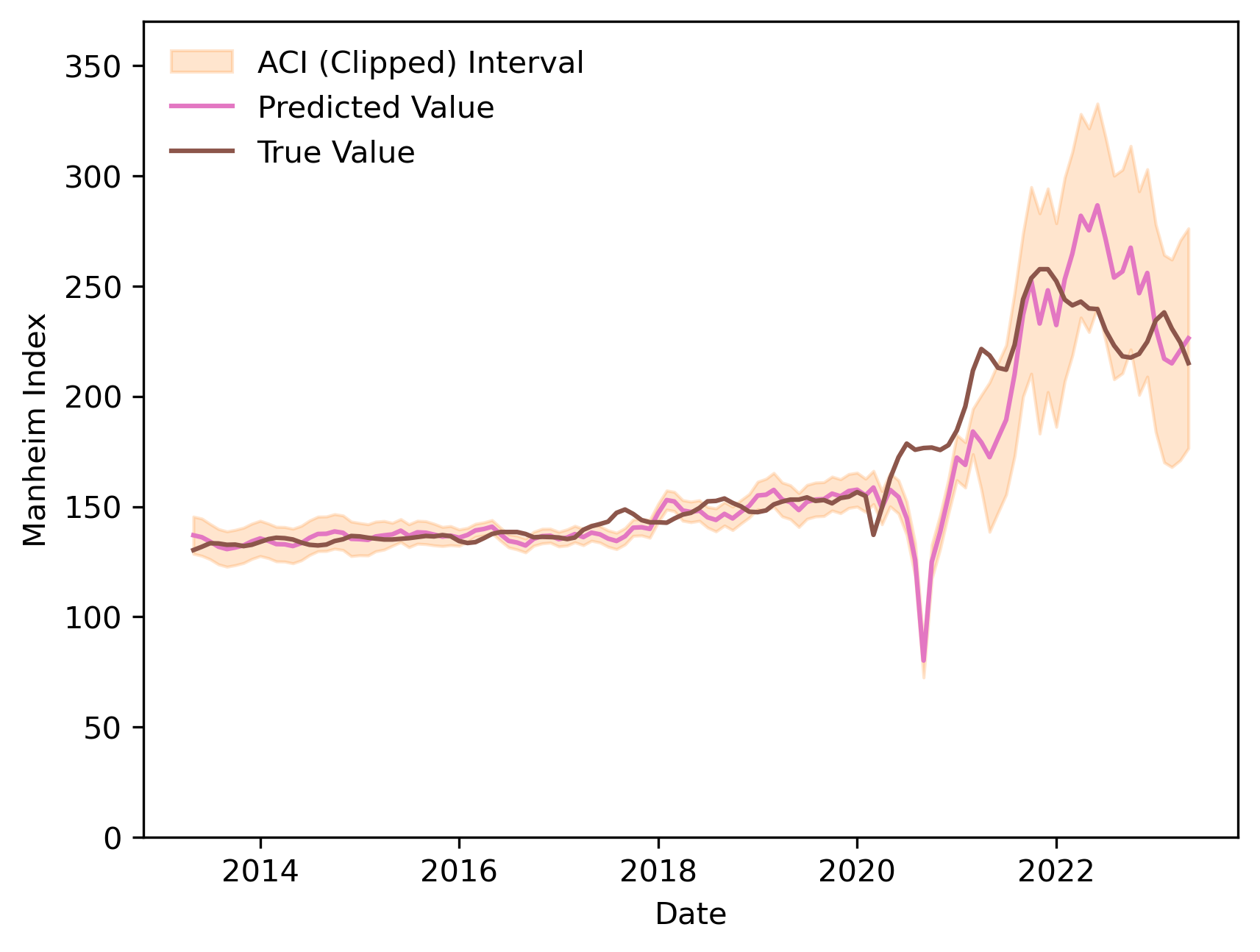}
        \caption{Interval for VAR single-stage with ACI}
        \label{fig:varaci}
    \end{subfigure}
    \hfill
    \begin{subfigure}[b]{0.32\textwidth}
        \centering
        \includegraphics[width=\linewidth]{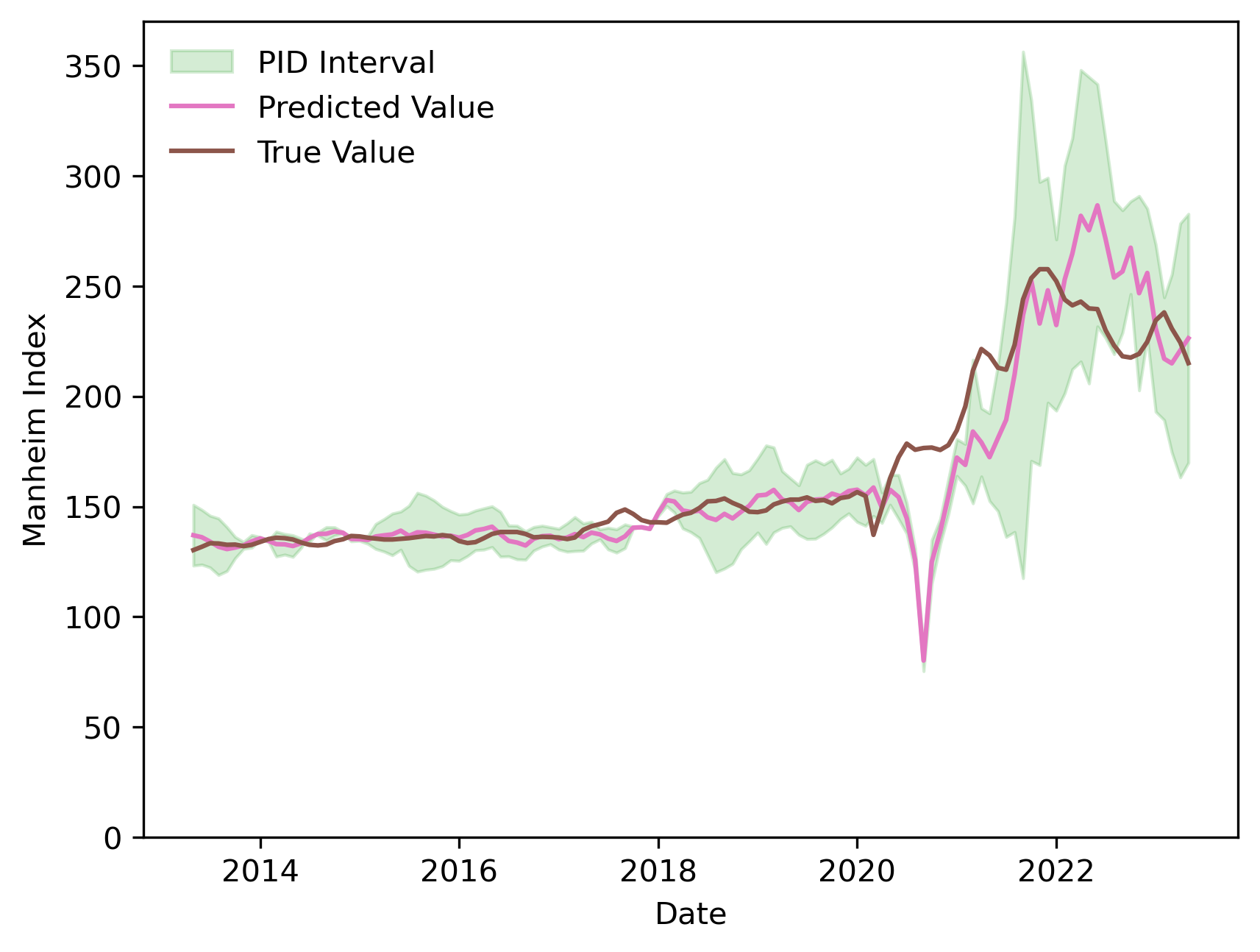}
        \caption{Interval for VAR single-stage with PID}
        \label{fig:varpid}
    \end{subfigure}
    \hfill
    \begin{subfigure}[b]{0.32\textwidth}
        \centering
        \includegraphics[width=\linewidth]{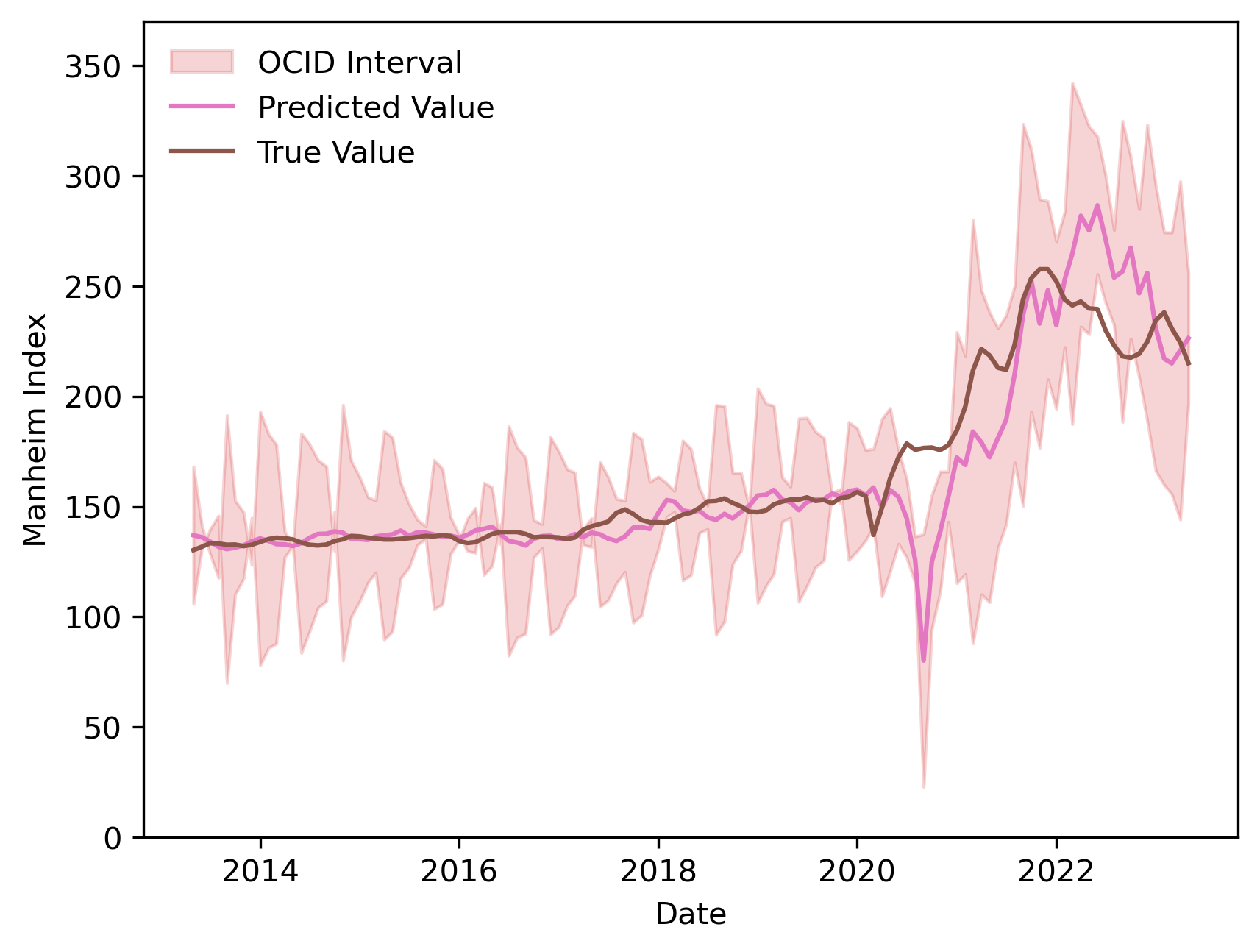}
        \caption{Interval for VAR single-stage with OCID}
        \label{fig:varocid}
    \end{subfigure}
    \hfill
    \begin{subfigure}[b]{0.32\textwidth}
        \centering
        \includegraphics[width=\linewidth]{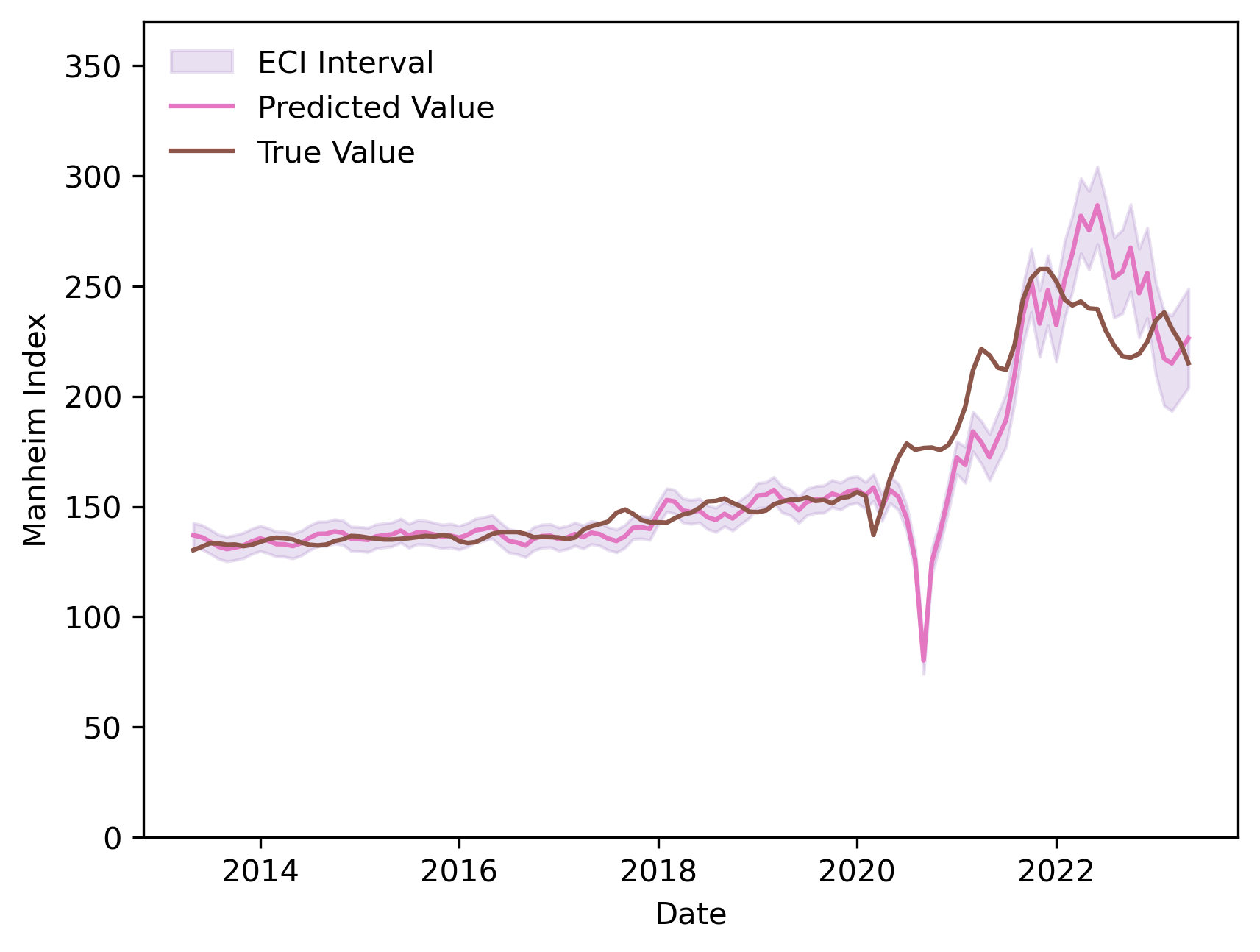}
        \caption{Interval for VAR single-stage with ECI}
        \label{fig:vareci}
    \end{subfigure}
    \hfill
    \begin{subfigure}[b]{0.32\textwidth}
        \centering
        \includegraphics[width=\linewidth]{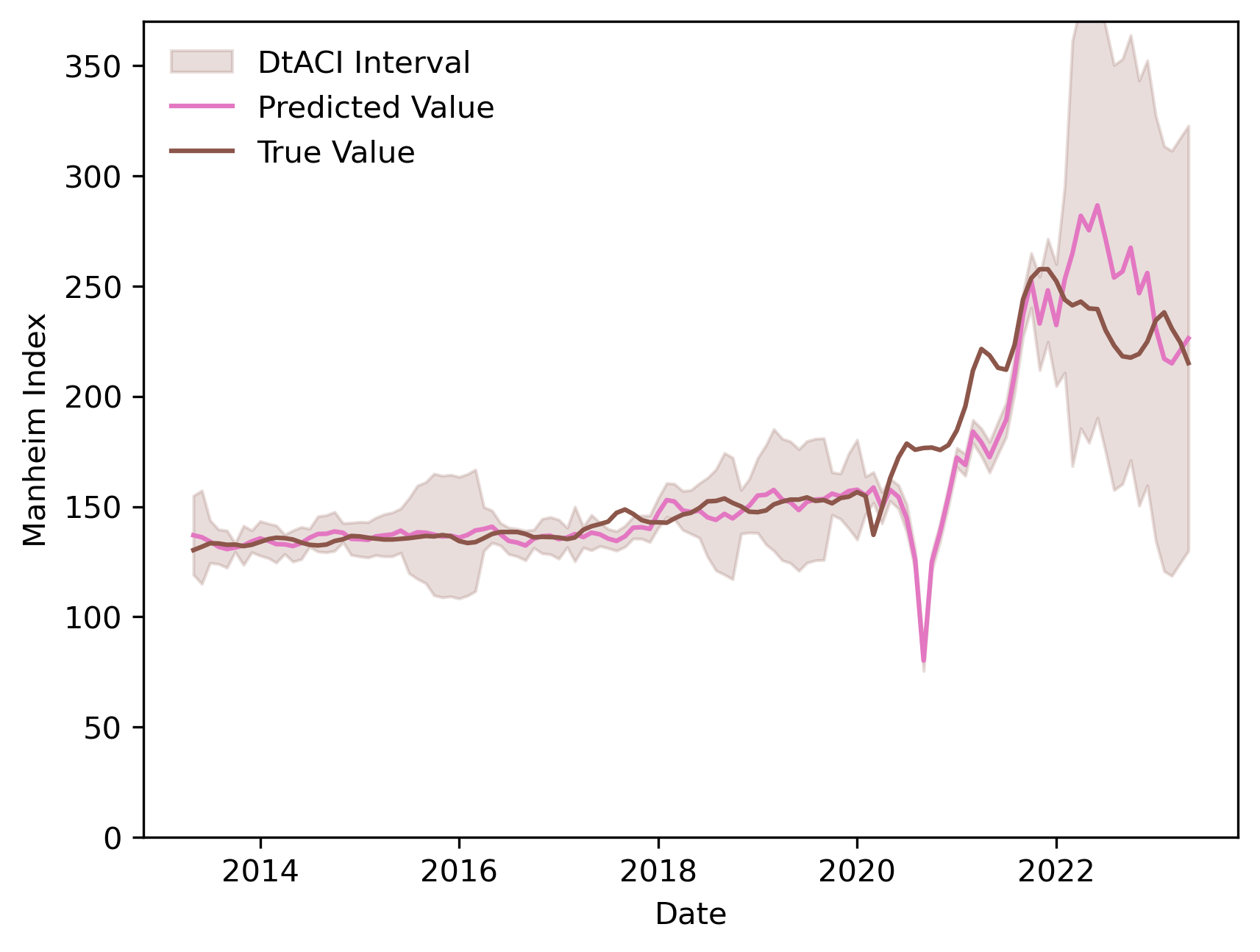}
        \caption{Interval for VAR single-stage with DtACI}
        \label{fig:varodtaci}
    \end{subfigure}
    \caption{Performance of baseline methods (ACI, PID, OCID, ECI, DtACI) on single-stage VAR predictor for automobile indicator dataset}
    \label{fig:singlestagevar}
\end{figure} 
\begin{figure}
    \centering
    \begin{subfigure}[b]{0.49\textwidth}
        \centering
        \includegraphics[width=\linewidth]{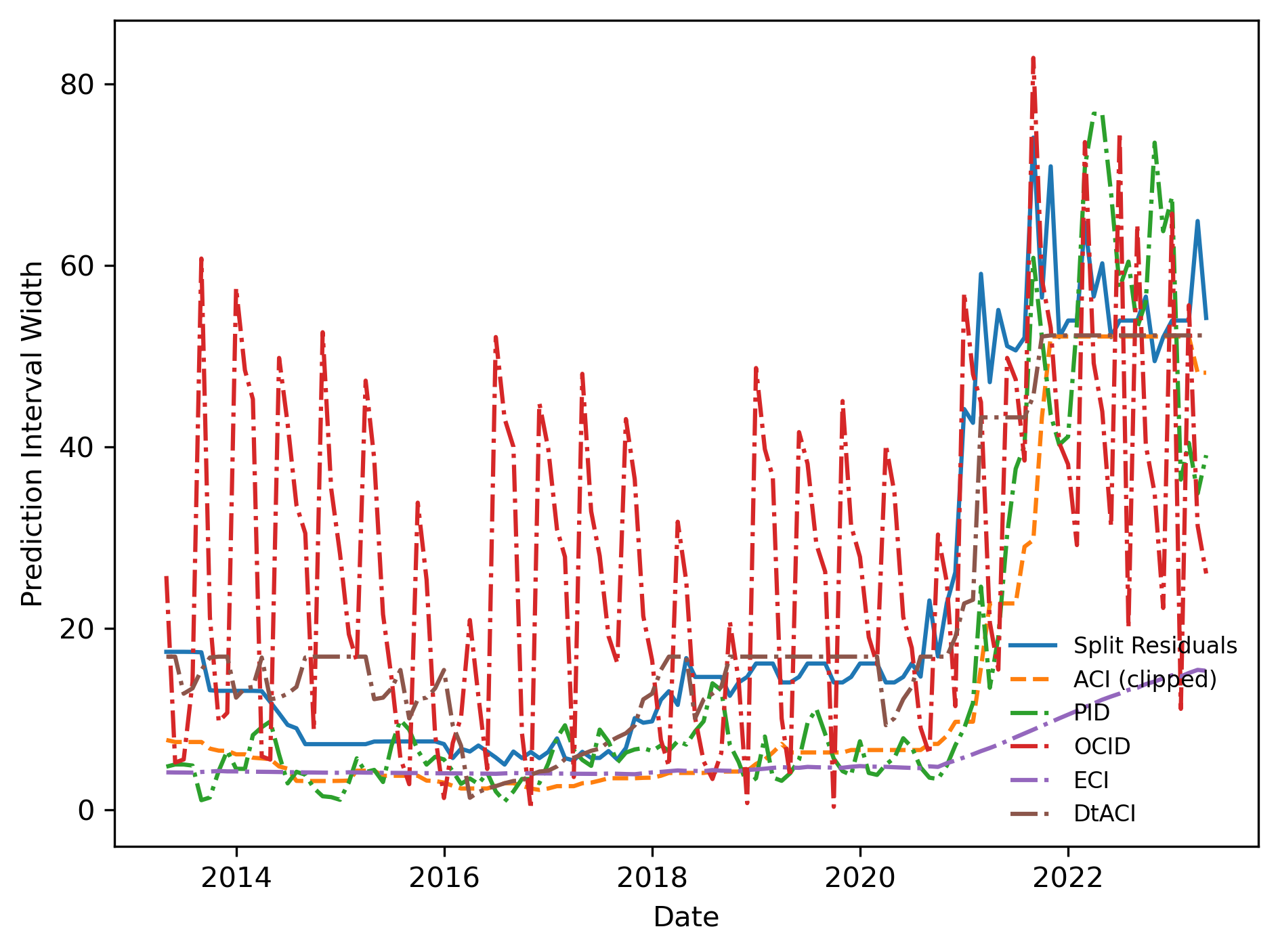}
        \caption{Width for ARIMA single-stage}
        \label{fig:arimawidth}
    \end{subfigure}
    \hfill
    \begin{subfigure}[b]{0.49\textwidth}
        \centering
        \includegraphics[width=\linewidth]{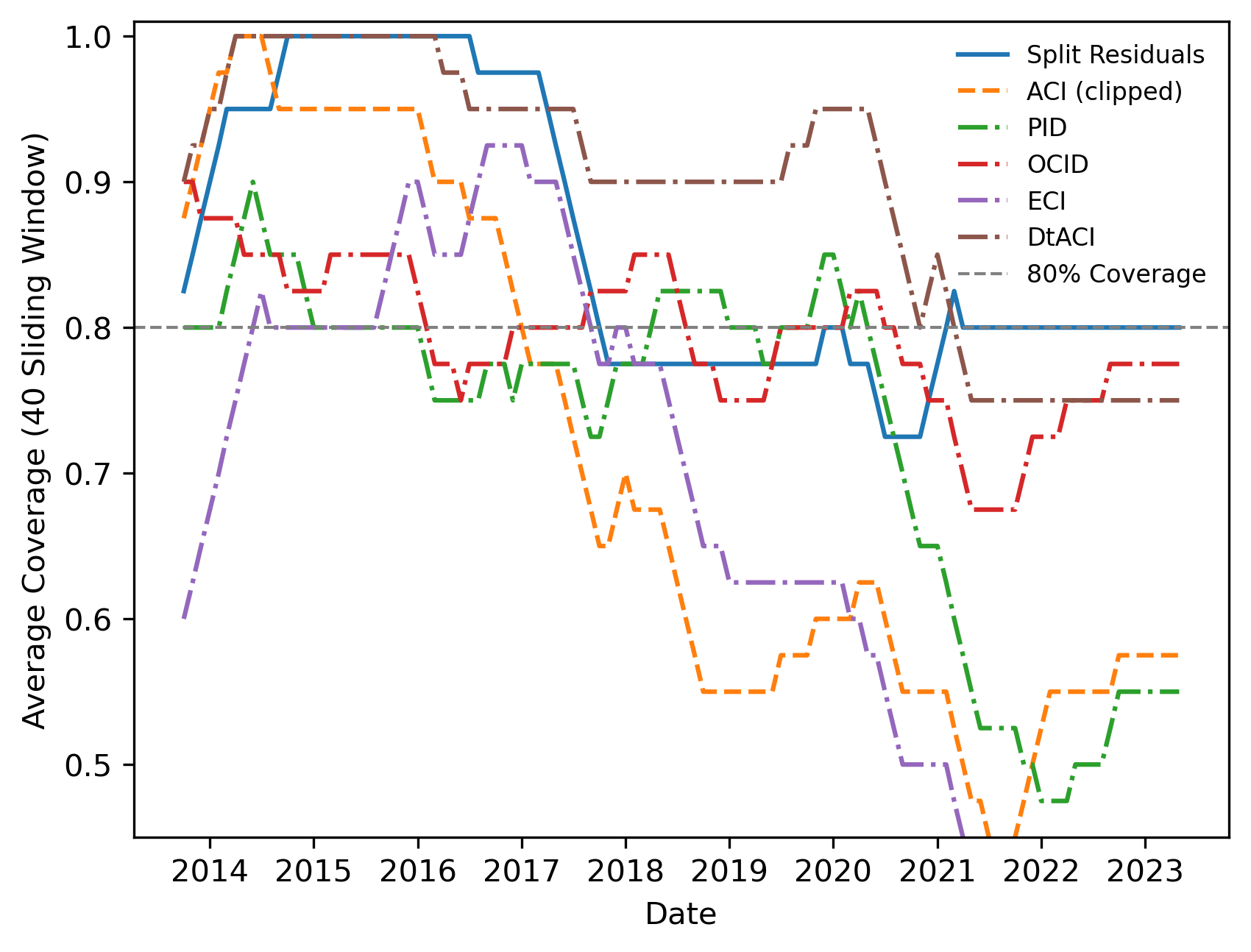}
        \caption{Coverage for ARIMA single-stage}
        \label{fig:arimacov}
    \end{subfigure}
    \begin{subfigure}[b]{0.32\textwidth}
        \centering
        \includegraphics[width=\linewidth]{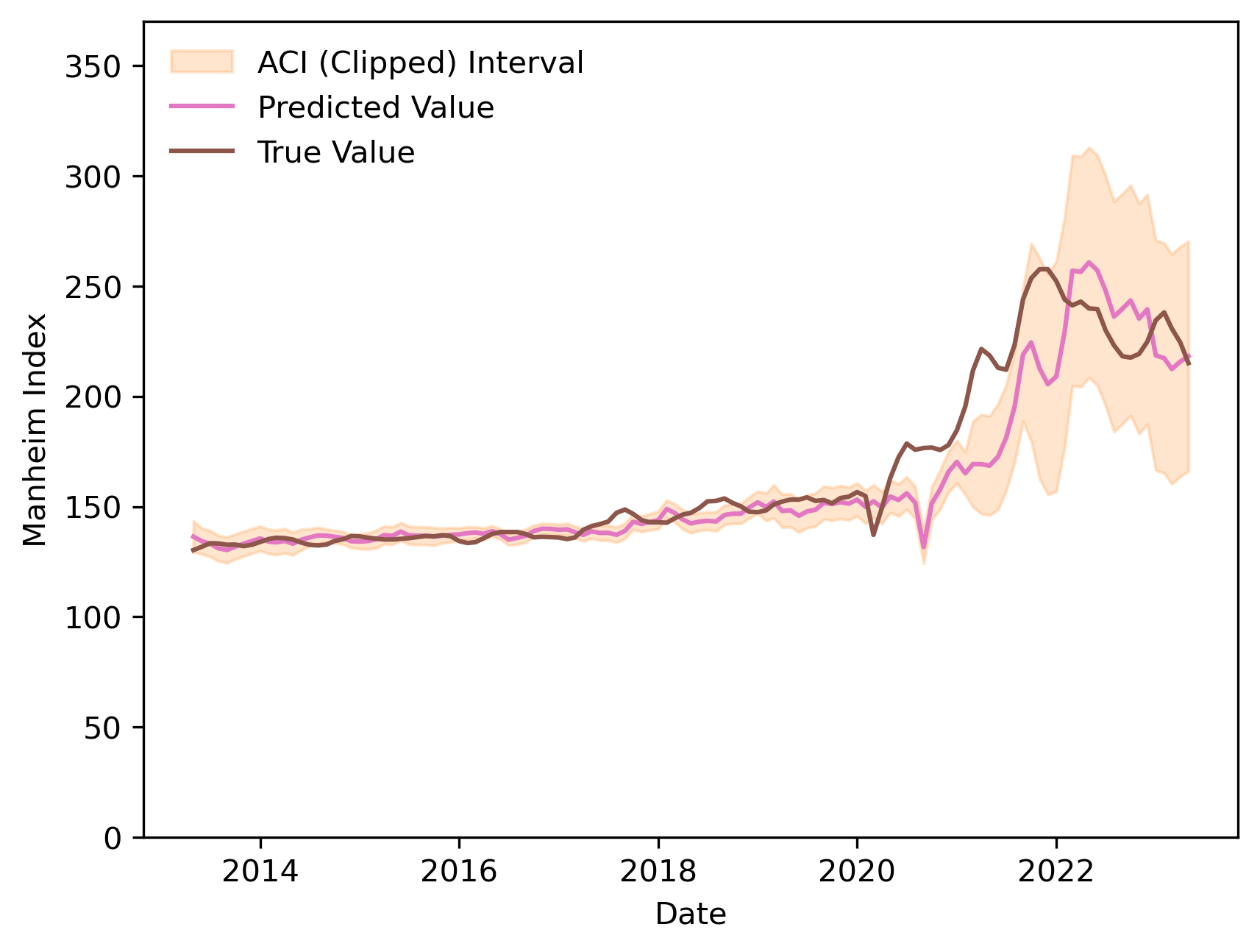}
        \caption{Interval for ARIMA single-stage with ACI}
        \label{fig:arimaaci}
    \end{subfigure}
    \hfill
    \begin{subfigure}[b]{0.32\textwidth}
        \centering
        \includegraphics[width=\linewidth]{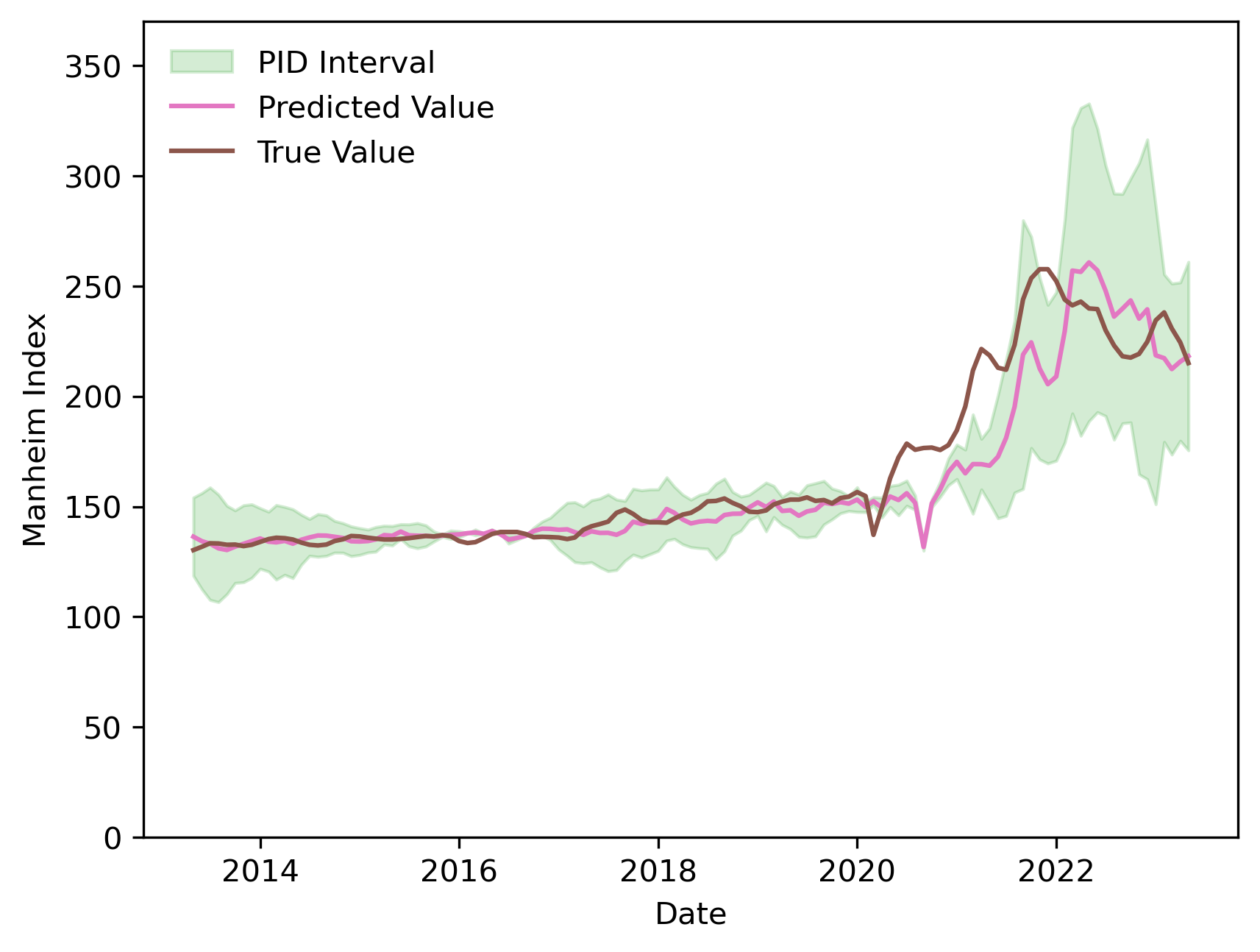}
        \caption{Interval for ARIMA single-stage with PID}
        \label{fig:arimapid}
    \end{subfigure}
    \hfill
    \begin{subfigure}[b]{0.32\textwidth}
        \centering
        \includegraphics[width=\linewidth]{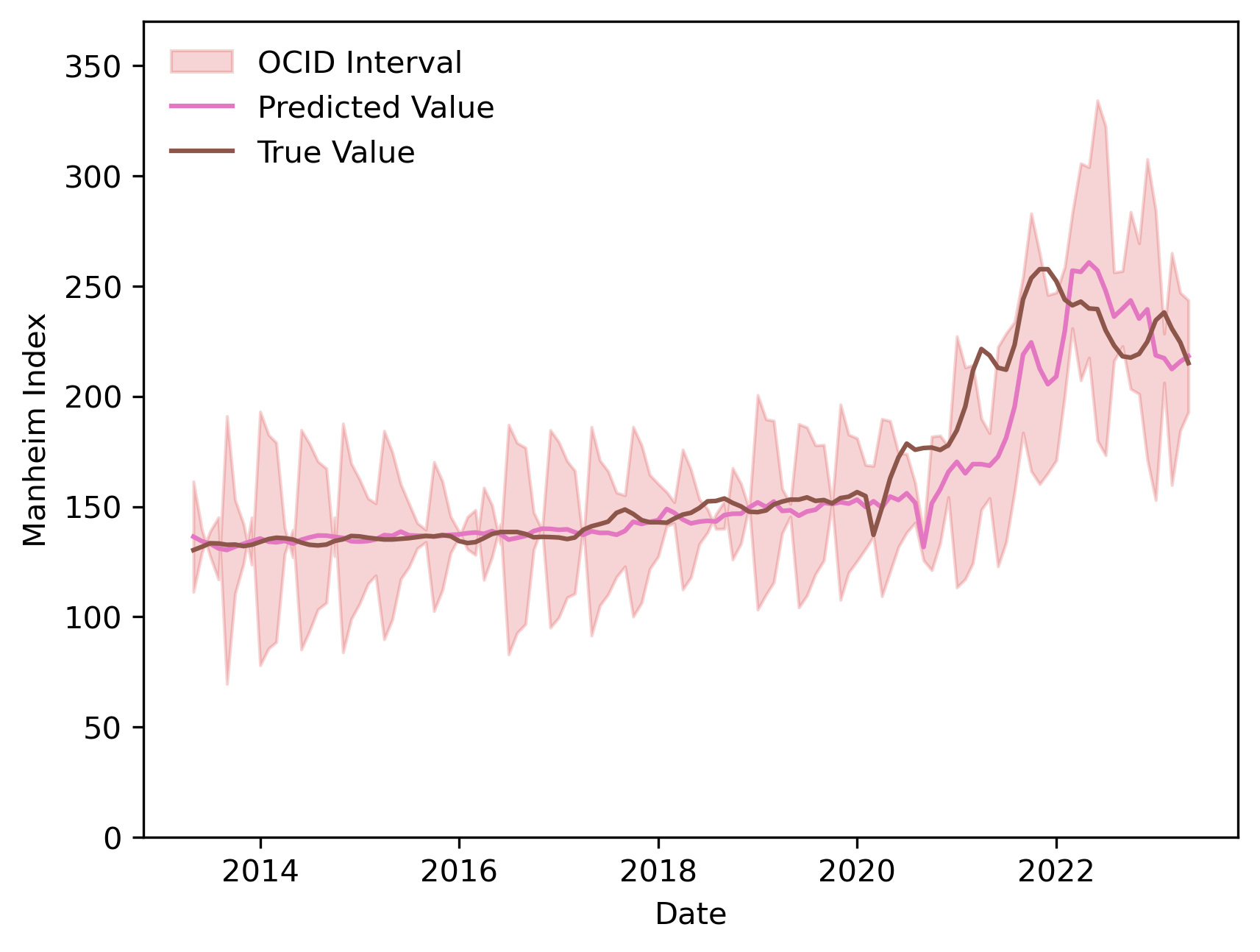}
        \caption{Interval for ARIMA single-stage with OCID}
        \label{fig:arimaocid}
    \end{subfigure}
    \hfill
    \begin{subfigure}[b]{0.32\textwidth}
        \centering
        \includegraphics[width=\linewidth]{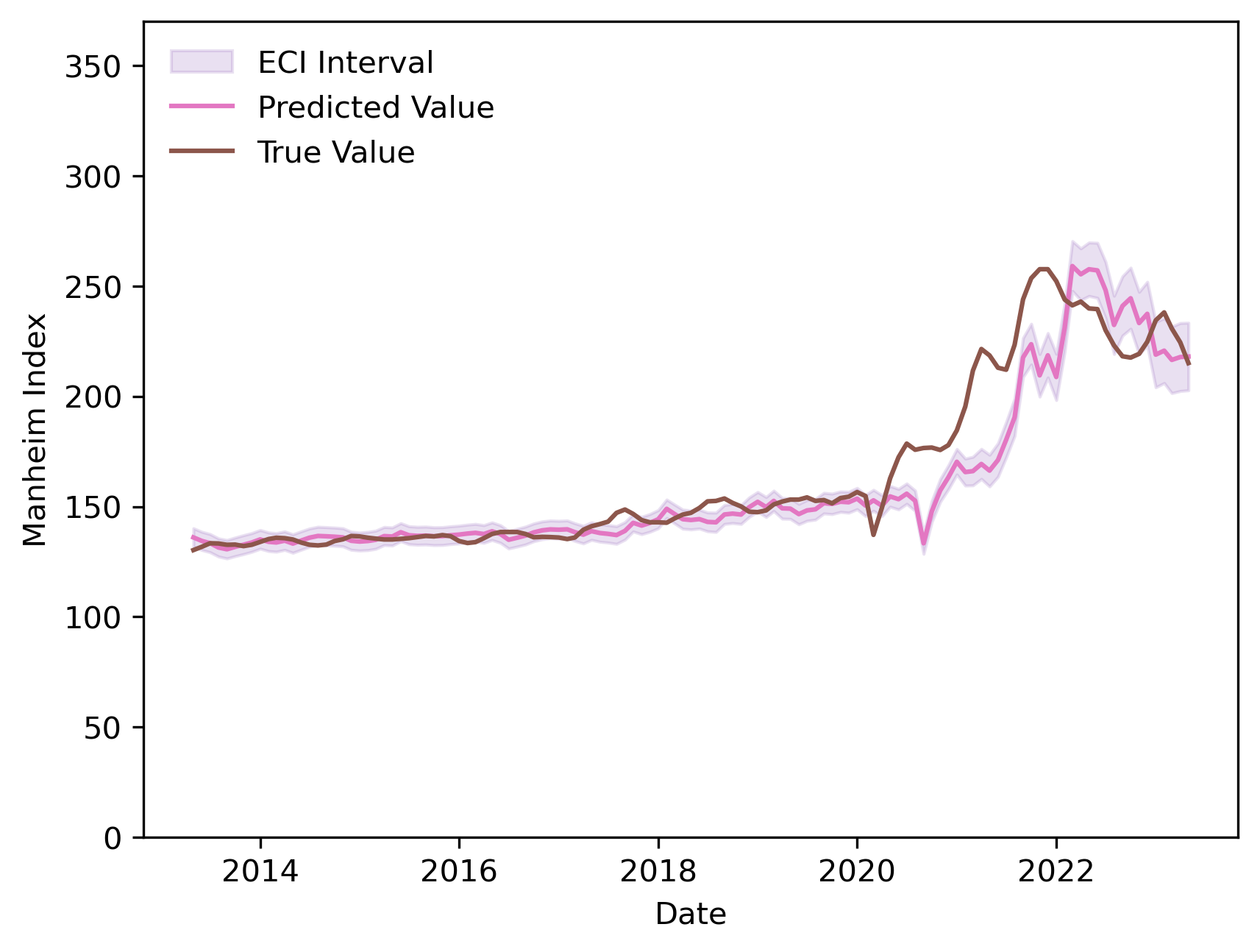}
        \caption{Interval for ARIMA single-stage with ECI}
        \label{fig:arimaeci}
    \end{subfigure}
    \hfill
    \begin{subfigure}[b]{0.32\textwidth}
        \centering
        \includegraphics[width=\linewidth]{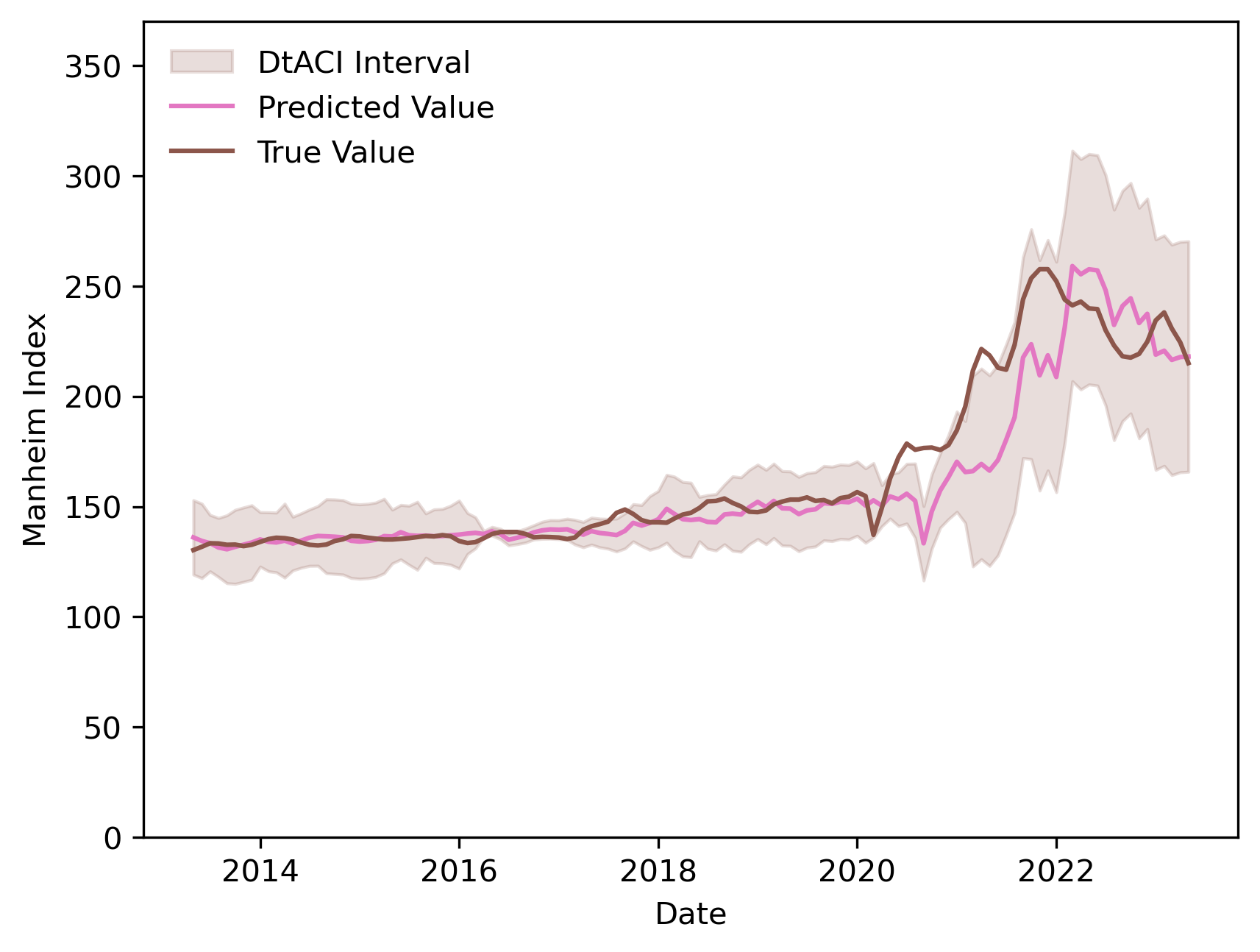}
        \caption{Interval for ARIMA single-stage with DtACI}
        \label{fig:arimadtaci}
    \end{subfigure}
    \caption{Performance of baseline methods (ACI, PID, OCID, ECI, DtACI) on single-stage ARIMA predictor for automobile indicator dataset}
    \label{fig:singlestagearima}
\end{figure}


\end{document}